\documentclass[letterpaper, 10 pt]{article}
\usepackage{Arxiv/style}
\usepackage{cancel}
\usepackage{blindtext}
\newcommand{\E}{\mathbb{E}}

\usepackage{amsmath,amssymb}
\usepackage{hyperref}
\usepackage{amsthm}
\usepackage{arydshln}
\usepackage{enumitem}
\usepackage{xcolor}
\usepackage{threeparttable}
\usepackage{tikz}
\usetikzlibrary{shapes, intersections}
\usetikzlibrary{shapes.arrows}
\usetikzlibrary{automata,positioning}
\usetikzlibrary[calc, math, decorations.pathreplacing]
\usepackage{pgfplots}
\pgfplotsset{compat=1.15}
\usepackage{adjustbox}
\usepackage{bm}
\usepackage{gincltex}
\usepackage{caption}
\usepackage{subcaption}
\usepgfplotslibrary{fillbetween}
\usepgfplotslibrary{groupplots}
\usetikzlibrary{plotmarks}
\usetikzlibrary{patterns}
\usetikzlibrary{shapes.geometric, arrows}
\tikzstyle{boxGreen} = [rectangle, rounded corners, minimum width=3cm, minimum height=1cm,text centered, draw=black, fill=green!10]
\tikzstyle{boxYellow} = [rectangle, rounded corners, minimum width=3cm, minimum height=1cm,text centered, draw=black, fill=yellow!20]
\tikzstyle{arrow} = [thick,->,>=stealth]
\tikzstyle{curve_arrow_left} = [thick,->,>=stealth, bend left]
\tikzstyle{curve_arrow_right} = [thick,->,>=stealth, bend right]
\pgfplotsset{
tick label style={font=\footnotesize},
label style={font=\footnotesize},
legend style={font=\footnotesize},
}

\newcommand{\xh}{\hat{x}}
\newcommand{\yh}{\hat{y}}
\newcommand{\fh}{\hat{f}}
\newcommand{\hh}{\hat{h}}
\newcommand{\tX}{\tilde{X}}
\newcommand{\tY}{\tilde{Y}}
\newcommand{\tZ}{\tilde{Z}}
\newcommand{\tx}{\tilde{x}}
\newcommand{\ty}{\tilde{y}}
\newcommand{\tC}{\tilde{C}}
\newcommand{\tO}{\tilde{O}}

\newcommand{\bt}{\tilde{b}}

\newcommand{\bc}{\bar{c}}
\newcommand{\cc}{\check{c}}
\newcommand{\cR}{\check{R}}
\newcommand{\brc}{\Breve{c}}
\newcommand{\Rk}[1]{R_k^{(#1)}}

\allowdisplaybreaks

\usepackage{multirow}
\RequirePackage{tikz}
\usepackage{pifont}
\usepackage{xcolor,colortbl}

\def\checkmark{\tikz\fill[scale=0.4](0,.35) -- (.25,0) -- (1,.7) -- (.25,.15) -- cycle;} 

\title{\LARGE \bf Tight Finite Time Bounds of Two-Time-Scale Linear Stochastic
Approximation with Markovian Noise}

\author{
{\normalsize Shaan Ul Haque}\footnote{H. Milton Stewart School of Industrial \& Systems Engineering, Georgia Institute of Technology, Atlanta, GA, 30332, USA, {\tt\small \{\href{mailto:shaque49@gatech.edu}{shaque49}, \href{mailto:siva.theja@gatech.edu}{siva.theja}\}@gatech.edu}} $\,$\and
{\normalsize Sajad Khodadadian}\footnote{Grado Department of Industrial and Systems Engineering, Virginia Polytechnic Institute and State University, Blacksburg, VA, 24061, USA, {\tt\small  \href{mailto:sajadk@vt.edu}{sajadk@vt.edu}} } \and
{\normalsize Siva Theja Maguluri}$^*$
}
\date{}

\begin{document}

\maketitle

\setlength{\abovedisplayskip}{5pt}
\setlength{\belowdisplayskip}{5pt}

\begin{abstract}
Stochastic approximation (SA) is an iterative algorithm for finding the fixed point of an operator using noisy samples and widely used in optimization and Reinforcement Learning (RL). The noise in RL exhibits a Markovian structure, and in some cases, such as gradient temporal difference (GTD) methods, SA is employed in a two-time-scale framework. This combination introduces significant theoretical challenges for analysis.

We derive an upper bound on the error for the iterations of linear two-time-scale SA with Markovian noise. We demonstrate that the mean squared error decreases as $trace (\Sigma^y)/k + o(1/k)$  where $k$ is the number of iterates, and $\Sigma^y$ is an appropriately defined covariance matrix. A key feature of our bounds is that the leading term, $\Sigma^y$, exactly matches with the covariance in the Central Limit Theorem (CLT) for the two-time-scale SA, and we call them tight finite-time bounds.
We illustrate their use in RL by establishing sample complexity for off-policy algorithms, TDC, GTD, and GTD2.

A special case of linear two-time-scale SA that is extensively studied is linear SA with Polyak-Ruppert averaging. We present tight finite time bounds corresponding to the covariance matrix of the CLT. Such bounds can be used to study TD-learning with Polyak-Ruppert averaging.
\end{abstract}

\section{Introduction}

Stochastic Approximation (SA) \cite{robbins1951stochastic} is an iterative algorithm for finding the fixed point of an operator using noisy samples. SA has a wide range of applications, including stochastic optimization \cite{jung2017fixed}, statistics \cite{hastie2009elements}, and Reinforcement Learning (RL) \cite{sutton2018reinforcement}. This versatility has motivated extensive research into its convergence properties, both asymptotically \cite{nevel1976stochastic, tsitsiklis1994asynchronous} and in finite time \cite{beck2012error, bhandari2018finite}.

In certain applications, SA operates in a two-time-scale manner \cite{borkar1997stochastic, doan2022nonlinear}. Specifically, a linear two-time-scale SA has the following update rule:
\begin{subequations}
\label{eq:two_time_scale_main}
\begin{align}
    y_{k+1}&= y_k+ \beta_k(b_1(O_{k})-A_{11}(O_{k})y_k-A_{12}(O_{k})x_k) \label{eq:two_time_scale_main_y}\\
    x_{k+1}&= x_k+\alpha_k(b_2 (O_{k})-A_{21}(O_{k})y_k-A_{22}(O_{k})x_k),\label{eq:two_time_scale_main_x}
\end{align}
\end{subequations}
where \(x_k\) and \(y_k\) are the two variables updated on separate time-scales determined by step sizes \(\alpha_k\) and \(\beta_k\). Furthermore, $A_{ij}(O_k), b_i(O_k), i,j=1,2$ are random matrices and vectors, and $O_k$ represents the randomness at the time step $k$. This two-time-scale structure appears in various algorithms such as TDC, GTD, and GTD2. While the asymptotic convergence of \eqref{eq:two_time_scale_main_y} and \eqref{eq:two_time_scale_main_x} has been studied extensively \cite{borkar2009stochastic, hu2024central}, including the characterization of asymptotic covariance \cite{konda2004convergence}, finite-time analysis remains less developed.  

A notable special case of the linear two-time-scale SA is linear SA with Polyak-Ruppert averaging \cite{polyak1990new}. In this setting, the variable \(x_k\) is updated as \(x_{k+1} = x_k + \alpha_k \big(A(O_k)x_k + b(O_k)\big)\), and \(y_k\) is defined as the running average of \(x_k\): \(y_{k+1} = \sum_{i=0}^k x_i/(k+1)\). It has been shown that SA with Polyak-Ruppert averaging achieves optimal asymptotic convergence rates \cite{polyak1992acceleration, li2021polyak, li2023statistical}. Moreover, its robustness to the choice of step size has been highlighted in \cite{nemirovski2009robust}, where \(\alpha_k\) can be chosen independently of problem-dependent constants while still ensuring optimal asymptotic performance.   

In this paper, we establish a tight finite time analysis of the linear two-time-scale SA with Markovian noise \eqref{eq:two_time_scale_main}. Our main contributions are summarized as follows:  
\begin{enumerate}  
    \item \textbf{Tight Finite-Time Bound:} We provide the first tight finite-time characterization of the the covariance matrix in two-time-scale linear SA with Markovian and multiplicative noise under minimal set of assumptions. Our results consist of a leading term which is asymptotically optimal and matches covariance in central limit theorem (CLT) established in \cite{hu2024central}, and a higher-order term. We bound the convergence rate of the higher-order terms, offering insights into optimal step-size selection. We also validate the minimality of our assumptions through experiments.
    \item \textbf{Single-Time-Scale vs Two-Time-Scale:} We study an alternative implementation of \eqref{eq:two_time_scale_main} where the updates are performed in a single-time-scale manner. We present conditions on the matrices under which single-time-scale implementation works, and we present the conditions which necessitates the use of two-time-scale. 
    \item \textbf{Polyak-Ruppert Averaging:} 
    Since our result is established under minimal assumptions, it enables us to study as a special case, Polyak-Ruppert averaging of linear SA under Markov noise. This setting is of independent interest and has been extensively studied. Recent work \cite{borkar2021ode} established a CLT and characterized the asymptotic covariance matrix. We present tight finite time bounds that match the covariance matrix in \cite{borkar2021ode}.
    %    We study the special case of Pola
    %We establish our results under minimal assumptions that encompass Polyak-Ruppert averaging as a special case. For this setting, our results recover the asymptotic covariance from the CLT \cite{borkar2021ode}. We also validate the minimality of our assumptions through experiments.  
    \item \textbf{Applications to RL Algorithms:} Using our results, we analyze the convergence of TDC, GTD, and GTD2 algorithms, providing new insights into their performance.  
\end{enumerate}  

The remainder of this paper is structured as follows. Section \ref{sec:related_work} reviews related literature. Section \ref{sec:prob_form} formulates the problem of two-time-scale linear SA with Markovian noise and introduces our assumptions. In Section \ref{sec:main_result}, we present our main results, including discussions on step-size selection and comparisons between single and two-time-scale algorithms. This section also explores the convergence of linear SA with Polyak-Ruppert averaging and derives mean-square bounds for various RL algorithms. Section \ref{sec:pf_sketch} outlines the proof of our main results. Finally, Section \ref{sec:conc} concludes the paper and suggests future directions.

\begin{table*}[t]
\begin{center}

\caption{Summary of the results on convergence analysis of two-time-scale SA
}
\label{table:results2}
\renewcommand{\arraystretch}{1.4} 
\centering
\vskip 0.15in
\begin{small}
\begin{tabular}{ |c|c|c|c|c|c|c|c|}
\hline
\multirow{2}{4 em}{\centering Reference} & \multirow{2}{5 em}{\centering  Markovian Noise} & \multirow{2}{6 em}{\centering Multiplicative Noise} & \multirow{2}{7 em}{\centering Applicable beyond P-avg$^{[a]}$} & \multirow{2}{5 em}{\centering Tight Constant$^{[b] }$} &  \multirow{2}{8 em}{\centering Tight Convergence rate} & 
\multirow{2}{6 em}{\centering Convergence rate} \\
 &  &  &  &  &  & \\
 \hline
\cite{moulines2011non} & \xmark & \checkmark & \xmark & \checkmark & \checkmark & $\mathcal{O}(1/k)$\\
\hline
\cite{bach2014adaptivity} & \xmark & \checkmark & \xmark & \xmark & \checkmark & $\mathcal{O}(1/k)$\\
\hline
\cite{lakshminarayanan2017linear} & \xmark & \checkmark & \xmark & \checkmark & \checkmark & $\mathcal{O}(1/k)$\\
\hline
\cite{dalal2018finite} & \xmark & \checkmark & \checkmark & \xmark& \xmark & $\mathcal{O}(1/k^{2/3})$  \\
\hline
\cite{gupta2019finite}$^{[c]}$ & \checkmark & \checkmark & \checkmark & \xmark & \xmark & $\mathcal{O}(\log (k)/k^{2/3})$\\
\hline
\cite{doan2019linear} & \xmark & \xmark & \checkmark & \xmark & \xmark & $\mathcal{O}(1/k^{2/3})$ \\
\hline
\cite{dalal2020tale} &  \xmark & \checkmark & \checkmark & \xmark & \checkmark & $\mathcal{O}(\log(k)/k)$ \\
\hline
\cite{liu2020finite} & \xmark & \checkmark & \checkmark & \xmark & \checkmark & $\mathcal{O}(1/k)$\\
\hline
\cite{mou2020linear} & \xmark & \checkmark & \xmark & \xmark & \checkmark & $\mathcal{O}(1/k)$\\
\hline
\cite{kaledin2020finite} & \checkmark & \checkmark & \checkmark & \xmark & \checkmark & $\mathcal{O}(1/k)$\\
\hline
\cite{doan2021markov} & \checkmark & \checkmark & \checkmark & \xmark & \xmark & $\mathcal{O}(\log k/k^{2/3})$\\
\hline
\cite{mou2021optimal}  & \checkmark & \checkmark & \xmark & \checkmark & \checkmark & $\mathcal{O}(1/k)$\\
\hline
\cite{durmus2022finite}  & \checkmark & \checkmark & \xmark & \xmark & \checkmark & $\mathcal{O}(1/k)$\\
\hline
\hline
\rowcolor{green!20}
Our result   & \checkmark & \checkmark & \checkmark & \checkmark & \checkmark & $\mathcal{O}(1/k)$\\
\hline
\end{tabular}
\begin{tablenotes}
\small
     {\footnotesize [a]In this column we specify if the work only considers Polyak-Ruppert averaging as the special case of two-time-scale SA, or the result can be applied for a general two-time-scale algorithm. \newline
     [b]The convergence result in each work can be written as $\frac{D}{k^\nu}+o\left(\frac{1}{k^\nu}\right)$, where $\nu\in[0,1]$. In this column, we specify if the term $D$ in the convergence bound of the leading term is asymptotically tight.  \newline
     [c]In this paper, the author established a rate by assuming a constant step size. However, their proof can be easily modified to accommodate the time-varying step size.}
\end{tablenotes}

\end{small}
\end{center}
\end{table*}
\section{Related Work}\label{sec:related_work}

Since the introduction of SA by Robbins and Monro \cite{robbins1951stochastic}, extensive research has focused on its convergence properties \cite{benveniste2012, borkar2009stochastic, harold1997stochastic}. Many machine learning problems involve solving fixed-point equations, driving significant interest in the finite-time convergence analysis of single-time-scale SA algorithms \cite{chen2020finite, srikant2019finite, chen2023concentration, wainwright2019stochastic}. However, numerous applications, particularly in optimization and RL, necessitate two-time-scale SA approaches, prompting studies in both asymptotic and finite-time settings.

\textbf{Asymptotic Analysis:} A notable special case within two-time-scale SA involves averaging iterates from single-time-scale SA, known as Polyak-Ruppert averaging. This method is recognized for faster convergence and optimal asymptotic covariance, initially formalized by \cite{ruppert1988efficient,polyak1992acceleration} under independent and identically distributed (i.i.d.) noise conditions. Recent studies extended these results to Markovian noise scenarios \cite{borkar2021ode}. More broadly, convergence properties of general two-time-scale SA have been extensively analyzed \cite{borkar1997stochastic, borkar2009stochastic}. Specifically, asymptotic convergence rates and normality for linear SA under i.i.d. noise were established by \cite{konda2004convergence}, later generalized to non-linear cases by \cite{mokkadem2006convergence,han2024decoupled} and \cite{fort2015central} under both i.i.d. and Markovian noise, respectively.

\textbf{Finite-Time Analysis:} Increasing interest in two-time-scale SA has led to rigorous examination of its finite-time behavior. Studies such as those by  \cite{dalal2018finite}, \cite{doan2019linear}, and \cite{srikant2019finite} address linear SA under martingale, i.i.d., and Markovian noise, respectively, although these approaches yield suboptimal rates. Explicit analysis of Polyak-Ruppert averaging in finite-time settings appears in \cite{mou2021optimal, lauand2024revisiting} for linear cases and in \cite{moulines2011non, bach2013non, gadat2023optimal} for non-linear scenarios. Recently, \cite{kwon2024two} provided finite-time convergence results for linear two-time-scale SA with constant step sizes, highlighting geometric rates alongside non-vanishing bias and variance. \cite{doan2021markov} and \cite{chandak2025finite} studied general two-time-scale SA algorithms, yet their derived convergence rates lack tightness. Moreover,  \cite{shen2022single, doan2024fast, zeng2024fast, chandak20251} explored fast variants of non-linear SA, achieving optimal $\mathcal{O}(1/k)$ convergence rates. Although termed two-time-scale by the authors, according to our notation, the iterates studied in these papers are not considered ``two-time-scale''.
 
One of the closest works to ours is \cite{kaledin2020finite}. In this paper, the authors study the same setting as two-time-scale linear SA with Markovian noise. However, the convergence bounds in \cite{kaledin2020finite} are loose and have a linear dependence on the dimension of the variables. In contrast, in this paper, we develop a new approach to study the convergence behavior of the covariance matrix and achieve a tight bound. Furthermore, in our paper, we consider a more general set of assumptions on the step size compared to \cite{kaledin2020finite}. This helps us to study the convergence of the Polyak-Ruppert averaging, which was not possible in \cite{kaledin2020finite}. For a detailed comparison, we summarized the results in the literature together with our work in Table \ref{table:results2}.

\textbf{Reinforcement Learning:} In many settings, especially in RL, two-time-scale algorithms help overcome many difficulties, such as stability in off-policy TD-learning. GTD, GTD2 and TDC \cite{sutton2008convergent}, \cite{sutton2009fast}, \cite{sutton2018reinforcement}, \cite{szepesvari2022algorithms} are some of the most well-studied and widely used methods to stabilize algorithms with off-policy sampling. This success has led to growing attention on finite time behavior of linear two-time-scale SA in the context of RL. The work \cite{xu2019two} analyzes TDC under Markovian noise but the non-asymptotic rate is not optimal. In \cite{xu2021sample} the authors establish a mean-square bound only under a constant step size, which does not ensure convergence. Concentration bounds for GTD and TDC were studied in \cite{wang2017finite} and \cite{li2023sharp}, respectively. Furthermore, TDC with a non-linear function approximation was studied in \cite{wang2020finite} and \cite{wang2021finite} but their results could not match the optimal rate. \cite{raj2022faster} studied GTD algorithms but required bounded iterates, an assumption we do not impose.
 
\section{Problem Formulation}\label{sec:prob_form}
Consider the following set of linear equations which we aim to solve:
\begin{subequations}\label{eq:lin_main}
   \begin{align}
    A_{11}y+A_{12}x&=b_1\\
    A_{21}y+A_{22}x&=b_2.
\end{align} 
\end{subequations}
where $x\in \mathbb{R}^{d_x}$ and $y\in \mathbb{R}^{d_y}$. Here $A_{ij}, i,j\in\{1,2\}$ are constant matrices that satisfy the following assumption.
\begin{assumption}\label{ass:hurwitz_main}
    Define $\Delta=A_{11}-A_{12}A_{22}^{-1}A_{21}$. Then $-A_{22}$ and $-\Delta$ are Hurwitz, i.e., all their eigenvalues have negative real parts.
\end{assumption}

We note that using standard linear algebra, one can show that Assumption \ref{ass:hurwitz_main} on $A_{22}$ is weaker than the strong monotonicity assumption in prior work such as \cite[Eq. (5)]{mou2021optimal}, which studies the finite time convergence bound of two-time-scale SA.

Assumption \ref{ass:hurwitz_main} enables us to solve the set of linear equations \eqref{eq:lin_main} as follows. First, for a fixed value of $y$, the second equation has a unique solution $x^*(y) = A_{22}^{-1}(b_2-A_{21}y)$. Next, substituting $x^*(y)$ in the first equation, we can find $y^*=\Delta^{-1}(b_1-A_{12}A_{22}^{-1}b_2)$ and next $x^*=A_{22}^{-1}(b_2-A_{21}\Delta^{-1}(b_1-A_{12}A_{22}^{-1}b_2))$ as the unique solution of this linear set of equations. Given access to the exact value of the matrices $A_{ij}, i,j\in\{1,2\}$ and the vectors $b_{i}, i\in\{1,2\}$, the above steps can be used to evaluate the exact solution to the linear equations \eqref{eq:lin_main}. However, unfortunately, in practical settings, we only have access to an oracle which at each time step $k$, produces a noisy variant of these matrices in the form of $A_{ij}(O_k), i,j\in\{1,2\}$ and $b_{i}(O_k), i\in\{1,2\}$, where $O_k$ is the sample of the Markov chain $\{O_l\}_{l\geq0}$ at time $k$. We assume that this Markov chain satisfies the following assumption:
\begin{assumption}\label{ass:poisson_main}
$\{O_k\}_{k\geq 0}$ is sampled from a finite state, irreducible, and aperiodic Markov chain with state space $\mathcal{S}$, transition probability $P$ and unique stationary distribution $\mu$. Furthermore, the expectation of $A_{ij}(O_k), i,j\in\{1,2\}$ and $b_{i}(O_k), i\in\{1,2\}$ with respect to the stationary distribution $\mu$ is equal to $A_{ij}, i,j\in\{1,2\}$ and $b_{i}, i\in\{1,2\}$, respectively. 
\end{assumption}
The two-time-scale linear stochastic approximation is an iterative scheme for solving the set of linear equations \eqref{eq:lin_main}, using the noisy oracles. To ensure convergence of SA, we impose the following assumption on the step sizes:
\begin{assumption}\label{ass:step_size_main} We consider step sizes
    $\alpha_k=\alpha/(k+K_0)^{\xi}$ with $0.5<\xi<1$, and $\beta_k=\beta/(k+K_0)$, where $\alpha>0$ and $K_0\geq 1$ can be any constant and $\beta$ should be such that $-\left(\Delta-\beta^{-1}I/2\right)$ is Hurwitz.
\end{assumption}
Choices of step sizes in Assumption \ref{ass:step_size_main} can be justified as follows. Firstly, both $\alpha_k$ and $\beta_k$ converge to zero, which is necessary to ensure dampening of the updates of $x_k$ and $y_k$ to zero. Secondly, both of $\alpha_k$ and $\beta_k$ are non-summable, (i.e., $\sum_{k=1}^\infty \alpha_k=\sum_{k=1}^\infty \beta_k=\infty$.) Intuitively speaking, $\sum_{k=1}^\infty \alpha_k$ and $\sum_{k=1}^\infty \beta_k$ are proportional to the distance that can be traversed by the variables $x$ and $y$, respectively. Hence, in order to ensure that both the variables can explore the entire space, non-summability of the step sizes is essential.
Note that among the class of step sizes of the form $\beta_k=\beta/(k+K_0)^\nu$, $\nu=1$ is the maximum exponent that can satisfy this requirement. Thirdly, $\xi<1$ ensures a time-scale separation between the updates of the variables $x$ and $y$. In particular, $x_k$ is updated in a faster time-scale compared to $y_k$. Intuitively speaking, throughout the updates, $x_k$ ``observes'' $y_k$ as stationary, and Eq. \eqref{eq:two_time_scale_main_x} converges ``quickly'' to $x(y_k)\simeq A_{22}^{-1}(b_2-A_{21}y_k)$. Next, Eq. \eqref{eq:two_time_scale_main_y} uses $x(y_k)$ to further proceed with the updates. Moreover, in this Markovian noise setting, we need to have $0.5<\xi$, which means the faster time-scale Eq. \eqref{eq:two_time_scale_main_x} should not be ``too fast'' to avoid a long delay of $y_k$ compared to $x_k$. Finally, this assumption requires $\beta$ to be large enough so that $-\left(\Delta-\beta^{-1}I/2\right)$ is Hurwitz.

\section{Main Results}\label{sec:main_result}
Before proceeding with the result, we define $\tilde{b}_i(\cdot)=b_i(\cdot)-b_i+(A_{i1}-A_{i1}(\cdot))y^*+(A_{i2}-A_{i2}(\cdot))x^*$ for $i\in\{1,2\}$.
Notice that by definition, we have $\E_{O\sim\mu}[\bt_i(O)]=0$.
Furthermore, 
note that by Assumption \ref{ass:poisson_main}, as shown in \cite[Proposition 21.2.3]{douc2018markov} there exist $\hat{b}_i(\cdot)~i\in\{1,2\}$ functions which are solutions to the following Poisson equations,
    \begin{align*}
        &\hat{b}_i(o)=\bt_i(o)+\sum_{o'\in \mathcal{S}}P(o'|o)\hat{b}_i(o')~~\forall~o\in \mathcal{S},\\
        &\sum_{o\in \mathcal{S}}\mu(o)\hat{b}_i(o)=0.
    \end{align*}
Next, we introduce some definitions that will be essential in the presentation of the main theorem. 
\begin{definition}\label{def:var}
Define the following matrices:
\begin{align}
\Gamma^x=&\E_{O\sim \mu}[\hat{b}_2(O)\bt_2(O)^\top +\bt_2(O)\hat{b}_2(O)^\top-\bt_2(O)\bt_2(O)^\top]\nonumber\\ 
\Gamma^{xy}
=&\E_{O\sim \mu}[\hat{b}_2(O)\bt_1(O)^\top+ \bt_2(O) \hat{b}_1(O)^\top -\bt_2(O)\bt_1(O)^\top]\nonumber\\
\Gamma^y=&\E_{O\sim \mu}[\hat{b}_1(O)\bt_1(O)^\top + \bt_1(O) \hat{b}_1(O)^\top-\bt_1(O)\bt_1(O)^\top].\nonumber
\end{align}
\end{definition}
In the following proposition we show that $\Gamma^x,\Gamma^{xy}$, and $\Gamma^{y}$ can be expressed in terms of $\bt_i, i\in\{1,2\}$ only.
\begin{proposition}\label{prop:alternate_pe}
    Let $\{\Tilde{O}_k\}_{k\geq 0}$ denote a Markov chain with $\tO_0\sim \mu$. Then, we have the following:
\begin{align}
\Gamma^x=&\E{[\bt_2(\tilde{O}_0)\bt_2(\tilde{O}_0)^\top]}+\sum_{j=1}^\infty \E{[\bt_2(\tilde{O}_j)\bt_2(\tilde{O}_0)^\top + \bt_2(\tilde{O}_0)\bt_2(\tilde{O}_j)^\top]}\nonumber\\
\Gamma^{xy}
=&\E{[\bt_2(\tilde{O}_0) \bt_1(\tilde{O}_0)^\top]}+ \sum_{j=1}^\infty \E{[\bt_2(\tilde{O}_j) \bt_1(\tilde{O}_0)^\top + \bt_2(\tilde{O}_0)\bt_1(\tilde{O}_j)^\top]}\nonumber\\
\Gamma^y=&\E{[\bt_1(\tilde{O}_0) \bt_1(\tilde{O}_0)^\top]}+\sum_{j=1}^\infty \E{[\bt_1(\tilde{O}_j) \bt_1(\tilde{O}_0)^\top + \bt_1(\tilde{O}_0) \bt_1(\tilde{O}_j)^\top]}.\nonumber
\end{align}
\end{proposition}

The proof of Proposition \ref{prop:alternate_pe} can be found in Appendix \ref{sec:app_proof_main}. 

Next, in Theorem \ref{thm:Markovian_main} we state our main result. In this theorem, we study the convergence behavior of $y_k$ and $x_k$, where we state our result in terms of $\yh_k=y_k-y^*$ and $\xh_k=x_k - x^* + A_{22}^{-1}A_{21}(y_k-y^*)$. In this theorem, we establish the dependence of our upper bound with respect to $d=\max\{d_x,d_y\}$. 

\begin{theorem}\label{thm:Markovian_main}
Under Assumptions \ref{ass:hurwitz_main}, \ref{ass:poisson_main}, and \ref{ass:step_size_main}, for all $k\geq 0$ we have
\begin{align}
    \E[\yh_k\yh_k^\top] =& \beta_k \Sigma^y+\frac{1}{(k+K_0)^{1+(1-\varrho)\min(\xi-0.5, 1-\xi)}} C^y_k(\varrho, d)\label{eq:main_y_main}\\
    \E[\xh_k\yh_k^\top]=&\beta_k \Sigma^{xy}+\frac{1}{(k+K_0)^{\min(\xi +0.5, 2-\xi)}} C^{xy}_k(\varrho, d)\label{eq:main_xy_main}\\
    \E[\xh_k\xh_k^\top]=&\alpha_k \Sigma^x+\frac{1}{(k+K_0)^{\min(1.5\xi, 1)}} C^x_k(\varrho, d),\label{eq:main_x_main}
\end{align}
where $0<\varrho<1$ is an arbitrary constant, $\sup_k\max\{\|C_k^y(\varrho, d)\|,\|C_k^{xy}(\varrho, d)\|,\|C_k^x(\varrho, d)\|\}<c_0(\varrho,d)<\infty$ for some problem-dependent constant $c_0(\varrho, d)$\footnote{Throughout the paper, unless otherwise stated, $\|\cdot\|$ represents Euclidean 2-norm.}, and $\Sigma^y$, $\Sigma^{xy}={\Sigma^{yx}}^\top$ and $\Sigma^x$ are unique solutions to the following system of equations: 
\begin{subequations}\label{eq:lyap_eq_tts}    
\begin{align}
    &A_{22}\Sigma^x+ \Sigma^xA_{22}^\top=\Gamma^x\label{eq:sigma_x_def_main}\\
    &A_{12} \Sigma^x+\Sigma^{xy}A_{22}^\top =\Gamma^{xy}\label{eq:sigma_xy_def_main}\\
    &\left(\Delta-\frac{1}{2\beta}I\right)\Sigma^y +\Sigma^y \left(\Delta^\top-\frac{1}{2\beta}I\right)= \Gamma^y- A_{12}\Sigma^{xy}-\Sigma^{yx}A_{12}^\top\label{eq:sigma_y_def_main}.
\end{align}
\end{subequations}
Furthermore, the constant of the higher order term satisfies $c_0(\varrho,d)= \mathcal{O}(d^2)$. 
\end{theorem}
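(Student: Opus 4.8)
The plan is to work with the transformed iterates $\yh_k = y_k - y^*$ and $\xh_k = x_k - x^* + A_{22}^{-1}A_{21}(y_k - y^*)$, whose purpose is to decouple the leading dynamics so that $\yh_k$ is driven by the Schur complement $\Delta$ and $\xh_k$ by $A_{22}$. First I would substitute $x_k = x^* + \xh_k - A_{22}^{-1}A_{21}\yh_k$ and $y_k = y^* + \yh_k$ into \eqref{eq:two_time_scale_main}, and use the fixed-point identities $b_i = A_{i1}y^* + A_{i2}x^*$ together with the definition of $\bt_i$ to obtain exact recursions of the schematic form
\begin{align}
\yh_{k+1} &= (I - \beta_k \Delta)\yh_k - \beta_k A_{12}\xh_k + \beta_k \bt_1(O_k) + (\text{multiplicative-noise terms}),\nonumber\\
\xh_{k+1} &= (I - \alpha_k A_{22})\xh_k + \alpha_k \bt_2(O_k) + (\text{cross and multiplicative-noise terms}).\nonumber
\end{align}
The extra terms collect the products of the zero-mean fluctuations $A_{ij}(O_k) - A_{ij}$ with the iterates, together with the $O(\beta_k)$ coupling that the change of variables introduces into the $\xh$ recursion; these are the sources of the higher-order corrections.

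The central difficulty is that $\bt_i(O_k)$ is not a martingale difference under Markovian sampling, so I cannot directly take conditional expectations. To handle this I would invoke the Poisson solutions $\hat{b}_i$ already introduced, using the identity $\bt_i(O_k) = \hat{b}_i(O_k) - \E[\hat{b}_i(O_{k+1}) \mid O_k]$ to split each noise contribution into a genuine martingale increment plus a telescoping remainder. An Abel-summation argument then converts the accumulated Markovian noise into terms whose stationary second moments are exactly $\Gamma^x, \Gamma^{xy}, \Gamma^y$, which is precisely the content of Proposition \ref{prop:alternate_pe}. Forming outer products and taking expectations, I would derive coupled recursions for $U_k = \E[\yh_k\yh_k^\top]$, $V_k = \E[\xh_k\yh_k^\top]$, and $Z_k = \E[\xh_k\xh_k^\top]$.

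Next I would identify the leading terms via the ansatz $U_k = \beta_k\Sigma^y + (\text{h.o.t.})$, $Z_k = \alpha_k\Sigma^x + (\text{h.o.t.})$, and $V_k = \beta_k\Sigma^{xy} + (\text{h.o.t.})$, matching the dominant balance in each recursion to recover the Lyapunov system \eqref{eq:lyap_eq_tts}: the $A_{22}$-Lyapunov equation for $\Sigma^x$, the coupling relation for $\Sigma^{xy}$, and the $(\Delta - \tfrac{1}{2\beta}I)$-Lyapunov equation for $\Sigma^y$. The $\tfrac{1}{2\beta}I$ shift arises because the ansatz $U_k = \beta_k\Sigma^y$ carries the decay of the step size itself: matching $U_{k+1} = \beta_{k+1}\Sigma^y$ against the one-step contraction produces an extra $-\tfrac{1}{\beta}\Sigma^y$ from $\beta_{k+1} - \beta_k \approx -\beta_k^2/\beta$, which splits symmetrically into the two $\tfrac{1}{2\beta}I$ terms. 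Assumptions \ref{ass:hurwitz_main} and \ref{ass:step_size_main} guarantee that $-A_{22}$ and $-(\Delta - \tfrac{1}{2\beta}I)$ are Hurwitz, so each equation admits a unique solution.

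The bulk of the work is bounding the residuals $C_k^y, C_k^{xy}, C_k^x$. I would set up a simultaneous induction on the three rescaled remainders, feeding the bound on $V_k$ into the $U_k$ recursion and vice versa, and carefully accumulate the step-size products. The exponents $1 + (1-\varrho)\min(\xi - 0.5, 1 - \xi)$, $\min(\xi + 0.5, 2 - \xi)$, and $\min(1.5\xi, 1)$ emerge from competing error sources: the time-scale mismatch $\alpha_k/\beta_k \sim (k+K_0)^{1-\xi}$, the Markovian mixing error which scales like $(k+K_0)^{-(\xi - 0.5)}$ under the constraint $\xi > 0.5$, and the discretization error of order $\beta_k$, with the free parameter $\varrho$ interpolating between them and the $\min$ selecting whichever source dominates. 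I expect the hardest step to be closing this coupled induction: because the two scales exchange errors, the remainder bounds must be proved jointly rather than sequentially, and one must verify that no term compounds to a factor degrading the rate below the stated exponents. Finally, to obtain $c_0(\varrho, d) = \mathcal{O}(d^2)$, I would track the dimension through the trace/Frobenius norms of the matrix recursions. The essential point is that the matrix-valued analysis captures the entire leading term $\Sigma^y$ exactly, so the dominant $\beta_k\Sigma^y$ carries no spurious dimension factor, unlike the dimension-linear bound of \cite{kaledin2020finite}; the $d^2$ then appears only in the remainder, from pairing one dimension factor of a noise-covariance trace with one from an iterate outer product, and the crux is confirming the induction does not inflate this beyond second order.
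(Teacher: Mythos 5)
You follow the same architecture as the paper's proof --- Poisson-equation splitting of the Markovian noise into a martingale increment plus a telescoping remainder, outer-product (matrix-valued) Lyapunov functions, identification of the system \eqref{eq:lyap_eq_tts} by matching the ansatz against the one-step recursion (your explanation of the $\tfrac{1}{2\beta}I$ shift via $\beta_k-\beta_{k+1}\approx \beta_k^2/\beta$ is exactly the mechanism in the paper), and a joint induction on the rescaled remainders with dimension tracking --- but you deviate on one substantive point: the decoupling transformation. You work throughout with the fixed change of variables $\xh_k$, which leaves a residual coupling $-\beta_k A_{22}^{-1}A_{21}(\Delta \yh_k + A_{12}\xh_k) + \beta_k A_{22}^{-1}A_{21} f_1(O_k,x_k,y_k)$ in the fast recursion. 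The paper instead uses the time-varying transformation $\tx_k = \xh_k + L_k y_k$ inspired by \cite{konda2004convergence} (Section \ref{sec:pf_4}, Lemmas \ref{lem:L_k_bound} and \ref{lem:x_xy_y_prime}), with $L_k$ chosen so that the coupling vanishes exactly ($B_{21}^k=0$ for large $k$); this makes the induction triangular ($\tX'_k$ autonomous, $\tZ'_k$ fed by $\tX'_k$, $\tY'_k$ fed by $\tZ'_k$), and the conversion back to $\xh_k$ is paid only once at the end, at cost $\|L_k\|=O(\beta_k/\alpha_k)$. Your route appears viable: the residual coupling injects forcing of order $\alpha_k\beta_k$ into the $\xh_k\xh_k^\top$ recursion and of order $\beta_k^2$ into the cross recursion, which after accumulation against the $\alpha_k$-contraction give $O(\beta_k)$ and $O(\beta_k^2/\alpha_k)$ contributions, both inside the envelopes $\zeta_k^x$ and $\zeta_k^{xy}$ (using $\beta_k\leq\beta\zeta_k^x$ and $\min(\xi+0.5,2-\xi)\leq 2-\xi$). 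What this buys is avoiding the $L_k$ machinery; what it costs is a fully intertwined three-way induction in place of the paper's triangular one.

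Two gaps in the sketch would have to be filled. First, the induction cannot close without an a priori uniform moment bound of the type $\E[\|x_k\|^2+\|y_k\|^2]\leq \brc d$ (the paper's Lemma \ref{lem:boundedness}): terms evaluated at index $k+1$, such as the Lipschitz increment $\fh_i(\cdot,x_{k+1},y_{k+1})-\fh_i(\cdot,x_k,y_k)$ paired with $\xh_{k+1}$, are not covered by the step-$k$ induction hypothesis and require an absolute bound; your proposal never establishes one. Second, your account of $\varrho$ is inaccurate: it does not interpolate between mixing and discretization error sources. It is the slack between the contraction $\|I-\beta_k\Delta\|_{Q_{\Delta,\beta}}^2$ and the growth ratio $\zeta_k^y/\zeta_{k+1}^y$ in the remainder recursion for the slow variable, and pushing $\varrho\to 0$ improves the exponent only at the price of an arbitrarily large constant $c_0(\varrho,d)$. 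Neither point invalidates the plan; both are ingredients you would be forced to supply when closing the induction.
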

The proof of Theorem \ref{thm:Markovian_main} is provided in Appendix \ref{sec:app_proof_main}. Theorem \ref{thm:Markovian_main} shows that matrix $\E[\yh_k\yh_k^\top]$ can be written as a sum of two matrices $\beta_k\Sigma^y$ and $\frac{1}{(k+K_0)^{1+(1-\varrho)\min(\xi-0.5, 1-\xi)}} C^y_k(\varrho, d)$. The first term is the leading term, which dominates the behavior of $\E[\yh_k\yh_k^\top]$ asymptotically. In addition, since $\varrho<1$ and $0.5<\xi<1$, the second term behaves as a higher-order term. The parameter $\varrho$ determines the behavior of the higher-order term. As $\varrho$ gets closer to $0$, the convergence rate of the non-leading term approaches $\frac{1}{(k+ K_0)^{1+\min(\xi-0.5,1-\xi)}}$. However, $c_0(\varrho, d)$ might become arbitrarily large. In addition, the constant $c_0(\varrho, d)$ in Theorem \ref{thm:Markovian_main} depends on all the parameters of the problem, such as $P, \alpha,\beta$, and $A_{ij}, b_i, i\in\{i,j\}$, and the initial condition, i.e. $x_0$ and $y_0$. 

\textbf{Solution of Eq. \eqref{eq:lyap_eq_tts}:} To solve the set of Eqs. in \eqref{eq:sigma_x_def_main}-\eqref{eq:sigma_y_def_main}, we first obtain $\Sigma^x$ by solving the Lyapunov equation \eqref{eq:sigma_x_def_main}. Next, we solve for $\Sigma^{xy}$ using the linear equation \eqref{eq:sigma_xy_def_main}. Finally, we obtain $\Sigma^y$ by solving the Lyapunov equation \eqref{eq:sigma_y_def_main}. The following proposition whose proof can be found in Appendix \ref{sec:app_proof_main} shows that the right hand side of Eq. \eqref{eq:sigma_y_def_main} is a positive definite matrix, which verifies that the Lyapunov equation \eqref{eq:sigma_y_def_main} has a unique solution.
\begin{proposition}\label{prop:positive_def}
    Define the random vector $h_N=\frac{1}{N}\sum_{j=0}^{N-1} \tilde{b}_1(\tO_j)-A_{12}A_{22}^{-1}\tilde{b}_2(\tO_j)$. Then, we have
    \begin{align*}
        \Gamma^y- A_{12}\Sigma^{yx}-\Sigma^{xy}A_{12}^\top = \lim_{N\rightarrow\infty} \E[h_Nh_N^\top].
    \end{align*}
\end{proposition}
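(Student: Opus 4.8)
The plan is to recognize $h_N$ as a central-limit-normalized partial sum of a single mean-zero stationary sequence, identify its limiting second moment with the long-run variance (which is automatically positive semidefinite), and then reduce this long-run variance algebraically to the right-hand side of \eqref{eq:sigma_y_def_main}. Concretely, set $g(o)=\bt_1(o)-A_{12}A_{22}^{-1}\bt_2(o)$, so that with the central limit scaling $h_N=N^{-1/2}\sum_{j=0}^{N-1} g(\tO_j)$. Since $\tO_0\sim\mu$ and $\E_{O\sim\mu}[\bt_i(O)]=0$, the chain $\{\tO_j\}$ is stationary and $\E[g(\tO_j)]=0$, so each $\E[h_Nh_N^\top]$ is a genuine covariance matrix and hence positive semidefinite.

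First I would compute $\E[h_Nh_N^\top]$ by expanding the product and grouping terms by their lag. Writing $R(m)=\E[g(\tO_m)g(\tO_0)^\top]$ for $m\ge 0$ and $R(-m)=R(m)^\top$, stationarity gives $\E[h_Nh_N^\top]=\sum_{|m|<N}(1-|m|/N)\,R(m)$. Under Assumption \ref{ass:poisson_main} the chain is finite, irreducible, and aperiodic, hence geometrically ergodic, so $\|R(m)\|$ decays geometrically and $\{R(m)\}$ is absolutely summable; dominated convergence then yields
\begin{align}
\lim_{N\to\infty}\E[h_Nh_N^\top]=\sum_{m\in\mathbb{Z}}R(m)=R(0)+\sum_{m\ge1}\big(R(m)+R(m)^\top\big)=:\Gamma_g,\nonumber
\end{align}
with $\Gamma_g\succeq 0$ as a limit of positive semidefinite matrices.

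Next I would expand $\Gamma_g$ by bilinearity of the long-run variance in $g=\bt_1-A_{12}A_{22}^{-1}\bt_2$. Each of the four resulting pieces is exactly one of the long-run (cross-)covariances of $\bt_1,\bt_2$ supplied by Proposition \ref{prop:alternate_pe}, namely $\Gamma^y$, $\Gamma^x$, $\Gamma^{xy}$, and $\Gamma^{yx}=(\Gamma^{xy})^\top$, so that
\begin{align}
\Gamma_g=\Gamma^y-\Gamma^{yx}(A_{12}A_{22}^{-1})^\top-(A_{12}A_{22}^{-1})\Gamma^{xy}+(A_{12}A_{22}^{-1})\Gamma^x(A_{12}A_{22}^{-1})^\top.\nonumber
\end{align}
Finally I would eliminate $\Gamma^x$ and $\Gamma^{xy}$ in favor of $\Sigma^x$ and $\Sigma^{xy}$ using the defining relations \eqref{eq:sigma_x_def_main} and \eqref{eq:sigma_xy_def_main} (together with the transpose of the latter): substituting $\Gamma^x=A_{22}\Sigma^x+\Sigma^xA_{22}^\top$ lets the factors $A_{22}^{-1}$ cancel against $A_{22}$, after which the cross terms combine with the remaining expressions so that the whole quantity collapses to $\Gamma^y-A_{12}\Sigma^{xy}-\Sigma^{yx}A_{12}^\top$, the right-hand side of \eqref{eq:sigma_y_def_main}. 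This identifies that right-hand side with $\Gamma_g=\lim_N\E[h_Nh_N^\top]\succeq0$, which, combined with $-(\Delta-\tfrac{1}{2\beta}I)$ being Hurwitz (Assumption \ref{ass:step_size_main}), yields a unique positive semidefinite solution $\Sigma^y$.

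The main obstacle is this final algebraic reduction: one must track the $A_{22}^{-1}$/$A_{22}$ cancellations and all transposes so that the three $\Gamma$-terms carrying $A_{12}A_{22}^{-1}$ factors collapse exactly into the two $\Sigma$-terms of \eqref{eq:sigma_y_def_main}, which requires applying the defining equations in the right order (first \eqref{eq:sigma_x_def_main} on the $\Gamma^x$ term, then \eqref{eq:sigma_xy_def_main} and its transpose). A secondary technical point is the interchange of limit and infinite sum, justified by the geometric mixing of the finite chain, which guarantees absolute summability of $\{R(m)\}$.
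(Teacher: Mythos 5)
Your proposal is correct and follows essentially the same route as the paper's own proof: both compute $\lim_{N}\E[h_Nh_N^\top]$ as the long-run covariance of the stationary sequence via the lag-counting/Ces\`aro argument and geometric mixing, identify the resulting four pieces with $\Gamma^y$, $\Gamma^x$, $\Gamma^{xy}$, $\Gamma^{yx}$ through Proposition \ref{prop:alternate_pe}, and then use \eqref{eq:sigma_x_def_main} and \eqref{eq:sigma_xy_def_main} to collapse the expression to the right-hand side of \eqref{eq:sigma_y_def_main}. The only difference is organizational (you keep the single sequence $g=\bt_1-A_{12}A_{22}^{-1}\bt_2$ and expand by bilinearity at the end, while the paper expands the four double sums first), which changes nothing substantive.
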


\textbf{Asymptotic optimality of Theorem \ref{thm:Markovian_main}:} The results in Theorem \ref{thm:Markovian_main} are asymptotically optimal. In particular, since the results in this theorem are in terms of equality, we have 
    \begin{align*}
        \lim_{k \rightarrow \infty} \frac{1}{\beta_k} \E[\yh_k \yh_k^\top]  &=  \Sigma^y,\\
        \lim_{k \rightarrow \infty} \frac{1}{\alpha_k} \E[\xh_k \xh_k^\top] & =  \Sigma^x.
    \end{align*}
In a work \cite{hu2024central} that appeared simultaneously as ours, central limit theorem for two-timescale SA with Markovian noise has been established. In this work, the authors show that $\yh_k/\sqrt{\beta_k}\xrightarrow{dist.}\mathcal{N}(0,\Sigma^y)$ and $\hat{x}_k/\sqrt{\alpha_k} \xrightarrow{dist}\mathcal{N}(0,\Sigma^x)$, which verifies the asymptotic optimality of our results. We also study the behavior of  $\E[\yh_k\xh_k^\top]$ and observe that $\E[\yh_k\xh_k^\top]$
has convergence with the rate $\beta_k$, and the asymptotic covariance of  $\E[\yh_k\xh_k^\top]/\beta_k$ is $\Sigma^{xy}$.

Given our result in Theorem \ref{thm:Markovian_main}, we can easily establish a convergence bound in terms of $\E[\|\yh_k\|^2]$. The following corollary states this result.

\begin{corollary}\label{cor:l2_bound}
For all $k\geq 0$, the iterations of two-time-scale linear SA \ref{eq:two_time_scale_main} satisfies
    \begin{align*}
        \E[\|\yh_k\|^2]\leq \beta_k \text{tr}(\Sigma^y)+\frac{c(d)}{(k+K_0)^{1+0.5\min(\xi-0.5, 1-\xi)}},
    \end{align*}
    where $c(d)=\mathcal{O}(d^3)$ is a problem-dependent constant. 
\end{corollary}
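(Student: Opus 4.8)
The plan is to obtain the scalar $\ell_2$ bound as a direct deterministic consequence of the matrix identity \eqref{eq:main_y_main} in Theorem \ref{thm:Markovian_main}, by taking a trace. The starting observation is that $\E[\|\yh_k\|^2]=\E[\yh_k^\top\yh_k]=\text{tr}(\E[\yh_k\yh_k^\top])$, since $\yh_k^\top\yh_k=\text{tr}(\yh_k\yh_k^\top)$ and trace commutes with expectation by linearity. Applying trace to both sides of \eqref{eq:main_y_main} then gives
\begin{align*}
\E[\|\yh_k\|^2]=\beta_k\,\text{tr}(\Sigma^y)+\frac{1}{(k+K_0)^{1+(1-\varrho)\min(\xi-0.5,1-\xi)}}\,\text{tr}(C^y_k(\varrho,d)).
\end{align*}
This already reproduces the leading term $\beta_k\,\text{tr}(\Sigma^y)$ exactly; the only remaining work is to control the higher-order term and to convert the equality into the claimed inequality.

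To match the exponent in the corollary, I would fix the free parameter to $\varrho=1/2$. This is legitimate because $\varrho\in(0,1)$ is arbitrary in Theorem \ref{thm:Markovian_main}, and with $(1-\varrho)=1/2$ the exponent $1+(1-\varrho)\min(\xi-0.5,1-\xi)$ becomes precisely $1+0.5\min(\xi-0.5,1-\xi)$, as required. Since the higher-order term may have either sign, I would then bound it by its absolute value, so that the equality above becomes the stated upper bound.

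It remains to bound $\text{tr}(C^y_k(1/2,d))$. Here I would use the elementary inequality $|\text{tr}(M)|\le d_y\,\|M\|$, valid for any $d_y\times d_y$ matrix $M$, which follows from $|M_{ii}|=|e_i^\top M e_i|\le\|M\|$ summed over the $d_y$ diagonal entries. Combining this with the uniform spectral-norm bound $\sup_k\|C^y_k(1/2,d)\|<c_0(1/2,d)=\mathcal{O}(d^2)$ supplied by the theorem yields $|\text{tr}(C^y_k(1/2,d))|\le d_y\,c_0(1/2,d)=\mathcal{O}(d^3)$, since $d_y\le d$. Setting $c(d):=d_y\,c_0(1/2,d)$ completes the argument.

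The main obstacle, and really the only nontrivial accounting step, is tracking the dimension dependence so that the final constant scales as $\mathcal{O}(d^3)$ rather than worse. The extra factor of $d$ beyond the $\mathcal{O}(d^2)$ scaling of $c_0$ arises exactly from the trace-versus-spectral-norm inequality, which loses a factor equal to the dimension $d_y$. No additional probabilistic or analytic input is needed beyond Theorem \ref{thm:Markovian_main}; the corollary is purely a specialization of the matrix-valued result to a scalar quantity via the trace functional.
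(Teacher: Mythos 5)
Your proposal is correct and follows essentially the same route as the paper: the paper's proof likewise takes the trace of both sides of \eqref{eq:main_y_main}, fixes $\varrho=0.5$ to obtain the stated exponent, and bounds the remainder via $\mathrm{trace}(C_k^y(0.5))\leq d\,\|C_k^y(0.5)\|\leq d\,c_0(0.5,d)=\mathcal{O}(d^3)$. Your diagonal-entries justification of the trace-versus-norm inequality is exactly the content of the paper's Lemma \ref{lem:norm_bound}, so nothing is missing.
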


As a direct application of Theorem \ref{thm:Markovian_main}, we can establish the convergence bound of various RL algorithms such as TD-learning with Polyak-Ruppert averaging, TDC, GTD, and GTD2. In Sections \ref{sec:lin_pol} and \ref{sec:app_RL} we will study these algorithms. 

Several remarks are in order with respect to this result. 

\textbf{Dimension dependency of our result:} As discussed before, the leading term in the convergence result of Theorem \ref{thm:Markovian_main} is tight (including its dimension dependency), and the dimension dependency of the higher order term is $\mathcal{O}(d^2)$. Compared to the most related work to ours, \cite{kaledin2020finite} has $\mathcal{O}(d^5)$ and $\mathcal{O}(d^7)$ dimension dependency in their convergence bound of $\hat{y}_k$ and $\hat{x}_k$, respectively. Hence, our result significantly improves on the $d$-dependency compared to the prior work. For a complete analysis of the $d$-dependency of \cite{kaledin2020finite}, please look at Section \ref{sec:kaledin_d_dependency}.

\textbf{Higher order terms:} Theorem \ref{thm:Markovian_main} shows that $\max\{\|C_k^y(\varrho)\| , \|C_k^{xy}(\varrho)\| ,\|C_k^x(\varrho)\|\}$ is bounded with a problem-dependent constant for all $k\geq 0$. However, it might be that $\max\{\|C_k^y(\varrho)\|,\|C_k^{xy}(\varrho)\|,\|C_k^x(\varrho)\|\}$ is decreasing with respect to $k$. Studying the tightness of the bound on the higher order terms is a future research direction.

\textbf{Discussion on the Assumptions:} 
The result of Theorem \ref{thm:Markovian_main} is stated under Assumptions \ref{ass:hurwitz_main}, \ref{ass:poisson_main}, and \ref{ass:step_size_main}. Assumption \ref{ass:hurwitz_main} is standard in the asymptotic and finite time analysis of two-time-scale linear SA \cite{konda2004convergence, gupta2019finite, kaledin2020finite}. When dealing with Markovian noise, Assumption \ref{ass:poisson_main} is standard in the literature \cite{bhandari2018finite, khodadadian2022finite}. Finally, Assumption \ref{ass:step_size_main} is regarding the choice of step size, which will be elaborated further in Section \ref{sec:choice_of_step_size}.

\begin{remark}    
For general two-time-scale linear SA, when the matrix $\Delta$ is unknown, the algorithm can become sensitive to the choice of step size parameter $\beta$. A common approach to address this sensitivity is to employ iterate averaging alongside the updates \cite{mokkadem2006convergence}. However, implementing iterate averaging introduces a third time-scale, resulting in a more complex three-time-scale algorithm, which lies beyond the scope of this paper.
\end{remark}

\subsection{Choice of step size}\label{sec:choice_of_step_size}

In Assumption \ref{ass:step_size_main}, we impose several conditions on the step size parameters. Regarding the step size $\beta_k$, although we could select it as $\beta_k=\frac{\beta}{(k+K_0)^\nu}$ for any $\xi<\nu\leq 1$, we specifically choose the restrictive step size $\frac{\beta}{(k+K_0)}$. The rationale behind this choice is that the convergence of $\E[\hat{y}_k\hat{y}_k^\top]$ is inherently limited by the rate $\beta_k$. Hence, setting $\nu=1$ provides the optimal possible convergence rate for $\E[\hat{y}_k\hat{y}_k^\top]$. Additionally, we impose a restrictive condition $0.5<\xi$ in Assumption \ref{ass:step_size_main}. While it might appear as merely a technical condition of our proof, numerical simulations (see Figure \ref{fig:xi}) demonstrate that when the noise is Markovian and $\xi<0.5$, $\E[\hat{y}_k\hat{y}_k^\top]$ fails to exhibit the convergence behavior described in \eqref{eq:main_y_main}. Another essential condition is that $\beta$ must be sufficiently large to ensure that $-(\Delta - \frac{I}{2\beta})$ is Hurwitz. This necessity is further validated by the simulation results shown in Figure \ref{fig:beta}. More detailed simulation information is provided in Appendix \ref{appendix:sim}.

\begin{figure*}[t!]
     \centering
     \begin{subfigure}[b]{0.49\textwidth}
         \centering
         \includegraphics[width=1\textwidth]{Arxiv/images/sim4.tex}
         \caption{Effect of $\xi$}
         \label{fig:xi}
     \end{subfigure}
     \hfill
     \begin{subfigure}[b]{0.49\textwidth}
         \centering
         \includegraphics[width=1\textwidth]{Arxiv/images/Polyak.tex}
         \caption{Effect of $\beta$}
         \label{fig:beta}
     \end{subfigure}
        \caption{Convergence behaviour of $\mathcal{E}_k$ for various choices of $\xi$ and $\beta$, where $\mathcal{E}_k=\frac{\|\yh_k\yh_k^\top\|}{\beta_k}$. The bold lines show the mean behavior across 5 sample paths, while the shaded region is the standard deviation from the mean. Both plots show a transition from stability to divergence of $\mathcal{E}_k$ when $\xi$ or $\beta$ do not satisfy the assumption \ref{ass:step_size_main}.}
        \label{fig:parameter}
\end{figure*}

To verify which $\xi$ gives the best sample complexity, a lower bound must be established for the higher-order term, which is a potential future research direction.

\textbf{Optimal choice of step size in the slower time-scale:} To achieve the best rate for the higher-order terms in \eqref{eq:main_y_main}, we select $\xi$ to maximize $\min(\xi-0.5, 1-\xi)$, yielding an optimal value of $\xi=0.75$. Previous studies, such as \cite{moulines2011non,srikant2024rates}, suggest an optimal $\xi=2/3$. Specifically, \cite{moulines2011non} considers non-linear SA with martingale noise and Polyak-Ruppert averaging, and in their linear scenario, the optimal choice reduces further to $\xi=0.5$.
\footnote{In the linear setting, \cite[Theorem 3]{moulines2011non} simplifies to $\sqrt{\E[|y_n|^2]}\leq \frac{\sigma^2}{\sqrt{n}}+\mathcal{O}\left(\frac{1}{n^{1-\xi/2}} + \frac{1}{n^{(1+\xi)/2}}\right)$, leading to an optimal $\xi=0.5$.} It is important to note that these optimal choices for $\xi$ are derived from upper bound analyses rather than exact error minimization. Determining the definitive optimal step size via establishing lower bounds remains a promising direction for future research.

\textbf{Optimal choice of step size in the faster time-scale:} Our results facilitate choosing the optimal $\beta$ to achieve the fastest convergence of Algorithm \eqref{eq:two_time_scale_main}. Specifically, selecting $\beta$ to minimize $\|\beta\Sigma^y\|$, where $\Sigma^y$ solves Eq. \eqref{eq:sigma_y_def_main}, achieves the best asymptotic convergence for $\E[\hat{y}_k\hat{y}_k^\top]$. For instance, consider the special case where we assume $A_{21}(O_k)=0$, $b_1(O_k)=0$, $A_{11}(O_k)=I$ and $A_{12}(O_k)=-I$. In Appendix \ref{sec:best_step_size}, we show that $\beta=1$ achieves the best asymptotic covariance in the context of algorithm \eqref{eq:two_time_scale_main}, which corresponds to Polyak-Ruppert averaging.

\subsection{Single Time-Scale vs Two-Time-Scale}

In this section, we will discuss an alternative approach to find the solution of Eq. \eqref{eq:lin_main} given that at any time $k\geq0$ we have access to noisy oracles $A_{ij}(O_k)$ and $b_i(O_k)$, $i,j=1,2$. Consider constant $\kappa>0$, and  
\begin{align*}
    A_{\kappa}(O_k)= \begin{bmatrix}
        A_{11} (O_k) & A_{12}(O_k)\\
         \kappa A_{21}(O_k) & \kappa A_{22}(O_k)
    \end{bmatrix};~~~ b_\kappa(O_k)=\begin{bmatrix}
        b_1(O_k)\\
        \kappa b_2(O_k)
    \end{bmatrix}.
\end{align*}

Consider step size sequence of the form $\beta_k=\beta/(k+K_0)$ and denote $\mathrm{z}_k=[y_k, x_k]^\top$. Then, consider the following SA update rule
\begin{align}\label{eq:single_time_scale}
    \mathrm{z}_{k+1}= \mathrm{z}_k+\beta_k\left(b_\kappa(O_k)-A_\kappa(O_k)\mathrm{z}_k\right).
\end{align}
If $\kappa=\alpha/\beta$, the update rule \eqref{eq:single_time_scale} is equivalent to Eq. \eqref{eq:two_time_scale_main} with the the choice of step size such that $\alpha_k=\alpha\beta_k/\beta$. In addition, this SA is equivalent to single-time-scale linear SA  studied in \cite{srikant2019finite, chen2021lyapunov}. Denote $A_\kappa$ as the expectation of $A_\kappa(O)$ with respect to the stationary distribution. As shown in \cite{borkar2009stochastic,srikant2019finite}, assuming $-A_\kappa$ is Hurwitz, the SA \eqref{eq:single_time_scale} converges to $\mathrm{z}^*=[x^*, y^*]^\top$. 

\begin{remark}
Some of the prior works study the two-time-scale SA \eqref{eq:two_time_scale_main} under the framework of recursion \eqref{eq:single_time_scale}  \cite{shen2022single,doan2024fast, zeng2024fast}. Although these works refer to this algorithm as ``two-time-scale'', by the terminology of our work, \eqref{eq:single_time_scale} is a single-time-scale SA.
\end{remark}

We aim at answering the following two questions:
\begin{itemize}
    \item Consider the set of problems that can be solved by the two-time-scale SA \eqref{eq:two_time_scale_main}. How are they compared to the set of problems that can be solved by the single-time-scale SA \eqref{eq:single_time_scale}? This question is addressed in Section \ref{sec:comparison}.
    \item If our goal is to ensure the convergence of $(x_k,y_k)$ to $(x^*,y^*)$, which algorithm should we choose? This question is addressed in Section \ref{sec:gurantee}.
\end{itemize}

\subsubsection{Comparison of Set of Problems Solved by Single-Time-Scale vs Two-Time-Scale SA}\label{sec:comparison}
In this section, we show that Assumption \ref{ass:hurwitz_main} is a sufficient condition for the convergence of \eqref{eq:single_time_scale} with an appropriate choice of $\kappa$. However, the converse is not true. Fix a vector $b=[b_1,b_2]^\top$ and consider a set of linear equations of the form \eqref{eq:lin_main} with fixed vectors $b_1, b_2$ and matrices $A_{11},A_{12},A_{21},A_{22}$ such that $A=[A_{11},A_{12};A_{21},A_{22}]\in \mathcal{A}=\{A\in\mathbb{R}^{(d_x+d_y)\times(d_x+d_y)}\}$. Next, consider sets $\mathcal{B},\mathcal{C},\mathcal{D}$ defined as follows.
\begin{enumerate}
    \item $\mathcal{B}=\{A \in\mathcal{A}|\exists \kappa>0:-A_\kappa ~\text{is Hurwitz}\}:$ This is the set of linear problems that can be solved by a SA recursion of the form \eqref{eq:single_time_scale} with step sizes $\alpha_k=\alpha/(k+1)$ and $\beta_k=\beta/(k+1)$ for an appropriately chosen ratio $\alpha/\beta$. Note that this is a single-time-scale algorithm.
    \item $\mathcal{C}=\{A \in\mathcal{A}|-A ~\text{is Hurwitz}\}:$ This is the set of linear problems that can be solved by a SA recursion of the form \eqref{eq:single_time_scale} with step sizes $\alpha_k=\alpha/(k+1)$ and $\beta_k=\beta/(k+1)$ for any choice of $\alpha, \beta$ such that $\alpha=\beta$. This also corresponds to single-time-scale algorithm, albeit without any step-size tuning.
    \item $\mathcal{D}=\{A \in\mathcal{A}|-A_{22} ~\text{and}~-\Delta=-(A_{11}-A_{12}A_{22}^{-1}A_{21}) ~\text{are both Hurwitz}\}:$ This is the set of linear problems that can be solved by a SA recursion of the form \eqref{eq:two_time_scale_main} with step sizes $\alpha_k=\alpha/(k+1)^\xi$ and $\beta_k=\beta/(k+1)$ for any choice of $\alpha$ and $\beta$. This corresponds to the two-time-scale algorithm.
\end{enumerate}
The relation of the set of problems mentioned above is studied in Proposition \ref{prop:venn}.
\begin{proposition}\label{prop:venn}
    These sets of problems satisfy: $\mathcal{B}\subsetneq \mathcal{A}$, $\mathcal{C}\cup \mathcal{D}\subsetneq \mathcal{B}$, $\mathcal{C}\not\subset \mathcal{D}$, $\mathcal{D}\not\subset\mathcal{C}$, and $\mathcal{C}\cap \mathcal{D}\neq \varnothing$. 
\end{proposition}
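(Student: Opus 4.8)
The plan is to separate the five assertions into two set inclusions and three (non-)inclusions witnessed by explicit matrices. The two inclusions $\mathcal{C}\subseteq\mathcal{B}$ and $\mathcal{D}\subseteq\mathcal{B}$ yield $\mathcal{C}\cup\mathcal{D}\subseteq\mathcal{B}$, and $\mathcal{B}\subseteq\mathcal{A}$ is immediate from the definition of $\mathcal{B}$; the strictness of these inclusions, together with $\mathcal{C}\not\subset\mathcal{D}$, $\mathcal{D}\not\subset\mathcal{C}$ and $\mathcal{C}\cap\mathcal{D}\neq\varnothing$, will then follow from five small examples. The inclusion $\mathcal{C}\subseteq\mathcal{B}$ is trivial: taking $\kappa=1$ gives $A_1=A$, so $-A$ Hurwitz directly implies $-A_\kappa$ Hurwitz. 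The substantive inclusion is $\mathcal{D}\subseteq\mathcal{B}$, which is exactly the claim advertised at the start of Section~\ref{sec:comparison} that Assumption~\ref{ass:hurwitz_main} suffices for the convergence of the single-time-scale recursion \eqref{eq:single_time_scale} for a suitable $\kappa$.

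To establish $\mathcal{D}\subseteq\mathcal{B}$ I would argue by singular perturbation / eigenvalue splitting applied to $A_\kappa$. Setting $\epsilon=1/\kappa$ and viewing $A_\kappa$ as a singularly perturbed matrix, its $d_y$ ``slow'' eigenvalues converge as $\kappa\to\infty$ to the eigenvalues of $\Delta=A_{11}-A_{12}A_{22}^{-1}A_{21}$, while its $d_x$ ``fast'' eigenvalues grow like $\kappa$ times the eigenvalues of $A_{22}$. Because $A\in\mathcal{D}$ forces both $-A_{22}$ and $-\Delta$ to be Hurwitz, all eigenvalues of $\Delta$ and of $A_{22}$ have strictly positive real part, so for $\kappa$ large enough every eigenvalue of $A_\kappa$ has positive real part, i.e.\ $-A_\kappa$ is Hurwitz and $A\in\mathcal{B}$. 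The cleanest rigorous route is to block-triangularize $A_\kappa$ through the standard $\kappa$-dependent similarity transform obtained by solving the associated Sylvester/Riccati equation (which is solvable for large $\kappa$ since $A_{22}$ is invertible), reducing $A_\kappa$ to $\mathrm{blockdiag}\bigl(\Delta+\mathcal{O}(1/\kappa),\,\kappa A_{22}+\mathcal{O}(1)\bigr)$ and reading off the spectrum; an alternative is a composite Lyapunov function assembled from the solutions of the Lyapunov equations for $-\Delta$ and $-A_{22}$. I expect this eigenvalue-splitting step to be the main obstacle, as it is the only genuinely quantitative part of the argument.

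For strictness and the non-inclusions I would use $2\times2$ scalar-block examples ($d_x=d_y=1$), where $-A_\kappa$ is Hurwitz iff $\text{tr}(A_\kappa)=a_{11}+\kappa a_{22}>0$ and $\det(A_\kappa)=\kappa\det(A)>0$, while $-A_{22}$, $-\Delta$, $-A$ Hurwitz reduce to $a_{22}>0$, $\Delta=a_{11}-a_{12}a_{21}/a_{22}>0$, and ($\text{tr}(A)>0$ and $\det(A)>0$) respectively, and moreover $\det(A)=a_{22}\Delta$. Concretely: (i) $A=\mathrm{diag}(1,-1)$ has $\det(A)<0$, hence $\det(A_\kappa)<0$ for all $\kappa$ and $A\notin\mathcal{B}$, proving $\mathcal{B}\subsetneq\mathcal{A}$; (ii) $A=\bigl[\begin{smallmatrix}1&1\\-2&-1\end{smallmatrix}\bigr]$ has $\det(A)=1$, $a_{22}=-1$, $\Delta=-1$, $\text{tr}(A)=0$, so $\kappa=\tfrac12$ makes $-A_\kappa$ Hurwitz ($A\in\mathcal{B}$) yet $A\notin\mathcal{C}\cup\mathcal{D}$, proving $\mathcal{C}\cup\mathcal{D}\subsetneq\mathcal{B}$; (iii) $A=\bigl[\begin{smallmatrix}2&1\\-3&-1\end{smallmatrix}\bigr]$ has $\text{tr}(A)=\det(A)=1$ so $A\in\mathcal{C}$, but $a_{22}=-1<0$ so $A\notin\mathcal{D}$, proving $\mathcal{C}\not\subset\mathcal{D}$; (iv) $A=\bigl[\begin{smallmatrix}-2&1\\-3&1\end{smallmatrix}\bigr]$ has $a_{22}=1>0$ and $\Delta=1>0$ so $A\in\mathcal{D}$, but $\text{tr}(A)=-1<0$ so $A\notin\mathcal{C}$, proving $\mathcal{D}\not\subset\mathcal{C}$; and (v) $A=I$ satisfies all three Hurwitz conditions, so $A\in\mathcal{C}\cap\mathcal{D}\neq\varnothing$. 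To cover arbitrary $d_x,d_y\ge1$ I would embed each $2\times2$ witness into a block-diagonal matrix whose extra slow and fast coordinates are decoupled with entry $+1$ (contributing stable, i.e.\ $-1$, eigenvalues to each of $A$, $A_{22}$, and $\Delta$), so that the Hurwitz/non-Hurwitz status of all three spectra is inherited from the $2\times2$ witness.
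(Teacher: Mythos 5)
Your proposal is correct, and structurally it mirrors the paper's proof: both arguments consist of the two easy inclusions ($\mathcal{C}\subseteq\mathcal{B}$ via $\kappa=1$, and $\mathcal{D}\subseteq\mathcal{B}$ via a suitably chosen $\kappa$) followed by explicit $2\times 2$ witnesses for the strictness and (non-)inclusion claims; your five witnesses differ numerically from the paper's but are verified by the same trace/determinant reasoning and all check out. The one genuine difference is how $\mathcal{D}\subseteq\mathcal{B}$ is handled: the paper simply cites \cite[Theorem 6]{cothren2024onlinefeedbackoptimizationsingular}, whereas you sketch the underlying singular-perturbation argument (block-triangularization of $A_\kappa$ by a $\kappa$-dependent Riccati/Sylvester transform, giving slow eigenvalues near those of $\Delta$ and fast eigenvalues near $\kappa\,\mathrm{spec}(A_{22})$). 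That self-contained route is essentially the proof of the cited result and is a legitimate alternative, but as written it remains a sketch: solvability of the Riccati equation for large $\kappa$ and the $\mathcal{O}(1/\kappa)$ control of the off-diagonal remainder would need to be carried out to make it rigorous, and you correctly flag this as the only quantitative obstacle; citing the external theorem, as the paper does, is the shorter path. Two small merits of your version over the paper's: the systematic use of the planar criterion ($-M$ Hurwitz iff $\operatorname{tr} M>0$ and $\det M>0$, combined with $\det A_\kappa=\kappa\det A$ and $\det A=a_{22}\Delta$) makes every witness checkable at a glance, and your block-diagonal embedding remark extends the witnesses to arbitrary $d_x,d_y\geq 1$, a point the paper leaves implicit.
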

Figure \ref{fig:Venn_diagram} shows the relationship stated in Proposition \ref{prop:venn}, and the proof of this proposition is stated in Appendix \ref{sec:app_proof_main}. According to Proposition \ref{prop:venn}, a bigger class of problems can be solved by single-time-scale SA \eqref{eq:single_time_scale} with an appropriate choice of $\alpha/\beta$. Nevertheless, as discussed in the following section, two-time-scale SA offers the advantage of guaranteed convergence for the problems within the set $\mathcal{D}$.

\begin{figure}
    \centering
    \begin{tikzpicture}
      \begin{scope}
        \draw[fill=gray!30, opacity=0.5, thick] (-3,-2.5) rectangle (3,2.5);
        \fill[red!50, opacity=0.7] (0,0) circle (2cm);
        \fill[blue!50, opacity=0.7] (-0.5,0) circle (1cm);
        \fill[green!50, opacity=0.7] (0.5,0) circle (1cm);
        \node at (-2.4,1.7) {$\mathcal{A}$};
        \node at (-0.8,1.5) {$\mathcal{B}$};
        \node at (-1,0.4) {$\mathcal{C}$};
        \node at (0.9,0.4) {$\mathcal{D}$};
      \end{scope}
    \end{tikzpicture}
    \caption{The relationship among $\mathcal{A}, \mathcal{B}, \mathcal{C}, \mathcal{D}$ as 4 sets of the linear equations of the form \eqref{eq:lin_main}.}
    \label{fig:Venn_diagram}
\end{figure}

\subsubsection{Guaranteed Convergence of Two-Time-Scale SA}\label{sec:gurantee}

It can be shown that under Assumption \ref{ass:hurwitz_main}, if the ratio $\alpha/\beta$ is chosen large enough, then the block matrix $A_{\alpha/\beta}$ becomes Hurwitz \cite[Theorem 6]{cothren2024onlinefeedbackoptimizationsingular}. In contrast, if $\alpha/\beta$ is not appropriately chosen, then the algorithm \eqref{eq:single_time_scale} may diverge. Figure \ref{fig:ssa_div} shows an example of this divergence behavior when the ratio $\alpha/\beta$ is such that the matrix $A$ is not Hurwitz. Details of the experiment are given in the Appendix \ref{appendix:sim}.

Next, we show that the two-time-scale algorithm \eqref{eq:two_time_scale_main} with $\xi<1$ can ensure convergence (not necessarily optimally) to $\mathrm{z}^*$. 

\begin{proposition}\label{prop:loose_convergence}
    Consider the iterates of $x_k$ and $y_k$ in \eqref{eq:two_time_scale_main} and the step sizes
    $\alpha_k=\alpha/(k+1)^{\xi}$ with $0.5<\xi<1$, and $\beta_k=\beta/(k+1)$. Suppose Assumptions \ref{ass:hurwitz_main} and \ref{ass:poisson_main} are satisfied. Then, $x_k\rightarrow x^*$ and $y_k\rightarrow y^*$ in the mean squared sense.
\end{proposition}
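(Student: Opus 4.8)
The plan is to collapse the coupled two-time-scale recursion into a single scalar Lyapunov inequality and then invoke a standard Chung-type contraction lemma. The key observation is that mean-square convergence, unlike the tight constant of Theorem \ref{thm:Markovian_main}, needs only that $-\Delta$ and $-A_{22}$ are Hurwitz, and never the largeness condition on $\beta$ in Assumption \ref{ass:step_size_main}. First I would work in the variables $\yh_k = y_k - y^*$ and $\xh_k = x_k - x^* + A_{22}^{-1}A_{21}\yh_k = x_k - x^*(y_k)$. Substituting \eqref{eq:two_time_scale_main} and using the fixed-point identities gives
\begin{align*}
\yh_{k+1} &= \yh_k - \beta_k\big(A_{11}(O_k)-A_{12}(O_k)A_{22}^{-1}A_{21}\big)\yh_k - \beta_k A_{12}(O_k)\xh_k + \beta_k\bt_1(O_k),\\
\xh_{k+1} &= \xh_k - \alpha_k A_{22}(O_k)\xh_k - \alpha_k\Psi(O_k)\yh_k + \alpha_k\bt_2(O_k) + A_{22}^{-1}A_{21}(\yh_{k+1}-\yh_k),
\end{align*}
where $\Psi(O_k)=A_{21}(O_k)-A_{22}(O_k)A_{22}^{-1}A_{21}$. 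Two structural facts drive the argument: the stationary drift matrices are exactly $\Delta$ and $A_{22}$, and the coupling of $\yh_k$ into the fast block has zero stationary mean, $\E_{\mu}[\Psi(O)] = A_{21}-A_{21}=0$, so $\yh_k$ enters $\xh_{k+1}$ only through a mean-zero multiplicative term at rate $\alpha_k$ and through the benign increment $A_{22}^{-1}A_{21}(\yh_{k+1}-\yh_k)=O(\beta_k)$.

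Next, since $-A_{22}$ and $-\Delta$ are Hurwitz, Assumption \ref{ass:hurwitz_main} yields $Q_x,Q_y\succ0$ solving $A_{22}^\top Q_x+Q_xA_{22}=I$ and $\Delta^\top Q_y+Q_y\Delta=I$. With $V_k^x=\E[\xh_k^\top Q_x\xh_k]$ and $V_k^y=\E[\yh_k^\top Q_y\yh_k]$, expanding the quadratic forms produces the mean contractions $-c_x\alpha_k V_k^x$ and $-c_y\beta_k V_k^y$ with $c_x=1/\lambda_{\max}(Q_x)$, $c_y=1/\lambda_{\max}(Q_y)$; crucially these constants are independent of $\beta$. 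The remaining contributions are (a) second-order terms of order $(\alpha_k^2+\beta_k^2)(1+V_k^x+V_k^y)$, (b) the $\beta_k$-coupling between blocks, bounded by $C\beta_k(V_k^x+V_k^y)$ by Young's inequality, and (c) the Markovian cross-correlations $\alpha_k\E[\xh_k^\top Q_x\bt_2(O_k)]$, $\alpha_k\E[\xh_k^\top Q_x\Psi(O_k)\yh_k]$, $\beta_k\E[\yh_k^\top Q_y\bt_1(O_k)]$ and their multiplicative analogues.

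The type-(c) terms are the main obstacle: because $O_k$ is correlated with the iterates through the chain, the naive identity $\E[\xh_k^\top Q_x\bt_2(O_k)]=0$ fails. I would eliminate them using the Poisson solutions $\hat b_i$ introduced before Definition \ref{def:var}: writing $\bt_i(O_k)=\hat b_i(O_k)-\E[\hat b_i(O_{k+1})\mid\mathcal{F}_k]$ turns each correlation into a telescoping difference $T_k-T_{k+1}$ plus a one-step drift of order $O(\alpha_k)$ (resp. $O(\beta_k)$) times a uniformly bounded factor. Absorbing the telescoping part into a modified Lyapunov function $\tilde L_k = V_k^y+wV_k^x+(\text{Poisson correction})$ and bounding the drift by Young's inequality folds type (c) into terms already of the size of (a)--(b); the multiplicative correlation with $\Psi$ is treated identically, exploiting $\E_\mu[\Psi]=0$. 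Finiteness of $\mS$ together with Assumption \ref{ass:poisson_main} makes all $\bt_i,\hat b_i$ and noise matrices uniformly bounded, so every residual constant is finite.

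Finally I would combine the resulting coupled inequalities, of the form $V_{k+1}^y\le(1-c_y\beta_k)V_k^y+C\beta_k V_k^x+R_k^y$ and $V_{k+1}^x\le(1-c_x\alpha_k)V_k^x+C\beta_k V_k^y+R_k^x$ with $R_k^x,R_k^y=O(\alpha_k^2+\beta_k^2)$, into $L_k=V_k^y+wV_k^x$ for a small constant $w$ with $wC<c_y/4$ (the $V_k$-proportional higher-order terms are absorbed into the contractions for large $k$). Because $\alpha_k/\beta_k=(\alpha/\beta)(k+K_0)^{1-\xi}\to\infty$, the cross term $C\beta_k V_k^x$ is eventually dominated by the fast contraction $wc_x\alpha_k V_k^x$, yielding $L_{k+1}\le(1-\tfrac{c_y}{2}\beta_k)L_k+R_k$ with $R_k=O(\alpha_k^2)=O((k+K_0)^{-2\xi})$. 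Since $\xi>1/2$, we have $R_k/\beta_k=O((k+K_0)^{1-2\xi})\to0$ while $\sum_k\beta_k=\infty$, so the elementary lemma ``$u_{k+1}\le(1-a_k)u_k+b_k$ with $a_k\in(0,1)$, $\sum a_k=\infty$, $b_k\ge0$, $b_k/a_k\to0$ implies $u_k\to0$'' gives $L_k\to0$, hence $V_k^x,V_k^y\to0$. As $x_k-x^*=\xh_k-A_{22}^{-1}A_{21}\yh_k$, this yields $\E[\|y_k-y^*\|^2]\to0$ and $\E[\|x_k-x^*\|^2]\to0$. The hardest step is controlling the Markovian cross-correlations without the tight-constant machinery; the Poisson augmentation keeps them at the order of the squared step sizes, which is exactly what $\xi>1/2$ and the time-scale separation $\alpha_k/\beta_k\to\infty$ require to render them negligible against the $\beta_k$-contraction.
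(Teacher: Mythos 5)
Your proposal is correct, and it follows essentially the same route as the paper: the paper's proof also works with $\xh_k$, $\yh_k$, the Lyapunov matrices solving $A_{22}^\top Q_{22}+Q_{22}A_{22}=I$ and $\Delta^\top Q_{\Delta}+Q_{\Delta}\Delta=I$, Poisson-equation telescoping for the Markovian cross terms, and a weighted combination of the two Lyapunov functions contracted at rate $\beta_k$ via the time-scale separation (these coupled recursions are exactly \eqref{eq:V_k_recursion} and \eqref{eq:W_k_recursion} from Lemma \ref{lem:boundedness}). The one genuine difference is how the blocks are combined: you use a fixed small weight $w$ on $V_k^x$ and let $\beta_k/\alpha_k\to 0$ eventually dominate the coupling $C\beta_k V_k^x$, so that $L_k=V_k^y+wV_k^x\to 0$ delivers \emph{both} limits in one pass; the paper instead uses the time-varying weight $\omega_k\propto\beta_k/\alpha_k$, which makes the domination hold by construction but, since $\omega_k\to 0$, only yields $W_k\to 0$, forcing a second pass through the $V_k$ recursion with $W_k=o(1)$ plugged in to conclude $x_k\to x^*$. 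Your constant-weight variant is thus a slight streamlining (at the cost of an unspecified finite burn-in index $K^*$), while the paper's vanishing weight avoids waiting for the step-size ratio to become small; both hinge on the same two facts you isolate correctly, namely that only the Hurwitz property of $-A_{22}$ and $-\Delta$ is needed (no largeness condition on $\beta$) and that $\xi>1/2$ makes all Poisson-induced residuals $o(\beta_k)$.
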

\begin{figure}[!ht]
    \centering
    \includegraphics[width=0.6\textwidth]{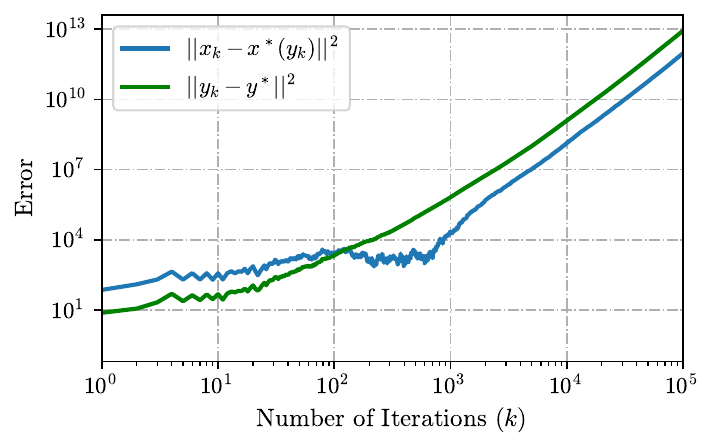}
    \caption{Divergence of two-time-scale linear SA when $\alpha_k=\alpha\beta_k/\beta$ and the ratio $\alpha/\beta$ is not carefully chosen.}
    \label{fig:ssa_div}
\end{figure}

Note that the assumption in the above proposition is a relaxation of Assumption \ref{ass:step_size_main}. In particular, Proposition \ref{prop:loose_convergence} shows that the condition in Assumption \ref{ass:step_size_main} on the choice of $\beta$ such that $-(\Delta-\beta^{-1}I/2)$ is Hurwitz  is only for optimal convergence, and it is not necessary if one is concerned with convergence alone. This proposition along with Figure \ref{fig:ssa_div} shows the significance of two-time-scale algorithms, compared to single-time-scale algorithms. Specifically, a two-time-scale algorithm has guaranteed convergence for any choice of ratio $\alpha/\beta$, while a wrong choice of ratio $\alpha/\beta$ might result in divergence for a single-time-scale algorithm.

Now consider the scenario in which the ratio $\alpha/\beta$ is carefully chosen such that $-A_{\kappa}$ is Hurwitz, and hence the single-time-scale algorithm \ref{eq:single_time_scale} has convergence. Next, we aim at achieving an optimal rate of convergence $\mathcal{O}(1/k)$ for this algorithm. To achieve this, it is again necessary to carefully choose $\beta$ such that $-(A_{\kappa}-\beta^{-1}I/2)$ is Hurwitz\footnote{This follows by considering update \eqref{eq:two_time_scale_main} with $A_{12}(O_k)=A_{21}(O_k)=A_{22}(O_k)=0$ and $b_2(O_k)=x_0=0$ and Figure \ref{fig:beta}.}. This condition requires $\beta$ to be large enough, which is similar to the requirements of Assumption \ref{ass:step_size_main}.

\subsection{Linear SA with Polyak-Ruppert averaging}\label{sec:lin_pol} 

An application of Theorem \ref{thm:Markovian_main} is to establish the convergence behavior of a Markovian linear SA with Polyak-Ruppert averaging. In particular, when we assume $A_{21}(O_k)=0$, $b_1(O_k)=0$, $A_{11}(O_k)=I$ and $A_{12}(O_k)=-I$, and consider $\beta=1$, the iterates in Eq. \eqref{eq:two_time_scale_main} effectively represent the following recursion
\begin{subequations} \label{eq:lin_with_polyak}    
\begin{align}
    x_{k+1}= & x_k+\alpha_k(b(O_k)-A(O_k)x_k)\label{eq:linearSA}\\
    y_{k+1}= & y_k+\frac{1}{k+1}(x_k-y_k)= \frac{\sum_{i=0}^kx_i}{k+1},\label{eq:Polyak}
\end{align}
\end{subequations}

where $\alpha_k=\alpha/(k+1)^\xi$. Theorem \ref{thm:polyak} specifies the convergence behavior of the Markovian linear SA with Polyak-Ruppert averaging.  

\begin{theorem}\label{thm:polyak}
    Consider the iterations in \ref{eq:lin_with_polyak}. Define $\E_{O\sim \mu}[A(O)]=A$, $\E_{O\sim \mu}[b(O)]=b$, and $x^*=A^{-1}b$. Assume the matrix $-A$ is Hurwitz, Assumption \ref{ass:poisson_main} is satisfied, and $0.5<\xi<1$. Then we have
    \begin{align*}
        \E[(y_k-x^*)(y_k-x^*)^\top]= \beta_kA^{-1} \Gamma^xA^{-\top}+\frac{1}{(k+1)^{1+0.5\min(\xi-0.5, 1-\xi)}} C^y_k,
    \end{align*}
    where $\Gamma^x= \E{[\bt(\tilde{O}_0)\bt(\tilde{O}_0)^\top]} +\sum_{j=1}^\infty \E{[\bt(\tilde{O}_j) \bt(\tilde{O}_0)^\top+\bt(\tilde{O}_0) \bt(\tilde{O}_j)^\top]}$ and $\|C_k^y\|<c_p$ for some problem-dependent constant $c_p$. Here $\bt(\cdot)=b(\cdot)-b+(A-A(\cdot))A^{-1}b$.
\end{theorem}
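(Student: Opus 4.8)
The plan is to obtain Theorem~\ref{thm:polyak} as a specialization of the main result, Theorem~\ref{thm:Markovian_main}. First I would instantiate the two-time-scale recursion \eqref{eq:two_time_scale_main} with the deterministic choices $A_{11}(O_k)=I$, $A_{12}(O_k)=-I$, $A_{21}(O_k)=0$, $b_1(O_k)=0$, together with $A_{22}(O_k)=A(O_k)$, $b_2(O_k)=b(O_k)$, and step-size parameters $\beta=1$, $K_0=1$; substituting these into \eqref{eq:two_time_scale_main_y}--\eqref{eq:two_time_scale_main_x} reproduces exactly the coupled system \eqref{eq:linearSA}--\eqref{eq:Polyak}. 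With these choices $A_{22}=A$ and $\Delta=A_{11}-A_{12}A_{22}^{-1}A_{21}=I$, so $y^*=x^*=A^{-1}b$ and $\yh_k=y_k-x^*$. I would then verify that the hypotheses of Theorem~\ref{thm:Markovian_main} hold here: Assumption~\ref{ass:hurwitz_main} holds since $-A$ is Hurwitz by assumption and $-\Delta=-I$ is trivially Hurwitz; Assumption~\ref{ass:poisson_main} is assumed; and in Assumption~\ref{ass:step_size_main} the requirement that $-(\Delta-\tfrac{1}{2\beta}I)=-\tfrac12 I$ be Hurwitz is met by $\beta=1$, with $\alpha_k=\alpha/(k+1)^\xi$, $0.5<\xi<1$, admissible.

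Next I would compute the noise functions and covariance matrices of Definition~\ref{def:var} under this instantiation. The crucial observation is that the slow ($y$) update carries no noise beyond $x_k$: since $A_{11},A_{12},b_1$ are deterministic and equal to their stationary means, the centered quantity $\bt_1(\cdot)=b_1(\cdot)-b_1+(A_{11}-A_{11}(\cdot))y^*+(A_{12}-A_{12}(\cdot))x^*$ vanishes identically, and hence so does its Poisson solution $\hat{b}_1$. Meanwhile $\bt_2(\cdot)=b(\cdot)-b+(A-A(\cdot))A^{-1}b$ coincides with the $\bt(\cdot)$ in the statement. Feeding $\bt_1\equiv 0$ into Definition~\ref{def:var} collapses $\Gamma^{xy}=0$ and $\Gamma^y=0$, while $\Gamma^x$ equals the autocovariance series claimed in the theorem by Proposition~\ref{prop:alternate_pe}.

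The core of the argument is then solving the Lyapunov system \eqref{eq:lyap_eq_tts} in closed form. Equation \eqref{eq:sigma_x_def_main} is $A\Sigma^x+\Sigma^x A^\top=\Gamma^x$ with (symmetric) solution $\Sigma^x$; equation \eqref{eq:sigma_xy_def_main} with $A_{12}=-I$ and $\Gamma^{xy}=0$ forces $\Sigma^{xy}=\Sigma^x A^{-\top}$ (so $\Sigma^{yx}=A^{-1}\Sigma^x$); and equation \eqref{eq:sigma_y_def_main}, whose left side reduces to $\Sigma^y$ and whose right side becomes $\Sigma^{xy}+\Sigma^{yx}$, gives $\Sigma^y=A^{-1}\Sigma^x+\Sigma^x A^{-\top}$. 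I would finish by conjugating the Lyapunov equation for $\Sigma^x$ by $A^{-1}$ on the left and $A^{-\top}$ on the right, which yields the identity $A^{-1}\Sigma^x+\Sigma^x A^{-\top}=A^{-1}\Gamma^x A^{-\top}$ and hence $\Sigma^y=A^{-1}\Gamma^x A^{-\top}$, the stated leading term. The higher-order rate follows from \eqref{eq:main_y_main} by choosing the free parameter $\varrho=\tfrac12$, which turns $1+(1-\varrho)\min(\xi-0.5,1-\xi)$ into $1+0.5\min(\xi-0.5,1-\xi)$, and the bound $\|C^y_k\|<c_p$ is inherited directly. The main obstacle is purely organizational rather than analytic: one must correctly track the reductions ($\bt_1\equiv 0$, $\Delta=I$, $\beta=1$) and verify the Lyapunov conjugation identity; no new convergence analysis beyond Theorem~\ref{thm:Markovian_main} is required.
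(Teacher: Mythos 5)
Your proposal is correct and follows essentially the same route as the paper's own proof: specialize Theorem \ref{thm:Markovian_main} with $A_{11}(O_k)=I$, $A_{12}(O_k)=-I$, $A_{21}(O_k)=0$, $b_1(O_k)=0$, $\beta=1$ (so $\Delta=I$, $\Gamma^{xy}=\Gamma^y=0$), solve the Lyapunov system \eqref{eq:lyap_eq_tts} in closed form to get $\Sigma^y=A^{-1}\Sigma^x+\Sigma^x A^{-\top}=A^{-1}\Gamma^x A^{-\top}$ via the same conjugation identity, and set $1-\varrho=0.5$ to obtain the stated rate. Your write-up is in fact slightly more careful than the paper's (it explicitly verifies the assumptions and records $\Sigma^{xy}=\Sigma^x A^{-\top}$ in the correct order, where the paper has a harmless transposition typo), but there is no substantive difference in approach.
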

For proof, refer to Appendix \ref{sec:app_proof_main}.

\begin{remark}
    The leading term in the result of Theorem \ref{thm:polyak} matches the CLT covariance established in \cite[Theorem 5]{borkar2021ode}. This further verifies the optimality of our convergence bounds.
\end{remark}

\begin{remark}
    In a previous work, \cite{kaledin2020finite} studies the finite time convergence of two-time-scale linear SA with Markovian noise. However, due to the restrictive assumptions in this work (in particular \cite[Assumption A2]{kaledin2020finite}), their result cannot be used to study the convergence of the iterates \eqref{eq:linearSA} and \eqref{eq:Polyak}.   
\end{remark}

Note that the iterates in Eq. \eqref{eq:linearSA} are independent of $y_k$, and can be studied as a single-time-scale SA. The convergence behavior of Markovian linear SA \eqref{eq:linearSA} has been studied in prior work \cite{bhandari2018finite,srikant2019finite} in the mean-square sense. As shown in the prior work, a wide range of algorithms, such as TD$(n)$, TD$(\lambda)$ \cite{sutton1988learning} and Retrace \cite{munos2016safe}, can be categorized as iterations in Eq. \eqref{eq:linearSA}. In order to handle the complications arising due to the Markovian noise, the authors in \cite{bhandari2018finite} introduce a relatively different variant of the iterate in Eq. \eqref{eq:linearSA} with a projection step. However, in this algorithm, the projection radius has to be chosen in a problem-dependent manner, which is difficult to estimate in a general setting. Furthermore, their choice of step size depends on the unknown problem-dependent parameters. Later, the authors in \cite{srikant2019finite} studied the convergence of iterate \eqref{eq:linearSA} under constant step size. Reproducing the result in \cite{srikant2019finite} with a time-varying step size of the form $\alpha_k=\alpha/(k+1)$, we can show that $\E[\|x_k\|^2]\leq c\log(k)/k$. However, this analysis requires a problem-dependent choice of $\alpha$, which is difficult to characterize for an unknown problem. Furthermore, this bound is not optimal in terms of $c$, and is suboptimal up to the $\log(k)$ factor. It has been shown \cite{polyak1992acceleration} that the use of Polyak-Ruppert averaging \eqref{eq:Polyak} together with linear SA \eqref{eq:linearSA} will achieve the optimal convergence rate in a robust way, thus addressing the previously highlighted issues.

\cite{mou2021optimal} have studied the convergence of \eqref{eq:linearSA} along with the Polyak-Ruppert averaging step \eqref{eq:Polyak} in mean squared error sense. In this work, they show that linear Markovian SA with constant step size and Polyak-Ruppert averaging attains a $\mathcal{O}(1/k)$ rate of convergence for the leading term plus $\mathcal{O}(1/k^{4/3})$ for a higher-order term. The leading term in the convergence result of \cite{mou2021optimal} is a constant away from the optimal convergence possible. Furthermore, their setting is not robust, as the choice of their step size depends on unknown problem-dependent constants. In addition, they introduce a problem-dependent burn-in period that is not robust to the choice of the problem instance. Moreover, due to the dependence of the step size on the time horizon, their algorithm does not have asymptotic convergence.

Contemporaneous to this work, \cite{srikant2024rates} established a non-asymptotic central limit theorem result for the convergence of $y_k$ in \eqref{eq:lin_with_polyak}. In particular, \cite{srikant2024rates} bounds the Wasserstein-1 distance between the error $\sqrt{k}(y_k-x^*)$ and a Gaussian with convariance matrix $(\bar{A}^{-1}\Sigma_{\infty}\bar{A}^{-\top})^{1/2}$.
In contrast, we bound the difference between the covariance matrix of $\sqrt{k}(y_k-x^*)$ and the same matrix $(\bar{A}^{-1} \Sigma_{\infty}\bar{A}^{-\top})^{1/2}$. 

As opposed to the previous work, Theorem \ref{thm:polyak} characterizes a sharp finite time bound of $\E[\hat{y}_k\hat{y}_k^\top]$ for linear SA with Markovian noise and Polyak-Ruppert averaging. Our result does not require a problem-dependent choice of step size $\alpha$ or burn-in period, as in \cite{mou2021optimal}, nor do we assume a projection step, as in \cite{bkh}. This result is a direct application of Theorem \ref{thm:Markovian_main}. In particular, for the linear SA with Polyak-Ruppert averaging in the context of two-time-scale linear SA, it is easy to show that $\Delta=I$ and $\beta=1$. Hence $-\left(\Delta-\beta^{-1}I/2\right)$ is Hurwitz, satisfying Assumption \ref{ass:step_size_main}.

\subsection{Application in Reinforcement Learning}\label{sec:app_RL}

Consider a Markov Decision Process (MDP) defined by the tuple $(\mathcal{S}, \mathcal{A}, P, r, \gamma)$, where $\mathcal{S}$ is the finite state space, $\mathcal{A}$ is the finite action space, $P=[[P(s'|s,a)]]$ denotes the transition probability kernel, $r=[r(s,a)]$ is the reward function, and $\gamma\in(0,1)$ is the discount factor. We denote $\pi$ as a policy, representing a probabilistic mapping from states to actions. For each $s\in\mathcal{S}$, the value function is defined as $v^\pi(s)= \E[\sum_{k=0}^\infty \gamma^kr(S_k,A_k)|S_0=s, \pi]$ which measures the expected cumulative reward starting from state $s$ under policy $\pi$.

In many real-world applications, the state space $\mathcal{S}$ is extremely large. Consequently, function approximation methods are employed to approximate the value function using a lower-dimensional parameter vector $\theta^\pi$. In this work, we consider linear function approximation: $v^\pi(s) \approx \phi(s)^\top \theta^\pi$, where $\theta^\pi\in \mathbb{R}^d$ with $d\ll|\mathcal{S}|$, and $\phi(s)\in \mathbb{R}^d$ are features representing each state. The feature vectors collectively form the rows of a full-rank matrix $\Phi \in \mathbb{R}^{|\mathcal{S}| \times d}$.

Our focus is on the policy evaluation task, where given a fixed policy $\pi$, the goal is to estimate $\theta^\pi$ from samples. In some settings, one may interact directly with the environment to collect fresh samples. However, in many cases, only historical or off-policy data is available, as described next.

\subsubsection{Temporal Difference with Gradient Correction (TDC) and Gradient Temporal Difference Learning (GTD)}\label{sec:TDC}

In real-world applications, collecting online data can be costly, unethical, or impractical. Off-policy learning leverages historical data collected under a behavior policy different from the target policy. In this setting, a fixed behavior policy generates samples, and the objective is to evaluate the value function under the target policy $\pi$.

A well-known challenge in off-policy learning is that the mismatch between the behavior policy and the target policy can cause instability or divergence \cite{sutton2018reinforcement}. To address this, algorithms such as GTD \cite{sutton2008convergent}, TDC, and GTD2 \cite{sutton2009fast} have been proposed. We next describe these algorithms and their convergence properties.

Suppose we observe a sample path $\{S_k, A_k, S_{k+1}\}_{k\geq 0}$ generated by a fixed behavior policy $\pi_b$, inducing an ergodic Markov chain over $\mathcal{S}$ with stationary distribution $\mu_{\pi_b}$. Define the importance sampling ratio $\rho(s,a)=\pi(a|s)/\pi_b(a|s)$. Also, define the matrices and vectors $A_k=\rho(S_k,A_k)\phi(S_k)(\phi(S_k)-\gamma\phi(S_{k+1}))^\top$, \\$B_k =\gamma\rho(S_k,A_k)\phi(S_{k+1})\phi(S_k)^\top$, $C_k=\phi(S_k)\phi(S_k)^\top$ and $b_k=\rho(S_k,A_k) r(S_k, A_k)\phi(S_k)$.

We have the following update rules:

\begin{itemize}
    \item \textbf{GTD:}
    \begin{align*}
        \theta_{k+1}&= \theta_k+\beta_k(A_k^\top \omega_k)\\
        \omega_{k+1}&= \omega_k+\alpha_k(b_k-A_k\theta_k-\omega_k)
    \end{align*}
    \item \textbf{GTD2:}
    \begin{align*}
        \theta_{k+1}&= \theta_k+\beta_k(A_k^\top \omega_k)\\
        \omega_{k+1}&= \omega_k+\alpha_k(b_k-A_k\theta_k-C_k\omega_k)
    \end{align*}
    \item \textbf{TDC:}
    \begin{align*}
        \theta_{k+1}&= \theta_k+\beta_k(b_k-A_k\theta_k-B_k\omega_k)\\
        \omega_{k+1}&= \omega_k+\alpha_k(b_k-A_k\theta_k-C_k\omega_k).
    \end{align*}
\end{itemize}
We now characterize the convergence behavior of these algorithms. Denote the stationary expectation of the matrices as $A=\mathbb{E}_{\mu_{\pi_b}}[\rho(S,A)\phi(S)(\phi(S)-\gamma\phi(S'))^\top]$, $B=\gamma\mathbb{E}_{\mu_{\pi_b}}[\rho(S,A)\phi(S')\phi(S)^\top]$, $C=\mathbb{E}_{\mu_{\pi_b}}[\phi(S)\phi(S)^\top]$ and $b=\mathbb{E}_{\mu_{\pi_b}}[\rho(S,A)r(S, A)\phi(S)]$. We have the following theorem.
\begin{theorem}\label{thm:TDC}
    Let $\alpha_k=\frac{1}{(k+1)^{0.75}}$, $\beta_k=\frac{\beta}{k+1}$, and define $\theta^*=A^{-1}b$. We have
    \begin{enumerate}
        \item For the GTD algorithm, assume $-\left(A^{\top}A-\frac{\beta^{-1}}{2}I\right)$ is Hurwitz. Then we have
        \begin{align*}
        \E[\|\theta_k-\theta^*\|^2]=\frac{\sigma^2_{GTD}}{k+1}+\mathcal{O}\left(\frac{d^3}{k^{1.125}}\right).
        \end{align*}
        \item For the GTD2 algorithm, assume $-\left(A^\top C^{-1}A-\frac{\beta^{-1}}{2}I\right)$ is Hurwitz. Then we have
        \begin{align*}
        \E[\| \theta_k-\theta^*\|^2]=\frac{\sigma^2_{GTD2}}{k+1}+\mathcal{O}\left(\frac{d^3}{k^{1.125}}\right).
        \end{align*}
        \item For the TDC algorithm, assume $-\left(A-BC^{-1}A-\frac{\beta^{-1}}{2}I\right)$ is Hurwitz. Then we have
        \begin{align*}
        \E[\|\theta_k-\theta^*\|^2]=\frac{\sigma^2_{TDC}}{k+1}+\mathcal{O}\left(\frac{d^3}{k^{1.125}}\right).
        \end{align*}
    \end{enumerate}
\end{theorem}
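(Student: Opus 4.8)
The plan is to obtain Theorem \ref{thm:TDC} as a direct instantiation of the master result in Theorem \ref{thm:Markovian_main}, together with Corollary \ref{cor:l2_bound}. For each of GTD, GTD2, and TDC, the first step is to rewrite the two coupled updates in the canonical two-time-scale form \eqref{eq:two_time_scale_main}. Here the identification is $y_k \leftrightarrow \theta_k$ (the slow variable, updated with $\beta_k$) and $x_k \leftrightarrow \omega_k$ (the fast variable, updated with $\alpha_k$). Reading off the coefficient matrices from each algorithm's definition, I would compute the stationary matrices $A_{11}, A_{12}, A_{21}, A_{22}$: for GTD, $A_{11}=0$, $A_{12}=-A^\top$, $A_{21}=A$, $A_{22}=I$; for GTD2, the only change is $A_{22}=C$; and for TDC, $A_{11}=A$, $A_{12}=B$, $A_{21}=A$, $A_{22}=C$. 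The crucial quantity is $\Delta = A_{11}-A_{12}A_{22}^{-1}A_{21}$, which evaluates to $A^\top A$ for GTD, $A^\top C^{-1} A$ for GTD2, and $A - BC^{-1}A$ for TDC. These are exactly the matrices appearing in the three Hurwitz hypotheses once one forms $\Delta - \tfrac{1}{2\beta}I$, so the stated assumptions are precisely the condition in Assumption \ref{ass:step_size_main} that $-\left(\Delta-\beta^{-1}I/2\right)$ be Hurwitz.

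Second, I would verify the remaining hypotheses of Theorem \ref{thm:Markovian_main}. Assumption \ref{ass:poisson_main} holds because the behavior policy $\pi_b$ induces an ergodic (finite-state, irreducible, aperiodic) Markov chain with stationary distribution $\mu_{\pi_b}$, and by construction the stationary expectations of $A_k, B_k, C_k, b_k$ are $A, B, C, b$. For Assumption \ref{ass:hurwitz_main} I must check that $-A_{22}$ is Hurwitz in each case: this is immediate for GTD ($A_{22}=I$), and for GTD2 and TDC it follows since $C=\E_{\mu_{\pi_b}}[\phi(S)\phi(S)^\top]$ is symmetric positive definite (the features form a full-rank matrix $\Phi$ and $\mu_{\pi_b}$ puts positive mass on every state), so all eigenvalues of $-C$ are negative. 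The Hurwitz condition on $\Delta$ is supplied as a hypothesis in each case. The step sizes $\alpha_k=(k+1)^{-0.75}$ and $\beta_k=\beta/(k+1)$ satisfy Assumption \ref{ass:step_size_main} with $\xi=0.75$, so all three assumptions are met and I may invoke Corollary \ref{cor:l2_bound}.

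Third, I would read off the conclusion. Applying Corollary \ref{cor:l2_bound} to the slow variable gives
\begin{align*}
\E[\|\hat{y}_k\|^2] \leq \beta_k\,\mathrm{tr}(\Sigma^y) + \frac{c(d)}{(k+1)^{1+0.5\min(\xi-0.5,\,1-\xi)}},
\end{align*}
and since $\hat{y}_k = y_k - y^* = \theta_k - \theta^*$ with $\theta^*=A^{-1}b$ (one checks $y^*=\theta^*$ from the fixed-point equations \eqref{eq:lin_main} for each algorithm), this is exactly $\E[\|\theta_k-\theta^*\|^2]$. With $\xi=0.75$ the exponent becomes $1+0.5(0.25)=1.125$, matching the stated $\mathcal{O}(d^3/k^{1.125})$ higher-order term (the $d^3$ coming from $c(d)=\mathcal{O}(d^3)$ in the corollary). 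The leading constant $\sigma^2_{\bullet}=\mathrm{tr}(\Sigma^y)$ is defined through the Lyapunov system \eqref{eq:lyap_eq_tts} with the algorithm-specific matrices substituted.

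The main obstacle I anticipate is not any single hard estimate but the bookkeeping of the matrix identifications together with a subtlety in $\hat{y}_k$ versus $\theta_k-\theta^*$: Theorem \ref{thm:Markovian_main} is stated for $\hat{y}_k=y_k-y^*$, which coincides cleanly with $\theta_k-\theta^*$, but one must confirm that the theorem's $y^*$ (determined by $\Delta^{-1}$ and the $b_i$) really equals $A^{-1}b$ in every case — this requires solving the coupled fixed-point equations for each algorithm and checking the reductions, particularly for TDC where $A_{11}\neq 0$ and the expression for $y^*$ is less transparent. A secondary care point is confirming that the Hurwitz hypotheses as literally stated (involving $A^\top A$, $A^\top C^{-1}A$, $A-BC^{-1}A$) are the correct $\Delta$ for each algorithm rather than a transpose or sign variant; I would double-check the orientation of the Lyapunov equation \eqref{eq:sigma_y_def_main} to ensure consistency.
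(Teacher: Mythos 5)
Your proposal is correct and follows essentially the same route as the paper: map each algorithm onto the canonical two-time-scale form (your matrix identifications and the resulting $\Delta$'s are exactly the paper's), verify Assumptions \ref{ass:hurwitz_main}--\ref{ass:step_size_main} (ergodicity of the behavior chain, $-A_{22}$ Hurwitz via positive definiteness of $C$, step sizes with $\xi=0.75$), and invoke the master result to read off the $k^{-1.125}$ exponent and $\mathcal{O}(d^3)$ constant. The only cosmetic differences are that the paper verifies $-\Delta$ is Hurwitz directly from positive definiteness (using the identity $A=C-B^\top$ to write $A-BC^{-1}A=A^\top C^{-1}A$ for TDC) rather than deducing it from the stated step-size hypothesis as you do (which works, since that hypothesis forces the eigenvalues of $\Delta$ to have real part exceeding $1/(2\beta)>0$), and that the paper applies the matrix equality of Theorem \ref{thm:Markovian_main} — whose trace gives the two-sided $\mathcal{O}$ statement — rather than the one-sided inequality of Corollary \ref{cor:l2_bound}.
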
 
The exact forms of the constants in the leading and higher-order terms are detailed in Appendix \ref{sec:app_proof_main}.
\begin{remark}
Theorem \ref{thm:TDC} implies a sample complexity of $\sigma^2/\epsilon + \mathcal{O}(d^3/\epsilon^{8/9})$ for GTD, GTD2, and TDC. similar to Corollary \ref{cor:l2_bound}, we observe that the leading terms are tight constants while the higher-order terms scale as $\mathcal{O}(d^3)$. Additionally, simulations (see Figure \ref{fig:beta}) confirm that an appropriate choice of $\beta$ is crucial to achieving the optimal convergence rate, indicating that these algorithms may be sensitive to step size tuning.
\end{remark}

\section{Proof Sketch}\label{sec:pf_sketch}

In this section, we provide a sketch of the proof of Theorem \ref{thm:Markovian_main}.
Our proof has several ingredients to handle challenges due to two time-scale behavior, Markovian noise, vector-valued iterates, intertwined updates, and asymmetric matrices. In this section, we illustrate all the key ideas in our proof to overcome these challenges. We do this by first considering a simplified two-time-scale SA with scalar iterates and i.i.d. noise where one of the iterates does not depend on the other.

First, we consider the following simple SA.
\begin{subequations}
\label{eq:two_time_scale_pf_sk}
\begin{align}
    y_{k+1}=&y_k-\beta_k(y_k+x_k)+\beta_kv_k\label{eq:ty_mm}\\
    x_{k+1}=&(1-\alpha_k)x_k+\alpha_k u_k.\label{eq:tx_mm}
\end{align}
\end{subequations}
This recursion is a simplified variant of the general two-time-scale linear SA \eqref{eq:two_time_scale_main} in three aspects. First, $v_k$ and $u_k$ are assumed to be zero-mean i.i.d. noises, while the noise in \eqref{eq:two_time_scale_main} is assumed to be Markovian. Note that the zero mean noise results in $x^*=y^*=0$. Second, all parameters here are assumed to be scalars, while the parameters in \eqref{eq:two_time_scale_main} are assumed to be high-dimensional. Third, the update of $x_k$ in \eqref{eq:tx_mm} is independent of $y_k$. However, the updates of the variables in \eqref{eq:two_time_scale_main} are intertwined. In this subsection, we study this simplified recursion, and in the following subsections we show how this analysis can be extended to the study of \eqref{eq:two_time_scale_main}.

Consider the Lyapunov functions $X_k=\E[x_k^2]$, $Z_k=\E[x_k
y_k]$, and $Y_k=\E[y_k^2]$ and assume $U=\E[u_k^2]$, $W=\E[v_ku_k]$, and $V=\E[v_k^2]$. We can always find the numbers $C_k^x, C_k^{xy}$, and $C_k^{y}$ such that 
\begin{align}\label{eq:pf_sketch_k}
    X_k = \alpha_k U/2 + C_k^x\zeta_k^x, \quad Z_k = \beta_k (W-U/2) + C_k^{xy}\zeta_k^{xy} , \quad Y_k=\beta_k(2-\beta^{-1})^{-1}(V+2W-U) + C_k^{y}\zeta_k^{y},
\end{align}
where $\zeta_k^x=\frac{1}{(k+K_0)^{\min\{1.5\xi, 1\}}}, \zeta_k^{xy}=\frac{1}{(k+K_0)^{\min\{\xi+0.5, 2-\xi\}}}, \zeta_k^y=\frac{1}{(k+K_0)^{1+(1-\varrho)\min\{\xi-0.5, 1-\xi\}}}$. Our goal is to show that for the simple setting of the recursion \eqref{eq:two_time_scale_pf_sk}, we have 
\begin{align}\label{eq:C_k_bound_pf_sk}
\|C_k^x\|, \|C_k^{xy}\|, \|C_k^y\|\leq \bar{c} <\infty ,
\end{align}
for all $k\geq 0$. Later, we show how the analysis of the simplified two-time-scale linear SA can be generalized.

We show \eqref{eq:C_k_bound_pf_sk} by induction. First, we show that it holds for some $k\geq 0$, and then we prove that it holds for $k+1$. 

Calculating the square and the cross product of the two recursions in \eqref{eq:two_time_scale_pf_sk}, and taking expectation, we have
\begin{align}
    X_{k+1}=&(1-\alpha_k)^2X_k+\alpha_k^2 U,\label{eq:tX_k}\\
    Z_{k+1}= &(1-\alpha_k)(1-\beta_k) Z_k + \beta_k\alpha_kW-\beta_k(1-\alpha_k)X_k \label{eq:tZ_k}\\
    Y_{k+1} =& (1-\beta_k)^2Y_k + \beta_k^2X_k+\beta_k^2V+ 2\beta_k(1-\beta_k) Z_k. \label{eq:tY_k}
\end{align}
Replacing $X_k, Z_k,$ and $Y_k$ with the values in \eqref{eq:pf_sketch_k} and using the upper bound \eqref{eq:C_k_bound_pf_sk}, we can show that $X_{k+1}, Z_{k+1}, Y_{k+1}$ can be written in the form of \eqref{eq:pf_sketch_k} with $\|C_{k+1}^x\|, \|C_{k+1}^{xy}\|, \|C_{k+1}^y\|\leq \bar{c} <\infty$

Notice that here we show that $Z_k$ behaves like $\mathcal{O}(\beta_k)$. This is indeed necessary to achieve the optimal rate $\mathcal{O}(\beta_k)$ for the convergence of $Y_k$. For a more detailed discussion of the convergence of $Z_k$, see Appendix \ref{sec:tZ_k_rate}. Alternatively, one could aim to study this recursion in a single step and analyze the recursion of a single Lyapunov function consisting of $X_k, Z_k,$ and $Y_k$. Although this approach has been considered before in the literature \cite{doan2021markov}, it is not clear how to achieve a tight convergence bound using a single Lyapunov function. In particular, \cite{doan2021markov} considers $\E\left[\|y_k\|^2+\beta_k\|x_k\|^2/\alpha_k\right]$ as the Lyapunov function, and studies its convergence bound. However, to handle the cross-term, the author uses the Cauchy-Schwarz inequality, which results in a loose inequality and a suboptimal convergence rate. Establishing a tight convergence bound using a single Lyapunov function is left as an open question for future research direction.

This forms the skeleton of our proof, and in Sections \ref{sec:pf_2}, \ref{sec:pf_3}, \ref{sec:pf_4}, and \ref{sec:pf_5}, we show how to relate the general two-time-scale recursion \eqref{eq:two_time_scale_main} to the simplified recursion in \eqref{eq:two_time_scale_pf_sk} by handling Markovian noise, vector-valued iterates, and interdependence between the iterates.

\subsection{Handling the Markovian noise}\label{sec:pf_2}

In the previous section, $v_k$ and $u_k$ were assumed to be zero-mean i.i.d., and the expected value of the cross term between noise and iterate was zero. However, in the Markovian noise setting, this is no longer true.

There are two approaches in the literature to handle Markovian noise in SA. The authors in \cite{bhandari2018finite} and \cite{srikant2019finite} used the geometric mixing property of the Markov chain to handle Markov noise. A classical approach to handle Markovian noise is based on the Poisson equation for Markov chains \cite{douc2018markov}, which converts Markovian noise to martingale noise along with other manageable terms. For ease of exposition, in this section, we present the use of the Poisson equation in a single time-scale setting as in \eqref{eq:tx_mm}. This machinery can be extended similarly to the general two-time-scale setting. Furthermore, we consider scalar iterations, since generalizing to the vector case is straightforward. Let $\{O_k\}_{k\geq 0}$ be a Markov chain that satisfies Assumption \ref{ass:poisson_main}. Let $a(O_k)$ and $b(O_k)$ be functions of the Markov chain with $\E_{O\sim \mu}[a(O)]=a>0$ and $\E_{O\sim \mu}[b(O)]=b$. Without loss of generality, we assume $b=0$, which implies that $x^*=0$. Now consider the following iteration,
\begin{align}
    x_{k+1}=&x_k-\alpha_k(a(O_k)x_k+b(O_k)).\label{eq:tx_mn}\\
    =&(1-a\alpha_k)x_k-\alpha_ku(x_k, O_k).\label{eq:tx_rec}
\end{align}
where $u(x_k, O_k)=(a(O_k)-a)x_k+b(O_k)$. Squaring both sides and taking expectation, we get,
\begin{align}\label{eq:recursion_X_k}
    X_{k+1}=\underbrace{(1-a\alpha_k)^2X_k}_{T_1}+\underbrace{\alpha_k^2\E[u^2(x_k, O_k)]}_{T_2}-2\alpha_k(1-a\alpha_k)\underbrace{\E[x_ku(x_k, O_k)]}_{T_3}
\end{align}
$T_1$ is similar to the first term in \eqref{eq:tX_k}. $T_2$ consists of two terms as $T_{21}=\E[b^2(O_k)]$ and $ T_{22}= \E[(a(O_k)-a)^2x_k^2 + 2(a(O_k)-a)b(O_k)x_k]$.  $T_{21}$ is the same as the second term in \eqref{eq:tX_k}, and for $T_{22}$ we use the induction assumption \ref{eq:pf_sketch_k}. The term $T_3$ was not present in \eqref{eq:tX_k} because it is equal to zero for the i.i.d. noise, but that is not the case for the Markovian noise. Thus, to obtain a handle for $T_3$, we use the framework of the Poisson equation.

For a given $x$, the set of equations, 
\begin{align}\label{poisson_eq}
    \hat{u}(x, o)=u(x, o)+\sum_{o'\in S}P(o'|o)\hat{u}(x, o'), \forall o\in S
\end{align}
are denoted as Poisson equation, and the function $\hat{u}(x, \cdot)$ that solves the Poisson equation is unique up to an additive factor.  We seek a unique solution and therefore impose the constraint $\sum_{o\in S}\mu(o)\hat{u}(x, o)=0$. Note that $\hat{u}(x,o)$ is Lipschitz with respect to $x$. For more details, refer to Lemma \ref{lem:possion_sol} in Appendix \ref{appendix:tech_lemmas}. The Poisson equation is the same as the Bellman equation for the average-reward Markov process (with rewards $u(x, \cdot)$), and its solution is the corresponding differential value function \cite{howard1960dynamic}. 

Substituting $u(\tx_k, O_k)$ in the cross-term in \eqref{eq:recursion_X_k}, we get, 
\begin{align*}
    \E[x_ku(x_k, O_k)]&=\E\left[x_k\left(\hat{u}(x_k, O_k)-\sum_{o\in S}P(o|O_k)\hat{u}(x_k, o)\right)\right]\\
    &=\E\left[x_k\left(\hat{u}(x_k, O_k)-\sum_{o\in S}P(o|O_{k-1})\hat{u}(x_k, o)+\sum_{o\in S}P(o|O_{k-1})\hat{u}(x_k, o)-\sum_{o\in S}P(o|O_k)\hat{u}(x_k, o)\right)\right].
\end{align*}
Define a sigma field $\mathcal{F}_k=\sigma(\{x_i, O_i\}_{0\leq i\leq k})$. Note that $\hat{u}(x_k, O_k)-\sum_{o\in S}P(o|O_{k-1})\hat{u}(x_k, o)$ is a martingale difference with respect to $\mathcal{F}_{k-1}$, which implies $\E\left[x_k(\hat{u}(x_k, O_k)-\sum_{o\in S}P(o|O_{k-1})\hat{u}(x_k, o)|\mathcal{F}_{k-1}\right]=0$. Thus, we have:
\begin{align*}
    \E[x_ku(x_k, O_k)]
    =& \E\left[ x_k\left( \sum_{o\in S}P(o|O_{k-1})\hat{u}(x_k, o)-\sum_{o \in S}P(o| O_k)\hat{u}(x_k, o)\right)\right]\\
    =&\E\left[ x_k\sum_{ o\in S}P(o|O_{k-1}) \hat{u}(x_k, o)\right]-\E\left[ x_{k+1}\sum_{o\in S}P(o|O_k)\hat{u}(x_{k+1}, o)\right]\\
    &+\E\left[(x_{k+1}-x_k)\sum_{o\in S}P(o|O_k)\hat{u}(x_k, o)\right]+\E\left[x_{k+1}\sum_{o\in S}P(o|O_k)(\hat{u}(x_{k+1}, o)-\hat{u}(x_k, o))\right]\\
    =&\underbrace{\E \left[x_k \sum_{o\in S}P(o| O_{k-1})\hat{u}(x_k, o)\right]-\E\left[x_{k+1}\sum_{o\in S}P(o|O_k)\hat{u}(x_{k+1}, o)\right]}_{T_{31}}\\
    & \underbrace{- \alpha_k \E\left[u(x_k, O_k)\sum_{o\in S}P(o|O_k)\hat{u}(x_k, o)\right]}_{T_{32}}\\
    &  \underbrace{-\alpha_k a \E\left[x_k\sum_{o\in S}P(o|O_k)\hat{u}(x_k, o)\right]}_{T_{33}}+\underbrace{\E\left[x_{k+1}\sum_{o\in S}P(o|O_k)(\hat{u}(x_{k+1}, o)-\hat{u}(x_k, o)))\right]}_{T_{34}}
\end{align*}

The term $T_{31}$ is of the telescopic form $d_k-d_{k+1}$. In order to incorporate this term in the one step recursion, we introduce a new variable $X_k' = X_k + 2\alpha_k d_k$, and we establish a recursion on the new variable $X_k'$. In this recursion, the telescopic $d_k-d_{k+1}$ term will be absorbed in $X_{k+1}'$ and $X_k'$ (up to some higher order terms). In general, the absorption of $d_k$ to $X_k$ and the introduction of the new variable $X_k'$ are how we handle the Markovian noise. Furthermore, the terms $T_{32}$, $T_{33}$, and $T_{34}$ also appear in the recursion of $X_k'$.  For $T_{32}$ we use Lemma \ref{lem:possion_sol} to substitute $\hat{u}(\cdot, \cdot)$ explicitly in terms of $u(\cdot, \cdot)$. After some algebraic manipulations, it can be shown that $T_{32}$ corresponds to the infinite sum in the expression for $\Gamma_x$ in Lemma \ref{prop:alternate_pe}. In $T_{33}$ we use the induction hypothesis \eqref{eq:pf_sketch_k} and show that this term is of higher order. Analyzing the final term $T_{34}$ efficiently is more subtle and will be discussed in the following.

\subsubsection{\texorpdfstring{Absolute upper bound to handle $T_{34}$}{Absolute upper bound to handle T34}}\label{sec:absolute_upper_bound}
First, in Lemma \ref{lem:boundedness} we establish an absolute constant upper bound on the mean square error of the iterates of the two-time-scale SA. Next, to upper bound $T_{34}$, we use the Lipschitz property of $\hat{u}(\cdot, \cdot)$ to show that $T_{34}= \mathcal{O}( \alpha_k\E[x_{k+1}(x_k+b(O_k))]) = \mathcal{O}(\alpha_k\E[x_k^2])+\mathcal{O}(\alpha_k^2 \E[x_k])$. For the first term we use the induction hypothesis, while for the second term we use the absolute upper bound in Lemma \ref{lem:boundedness}. Besides this, the recursion established in the proof of Lemma \ref{lem:boundedness} helps us in the proof of Proposition \ref{prop:loose_convergence}.

For the general setting of two-time-scale linear SA, a similar procedure is performed for $Z_k$ and $Y_k$, where we establish a recursion similar to \eqref{eq:recursion_X_k}. These recursions will consist of a leading term with infinite sums in the expression for $\Gamma_z$ and $\Gamma_y$, a telescopic term, and some higher-order terms. Then we introduce two new variables $Z_k'$ and $Y_k'$, and we show that the telescopic terms turn to some higher-order terms in the recursion of these new variables.

\subsection{Extension to high dimensional vectors}\label{sec:pf_3}
The second difference of the recursion in \eqref{eq:two_time_scale_pf_sk}  compared to the original two-time-scale recursion is in the scalar versus vector variables. To accommodate the vector variables, we take the expectation of the outer product of the variables as Lyapunov functions. For example, for the cross term, we take $Z_k=\E[x_ky_k^\top]$, and we establish Eq. \eqref{eq:tZ_k} in terms of matrices. At first glance, it might be tempting to use the inner product as a Lyapunov function. However, to establish a recursion for the inner product, we need to employ the Cauchy-Schwartz inequliaty for the cross-term, which does not achieve a tight convergence bound.  In particular, the outer product results in a recursion of the form $Z_{k+1}=(I-A_{22}\alpha_k)Z_k(I-\Delta^\top\beta_k) + \mathcal{O}(\alpha_k\beta_k)$. However, an attempt to establish a recursion for the inner product results in $\E[x_{k+1}^\top y_{k+1}]= x_k^\top (I-\alpha_kA_{22})^\top(I-\beta_k\Delta)y_k+\mathcal{O}(\alpha_k\beta_k)$. Unfortunately, this relation cannot be translated into a one-step recursion, since there does not exist any matrix property that relates $x^TAy$ to $x^Ty$.

We would like to point out that in the special case of SA with Polyak-Ruppert averaging, as considered in \cite{moulines2011non}, inner product can be used to establish a tight convergence bound. However, in the general two-time-scale SA, the special structure of the Polyak-Ruppert averaging does not exist, and it appears that the use of the outer-product for establishing a tight convergence bound is necessary.
\subsubsection{Dealing with Asymmetric Matrices}\label{sec:pf_5}
In the most general setting of two-time-scale linear SA, the vector-valued parameters are multiplied by (potentially asymmetric) matrices. To deal with asymetry, we use the Lyapunov equation. To observe this, assume  the vector valued variant of the recursion \eqref{eq:tx_mm} as $x_{k+1}=(I-\alpha_kA)x_k+\alpha_ku_k$, where the matrix $-A$ is assumed to be Hurwitz (not necessarily symmetric). The matrix $X_k=\E[x_kx_k^\top]$ satisfies the following recursion: $X_{k+1}=(I-\alpha_kA)X_k(I-\alpha_kA)^\top + \alpha_k^2U$. Then we can show that $X_k=\alpha_k\Sigma + o(\alpha_k)$, where $\Sigma$ satisfies the Lyapunov function $A\Sigma+\Sigma A^\top = -U$. By extending this approach, we can study the general two-time-scale SA with asymmetric matrices.

\subsection{Handling intertwined relation between variables}\label{sec:pf_4}

The third difference is the independence of the recursion of $x_k$ from $y_k$ in \eqref{eq:two_time_scale_pf_sk}, while we observe that in \eqref{eq:two_time_scale_main} these variables are intertwined. It is well known that SA algorithms can be studied as discretizations of ordinary differential equations (ODEs) whereas two-time-scale SA algorithms are discretizations of two ODEs \cite{borkar1997stochastic,borkar2009stochastic} of the form, 
\begin{subequations}\label{eq:ttode}
    \begin{align}
         \dot{y}=A_{11}y+ A_{12}x\label{eq:ttode_y}\\
        \varepsilon\dot{x}= A_{21}y+A_{22}x,\label{eq:ttode_x}
    \end{align}
\end{subequations}
where $x=x(t)$ and $y=y(t)$ are functions of continuous time  $t$. Here, the parameter $\varepsilon$ can be used to model different time-scales in \eqref{eq:ttode_y} and \eqref{eq:ttode_x}. When $\varepsilon$ is small, \eqref{eq:ttode_x} operators on a faster time-scale than \eqref{eq:ttode_x}, and as $\varepsilon$ goes to zero, $x$ converges to its equilibrium instantly. In the context of two-time-scale SA \eqref{eq:two_time_scale_main}, $\varepsilon$ can be thought of as the ratio of two time scales, i.e., ratio of step-sizes $\beta_k/\alpha_k$. 

In order to study the convergence of \eqref{eq:ttode}, \cite{kokotovic84} have shown that there exists a linear transformation $\tilde{x}=x+M_{\varepsilon}y$ such that the system \eqref{eq:ttode} transforms into block-triangular form:
\begin{align*}
    \dot{y}&=(A_{11}+BM_{\varepsilon})y+A_{12}\tilde{x}\\
    \varepsilon\dot{\tilde{x}}&=(A_{22}+\varepsilon M_{\varepsilon} A_{12})\tilde{x},
\end{align*}
where, $M_{\varepsilon}$ is the solution of the Ricatti equation $A_{22}M_{\varepsilon}-\varepsilon M_{\varepsilon} A_{11}+\varepsilon M_{\varepsilon}A_{12}M_{\varepsilon}-A_{21}=0$. This equation helps us to disentangle the variables in \eqref{eq:ttode}.  From singular perturbation theory \cite{kokotovic84}, it is known that $M_\varepsilon \to A_{22}^{-1} A_{21}$ as $\varepsilon\to 0$.

A slight modification of a similar logic can be applied to disentangle the variables of the two-time-scale SA \eqref{eq:two_time_scale_main}. Since the two-time-scale SA \eqref{eq:two_time_scale_main} uses time-varying step sizes, this corresponds to having a time-varying $\varepsilon$ parameter in the ODE. Therefore, to disentangle the variables in \eqref{eq:two_time_scale_main}, \cite{konda2004convergence} proposed a time-varying bijective linear transformation $M_k$ that is inspired by the Ricatti equation
\begin{align}\label{eq:lin_trans}
\begin{bmatrix}
x_k\\y_k
\end{bmatrix}
  \xleftrightarrow{}
  \begin{bmatrix}
\tx_k=x_k+M_ky_k\\
\ty_k=y_k    
\end{bmatrix}.
\end{align}
In Lemma \ref{lem:L_k_bound} it is shown that $M_k$ can be written as $M_k=L_k+A_{22}^{-1}A_{21}$ where the matrices $L_k$  are deterministic and are recursively defined in
Eq. \eqref{eq:L_k22}. Furthermore, it can be shown that $L_k\to 0$ as $k\to\infty$. Therefore, $M_\varepsilon$ and $M_k$ have similar asymptotic converging points. To handle the intertwined updates \eqref{eq:two_time_scale_main}, in our analysis we use the linear transformation \eqref{eq:lin_trans} to disentangle the variables. Once the convergence bounds of the disentangled variables $\tx_x$ and $\ty_k$ are achieved, they are translated back to the intertwined variables using the transformation \eqref{eq:lin_trans}.

\section{Conclusion and Future Directions}\label{sec:conc}

In this work, we analyzed linear two-time-scale stochastic approximation (SA) under Markovian noise and established tight finite-time convergence bounds for the covariance of the iterates. Our results characterize the dependence of the mean squared error on key hyperparameters, particularly the step sizes, under a natural set of assumptions. We further demonstrated—both theoretically and empirically—that these assumptions are minimal for the convergence guarantees to hold. In addition, our analysis provides principled guidance for choosing step sizes to optimize performance.

A notable application of our results is to Polyak-Ruppert averaging, where we showed that it achieves the optimal convergence rate in a robust manner, even under Markovian noise. We also applied our framework to key reinforcement learning algorithms—TDC, GTD, and GTD2—establishing the convergence bound of $\sigma^2/k + \mathcal{O}(d^3)o(1/k)$, where $\sigma^2$ is the covariance of the CLT of the corresponding algorithm.

This work opens several promising directions for future research. First, while tight convergence bounds for non-linear operators under Polyak-Ruppert averaging are known in the i.i.d. setting \cite{moulines2011non}, extending such results to general non-linear operators under Markovian noise remains an important challenge. This could lead to new insights into the sample complexity of algorithms such as Watkins' $Q$-learning \cite{watkins1989learning} and Zap $Q$-learning \cite{devraj2017zap} with averaging. Another direction is to further reduce the dimension dependence in the higher-order terms through refined step-size selection. Identifying step-size schemes that minimize dimensional dependencies while preserving tight bounds is a valuable avenue for both theory and practice.

\bibliographystyle{alpha}
\bibliography{refs}
\pagebreak
\appendix
\appendixpage
\section{Convergence analysis of the cross term in the proof sketch }\label{sec:tZ_k_rate}

In this section, we explain the significant role that $\tZ_k$ plays in determining the convergence rate of the iterates. In addition, the convergence behavior of the cross-term $\tZ_k$ will also be discussed.

\subsection{Importance of the cross term}\label{sec:rec_sol}

First, we emphasize that it is critical to establish a tight bound on the convergence of the cross term. Let $a_k=o(1)$ and $b_k=o(1)$ and consider a recursion of the form
\begin{align*}
    V_{k+1}=(1-a_k)V_k+a_kb_k.
\end{align*}
If the sequence $\{a_k\}$ goes to zero at a sufficiently slow rate, then we can show that $V_k\leq \mathcal{O}(b_k)$. 

Next, as shown in \eqref{eq:tY_k}, we have $\tY_{k+1} = (1-\beta_k)\tY_k + \beta_k^2V+ 2\beta_k \E[\tx_k\ty_k] + o(\beta_k^2)$. Hence, the convergence rate of $\tY_k$ is $\mathcal{O}(\beta_k+\E[\tx_k\ty_k])$. As a result, to achieve $\mathcal{O}(\beta_k)$ convergence rate for $\tY_k$, it is essential to show that $\E[\tx_k\ty_k] = \mathcal{O}(\beta_k)$.

\subsection{Studying a special case}

Consider two random variables $(x_k,y_k)$ that are updated as follows
\begin{align*}
\begin{cases}
    x_{k+1}&=x_k+\alpha_k(-x_k+w_k)=(1-\alpha_k)x_k+\alpha_kw_k\\
    y_{k+1}&=y_k+\frac{1}{k+1}(x_k-y_k)=(1-\frac{1}{k+1})y_k+\frac{1}{k+1}x_k=\frac{1}{k+1}\sum_{i=0}^kx_i.
\end{cases}
\end{align*}
Here we assume $w_k$ to be an i.i.d. noise with zero mean  and variance $\E[w_k^2]=\sigma^2$. Observe that since the value of $x_{k+1}$ depends only on $x_k$ and $w_k$, $\{x_i\}_{i\geq 0}$ is a (time-varying) continuous state space Markov chain. However, in the special case of constant step size, $\{x_i\}_{i\geq 0}$ is a time-homogeneous Markov chain.

Since $\{x_i\}_{i\geq 0}$ is a Markov chain, $y_k$ can be viewed as averaging of the Markov random variables. In this section, our goal is to study the variance of $y_k$. Unlike the i.i.d. case where variance of average just depends on variance of each term, in a Markovian setting, the cross-covariance between the random variables also shows up in the variance of the average. Mathematically,
\begin{align*}
    \E[y_{k+1}^2]=\frac{1}{(k+1)^2}\sum_{i=0}^k\E[x_i^2]+\frac{2}{(k+1)^2}\sum_{i=0}^k\sum_{j=i+1}^k\underbrace{\E[x_ix_j]}_{\neq 0}.
\end{align*}
This shows that in the Markovian SA establishing the optimal convergence of the iterates requires a precise analysis of the cross term.

Next, we take an indirect approach to obtain the variance of $y_k$. Rewriting $\E[y_k^2]$ in a recursive manner, we have:
\begin{align}\label{eq:main_rec}
    \nonumber\E[y_{k+1}^2]&=\left(1-\frac{1}{k+1}\right)^2\E[y_k^2]+\frac{1}{(k+1)^2}\E[x_k^2]+\frac{2}{k+1}\left(1-\frac{1}{k+1}\right)\E[y_kx_k]\\
    &\approx \left(1-\frac{2}{k+1}\right)\E[y_k^2]+\frac{1}{(k+1)^2}\E[x_k^2]+\frac{2}{k+1}\E[y_kx_k],
\end{align}
where in the last line we assume $k$ large enough so that $\frac{1}{k+1}<< 1$. Rewriting the cross term, we have
\begin{align}\label{eq:cross_term}
    \E[y_{k+1}x_{k+1}]=\frac{1}{k+1}\sum_{i=0}^k\E[x_ix_{k+1}].
\end{align}
For each $i<k$, we have $x_{k+1}=(1-\alpha_k)x_k+\alpha_kw_k = (1-\alpha_k)(1-\alpha_{k-1})x_{k-1}+\alpha_kw_k+(1-\alpha_k)\alpha_{k-1}w_{k-1}=\dots = (\prod_{j=i}^k(1-\alpha_j))x_i + \sum_{j=i}^k\alpha_jw_j\Pi_{l=j+1}^k(1-\alpha_l)$. Inserting it in \eqref{eq:cross_term} we get
\begin{align}
    \E[y_{k+1}x_{k+1}]&=\frac{1}{k+1}\sum_{i=0}^k\E\left[x_i^2\left(\prod_{j=i}^k(1-\alpha_j)\right)\right]\nonumber\\
    &=\frac{1}{k+1}\sum_{i=0}^k\left(\prod_{j=i}^k(1-\alpha_j)\right)\E[x_i^2],\label{eq:Ex^2}
\end{align}
where the term corresponding to the noise is zero in expectation as we assumed $w_k$ is i.i.d zero mean. Solving the recursion on $x_k$, it is easy to see that $\E[x_i^2]\approx\frac{\sigma^2}{2}\alpha_i$. Replacing this in \eqref{eq:Ex^2} we get:
\begin{align*}
    \E[y_{k+1}x_{k+1}]&\approx\frac{1}{k+1}\frac{\sigma^2}{2}\sum_{i=0}^k\left(\prod_{j=i}^k(1-\alpha_j)\right)\alpha_i.
\end{align*}

Next, we show how \eqref{eq:main_rec} can be analyzed under different step sizes.
\begin{itemize}
    \item Let $\alpha_i=\alpha<1$. We have
\begin{align*}
    \E[y_{k+1}x_{k+1}]&\approx\frac{\alpha}{k+1}\frac{\sigma^2}{2}\underbrace{\sum_{i=0}^k(1-\alpha)^{k-i+1}}_{\text{geometric sum}}.
\end{align*}
Replacing this in \eqref{eq:main_rec}, we get:
\begin{align*}
    \E[y_{k+1}^2]&\approx (1-\frac{2}{k+1})\E[y_k^2]+\underbrace{\frac{\alpha}{(k+1)^2}\frac{\sigma^2}{2}}_{\text{variance term}}+\underbrace{\frac{\sigma^2\alpha}{(k+1)^2}\sum_{i=0}^k(1-\alpha)^{k-i+1}.}_{\text{cross-covariance term}}
\end{align*}
After solving the recursion for large enough $k$ we get
\begin{align}
   \Rightarrow \E[y_k^2]&\approx \frac{\alpha\sigma^2/2+\alpha\sigma^2\sum_{i=0}^\infty(1-\alpha)^{i+1}}{k}.\label{eq:var_of_y_final}
\end{align}
The geometric sum in \eqref{eq:var_of_y_final} corresponds to the infinite sum of cross-covariance terms in the expression for $\Gamma^y$ in Proposition \ref{prop:alternate_pe}. 

In addition, for function $f(\cdot)$ and a Markov chain $\{X^t\}_{t\geq0}$, \cite[Lemma 3]{mou2024heavy} establishes asymptotic variance of $\frac{f(X^1)+f(X^2)+\dots + f(X^m)}{m}$ as $m$ goes to infinity. At first look, one might expect that this asymptotic variance depends only on the variance of $f(\tilde{X})$, where $\tilde{X}$ follows the stationary distribution of the Markov chain. However, as shown in \cite[Lemma 3]{mou2024heavy}, this asymptotic variance has two terms, one corresponding to the variance of $f(\tilde{X})$ and the other corresponding to the auto covariance of $\{f(X^i)\}_{i\geq0}$. These two terms correspond to $\alpha\sigma^2/2$ and $\alpha\sigma^2\sum_{i=0}^\infty(1-\alpha)^{i+1}$ in \eqref{eq:var_of_y_final}, respectively. 

\item Let $\alpha_i=\frac{\alpha}{(i+1)^\xi}, 0<\xi<1$. We have:
\begin{align}
    \E[y_{k+1}x_{k+1}]&\approx\frac{1}{k+1}\frac{\sigma^2}{2}\sum_{i=0}^k\left(\prod_{j=i}^k(1-\alpha_j)\right)\alpha_i\nonumber\\
    &=\frac{1}{k+1}\frac{\sigma^2}{2}\left(1-\prod_{j=0}^k(1-\alpha_j)\right),\label{eq:inner_prod_cons_step_size}
\end{align}
where in the last equality we used the fact that $\sum_{i=0}^k\left(\prod_{j=i}^k(1-\alpha_j)\right)\alpha_i+\prod_{j=0}^k(1-\alpha_j)=1$. Replacing \eqref{eq:inner_prod_cons_step_size} in \eqref{eq:main_rec}, we get:
\begin{align*}
    \E[y_{k+1}^2]&\approx (1-\frac{2}{k+1})\E[y_k^2]+\frac{\alpha\sigma^2}{2(k+1)^2}+\frac{\sigma^2}{(k+1)^2}\bigg(1-\underbrace{\prod_{j=0}^k(1-\alpha_j)\bigg)}_{=O(e^{-k^{1-\xi}})}\\
    &\approx (1-\frac{2}{k+1})\E[y_k^2]+\frac{\sigma^2}{(k+1)^2}+\mathcal{O}\left(\frac{1}{(k+1)^2}\right).
\end{align*}
Solving the recursion gives us $\E[y_{k}^2]\approx \frac{\sigma^2}{k}$.
\end{itemize}

\section{Notation and Assumptions}
\textbf{Note:} Throughout the proof, any $c_{\cdot}$ (such as $c$ or $c_2$), indicates a problem-dependent constant. Furthermore, unless otherwise stated, $\|\cdot\|$ denotes the Euclidean 2-norm. Also, $\|\cdot\|_Q$ and $\langle\cdot,\cdot\rangle_Q$ denote the $Q$ weighted norm and inner product, i.e. $\langle x,y\rangle_Q = x^\top Q y$ and  $\|x\|_Q=\sqrt{\langle x,x\rangle_Q}$. 

We consider the following two-time-scale linear stochastic approximation with multiplicative noise:
\begin{equation}
    \begin{aligned}\label{eq:two_time_scale}
        y_{k+1}&=y_k+ \beta_k(b_1(O_{k})-A_{11}(O_{k})y_k-A_{12}(O_{k})x_k)\\
        x_{k+1}&= x_k+\alpha_k(b_2 (O_{k})-A_{21}(O_{k})y_k-A_{22}(O_{k})x_k),
    \end{aligned}
\end{equation}
\begin{align}
    x_{k+1}-x^*=x_k-x^*+\alpha_k(b-A_{22}(x_k-x^*)+b_2(O_k)-b+(A_{22}(O_k)-A_{22})x_k)
\end{align}
Without loss of generality, throughout the proof we assume $b_1=0$ and $b_2=0$. Note that this can be done simply by centering the variables as $x_k\rightarrow x_k-x^*$ and $y_k\rightarrow y_k-y^*$.

\begin{definition}
    Denote $\{\Tilde{O}_k\}_{k\geq 0}$ as a Markov chain with the starting distribution as the stationary distribution of $\{O_k\}_{k\geq0}$. 
    \begin{align}
    \Gamma_{11}=\E[b_1(\Tilde{O}_k)b_1(\Tilde{O}_k)^\top];~~~\Gamma_{21}^\top=\Gamma_{12}=\E[b_1(\Tilde{O}_k)b_2(\Tilde{O}_k)^\top];~~~\Gamma_{22}=\E[b_2(\Tilde{O}_k)b_2(\Tilde{O}_k)^\top];~~~
    \end{align}
\end{definition}
\begin{definition}
    Define $\mathbb{E}_O[f(\cdot)]=\sum_{\cdot\in S} P(\cdot|O)f(\cdot)$
\end{definition}
By Assumption \ref{ass:poisson_main}, and \cite[Theorem 22.1.8]{douc2018markov}, we know that there exist $\rho\in (0,1)$ which satisfies \\$\max_{o}d_{TV}(P^k(\cdot|o)||\mu(\cdot)))\leq  \rho^k$, where $d_{TV}(p(\cdot)||q(\cdot))=\frac{1}{2}\int|p(x)-q(x)|dx$. Furthermore, we define the mixing time of the Markov chain $\{O_k\}_{k\geq 0}$ with the transition probability $P(\cdot|\cdot)$ as $\tau_{mix} = \min_n \{n: \max_o d_{TV}(P^n(\cdot|o)||\mu(\cdot))\leq1/4\}$.
\begin{definition}\label{def_noise}
    Let 
    \begin{align*}
        f_1(O, x, y)&=b_1(O)-(A_{11}(O)-A_{11})y-(A_{12}(O)-A_{12})x\\
        f_2(O, x, y)&=b_2(O)-(A_{21}(O)-A_{21})y-(A_{22}(O)-A_{22})x
    \end{align*}
\end{definition}
Throughout the proof, for the ease of notation we will denote $f_1(O_k ,x_k,y_k)\equiv v_k$ and $f_2(O_k ,x_k,y_k)\equiv w_k$.
\begin{remark}\label{rem:pois_eq_geo_mix}By Assumption \ref{ass:poisson_main}, 
     there exist functions $\hat{f}_i,~i\in\{1,2\}$ that are solutions to the following Poisson equations, i.e. \cite[Proposition 21.2.3]{douc2018markov}
\begin{align}\label{eq:poisson_eq_f_i}
    \hat{f}_i(o, x, y)&=f_i(o, x, y)+\sum_{o'\in S} P(o'|o)\hat{f}_i(o', x, y). 
\end{align}
Furthermore, the assumption \ref{ass:poisson_main} shows that the Markov chain $\{O_k\}_{k\geq 0}$ has a geometric mixing time. 
\end{remark}

Before stating the lemmas, we present the following definitions which will be used within the proof of the lemmas.

    Throughout the proof of Theorem \ref{thm:Markovian_main}, we define the matrix $Q_{\Delta,\beta}$ and $q_{\Delta,\beta}$ according to Definition \ref{def:Q_delta}.  
    \begin{definition}\label{def:Q_delta}Define $Q_{\Delta,\beta}$ as the solution to the following Lyapunov equation:
\begin{align}
    \left(\Delta-\frac{\beta^{-1}}{2}I\right)^\top Q_{\Delta,\beta}+Q_{\Delta,\beta}\left(\Delta-\frac{\beta^{-1}}{2}I\right) = I. \label{eq:Q_delta}
\end{align}
Furthermore, we denote $q_{\Delta,\beta} = \frac{\beta\|Q_{\Delta,\beta}\|^{-1}}{4+\beta\|Q_{\Delta,\beta}\|^{-1}}$.
Note that due to the Assumption \ref{ass:hurwitz_main}, Eq. \eqref{eq:Q_delta} always has a unique positive-definite solution. 
\end{definition}
In the proof of Theorem \ref{thm:Markovian_main} we take $\varrho$ such that $q_{\Delta,\beta} = 1-\varrho$. Although in our proof we use this special case of $\varrho$, the extension of our result to the general $\varrho$ is straightforward. 

\begin{definition}\label{def_list}
    Define 
    \begin{align*}
        X_k&= \E[x_kx_k^\top];~~Z_k= \E[x_ky_k^\top];~~Y_k=\E[y_ky_k^\top];\\
        V_k&=\E[\|x_k\|_{Q_{22}}^2];~~W_k=\E[\|y_k\|_{Q_{\Delta}}^2~~U_k=V_k+W_k;\\
        \xh_k&=x_k+A_{22}^{-1}A_{21}y_k;~~\tx_k=L_ky_k+\xh_k;~~\yh_k=\ty_k=y_k;\tag{where $L_k$ is defined in Eq. \eqref{eq:L_k22}}\\
        \tX_k&=\E[\tx_k\tx_{k}^\top];~~\tZ_k=\E[\tx_k\ty_k^\top];~~\tY_k=\E[\ty_k\ty_{k}^\top];\\
        d_k^{xv}= \E\bigg[\Big(\E_{O_{k-1}}&\fh_1(\cdot,x_k,y_k)\Big)\tx_k^\top\bigg];~~d_k^{xw}= \E\bigg[\left(\E_{O_{k-1}}\fh_2(\cdot,x_k,y_k)\right)\tx_k^\top\bigg];~~d_k^x= d_k^{xw}+\frac{\beta_k}{\alpha_k}(L_{k+1}+A_{22}^{-1}A_{21})d_k^{xv};\\
        d_k^{yv}=\E\bigg[\Big(\E_{O_{k-1}}&\fh_1(\cdot,x_k,y_k)\Big)\ty_k^\top\bigg];~~d_k^{yw}=\E\bigg[\left(\E_{O_{k-1}}\fh_2(\cdot,x_k,y_k)\right)\ty_k^\top\bigg];~~d_k^y=d_k^{yw}+\frac{\beta_k}{\alpha_k}(L_{k+1}+A_{22}^{-1}A_{21})d_k^{yv};\\
        \tX'_k&=\tX_k+\alpha_k (d_k^x+{d_k^x}^\top);~~\tZ_k'= \tZ_k+\alpha_kd_k^{y}+\beta_k{d_k^{xv}}^\top;~~\tY_k' = \tY_k + \beta_k (d_k^{yv}+{d_k^{yv}}^\top);\\
        \zeta_k^x&=\frac{1}{(k+K_0)^{\min\{1.5\xi, 1\}}};~~\zeta_k^{xy}=\frac{1}{(k+K_0)^{\min\{\xi+0.5, 2-\xi\}}};~~\zeta_k^y=\frac{1}{(k+K_0)^{1+q_{\Delta, \beta}\min\{\xi-0.5, 1-\xi\}}};\\
         u_k&=w_k+\frac{\beta_k}{\alpha_k}(L_{k+1}+A_{22}^{-1}A_{21})v_k\\
         F^{(i,j)}(O',O,x,y)&=\left( \fh_i(O',x,y)\right)(f_j(O,x,y))^\top ~~ \text{for}~~ i,j\in\{1,2\};\\
        I&=A_{22}^\top Q_{22}+Q_{22}A_{22}\tag{$Q_{22}$ is the unique solution to this equation};\\
        I&=\Delta^\top Q_{\Delta}+Q_{\Delta}\Delta;\tag{$Q_{\Delta}$ is the unique solution to this equation}\\
        a_{22}&=\frac{1}{2\|Q_{22}\|};~~\delta=\frac{1}{2\|Q_{\Delta}\|};\\
        C_i(O)&=\sum_{k=0}^\infty \E{[b_i(O_k)|O_o=O]};\\
        C_{ij}(O)&=\left(\sum_{k=0}^\infty \E{[A_{ij}(O_k)-A_{ij}|O_0=O]}\right);\\
        C_{22}^k&=\frac{\beta_k}{\alpha_k}\left(L_{k+1}+A_{22}^{-1}A_{21}\right)A_{12};\\
        k_C&=\min\bigg\{k: \frac{\alpha}{(k+K_0)^\xi}\leq \frac{1}{2\|Q_{22}\|\|A\|_{Q_{22}}^2}, \frac{\beta}{k+K_0}\leq \frac{1}{2\|Q_{\delta}\|\|\Delta\|^2_{Q_{\Delta}}},\\
        & \frac{8\alpha\max\left\{b_{max}\sqrt{\gamma_{max}(Q_{22})} , \frac{\check{h}_3}{2}\right\} }{(1-\rho)(k+K_0)^\xi}\leq 0.3,\frac{8\beta \max\left\{b_{max}\sqrt{\gamma_{max}(Q_{22})} , \frac{\check{h}_4}{2}\right\}}{(1-\rho)(k+K_0)}\leq 0.3,\\
        &\frac{2\|A_{12}\|^2_{Q_{\Delta}}\gamma_{max}(Q_{\Delta})}{\gamma_{min}(Q_{22})\delta}\frac{\beta}{k+K_0}+\frac{\brc_3\beta^2}{\alpha(k+K_0)^{2-\xi}}\leq \frac{a_{22}\alpha}{4(k+K_0)^\xi}, \frac{\brc_3\beta}{\alpha(k+K_0)^\xi}\leq \frac{\delta}{4}\bigg\};
        \tag{where $\{\check{h}_i\}_{i}$ and $\brc_3$ are defined in Lemma \ref{lem:f_hat_bound} and Eq. \eqref{c_3def}, respectively}\\
        k_L&=\min\bigg\{k: \beta_k\leq \frac{\sqrt{\gamma_{\min}(Q_{22})}}{2(\|\Delta\|_{Q_{22}}+\|A_{12}\|_{Q_{22}})\sqrt{\gamma_{\max}(Q_{22})}},\\
        &~~~~~~\frac{\beta_k}{\alpha_k}\leq \frac{a_{22}/2}{(\|A_{22}^{-1}A_{21}\|_{Q_{22}}+1)(\|\Delta\|_{Q_{22}}+\|A_{12}\|_{Q_{22}})}~\forall k\geq k_C\bigg\};\\
        k_1&=\min\left\{k:\frac{a_{22}}{2}\geq \frac{1-\xi}{\alpha(k+K_0)^{1-\xi}}~\forall k\geq k_L\right\};\\
        d&=\max\{d_x,d_y\};\\
        b_{max}& = \max_{j\in\{1,2\}}\max_{o'\in \mathcal{S}} \left|b_{j}^{(i)}(o') \right|, \text{where } b_{j}^{(i)}(o') \text{is the } i'th \text{element of the vector }b_{j}(o');\\
        \kappa_{Q_{22}}& =\frac{\sqrt{\gamma_{max}(Q_{22})}}{\sqrt{\gamma_{min}(Q_{22})}};~~\kappa_{Q_{\Delta}} =\frac{\sqrt{\gamma_{max}(Q_{\Delta})}}{\sqrt{\gamma_{min}(Q_{\Delta})}};~~\kappa_{Q_{\Delta, \beta}}=\frac{\sqrt{\gamma_{max}(Q_{\Delta, \beta})}}{\sqrt{\gamma_{min}(Q_{\Delta, \beta})}};\\
        A_{max}&=\max_{o\in \mathcal{S}}\left\{\max_{i,j\in \{1,2\}}\{\|A_{ij}(o)\|\}\right\};~~\varrho_{x} =\kappa_{Q_{22}}+\|A_{22}^{-1}A_{21}\|;~~\varrho_y= \|\Delta\|+\|A_{12}\|\kappa_{Q_{22}}.
    \end{align*}
\end{definition}

In this paper, our aim is to establish the dependency of the second order term in terms of the dimension of the variables $x_k$ and $y_k$. For doing so, we will keep track of all the constants in the paper which we assume to be independent of the dimension. Specifically, we will assume that matrix operator norms and eigenvalues of various matrices do not scale with the dimension. For example, the following constants are assumed to be dimension independent: $a_{22}, \|A_{22}^{-1} A_{21}\|_{Q_{22}},\|\Delta\|_{Q_{22}}, \|A_{12}\|_{Q_{22}}, \sqrt{\gamma_{\min}(Q_{22})}, \sqrt{\gamma_{\max}(Q_{22})}$, etc. Also, note that the mixing constant $\rho$ may also contribute to the dimensional dependence, but we do not study that here.

Before starting the proof of the main results, we will state some properties of the matrix $\Sigma^x, \Sigma^{xy}$ and $\Sigma^y$ given in Eqs. \eqref{eq:sigma_x_def_main}-\eqref{eq:sigma_y_def_main} which will be used extensively. Firstly, observe that $\|b_i(o)\|\leq b_{max}\sqrt{d},~i\in\{1,2\}$. Let $U_1J_1U_1^{-1}=A_{22}$ be the Jordan canonical decomposition of $A_{22}$. The Lyapunov equation is given by:
\begin{align*}
    A_{22}\Sigma^x+\Sigma^xA_{22}^\top=\Gamma^x
\end{align*}
where $\Gamma^x=\E{[\bt_2(\tilde{O}_0)\bt_2(\tilde{O}_0)^\top]}+\sum_{j=1}^\infty \E{[\bt_2(\tilde{O}_j)\bt_2(\tilde{O}_0)^\top + \bt_2(\tilde{O}_0)\bt_2(\tilde{O}_j)^\top]}$. Then, we have
\begin{align*}
    \|\Gamma^x\|&\leq \|\E{[\bt_2(\tilde{O}_0)\bt_2(\tilde{O}_0)^\top]}\|+\|\sum_{j=1}^\infty \E{[\bt_2(\tilde{O}_j)\bt_2(\tilde{O}_0)^\top + \bt_2(\tilde{O}_0)\bt_2(\tilde{O}_j)^\top]\|}\\
    &\leq \|\E{[\bt_2(\tilde{O}_0)\bt_2(\tilde{O}_0)^\top]}\|+2\|\sum_{j=1}^\infty \E{[\bt_2(\tilde{O}_j)\bt_2(\tilde{O}_0)^\top\|}\\
    &\leq b_{max}^2d+\frac{8\tau_{mix}}{3}b_{max}^2d\tag{Lemma \ref{lem:mix_time_sum}}\\
    &\leq 4b_{max}^2d\tau_{mix}\tag{$\tau_{mix}\geq 1$}
\end{align*}
Define the following:
\begin{align*}
    \sigma^x= 4b^2_{max}\|U_1\|\|U_1^{-1}\|\sum_{n,n'=0}^{m_{A_{22}}}  {n+n'\choose k}\frac{1}{(-2r_{A_{22}})^{n+n'+1}},
\end{align*}
where $m_{A_{22}}$ is the largest algebraic multiplicity of the matrix $A_{22}$ and $r_{A_{22}}=\max_i\mathfrak{Re}[\lambda_i]$, where $\lambda_i$ is the $i$-th eigen value. Then, using Lemma \ref{lem:Lyap_bound}, we have $\|\Sigma^x\|\leq \sigma^xd\tau_{mix}$.

\begin{align*}
    A_{12}\Sigma^x+\Sigma^{xy}A_{22}^\top=\Gamma^{xy}
\end{align*}
where $\Gamma^{xy}=\E{[\bt_2(\tilde{O}_0) \bt_1(\tilde{O}_0)^\top]}+ \sum_{j=1}^\infty \E{[\bt_2(\tilde{O}_j) \bt_1(\tilde{O}_0)^\top + \bt_2(\tilde{O}_0)\bt_1(\tilde{O}_j)^\top]}$. Similar to bounding to $\Gamma^x$, we get $\|\Gamma^{xy}\|\leq 4b_{max}^2d\tau_{mix}$. Define $\sigma^{xy}=\|A_{22}^{-1}\|\left(4b_{max}^2+\|A_{12}\|\sigma^x\right)$. Thus, we have
\begin{align*}
    \|\Sigma^{xy}\|\leq \sigma^{xy}d\tau_{mix}.
\end{align*}

Finally, note that $\Sigma^y$ is the solution to the following Lyapunov equation:
\begin{align*}
    \left(\Delta-\frac{\beta^{-1}I}{2}\right)\Sigma^y+\Sigma^y\left(\Delta^\top-\frac{\beta^{-1}I}{2}\right)=\Gamma^y-A_{12}\Sigma^{yx}-\Sigma^{xy}A_{12}^\top.
\end{align*}
Similar to bounding to $\Gamma^x$, we get $\|\Gamma^{y}\|\leq 4b_{max}^2d\tau_{mix}$. From the previous bounds we can bound the norm of the r.h.s as follows:
\begin{align*}
    \|\Gamma^y-A_{12}\Sigma^{yx}-\Sigma^{xy}A_{12}^\top\|&\leq \|\Gamma^y\|+2\|A_{12}\|\|\Sigma^{yx}\|\\
    &\leq b_{max}^2d+\frac{8\tau_{mix}}{3}b_{max}^2d+2\|A_{12}\|\|A_{22}^{-1}\|\left(4b_{max}^2+\sigma^x\right)d\tau_{mix}\\
    &\leq \left(4b_{max}^2+2\|A_{12}\|\sigma^{xy}\right)d\tau_{mix}
\end{align*}

Assume $U_2J_2U_2^{-1}=\Delta-\frac{\beta^{-1}I}{2}$ to be Jordan canonical decomposition of $\Delta-\frac{\beta^{-1}I}{2}$.
\begin{align*}
    \sigma^y=\left(4b_{max}^2+2\|A_{12}\|\sigma^{xy}\right)\|U_2\|\|U_2^{-1}\|\sum_{n,n'=0}^{m_{\Delta, \beta}}  {n+n'\choose k}\frac{1}{(-2r_{\Delta, \beta})^{n+n'+1}},
\end{align*}
where $m_{\Delta, \beta}$ is the largest algebraic multiplicity of the matrix $A$ and $r_{\Delta, \beta}=\max_i\mathfrak{Re}[\lambda_i]$, where $\lambda_i$ is the $i$-th eigen value. Then, using Lemma \ref{lem:Lyap_bound}, we have $\|\Sigma^y\|\leq \sigma^yd\tau_{mix}$.

Before we start the proof, we give a schematic road map of the proof of Theorem \ref{thm:Markovian_main} in Figure \ref{fig:proof_sketch}. Recall that the proof of our main lemma \ref{lem:x_xy_y_prime} that pillars our theorem is based upon induction argument. Thus, we have divided the auxiliary lemmas into two groups: Induction dependent lemmas that are proved using the induction hypothesis and Induction independent lemmas that proved using only the problem structure and assumptions.
\begin{figure}%
    %\hspace{2cm}
    \begin{tikzpicture}[shorten >=1pt,node distance=4cm,on grid,auto]
    %\node (add_lem) [boxGreen]  {Section \ref{sec:add_lem}: Supporting lemmas independent of Assumptions \ref{ass:hurwitz_main}-\ref{ass:step_size_main}};
    \node (main_thm) [text width=8.5cm, boxGreen] {Theorem \ref{thm:Markovian_main}: Use the linear transformation to show that the coupled iterates satisfy the following\begin{align*}
        \E[\xh_{k}\xh_{k}^\top]&=\alpha_{k}\Sigma^x +  C^x_{k}\zeta_{k}^x\\
        \E[\xh_{k}y_{k}^\top]&=\beta_{k}\Sigma^{xy} + C^{xy}_{k}\zeta_{k}^{xy}\\
        \E[y_{k}y_{k}^\top]&=\beta_{k}\Sigma^y + C^{y}_{k}\zeta_{k}^y
    \end{align*}
    where $\max\{\|C^x_{k}\|, \|C^z_{k}\|, \|C^y_{k}\|\}\leq c_0(\varrho, d)$ for all $k\geq 0$.};

    \node (tech_lemD1)  [text width=10cm,boxGreen,  below= 4cm of main_thm]  {Lemma \ref{lem:x_xy_y}: Use Lemma \ref{lem:x_xy_y_prime} to show that the decoupled iterates satisfy the following
    \begin{align*}
        \left[\tX_{k}, \tZ_{k}, \tY_{k}\right]=\left[\alpha_{k}(\Sigma^x +  o(1)), \beta_{k}(\Sigma^{xy} + o(1)), \beta_{k}(\Sigma^y + o(1))\right]
    \end{align*}};

    \node (tech_lemD2)  [text width=12cm, boxGreen, below =4cm of tech_lemD1]  {Lemma \ref{lem:x_xy_y_prime}: Induction step showing the following     
    \begin{align*}
        \left[\tX'_{k+1}, \tZ'_{k+1},\tY'_{k+1}\right]& =\left[\alpha_{k+1}(\Sigma^x +  o(1)),\beta_{k+1}( \Sigma^{xy} + o(1)), \beta_{k+1}( \Sigma^y + o(1))\right]
        %\tZ'_{k+1}& =\beta_{k+1}( \Sigma^{xy} + o(1))\\
        %\tY'_{k+1}& =\beta_{k+1}( \Sigma^y + o(1))
    \end{align*}};
    
    \node (aux_lem_ind) [text width=6.5cm, boxGreen, below left=6cm and 5cm   of tech_lemD2] {Section \ref{sec:Aux_lem_ind}: \\
    \textbf{Induction independent lemmas}\\
        
        \begin{tikzpicture}[shorten >=1pt,on grid,auto]
        \node (aux_lem_ind1) [text width=5.5cm, boxYellow]  {Lemma \ref{lem:L_k_bound}: Properties of the deterministic iterate $L_k$};
        \node (aux_lem_ind2) [text width=5.5cm, boxYellow, below= 1.5cm of aux_lem_ind1]  {Lemma \ref{lem:f_hat_bound}: Norm of functions of noise have almost sure linear growth w.r.t. to iterates.};
        \node (aux_lem_ind3) [text width=5.5cm, boxYellow, below= 1.5cm of aux_lem_ind2]  {Lemma \ref{lem:crdue_upper_bnd_norm}: One step recursive relation of iterates};
        \node (aux_lem_ind4) [text width=6cm, boxYellow, below= 1.5cm of aux_lem_ind3]  {Lemma \ref{lem:boundedness}:  Uniform constant upper bound for the mean square error};
        \node (aux_lem_ind5) [text width=5.5cm, boxYellow, below= 1.5cm of aux_lem_ind4]  {Lemmas \ref{lem:tel_term_bound} and \ref{lem:noise_crude_bound}: Loose bounds on norm of noise terms};
        \draw [arrow] (aux_lem_ind1) -- (aux_lem_ind2);
        \draw [arrow] (aux_lem_ind2) -- (aux_lem_ind3);
        \draw [arrow] (aux_lem_ind3) -- (aux_lem_ind4);
        \draw [arrow] (aux_lem_ind4) -- (aux_lem_ind5);
        \end{tikzpicture}};
        
    \node (aux_lem_dep) [text width=6.5cm, boxGreen, below right=6cm and 5cm of tech_lemD2] { Section \ref{sec:Aux_lem_dep}:\\ \textbf{Induction dependent lemmas}
    \begin{tikzpicture}[shorten >=1pt,on grid,auto]
        \node (ind_assump) [text width=6cm, boxYellow]  {Assume 
    $
    \begin{cases}
        \tX'_k=\alpha_k(\Sigma^x +  o(1)),\\
        \tZ'_k=\beta_k(\Sigma^{xy} + o(1)),\\
        \tY'_k=\beta_k(\Sigma^y + o(1))
    \end{cases}
    $\\
    };
    \node (imp) [text width=6cm, boxYellow, below=2.5cm of ind_assump ]  {\vspace{-4mm}\begin{itemize}
        \item  Lemma  \ref{lem:go_from_xp_to_x}: Some implications of the assumption from induction.
        \item  Lemma \ref{lem:F_conv}: Markovian correlation of the noise across time.
    \end{itemize}};
    \node (mark_noise) [text width=6cm, boxYellow, below=2.5cm of imp ]  {\vspace{-4mm}\begin{itemize}
        \item  Lemma \ref{lem:noise_bound}: Noise variance under equilibrium.
        \item  Lemma \ref{lem:noise_bound2}: Expected behavior of the cross term.
    \end{itemize}};
    \draw [arrow] (ind_assump) -- (imp);
    \draw [arrow] (imp) -- (mark_noise);
    \end{tikzpicture}

    % %Similarly for $\tX'_k$ and $\tZ'_k$.
    % %To Show: $\tY'_{k+1}=\beta_{k+1}\Sigma^y + \tC'_{k+1}\zeta_{k+1}^y$ with $\|\tC'_{k+1}\|\leq\bar{c}$.
    % \begin{itemize}
    %     \item Constructing the leading terms :$\Sigma^x, \Sigma^{xy}, \Sigma^y$
    %     \begin{itemize}
    %         \item 
    %         \item 
    %     \end{itemize}
    % \end{itemize}
    };
    
        %\draw [arrow] (add_lem) -- (aux_lem_ind);
        %\draw [arrow] (add_lem) -- (aux_lem_dep);
        %\draw [arrow] (add_lem) -- (tech_lemD2);
        \draw [arrow] (aux_lem_ind) -- node[midway, above] {Section \ref{sec:pf_3}} (aux_lem_dep);
        \draw [arrow] (aux_lem_ind) -- node[midway, left] {Section \ref{sec:absolute_upper_bound}} (tech_lemD2);
        \draw [arrow] ([xshift=-5cm, yshift=0pt]aux_lem_ind) |- (main_thm);
        \draw [arrow] ([xshift=-3.6cm, yshift=0pt] aux_lem_ind) |- (tech_lemD1);
        \draw [arrow] (tech_lemD2) -- node[midway, right] {Section \ref{sec:pf_2}} (tech_lemD1);
        \draw [arrow] (tech_lemD1) -- node[midway, right] {Section \ref{sec:pf_4}} (main_thm);
        \draw [arrow] (aux_lem_dep) -- node[midway, right] {Sections \ref{sec:pf_2} and \ref{sec:pf_5}} (tech_lemD2);

        % \draw[black, decorate, decoration={brace, amplitude=1ex, raise=1ex, mirror}, yshift=-1.8cm, xshift=-2cm] (5, 15*0.08) -- (5, 30*0.08) node[text width=4cm, right of = add_lem, right=-0.5cm] {Lemmas true independent of the Assumptions \ref{ass:hurwitz_main}-\ref{ass:step_size_main}};
        % \draw[black, decorate, decoration={brace, amplitude=1ex, raise=1ex, mirror}, yshift=-4.6cm, xshift=-4.5cm] (10, 15*0.08) -- (10, 30*0.08) node[text width=4cm, right of = aux_lem_dep, right=-0.9cm] {Auxilliary Lemmas};
        \end{tikzpicture}
    \caption{Road map of the proof of the paper}
    \label{fig:proof_sketch}
\end{figure}

\section{Proofs of the results in the main paper}\label{sec:app_proof_main}

\begin{proof}[Proof of Proposition \ref{prop:alternate_pe}]
    
We will prove the lemma only for $\Gamma^x$. The other terms follow in a similar way. From Lemma \ref{lem:possion_sol}, taking $A_1$ and $A_2$ to be all zero matrices we have that:
\begin{align*}
    \hat{b}_i(O)=\sum_{k=0}^\infty\E[b_i(O_k)|O_0=O]
\end{align*}
Replacing the above solution in Definition \ref{def:var} we have:
\begin{align*}
    \Gamma^x=\E_{O\sim \mu}\left[\left(\sum_{j=0}^\infty\E[b_2(O_j)|O_0=O]\right)b_2(O)^\top+b_2(O)\left(\sum_{j=0}^\infty\E[b_2(O_j)|O_0=O]^\top\right)-b_2(O)b_2(O)^\top\right]
\end{align*}
Since $\{\tO_j\}_{j\geq 0}$ comes from Markov chain whose starting distribution is $\mu$, we have:
\begin{align*}
    \Gamma^x&=\E\left[\left(\sum_{j=0}^\infty\E[b_2(\tO_j)|\tO_0]\right)b_2(\tO_0)^\top+b_2(\tO_0)\left(\sum_{j=0}^\infty\E[b_2(\tO_j)|\tO_0]^\top\right)-b_2(\tO_0)b_2(\tO_0)^\top\right]\\
    &=\E\left[\sum_{j=0}^\infty\E[b_2(\tO_j)|\tO_0]b_2(\tO_0)^\top\right]+\E\left[\sum_{j=0}^\infty b_2(\tO_0)\E[b_2(\tO_j)|\tO_0]^\top\right]-\E[b_2(\tO_0)b_2(\tO_0)^\top]\\
    &=\E\left[\sum_{j=0}^\infty\E[b_2(\tO_j)b_2(\tO_0)^\top|\tO_0]\right]+\E\left[\sum_{j=0}^\infty\E[b_2(\tO_0)b_2(\tO_j)|\tO_0]^\top\right]-\E[b_2(\tO_0)b_2(\tO_0)^\top]\\
    &=\sum_{j=0}^\infty\E[\E[b_2(\tO_j)b_2(\tO_0)^\top|\tO_0]]+\sum_{j=0}^\infty\E[\E[b_2(\tO_0)b_2(\tO_j)|\tO_0]^\top]-\E[b_2(\tO_0)b_2(\tO_0)^\top]\tag{Fubini-Tonelli Theorem}\\
    &=\sum_{j=0}^\infty\E[b_2(\tO_j)b_2(\tO_0)^\top]+\sum_{j=0}^\infty\E[b_2(\tO_0)b_2(\tO_j)]^\top-\E[b_2(\tO_0)b_2(\tO_0)^\top]\tag{Tower property}\\
    &=\E[b_2(\tO_0)b_2(\tO_0)^\top]+\sum_{j=1}^\infty\E[b_2(\tO_j)b_2(\tO_0)^\top+b_2(\tO_0)b_2(\tO_j)^\top]
\end{align*}
\end{proof}

\begin{proof}[Proof of Theorem \ref{thm:Markovian_main}]
We can write recursion \eqref{eq:two_time_scale} as
\begin{align*}
    y_{k+1}&=y_k-\beta_k(A_{11}y_k+A_{12}x_k)+\beta_k\left(b_1(O_{k})-(A_{11}(O_{k})-A_{11})y_k-(A_{12}(O_{k})-A_{12})x_k\right)\\
    &=y_k-\beta_k(A_{11}y_k+A_{12}x_k)+\beta_kf_1(O_k,x_k,y_k),
\end{align*}
and
\begin{align*}
    x_{k+1}&= x_k-\alpha_k(A_{21}y_k+A_{22}x_k)+\alpha_k\left(b_2 (O_{k})-(A_{21}(O_{k})-A_{21})y_k-(A_{22}(O_{k})-A_{22})x_k\right)\\
    &= x_k-\alpha_k(A_{21}y_k+A_{22}x_k)+\alpha_kf_{2}(O_k,x_k,y_k).
\end{align*}

We first construct the auxiliary iterates of $\ty_k$ and $\tx_k$ as follows:
    \begin{align}
        \ty_k&=y_k\label{eq:y_tran_2}\\
        \tx_k&=L_ky_k+x_k+A_{22}^{-1}A_{21}y_k,\label{eq:x_tran_2}
    \end{align}
    where 
    \begin{align}
        L_k&=0, ~~~ 0\leq k< k_L\label{eq:L_k12}\\
        L_{k+1}&=(L_k-\alpha_kA_{22}L_k+\beta_kA_{22}^{-1}A_{21}B_{11}^k)(I-\beta_kB_{11}^k)^{-1},~~~\forall k\geq k_L, \label{eq:L_k22}\\
        B_{11}^k&=\Delta-A_{12}L_k\nonumber\\
        B_{21}^k&=\frac{L_k-L_{k+1}}{\alpha_k}+\frac{\beta_k}{\alpha_k}\left(L_{k+1}+A_{22}^{-1}A_{21}\right)B_{11}^k-A_{22}L_k\nonumber\\
        B_{22}^k&=\frac{\beta_k}{\alpha_k}\left(L_{k+1}+A_{22}^{-1}A_{21}\right)A_{12}+A_{22}=C_{22}^k+A_{22}\nonumber,
    \end{align}
    where we denote $C_{22}^k=\frac{\beta_k}{\alpha_k}\left(L_{k+1}+A_{22}^{-1}A_{21}\right)A_{12}$.
    The existence of $k_L$ is guaranteed due to Lemma \ref{lem:L_k_abs_bound}, the fact that $\Delta$ and $A_{12}$ are finite, and Assumptions \ref{ass:step_size_main} on the step size. In addition, this choice of $k_L$ results in $I \succ \beta_kB_{11}^k $ for all $ k \geq k_L $.

    Then we have the following update for the new variables
    \begin{align}
        \ty_{k+1}=&\ty_k-\beta_k(B_{11}^k\ty_k+A_{12}\tx_k)+\beta_kv_k\label{eq:y_t_update_2}\\
        \tx_{k+1}=&\tx_k-\alpha_k(B_{21}^k\ty_k+B_{22}^k\tx_k)+\alpha_k w_k+\beta_k(L_{k+1}+A_{22}^{-1}A_{21})v_k,\label{eq:x_t_update_2}
    \end{align}
where recall $v_k=f_1(O_k,x_k,y_k)$ and $w_k = f_2(O_k,x_k,y_k)$.
\begin{itemize}
    \item  Since we assumed $b_1=b_2=0$, we have $\ty_k=y_k=\hat{y}$. By Lemma \ref{lem:x_xy_y}, we get . 
    \begin{align*}
        \E[\hat{y}_k\hat{y}_k^\top]=\beta_k\Sigma^y+C_k^y\zeta_k^y
    \end{align*}
    where $C_k^y=\tC_k^y$ and $\|C^k_y\|\leq c^*d^2=c^{(y)}d^2$.
\item By Lemma \ref{lem:x_xy_y}, we have
\begin{align*}
    \beta_k \Sigma^{xy} + \tC_k^{xy}\zeta_k^{xy}=\E[\tx_k\ty_{k}^\top]= \E[(L_ky_k+\xh_k)y_{k}^\top] 
\end{align*}
\begin{align*}
    \implies \E[\xh_ky_{k}^\top] = \beta_k \Sigma^{xy} + \tC_k^{xy}\zeta_k^{xy} - L_k\E[y_ky_k^\top]. 
\end{align*}
Define $C_k^{xy}$ such that $\tC_k^{xy}\zeta_k^{xy} - L_k\E[y_ky_k^\top] = C_k^{xy}\zeta_k^{xy}$. Then, we have 
\begin{align*}
\|C_k^{xy}\|=\bigg\|\tC_k^{xy} -\frac{1}{\zeta_k^{xy}} L_k\E[y_ky_k^\top]\bigg\|
&\leq \|\tC_k^{xy}\| +\frac{1}{\zeta_k^{xy}}\|L_k\|\|\E[y_ky_k^\top]\| \\ 
&\leq c^*d^2+c_1^L\frac{\beta_k}{\zeta_k^{xy}\alpha_k}\left(\sigma^y\tau_{mix}d\beta_k+c^*d^2\zeta_k^y\right)\tag{Lemma \ref{lem:x_xy_y} and \ref{lem:L_k_abs_bound}}\\
&\leq c^*d^2+c_1^L\frac{\beta}{\alpha}\left(\sigma^y\tau_{mix}d\beta+c^*d^2\right)=c^{(z)}d^2.
\end{align*} 
where $c^{(z)}= c^*+c_1^L\frac{\beta}{\alpha}\left(\sigma^y\tau_{mix}\beta+c^*\right)$.

\item Again by Lemma \ref{lem:x_xy_y}, we have
\begin{align*}
    \E[(L_ky_k+\xh_k)(L_ky_k+\xh_k)^\top]=\alpha_k \Sigma^x + \tilde{C}^x_k\zeta_k^x
    \end{align*}
    \begin{align*}
    \implies \E[\xh_k\xh_k^\top] = \alpha_k \Sigma^x + \tilde{C}^x_k\zeta_k^x- L_k\E[y_ky_k^\top]L_k^\top -L_k\E[y_k \xh_k^\top] - \E[\xh_k y_k^\top]L_k^\top.
\end{align*}

Define $C^x_k$ such that
$C^x_k\zeta_k^x=\tilde{C}^x_k\zeta_k^x-L_k\E[y_ky_k^\top]L_k^\top -L_k\E[y_k \xh_k^\top] - \E[\xh_k y_k^\top]L_k^\top$. Then, we have
\begin{align*}
    \|C_k^x\|=\left\|\tilde{C}^x_k-\frac{1}{\zeta_k^x}\left( L_k\E[y_ky_k^\top]L_k^\top +L_k\E[y_k \xh_k^\top]+\E[\xh_k y_k^\top]L_k^\top\right) \right\|&\leq \|\tilde{C}^x_k\|+\frac{1}{\zeta_k^x}\|L_k\|^2\|\E[y_ky_k^\top]\|\\
    &~~+\frac{2}{\zeta_k^x}\|L_k\|\|\E[y_k \xh_k^\top]\|.
\end{align*}
Using Lemma \ref{lem:L_k_bound}, we can bound $\|L_k\|\leq \kappa_{Q_{22}}$. For the other terms, we use the previous parts to get, 
\begin{align*}
    \|C_k^x\|&\leq c^*d^2+\frac{\kappa_{Q_{22}}^2}{\zeta_k^x}\left(\sigma^y\tau_{mix}d\beta_k+c^*d^2\zeta_k^y\right)+\frac{2\kappa_{Q_{22}}}{\zeta_k^x}\left(\sigma^{xy}\tau_{mix}d\beta_k+c^{(z)}d^2\zeta_k^{xy}\right)\\
    &\leq c^*d^2+\kappa_{Q_{22}}^2\left(\sigma^y\tau_{mix}d\beta+c^*d^2\right)+2\kappa_{Q_{22}}\left(\sigma^{xy}\tau_{mix}d\beta+c^{(z)}d^2\right)\tag{$\beta_k\leq \beta\zeta_k^x$}\\
    &=c^{(x)}d^2,
\end{align*}
where $c^{(x)}=c^*+\kappa_{Q_{22}}^2\left(\sigma^y\tau_{mix}\beta+c^*\right)+2\kappa_{Q_{22}}\left(\sigma^{xy}\tau_{mix}\beta+c^{(z)}\right)$.
\end{itemize}

\end{proof}

\begin{proof}[Proof of Proposition \ref{prop:positive_def}]
    The covariance of $h_N$ is given as
    \begin{align*}
        \E[h_Nh_N^\top] &= \frac{1}{N} \E\left[\sum_{k,k'=0}^{N-1} \tilde{b}_1(\tilde{O}_k)\tilde{b}_1(\tilde{O}_{k'})^\top\right] + A_{12}A_{22}^{-1}\frac{1}{N}\E\left[\sum_{k,k'=0}^{N-1}\tilde{b}_2(\tilde{O}_k)\tilde{b}_2(\tilde{O}_{k'})^\top \right]A_{22}^{-\top}A_{12}\\
        &~~-\frac{1}{N}\E\left[\sum_{k,k'=0}^{N-1}\tilde{b}_1(\tilde{O}_k)\tilde{b}_2(\tilde{O}_{k'})^\top\right] A_{22}^{-\top}A_{12}-A_{12}A_{22}^{-1}\frac{1}{N}\E\left[\sum_{k,k'=0}^{N-1}\tilde{b}_2(\tilde{O}_k)\tilde{b}_1(\tilde{O}_{k'})^\top\right]
    \end{align*}
    Let the first term be denoted as $T_1$. In what follows, we will only analyze $T_1$ and show that 
    \begin{align*}
        \lim_{N\to \infty}\frac{1}{N} \E\left[\sum_{k,k'=0}^{N-1} \tilde{b}_1(\tilde{O}_k)\tilde{b}_1(\tilde{O}_{k'})^\top\right]=\Gamma^y.
    \end{align*}
    Convergence for other terms can be shown by following the exact steps, hence omitted for brevity. Expanding $T_1$, we get
        \begin{align*}    
            T_1&=\frac{1}{N} \left(\sum_{k=0}^{N-1}\E\left[\tilde{b}_1(\tilde{O}_k)\tilde{b}_1(\tilde{O}_{k})^\top \right]+\sum_{k=0}^{N-1}\sum_{k'=k+1}^{N-1}\E\left[\tilde{b}_1(\tilde{O}_k)\tilde{b}_1(\tilde{O}_{k'})^\top \right]+\sum_{k'=0}^{N-1}\sum_{k=k'+1}^{N-1}\E\left[\tilde{b}_1(\tilde{O}_{k})\tilde{b}_1(\tilde{O}_{k'})^\top \right]\right).
        \end{align*}
        Recall that $\{\tilde{O}_k\}$ is a stationary process. Hence, 
        \begin{align*}
            \E\left[\tilde{b}_1(\tilde{O}_k)\tilde{b}_1(\tilde{O}_{k})^\top \right]=\E\left[\tilde{b}_1(\tilde{O}_0)\tilde{b}_1(\tilde{O}_{0})^\top \right]~~\forall k\geq 0.
        \end{align*}
        
        Next, to simplify the second term in $T_1$, let $j\in \{1,\dots, N\}$ and $k'-k=j$. Then, we note that there are exactly $N-j$ pairs $(k',k)$ such that $k'-k=j$ and $0\leq k<k'\leq N-1$. A similar argument holds for the third term, with the only difference that the indices $k$ and $k'$ are swapped. Combining this observation with the strong Markov property, we can rewrite the expression for $T_1$ as
        \begin{align*}
            T_1&=\E\left[\tilde{b}_1(\tilde{O}_0)\tilde{b}_1(\tilde{O}_{0})^\top \right]+\frac{1}{N} \left(\sum_{j=0}^{N-1}(N-j)\E\left[\tilde{b}_1(\tilde{O}_0)\tilde{b}_1(\tilde{O}_{j})^\top \right]+\sum_{j=0}^{N-1}(N-j)\E\left[\tilde{b}_1(\tilde{O}_{j})\tilde{b}_1(\tilde{O}_{0})^\top \right]\right).
        \end{align*}
        To show the convergence of second and third term, we use the mixing property of the Markov chain. Recall that $\{\tilde{O}_k\}$ is sampled from a finite state ergodic Markov chain, hence it mixes exponentially fast \cite{levin2017markov}, that is, for all $o\in\mathcal{S} $, we have $d_{TV}(P^k(\cdot|o)||\mu(\cdot))\leq\rho^k$ for some $\rho\in[0,1)$. Thus, we have
        \begin{align*}
            \Bigg\|\frac{1}{N} \sum_{j=0}^{N-1}(N-j)\E\left[\tilde{b}_1(\tilde{O}_0)\tilde{b}_1(\tilde{O}_{j})^\top \right]-&\sum_{j=0}^{N-1}\E\left[\tilde{b}_1(\tilde{O}_0)\tilde{b}_1(\tilde{O}_{j})^\top \right]\Bigg\|\\
            &=\frac{1}{N}\left\|\sum_{j=0}^{N-1}j\E\left[\tilde{b}_1(\tilde{O}_0)\tilde{b}_1(\tilde{O}_{j})^\top \right]\right\|\\
            &\leq \frac{1}{N}\sum_{j=0}^{N-1} j\max_{o} \left\|\E \left[\tilde{b}_1(\tO_k)|\tO_0=o\right]\right\|\left\| \tilde{b}_1(o) \right\|\\
            &\leq  \frac{1}{N}\sum_{j=0}^{N-1} j\max_{o} \left\|\sum_{o'\in \mathcal{S}}P^k(o'|o) \tilde{b}_1(o')\right\|\left\| \tilde{b}_1(o) \right\|\\
            &=  \frac{1}{N}\sum_{j=0}^{N-1} j\max_{o} \left\|\sum_{o'\in \mathcal{S}}(P^k(o'|o)-\mu(o')) \tilde{b}_1(o')\right\|\left\| \tilde{b}_1(o) \right\|\\
            &\leq \frac{b_{max}^2d}{N}\sum_{j=0}^{N-1} j \max_{o'}d_{TV}(P^k(\cdot|o')||\mu(\cdot)) \\
            &\leq \frac{b_{max}^2d}{N}\sum_{j=0}^{N-1} j \rho^k\leq \frac{b_{max}^2d\rho}{N(1-\rho)^2}\xrightarrow[N\uparrow \infty]{}0.
        \end{align*}
        Combining the above relations, we have
        \begin{align*}
            \lim_{N\to \infty}T_1=\E\left[\tilde{b}_1(\tilde{O}_0)\tilde{b}_1(\tilde{O}_{0})^\top \right]+\sum_{j=0}^{\infty}\E\left[\tilde{b}_1(\tilde{O}_0)\tilde{b}_1(\tilde{O}_{j})^\top \right]+\sum_{j=0}^{\infty}\E\left[\tilde{b}_1(\tilde{O}_j)\tilde{b}_1(\tilde{O}_{0})^\top \right].
        \end{align*}
        Using a similar analysis for $\tilde{b}_2(\tilde{O}_k)$ and the cross terms, we obtain the asymptotic covariance of $h_N$ as 
        \begin{align*}
            \lim_{N\to \infty}\E[h_Nh_N^\top] &= \Gamma^y + A_{12}A_{22}^{-1}\Gamma^xA_{22}^{-\top}A_{12}-\Gamma^{yx} A_{22}^{-\top}A_{12}-A_{12}A_{22}^{-1}\Gamma^{xy}.
        \end{align*}
        Now, we are only left to show that the r.h.s of the above equation can be equivalently written as $\Gamma^y- A_{12}\Sigma^{yx}-\Sigma^{xy}A_{12}^\top$. To see this, we first solve for $\Sigma^{xy}$ from Eq. \eqref{eq:sigma_xy_def_main} to get
        \begin{align*}
            \Sigma^{xy}=(\Gamma^{xy}-A_{12}\Sigma^{x})A_{22}^{-\top}.
        \end{align*}
        Substituting the above expression in the r.h.s. of Eq. \eqref{eq:sigma_y_def_main}, we get
        \begin{align*}
            \Gamma^y- A_{12}\Sigma^{yx}-\Sigma^{xy}A_{12}^\top&=\Gamma^y- A_{12}A_{22}^{-1}(\Gamma^{xy}-A_{12}\Sigma^{x})-(\Gamma^{yx}-\Sigma^{x}A_{12}^{\top})A_{22}^{-\top}A_{12}^\top\\
            &=\Gamma^y- A_{12}A_{22}^{-1}\Gamma^{xy}+A_{12}A_{22}^{-1}\Sigma^{x}A_{12}^{\top}-\Gamma^{yx}A_{22}^{-\top}A_{12}^\top+A_{12}\Sigma^{x}A_{22}^{-\top}A_{12}^\top\\
            &=\Gamma^y+A_{12}A_{22}^{-1}(\Sigma^{x}A_{22}^\top+A_{22}\Sigma^{x})A_{22}^{-\top}A_{12}^\top- A_{12}A_{22}^{-1}\Gamma^{xy}-\Gamma^{yx}A_{22}^{-\top}A_{12}^\top\\
            &=\Gamma^y+A_{12}A_{22}^{-1}\Gamma^xA_{22}^{-\top}A_{12}^\top- A_{12}A_{22}^{-1}\Gamma^{xy}-\Gamma^{yx}A_{22}^{-\top}A_{12}^\top\tag{Eq. \eqref{eq:sigma_x_def_main}}
        \end{align*}
\end{proof}

\begin{proof}[Proof for Corollary \ref{cor:l2_bound}]
    The claim follows by taking trace on both sides of Eq. \ref{eq:main_y_main} and using $trace(C_k^y(0.5))\leq d\|C_k^y(0.5)\|\leq dc_0(0.5)$. Note that since Theorem \ref{thm:Markovian_main} holds for any $\varrho\in(0,1)$, we choose $\varrho=0.5$.
\end{proof}

\begin{proof}[Proof of Proposition \ref{prop:loose_convergence}] 
Since in this proposition we are only concerned with convergence,
throughout this proof we replace all the constants with $c$.

From \eqref{eq:V_k_recursion} and \eqref{eq:W_k_recursion} in the proof of Lemma \ref{lem:boundedness}, we have
    \begin{align*}
        V_{k+1} 
        \leq & (1-\frac{a_{22}\alpha_k}{2})V_k+ \alpha_k^2 c (1+V_k+W_k) +\frac{c\beta_k^2}{\alpha_k}(1+V_k+W_k) + \alpha_k (\bar{d}_{k}^x-\bar{d}_{k+1}^x),\\
        W_{k+1}\leq & (1-\frac{\delta\beta_k}{2})W_k+ \alpha_k\beta_kc (1+V_k+W_k) +\beta_k \frac{2\|A_{12}\|^2_{Q_{\Delta}}\gamma_{max}(Q_{\Delta})}{\gamma_{min}(Q_{22})\delta}V_k+ \beta_k(\bar{d}_k^y- \bar{d}_{k+1}^y)
    \end{align*}
    Let $\omega_k=\frac{8\|A_{12}\|^2_{Q_{\Delta}}\gamma_{max}(Q_{\Delta})\beta_k}{\gamma_{min}(Q_{22})\delta a_{22}\alpha_k}$. Define $V_k'=\omega_kV_k$.  Then, rewriting both the recursions in terms of $V_k'$ we get:
    \begin{align*}
        V'_{k+1} 
        \leq & (1-\frac{a_{22}\alpha_k}{2})V'_k+ \alpha_k^2 c (\omega_k+V'_k+\omega_kW_k) +\frac{c\beta_k^2}{\alpha_k}(\omega_k+V'_k+\omega_kW_k) + c\beta_k (\bar{d}_{k}^x-\bar{d}_{k+1}^x)\\
        &+c\omega_k\frac{1}{k}V'_k+c\omega_k\frac{1}{k}(\alpha_k^2+\beta_k)(1+W_k),\tag{by \eqref{eq:d^x_k_upper_bnd}}\\
        W_{k+1}\leq & (1-\frac{\delta\beta_k}{2})W_k+ \frac{\alpha_k\beta_kc}{\omega_k} V_k'+\alpha_k\beta_kc(1+W_k) +\frac{a_{22}\alpha_k}{4}V'_k+ \beta_k(\bar{d}_k^y- \bar{d}_{k+1}^y)
    \end{align*}
    Adding  the recursions, we get:
    \begin{align*}
        % V'_{k+1}+W_{k+1} 
        % \leq & (1-\frac{a_{22}\alpha_k}{2})V'_k+ \alpha_k^2 c (\omega_k+V'_k+\omega_kW_k) +\frac{c\beta_k^2}{\alpha_k}(\omega_k+V'_k+\omega_kW_k) + c\beta_k (\bar{d}_{k}^x-\bar{d}_{k+1}^x)\\
        % &+c\omega_k\frac{1}{k}V'_k+c\omega_k\frac{1}{k}(\alpha_k^2+\beta_k)(1+W_k)+ (1-\frac{\delta\beta_k}{2})W_k+ \frac{\alpha_k\beta_kc}{\omega_k} V_k'+\alpha_k\beta_kc(1+W_k) \\&+\frac{a_{22}\alpha_k}{4}V'_k+\beta_k(\bar{d}_k^y- \bar{d}_{k+1}^y)\\
        V'_{k+1}+W_{k+1} 
        \leq & (1-\frac{a_{22}\alpha_k}{4})V'_k+ \alpha_k^2 c (\omega_k+V'_k+\omega_kW_k) +\frac{c\beta_k^2}{\alpha_k}(\omega_k+V'_k+\omega_kW_k) + c\beta_k (\bar{d}_{k}^x-\bar{d}_{k+1}^x)\\
        &+c\omega_k\frac{1}{k}V'_k+c\omega_k\frac{1}{k}(\alpha_k^2+\beta_k)(1+W_k)+ (1-\frac{\delta\beta_k}{2})W_k+ \frac{\alpha_k\beta_kc}{\omega_k} V_k'+\alpha_k\beta_kc(1+W_k) \\&+\beta_k(\bar{d}_k^y- \bar{d}_{k+1}^y)\\
        V'_{k+1}+W_{k+1} 
        \leq & (1-\frac{a_{22}\alpha_k}{8})V'_k+ \alpha_k^2 c\omega_k +\frac{c\beta_k^2}{\alpha_k}\omega_k + c\beta_k (\bar{d}_{k}^x-\bar{d}_{k+1}^x)+c\omega_k\frac{1}{k}(\alpha_k^2+\beta_k)\\
        &+ (1-\frac{\delta\beta_k}{4})W_k+\alpha_k\beta_kc+\beta_k(\bar{d}_k^y- \bar{d}_{k+1}^y)\tag{for large enough $k$}\\
        V'_{k+1}+W_{k+1} 
        \leq & (1-\frac{a_{22}\alpha_k}{8})V'_k+ \beta_k (\hat{d}_{k}-\hat{d}_{k+1})+ (1-\frac{\delta\beta_k}{4})W_k+o(\beta_k)\tag{$\hat{d}_{k}=c\bar{d}_{k}^x+\bar{d}_{k}^y$}\\
        V'_{k+1}+W_{k+1} 
        \leq & (1-\frac{\delta\beta_k}{4})(V'_k+W_k)+ \beta_k (\hat{d}_{k}-\hat{d}_{k+1})+o(\beta_k)\tag{for large enough $k$}
    \end{align*}
    Let $\bar{K}$ be the minimum $k$ at the which the above recursion holds. Then opening the recursion from $\bar{K}$ to $k$ and using the telescopic structure leads to the following:
    \begin{align*}
        V'_k+W_k \leq (V'_{\bar{K}}+W_{\bar{K}})\prod_{i=\bar{K}}^k(1-\frac{\delta\beta_i}{4})+\beta_{\bar{K}}\hat{d}_{\bar{K}}\prod_{l=\bar{K}+1}^k(1-\frac{\delta\beta_l}{4})+\beta_{k}|\hat{d}_{k+1}|+\sum_{j=\bar{K}}^k(\beta_j^2|\hat{d}_j|+o(\beta_k))\prod_{l=j+1}^k(1-\frac{\delta\beta_l}{4}).
    \end{align*}
    Notice that for all $j\geq0$, $\hat{d}_{j}$ is upper bounded by a constant due to \eqref{eq:d^x_k_upper_bnd}, \eqref{eq:d^y_k_upper_bnd} and Lemma \ref{lem:boundedness}. Thus, using the observation in \ref{sec:rec_sol}, we obtain $V_k'+W_k=o(1)$. This shows that $y_k\to y^*$ in mean square sense. To further show that $x_k\to x^*$, we replace $W_k$ with $o(1)$ in \eqref{eq:V_k_recursion} and expand from $\bar{K}$ to $k$ to get:
    \begin{align*}
        V_k \leq V_{\bar{K}}\prod_{i=\bar{K}}^k(1-\frac{a_{22}\alpha_k}{8})+\alpha_{\bar{K}}\bar{d}^x_{\bar{K}}\prod_{l=\bar{K}+1}^k(1-\frac{a_{22}\alpha_k}{8})+\alpha_{k}|\bar{d}^x_{k+1}|+\sum_{j=\bar{K}}^k(\alpha_j^2|\bar{d}^x_j|+o(\alpha_k))\prod_{l=j+1}^k(1-\frac{a_{22}\alpha_k}{8}).
    \end{align*}
    The claim follows.
    
\end{proof}

\begin{proof}[Proof of Theorem \ref{thm:polyak}]
    In the setting of Polyak-Ruppert averaging, the parameters reduce to the following:
    \begin{align*}
        A_{21}(O_k)=0;~b_1(O_k)=0;~A_{11}(O_k)=I;~A_{12}(O_k)=-I:~\beta=1
    \end{align*}
    This results in $\Delta=I$. Let $\bt(\cdot)=b(\cdot)-b+(A-A(\cdot))A^{-1}b$. Then, we have:
    \begin{align*}
        \Gamma^x=\E{[\bt(\tilde{O}_0)\bt(\tilde{O}_0)^\top]}
+\sum_{j=1}^\infty \E{[\bt(\tilde{O}_j) \bt(\tilde{O}_0)^\top+\bt(\tilde{O}_0) \bt(\tilde{O}_j)^\top]}
    \end{align*}
    Note that it is possible to find the explicit expression of $\Sigma^y$ in the case of Polyak-Ruppert averaging. To show this we have the following three systems of equations:
    \begin{align*}
    &A\Sigma^x+\Sigma^x A^\top=\Gamma^x\\
    &-\Sigma^x+\Sigma^{xy}A^\top=0\Rightarrow \Sigma^{xy}=A^{-\top}\Sigma^x\\
    &\Sigma^y-\Sigma^{yx}-\Sigma^{xy}=0\Rightarrow\Sigma^y=\Sigma^{yx}+\Sigma^{xy}
\end{align*}
Using second equation in the last one we get:
\begin{align*}
    \Sigma^y=\Sigma^xA^{-1}+A^{-\top}\Sigma^x
\end{align*}
Left multiplying $A^{-1}$ and right multiplying $A^{-\top}$ of the first equation we get:
\begin{align*}
    \Sigma^xA^{-\top}+A^{-1}\Sigma^x=A^{-1}\Gamma^xA^{-\top}
\end{align*}
which from the previous equation is equal to $\Sigma^y$. Finally, using Theorem \ref{thm:Markovian_main} and replacing $1-\varrho=0.5$ defined in \ref{def:Q_delta}, we get the result.
\end{proof}

\begin{proof}[Proof for Theorem \ref{thm:TDC}]
Denote the tuple $O_k=\{s_k, a_k, s_{k+1}\}$ and consider the Markov chain $\{O_l\}_{l\geq 0}$. Here $\hat{P}(O_{k+1}|O_k)=\pi_b(a_{k+1}|s_{k+1})P(s_{k+2}|s_{k+1},a_{k+1})$ and the stationary distribution is given by $\mu(s, a, s')=\mu_b(s, a)P(s'|s,a)$. Since we assume that the behavior policy induces an ergodic Markov chain, we have that $\{O_k\}_{k\geq 0}$ satisfies Assumption \ref{ass:poisson_main}. We will denote $\{\tO_k\}_{k\geq 0}$ as the Markov chain where $\{(s_0, a_0)\sim \mu_b\}$. Assumption \ref{ass:step_size_main} is also satisfied, since $\xi=0.75\in (0.5, 1)$, and $\beta$ is chosen appropriately. Thus, all that is left to verify is that the appropriate matrices in the three settings are Hurwitz. Recall that we defined $A=\mathbb{E}_{\mu_{\pi_b}}[\rho(s,a)\phi(s)(\phi(s)-\gamma\phi(s'))^\top]$, $B=\gamma\mathbb{E}_{\mu_{\pi_b}}[\rho(s,a)\phi(s')\phi(s)^\top]$, $C=\mathbb{E}_{\mu_{\pi_b}}[\phi(s)\phi(s)^\top]$ and $b=\mathbb{E}_{\mu_{\pi_b}}[\rho(s,a)r(s, a)\phi(s)]$. We verify the Hurwitz property and characterize the variance in the dominant term for each setting as follows:
\begin{itemize}
    \item \textbf{GTD}: Clearly, $A_{22}(O_k)=I$ for all $k\geq 0$ in this case which implies $-A_{22}=-I$ is Hurwitz. Furthermore, $A_{11}=0$, thus $-\Delta=-A^\top A$, which is a positive definite matrix and is therefore Hurwitz. 

    Next, note that $b_1(O_k)=0$ for all $k\geq 0$ and $b_2(O_k)=b_k$. Let $(\theta^*, \omega^*)$ denote the fixed point. Then, we define the following:
    \begin{align*}
        \bt_2(O_k)=b_k-b+(A-A_k)\theta^*.
    \end{align*}
    The above gives us the following asymptotic covariance matrices:
    \begin{align*}
        \Gamma^\omega=\E{[\bt_2(\tilde{O}_0)\bt_2(\tilde{O}_0)^\top]}\nonumber+\sum_{j=1}^\infty \E{[\bt_2(\tilde{O}_j)\bt_2(\tilde{O}_0)^\top + \bt_2(\tilde{O}_0)\bt_2(\tilde{O}_j)^\top]};~~
        \Gamma^{\omega\theta}
        =0;~~\Gamma^\theta=0.
    \end{align*}
    Then, using Theorem \ref{thm:Markovian_main} we get:
    \begin{align*}
        \E[(\theta_k-\theta^*)(\theta_k-\theta^*)^\top]=&\beta_k \Sigma^\theta+\frac{1}{k^{1+(1-\varrho)\min(\xi-0.5, 1-\xi)}} C^\theta_k(\varrho)\\
        \E[(\theta_k-\theta^*)(\omega_k-\omega^*)^\top]=&\beta_k \Sigma^{\omega\theta}+\frac{1}{k^{\min(\xi+0.5, 2-\xi)}} C^{\omega\theta}_k(\varrho)\\
        \E[(\omega_k-\omega^*)(\omega_k-\omega^*)^\top]=&\alpha_k \Sigma^\omega+\frac{1}{k^{\min(1.5\xi, 1)}} C^\omega_k(\varrho)
    \end{align*}
    where $0<\varrho<1$ is an arbitrary constant, $\sup_k\max\{\|C_k^\omega(\varrho)\|,\|C_k^{\theta\omega}(\varrho)\|,\|C_k^\theta(\varrho)\|\}<c_0(\varrho)<\infty$ for some problem dependent constant $c_0(\varrho)$, and $\Sigma^\theta$, $\Sigma^{\omega\theta}={\Sigma^{\theta\omega}}^\top$ and $\Sigma^\omega$ are unique solutions to the following system of equations: 
    \begin{align*}
        &\Sigma^\omega=\frac{1}{2}\Gamma^\omega\tag{$A_{22}=I$}\\
        &A\Sigma^\omega+\Sigma^{\omega\theta}=0\\
        &\left(A^TA-\frac{1}{2\beta}I\right)\Sigma^\theta+\Sigma^\theta\left(A^TA-\frac{1}{2\beta}I\right)=-A^\top\Sigma^{\theta\omega}-\Sigma^{\omega\theta}A.
    \end{align*}
    The variance $\sigma^2_{GTD}$ is obtained by taking the trace of $\Sigma^{\theta}$.

    \item \textbf{GTD2}: In this setting $A_{22}(O_k)=C_k$. Thus, we have $-A_{22}=-C$. Since $\Phi$ is a full-rank matrix, we have that $-C$ is a Hurwitz matrix as $-x^\top Cx=-\mathbb{E}_{\mu_b}[x^\top\phi(s)\phi(s)^\top x]<0$, in particular it is negative definite. However, similar to GTD, $A_{11}=0$ in this case, so $-\Delta=-A^\top C^{-1}A$ which is negative definite and thus Hurwitz.

    Furthermore, we again have $b_1(O_k)=0$ for all $k\geq 0$ and $b_2(O_k)=b_k$. However, the definition of $b_2(\cdot)$ will change as $A_{22}(O_k)\neq A$ unlike the previous setting.
    \begin{align*}
        \bt_2(O_k)=b_k-b+(A-A_k)\theta^*+(C-C_k)\omega^*.
    \end{align*}
    Thus, we have the following asymptotic covariance matrices:
    \begin{align*}
        \Gamma^\omega=\E{[\bt_2(\tilde{O}_0)\bt_2(\tilde{O}_0)^\top]}\nonumber+\sum_{j=1}^\infty \E{[\bt_2(\tilde{O}_j)\bt_2(\tilde{O}_0)^\top + \bt_2(\tilde{O}_0)\bt_2(\tilde{O}_j)^\top]};~~
        \Gamma^{\omega\theta}
        =0;~~\Gamma^\theta=0.
    \end{align*}
    Then, again using Theorem \ref{thm:Markovian_main} we get:
    \begin{align*}
        \E[(\theta_k-\theta^*)(\theta_k-\theta^*)^\top]=&\beta_k \Sigma^\theta+\frac{1}{k^{1+(1-\varrho)\min(\xi-0.5, 1-\xi)}} C^\theta_k(\varrho)\\
        \E[(\theta_k-\theta^*)(\omega_k-\omega^*)^\top]=&\beta_k \Sigma^{\omega\theta}+\frac{1}{k^{\min(\xi+0.5, 2-\xi)}} C^{\omega\theta}_k(\varrho)\\
        \E[(\omega_k-\omega^*)(\omega_k-\omega^*)^\top]=&\alpha_k \Sigma^\omega+\frac{1}{k^{\min(1.5\xi, 1)}} C^\omega_k(\varrho)
    \end{align*}
    where $0<\varrho<1$ is an arbitrary constant, $\sup_k\max\{\|C_k^\omega(\varrho)\|,\|C_k^{\theta\omega}(\varrho)\|,\|C_k^\theta(\varrho)\|\}<c_0(\varrho)<\infty$ for some problem dependent constant $c_0(\varrho)$, and $\Sigma^\theta$, $\Sigma^{\omega\theta}={\Sigma^{\theta\omega}}^\top$ and $\Sigma^\omega$ are unique solutions to the following system of equations: 
    \begin{align*}
        &C\Sigma^\omega+\Sigma^\omega C^\top=\Gamma^\omega\tag{$C^\top=C$}\\
        &A\Sigma^\omega+\Sigma^{\omega\theta}C=0\\
        &\left(A^TC^{-1}A-\frac{1}{2\beta}I\right)\Sigma^\theta+\Sigma^\theta\left(A^TC^{-1}A-\frac{1}{2\beta}I\right)=-A^\top\Sigma^{\theta\omega}-\Sigma^{\omega\theta}A.
    \end{align*}
    The variance $\sigma^2_{GTD2}$ is obtained by taking the trace of $\Sigma^{\theta}$.
    \item \textbf{TDC}: Note that $A=C-B^\top=C^\top-B^\top$. Thus we have,
    \begin{align*}
        A-BC^{-1}A&=(C-B)C^{-1}A\\
        &=A^\top C^{-1}A
    \end{align*}
    Since $x^\top A^\top C^{-1}Ax>0$, $-(A-BC^{-1}A)$ is Hurwitz. Let $(\theta^*, \omega^*)$ denote the fixed point. Note that unlike previous cases, $b_1(O_k)\neq 0$. Thus, we define the following:
    \begin{align*}
        \bt_1(O_k)=b_k-b+(A-A_k)\theta^*+(B-B_k)\omega^*\\
        \bt_2(O_k)=b_k-b+(A-A_k)\theta^*+(C-C_k)\omega^*
    \end{align*}
    Then, we have the following asymptotic covariance matrices:
    \begin{align*}
        \Gamma^\omega=&\E{[\bt_2(\tilde{O}_0)\bt_2(\tilde{O}_0)^\top]}\nonumber+\sum_{j=1}^\infty \E{[\bt_2(\tilde{O}_j)\bt_2(\tilde{O}_0)^\top + \bt_2(\tilde{O}_0)\bt_2(\tilde{O}_j)^\top]}\nonumber\\ 
        \Gamma^{\omega\theta}
        =&\E{[\bt_2(\tilde{O}_0) \bt_1(\tilde{O}_0)^\top]}+ \sum_{j=1}^\infty \E{[\bt_2(\tilde{O}_j) \bt_1(\tilde{O}_0)^\top + \bt_2(\tilde{O}_0)\bt_1(\tilde{O}_j)^\top]}\nonumber\\
        \Gamma^\theta=&\E{[\bt_1(\tilde{O}_0) \bt_1(\tilde{O}_0)^\top]}+\sum_{j=1}^\infty \E{[\bt_1(\tilde{O}_j) \bt_1(\tilde{O}_0)^\top + \bt_1(\tilde{O}_0) \bt_1(\tilde{O}_j)^\top]}.\nonumber
    \end{align*}
    Then, employing Theorem \ref{thm:Markovian_main} we get:
    \begin{align*}
        \E[(\theta_k-\theta^*)(\theta_k-\theta^*)^\top]=&\beta_k \Sigma^\theta+\frac{1}{k^{1+(1-\varrho)\min(\xi-0.5, 1-\xi)}} C^\theta_k(\varrho)\\
        \E[(\theta_k-\theta^*)(\omega_k-\omega^*)^\top]=&\beta_k \Sigma^{\omega\theta}+\frac{1}{k^{\min(\xi+0.5, 2-\xi)}} C^{\omega\theta}_k(\varrho)\\
        \E[(\omega_k-\omega^*)(\omega_k-\omega^*)^\top]=&\alpha_k \Sigma^\omega+\frac{1}{k^{\min(1.5\xi, 1)}} C^\omega_k(\varrho)
    \end{align*}
    where $0<\varrho<1$ is an arbitrary constant, $\sup_k\max\{\|C_k^\omega(\varrho)\|,\|C_k^{\theta\omega}(\varrho)\|,\|C_k^\theta(\varrho)\|\}<c_0(\varrho)<\infty$ for some problem dependent constant $c_0(\varrho)$, and $\Sigma^\theta$, $\Sigma^{\omega\theta}={\Sigma^{\theta\omega}}^\top$ and $\Sigma^\omega$ are unique solutions to the following system of equations:
    \begin{align*}
        &C\Sigma^\omega+\Sigma^\omega C^\top=\Gamma^\omega\\
        &B\Sigma^\omega+\Sigma^{\omega\theta}C^\top=\Gamma^{\omega\theta}\\
        &\left(A^TC^{-1}A-\frac{1}{2\beta}I\right)\Sigma^\theta+\Sigma^\theta\left(A^TC^{-1}A-\frac{1}{2\beta}I\right)=\Gamma^\theta-B\Sigma^{\theta\omega}-\Sigma^{\omega\theta}B^\top
    \end{align*}
    The variance $\sigma^2_{TDC}$ is obtained by taking the trace of $\Sigma^{\theta}$.
    \end{itemize}
\end{proof}

\begin{proof}[Proof of Proposition \ref{prop:venn}]
    %By the definition of the sets $\mathcal{A}, \mathcal{B}, \mathcal{C}$, and $\mathcal{D}$, it is clear that $\mathcal{B}\subseteq\mathcal{A}$, $\mathcal{C}\subseteq\mathcal{B}$. The following two counter examples show that this relation is indeed strict.

\begin{enumerate}
    \item $\mathcal{B}\subsetneq \mathcal{A}$: By definition, it is clear that $\mathcal{B}\subseteq\mathcal{A}$. Next, consider the following matrix
    \begin{align*}
    A&=\begin{bmatrix}
        A_{11}=-4 & A_{12}=-2\\
        A_{21}=-1 & A_{22}=-3
    \end{bmatrix}.
    \end{align*}
    Here we have $A\in\mathcal{A}$. Furthermore, there does not exist any $\kappa>0$ such that $-A_\kappa$ is Hurwitz, which means that $A\notin \mathcal{B}$. This can be easily seen by observing that sum of the eigenvalues is equal to trace of the matrix and the $tr(-A_\kappa)=3\kappa+4>0$. Thus, the $-A_{\kappa}$ cannot be Hurwitz for any $\kappa>0$.

    \item $\mathcal{C}\cup \mathcal{D}\subsetneq \mathcal{B}$:
    Firstly, by definition, it is easy to see that $\mathcal{C}\subset\mathcal{B}$. Secondly, by \cite[Theorem 6]{cothren2024onlinefeedbackoptimizationsingular}, we have $\mathcal{D}\subset\mathcal{B}$. Next, we show that $\mathcal{B}\backslash (\mathcal{C}\cup \mathcal{D})\neq \varnothing$. Consider the following matrix:
    \begin{align*}
    A&=\begin{bmatrix}
        A_{11}=2 & A_{12}=-4\\
        A_{21}=3 & A_{22}=-5
    \end{bmatrix}.
    \end{align*}
    Since $tr(-A)=3>0$, $-A$ is not Hurwitz, and hence $A\notin \mathcal{C}$. In addition, $-A_{22}=5>0$, which means that $A\notin \mathcal{D}$.
    Furthermore, we have
    \begin{align*}
    A_{0.2}&=\begin{bmatrix}
        2 & -4\\
        0.6 & -1
    \end{bmatrix}.
    \end{align*}
    Then, the eigenvalues of $-A_{0.2}$ are $-0.5\pm i\sqrt{15}/10$. Hence, $-A_{0.2}$ is Hurwitz, and $A\in \mathcal{B}$.

    \item $\mathcal{C}\backslash\mathcal{D}\neq\varnothing$: Consider the following matrix
    \begin{align*}
    A&=\begin{bmatrix}
        A_{11}=3& A_{12}=4\\
        A_{21}=-1& A_{22}=-1
        \end{bmatrix}
    \end{align*}
    Both the eigenvalues of $-A$ are $=-1$, which shows that $A\subset \mathcal{C}$. However, $-A_{22}=1>0$, which means that $A\notin \mathcal{D}$.
    \item $\mathcal{D}\backslash\mathcal{C}\neq\varnothing$:
    Consider the following matrix
    \begin{align*}
    A&=\begin{bmatrix}
        A_{11}=-5 & A_{12}=3\\
        A_{21}=-4 & A_{22}=2
    \end{bmatrix}.
    \end{align*}
    Then, $-A_{22}=-2<0$ and $-\Delta = -(-5-3\times 0.5\times -4)=-1<0$. Hence, $A\in\mathcal{D}$. However, $tr(-A)=3>0$, which means that $A$ is not Hurwitz, and hence $A\notin \mathcal{C}$.
    \item $\mathcal{C}\cap\mathcal{D}\neq\varnothing$: Consider the following matrix
    \begin{align*}
    A&=\begin{bmatrix}
        A_{11}=4 & A_{12}=2\\
        A_{21}=1 & A_{22}=3
    \end{bmatrix}
    \end{align*}
The eigenvalues of $-A$ are $-2$ and $-5$. Hence, $A\in \mathcal{C}$. In addition, $-A_{22}=-3<0$ and $-\Delta=-(4-2\cdot\frac{1}{3}\cdot1)=-\frac{10}{3}<0$. Hence, $A\in \mathcal{D}$.
\end{enumerate}
\end{proof}

\section{Lemmas}
\subsection{Technical lemmas}\label{appendix:tech_lemmas}
\begin{lemma}\label{lem:x_xy_y}
    Suppose that Assumptions \ref{ass:hurwitz_main}, \ref{ass:poisson_main}, and \ref{ass:step_size_main} are satisfied.
    For the iterations of $\tx_k$ and $\ty_k$ in \eqref{eq:y_t_update_2} and \eqref{eq:x_t_update_2} we have 
    \begin{align}
        \tX_k &= \alpha_k \Sigma^x + \tilde{C}^x_k\zeta_k^x\label{eq:lem2_1}\\
        \tZ_k &= \beta_k \Sigma^{xy} + \tilde{C}_k^{xy}\zeta_k^{xy}\label{eq:lem2_2}\\
        \tY_k &= \beta_k \Sigma^{y} + \tC_k^y\zeta_k^y,\label{eq:lem2_3}
    \end{align}
    where $\Sigma^x$, $\Sigma^{xy}$ and $\Sigma^y$ are defined in \eqref{eq:sigma_x_def_main}, \eqref{eq:sigma_xy_def_main}, and \eqref{eq:sigma_y_def_main}, and $\sup_k\max\{\|\tilde{C}^x_k\|,\|\tilde{C}^{xy}_k\|,\|\tilde{C}^y_k\|\}\leq c^*d^2<\infty$ for some problem dependent constant $c^*$.
\end{lemma}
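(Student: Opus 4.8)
The plan is to prove the three equalities by induction on $k$, carried out first on the Markovian-corrected Lyapunov quantities $\tX'_k$, $\tZ'_k$, $\tY'_k$ of Definition \ref{def_list}, and only at the very end translated back to the unprimed quantities $\tX_k$, $\tZ_k$, $\tY_k$. The induction hypothesis is
\[ \tX'_k = \alpha_k\Sigma^x + \bar{C}_k^x\,\zeta_k^x, \quad \tZ'_k = \beta_k\Sigma^{xy} + \bar{C}_k^{xy}\,\zeta_k^{xy}, \quad \tY'_k = \beta_k\Sigma^y + \bar{C}_k^y\,\zeta_k^y, \]
together with the uniform bound $\max\{\|\bar{C}_k^x\|,\|\bar{C}_k^{xy}\|,\|\bar{C}_k^y\|\}\le c^*d^2$. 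The inductive step is exactly the content of Lemma \ref{lem:x_xy_y_prime}, which propagates this form from $k$ to $k+1$ \emph{without} enlarging the constant $c^*$. Hence the role of the present lemma is to (i) establish the base case, (ii) chain the single-step estimate across all $k$, and (iii) remove the correction terms $\alpha_k d_k^x$, $\alpha_k d_k^y$, $\beta_k d_k^{xv}$, $\beta_k d_k^{yv}$ that separate the primed from the unprimed quantities.

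For the base case I would invoke the absolute upper bound of Lemma \ref{lem:boundedness}, which controls $\E[\|x_k\|^2]$ and $\E[\|y_k\|^2]$ uniformly in $k$ by a problem-dependent constant. Combined with the linear-growth bound on the Poisson solutions from Lemma \ref{lem:f_hat_bound}, this also makes the correction terms $d_k^x,\dots,d_k^{yv}$ bounded, so $\tX'_k,\tZ'_k,\tY'_k$ are bounded by absolute constants on any finite index range. Since $\alpha_k,\beta_k$ and $\zeta_k^x,\zeta_k^{xy},\zeta_k^y$ are all bounded below up to the threshold $k_1$ at which the contraction estimates underlying Lemma \ref{lem:x_xy_y_prime} become valid, one simply chooses $c^*$ large enough that the hypothesis holds there; the induction then carries it to all $k$.

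The substantive part is step (iii). From the definitions one has $\tX_k = \tX'_k - \alpha_k(d_k^x + {d_k^x}^\top)$, and likewise $\tZ_k = \tZ'_k - \alpha_k d_k^y - \beta_k{d_k^{xv}}^\top$ and $\tY_k = \tY'_k - \beta_k(d_k^{yv} + {d_k^{yv}}^\top)$, so it suffices to show that each correction, divided by the $\zeta_k$ attached to its block, remains $O(d^2)$. Using $\|\hat{f}_i(\cdot,x,y)\|\le c\sqrt{d}\,(1+\|x\|+\|y\|)$ from Lemma \ref{lem:f_hat_bound}, Cauchy--Schwarz, and the decay $\mathrm{tr}(\tX_k)=O(d\alpha_k)$, $\mathrm{tr}(\tY_k)=O(d\beta_k)$ already delivered by the primed estimates, the dominant contribution to $d_k^{xw}$ and $d_k^{xv}$ is the term $\sqrt{d}\,\E\|\tx_k\|=O(d\sqrt{\alpha_k})$, whence $\alpha_k d_k^x=O(d\,\alpha_k^{3/2})$; dividing by $\zeta_k^x$ gives $O\!\big(d\,(k+K_0)^{\min\{1.5\xi,1\}-1.5\xi}\big)=O(d)$, which is \emph{dominated} by the $O(d^2)$ coming from $\bar{C}_k^x$. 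The same computation for $\alpha_k d_k^y$ and $\beta_k d_k^{xv}$ against $\zeta_k^{xy}$, and for $\beta_k d_k^{yv}$ against $\zeta_k^y$, shows all corrections are likewise $O(d)$. Absorbing them into redefined constants, e.g. $\tilde{C}_k^x = \bar{C}_k^x - (\zeta_k^x)^{-1}\alpha_k(d_k^x+{d_k^x}^\top)$ and analogously for the cross and $y$ blocks, then yields the stated form with $\sup_k\max\{\|\tilde{C}_k^x\|,\|\tilde{C}_k^{xy}\|,\|\tilde{C}_k^y\|\}\le c^*d^2$.

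The main obstacle I anticipate is making the bookkeeping in step (iii) exact: one must verify that each correction decays \emph{strictly faster than}, or \emph{exactly at}, the rate $\zeta_k$ of its block --- in particular that the slowly-decaying cross-correction $\beta_k{d_k^{xv}}^\top$ does not spoil the sharp rate $\zeta_k^{xy}=(k+K_0)^{-\min\{\xi+0.5,2-\xi\}}$ of the $Z$-block --- and that every implicit constant scales no worse than $d^2$, which forces one to track the $\sqrt{d}$ factors carried by the noise against the $d$ factors produced by the traces. The genuinely delicate contraction-without-blowup argument that closes the induction is deferred to Lemma \ref{lem:x_xy_y_prime}, so within this lemma the difficulty is concentrated entirely in the rate- and dimension-matching of the de-priming corrections.
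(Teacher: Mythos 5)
Your proposal is correct and follows essentially the same route as the paper: defer the induction entirely to Lemma \ref{lem:x_xy_y_prime}, then recover $\tX_k,\tZ_k,\tY_k$ from the primed quantities by subtracting the corrections $\alpha_k(d_k^x+{d_k^x}^\top)$, $\alpha_k d_k^{y}+\beta_k{d_k^{xv}}^\top$, $\beta_k(d_k^{yv}+{d_k^{yv}}^\top)$, bounding each via the telescoping-term estimates (Lemma \ref{lem:tel_term_bound}), the primed decay rates, and the step-size comparisons $\alpha_k^{1.5}\leq\alpha^{1.5}\zeta_k^x$, $\beta_k^{1.5}\leq\beta^{1.5}\zeta_k^y$, $\beta_k\sqrt{\alpha_k}\leq \mathcal{O}(\zeta_k^{xy})$. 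The only slip is cosmetic: since $\sqrt{\E[\|\tx_k\|^2]}=\mathcal{O}\bigl(d\sqrt{\alpha_k}+d^{1.5}\sqrt{\zeta_k^x}\bigr)$, the de-priming corrections are actually $\mathcal{O}(d^2)$ rather than your claimed $\mathcal{O}(d)$, but either way they are absorbed into the $c^*d^2$ budget, so the conclusion stands.
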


\begin{lemma}\label{lem:x_xy_y_prime}
    Suppose that Assumptions \ref{ass:hurwitz_main}, \ref{ass:poisson_main}, and \ref{ass:step_size_main} are satisfied. For $k\geq 0$, the iterations of $\tX_k'$, $\tZ_k'$, and $\tY_k'$ satisfy \begin{align} \tX_k' &= \alpha_k \Sigma^x + \tilde{C}'^x_k\zeta_k^x\label{eq:lem2_10}\\
        \tZ_k' &= \beta_k \Sigma^{xy} + \tilde{C}'^{xy}_k\zeta_k^{xy}\label{eq:lem2_20}\\
        \tY_k' &= \beta_k \Sigma^{y} + \tC'^y_k\zeta_k^y,\label{eq:lem2_30}
    \end{align}
    where $\Sigma^x$, $\Sigma^{xy}$ and $\Sigma^y$ are defined in \eqref{eq:sigma_x_def_main}, \eqref{eq:sigma_xy_def_main}, and \eqref{eq:sigma_y_def_main}, and $\sup_k\max\{\|\tilde{C}'^x_k\|_{Q_{22}},\|\tilde{C}'^{xy}_k\|_{Q_{22}},\|\tilde{C}'^y_k\|_{Q_{\Delta, \beta}}, 1\}\leq \bar{c}d^2<\infty$ for some problem-dependent constant $\bar{c}$.
\end{lemma}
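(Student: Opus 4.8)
The plan is to establish the three identities \eqref{eq:lem2_10}--\eqref{eq:lem2_30} \emph{jointly} by induction on $k$, following the skeleton of Sections~\ref{sec:pf_2}--\ref{sec:pf_5}; the induction must be joint because the $\ty$-recursion couples into $\tZ_k$ through the term $A_{12}\tx_k$ and $\tZ_k$ in turn depends on $\tX_k$, which is exactly the nested order in which the systems \eqref{eq:sigma_x_def_main}--\eqref{eq:sigma_y_def_main} are solved. The base case holds for all indices below the threshold $k_1$ of Definition~\ref{def_list} by absorbing the finitely many, uniformly bounded early iterates into $\bar c$ via the absolute second-moment bound of Lemma~\ref{lem:boundedness}. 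For the inductive step I would assume \eqref{eq:lem2_10}--\eqref{eq:lem2_30} at index $k$ and derive them at $k+1$.

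First I would form the one-step recursions for $\tX_k=\E[\tx_k\tx_k^\top]$, $\tZ_k=\E[\tx_k\ty_k^\top]$ and $\tY_k=\E[\ty_k\ty_k^\top]$ by squaring and cross-multiplying the decoupled updates \eqref{eq:x_t_update_2} and \eqref{eq:y_t_update_2}. Each has the form
\begin{align*}
\tX_{k+1}=(I-\alpha_k B_{22}^k)\tX_k(I-\alpha_k B_{22}^k)^\top+\alpha_k^2\,\E[u_ku_k^\top]+(\text{noise--iterate cross terms})+\cdots,
\end{align*}
and analogously for $\tZ_k,\tY_k$, with the operator $\Delta-\tfrac{1}{2\beta}I$ emerging for the $\ty$-block once the step-size drift $\beta_{k+1}-\beta_k$ is accounted for. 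The noise--iterate cross terms, which vanished in the i.i.d. sketch, are handled by the Poisson-equation device of Section~\ref{sec:pf_2}: each splits into a telescoping part $d_k^{\,\cdot}-d_{k+1}^{\,\cdot}$, an infinite-sum part that Lemma~\ref{lem:noise_bound} identifies with the matrices $\Gamma^x,\Gamma^{xy},\Gamma^y$ of Proposition~\ref{prop:alternate_pe}, and genuinely higher-order remainders. The telescoping parts are precisely what the primed variables $\tX_k',\tZ_k',\tY_k'$ of Definition~\ref{def_list} absorb, so the primed recursions are clean up to controllable remainders; the subtle Lipschitz remainder arising from $\fh_i(\cdot,x_{k+1},y_{k+1})-\fh_i(\cdot,x_k,y_k)$ is bounded using Lemmas~\ref{lem:boundedness}, \ref{lem:f_hat_bound}, \ref{lem:go_from_xp_to_x}, and \ref{lem:F_conv}, while Lemma~\ref{lem:noise_bound2} supplies the $O(\beta_k)$ behaviour of the cross term that is indispensable for the optimal rate of $\tY_k$.

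Next I would substitute the ansatz $\tX_k'=\alpha_k\Sigma^x+E_k^x$, $\tZ_k'=\beta_k\Sigma^{xy}+E_k^{xy}$, $\tY_k'=\beta_k\Sigma^y+E_k^y$ into these recursions. The point is that \eqref{eq:sigma_x_def_main}--\eqref{eq:sigma_y_def_main} are engineered so the leading $\alpha_k^2$ (resp. $\beta_k^2$) terms cancel: for instance $(I-\alpha_k A_{22})(\alpha_k\Sigma^x)(I-\alpha_k A_{22})^\top+\alpha_k^2\Gamma^x=\alpha_k\Sigma^x+O(\alpha_k^3)$ by \eqref{eq:sigma_x_def_main}, and the shift $-\tfrac{1}{2\beta}I$ in \eqref{eq:sigma_y_def_main} is exactly what matches $\beta_{k+1}\Sigma^y=\beta_k\Sigma^y-\tfrac{1}{\beta}\beta_k^2\Sigma^y+o(\beta_k^2)$. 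What remains are error recursions of the form
\begin{align*}
E_{k+1}^x=(I-\alpha_k A_{22})E_k^x(I-\alpha_k A_{22})^\top+R_k^x,
\end{align*}
and similarly for $E_k^{xy},E_k^y$ with contracting operators $A_{22}$ and $\Delta-\tfrac{1}{2\beta}I$ (the perturbation $C_{22}^k=B_{22}^k-A_{22}$ being higher order). Invoking $A_{22}^\top Q_{22}+Q_{22}A_{22}=I$ and its $Q_{\Delta,\beta}$ analogue yields the weighted-norm contraction $\|(I-\alpha_k A_{22})M(I-\alpha_k A_{22})^\top\|_{Q_{22}}\le(1-a_{22}\alpha_k)\|M\|_{Q_{22}}$, after which the scalar recursion argument of Section~\ref{sec:rec_sol} converts a residual estimate $\|R_k^x\|=O(\alpha_k\zeta_k^x)$ (resp. $O(\beta_k\zeta_k^{xy})$, $O(\beta_k\zeta_k^y)$) into the claimed $\|E_k^x\|_{Q_{22}}\le\bar c d^2\zeta_k^x$. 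The exponents in $\zeta_k^x,\zeta_k^{xy},\zeta_k^y$ are dictated by which remainder dominates: $\min\{1.5\xi,1\}$ for $\tX$, and for $\tY$ the competition $\min\{\xi-0.5,1-\xi\}$ between the Markovian correlation rate (which forces $\xi>0.5$) and the time-scale separation, weighted by the contraction factor $q_{\Delta,\beta}=1-\varrho$. The $O(d^2)$ dependence is tracked via $\|\Gamma^{\,\cdot}\|=O(d)$ and the quadratic dependence of the remainders on the noise-growth constants of Lemmas~\ref{lem:f_hat_bound}, \ref{lem:tel_term_bound}, and \ref{lem:noise_crude_bound}.

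The main obstacle is the bookkeeping that pins each residual $R_k^x,R_k^{xy},R_k^y$ to the exact order $\alpha_k\zeta_k^x$ (resp. $\beta_k\zeta_k^{xy},\beta_k\zeta_k^y$) and no worse. Two points are especially delicate: (i) showing that $\tZ_k$ is genuinely $O(\beta_k)$ rather than the $O(\sqrt{\alpha_k\beta_k})$ a Cauchy--Schwarz bound would give, since otherwise the $A_{12}\Sigma^{xy}$ feedback in \eqref{eq:sigma_y_def_main} corrupts both the leading constant and the rate of $\tY_k$; this is what forces the outer-product formulation of Section~\ref{sec:pf_3} and the direct estimate of Lemma~\ref{lem:noise_bound2}; and (ii) controlling the Poisson-solution Lipschitz remainder without circularity, trading a step-size factor against the absolute bound of Lemma~\ref{lem:boundedness} so the estimate does not implicitly re-use the induction hypothesis at its own order. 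Closing the induction then reduces to checking that $\bar c d^2$ can be chosen uniformly in $k$, which follows since all contraction and residual constants are bounded once $k$ exceeds $k_C,k_L,k_1$.
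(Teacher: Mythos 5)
Your proposal is correct and follows essentially the same route as the paper's proof: a joint induction in which the primed variables absorb the Poisson-equation telescoping terms, the Lyapunov systems \eqref{eq:sigma_x_def_main}--\eqref{eq:sigma_y_def_main} cancel the leading $\alpha_k^2$/$\beta_k^2$ contributions, and the error matrices contract in the $Q_{22}$- and $Q_{\Delta,\beta}$-weighted norms, with the finitely many early iterates folded into $\bar c$ via Lemma \ref{lem:boundedness}. The only cosmetic difference is that the paper's burn-in index $k_0$ aggregates several additional thresholds $\bar k_1,\dots,\bar k_5$ beyond $k_C,k_L,k_1$, but this does not change the argument.
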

\subsection{Proof of technical lemmas}

\begin{proof}[Proof of Lemma \ref{lem:x_xy_y}]
We first focus on $\tX_k'$. Recall that $\tX_k' = \tX_k+\alpha_k (d_k^x+{d_k^x}^\top)$. Using Lemma \ref{lem:tel_term_bound}, we have
\begin{align*}
    \|\tX_k\|&\leq \|\tX_k'\|+2\alpha_k\|d_k^x\|\\
    &\leq \|\tX_k'\|+\frac{4\alpha_k\sqrt{3d}}{1-\rho}\brc_{f}\left(1+\frac{\beta}{\alpha}\varrho_x\right)\sqrt{\E[\|\tx_k\|^2]}\\
    &\leq \|\tX_k'\|+\frac{4d\alpha_k\sqrt{3\brc}}{1-\rho}\brc_{f}\left(1+\frac{\beta}{\alpha}\varrho_x\right)\tag{Lemma \ref{lem:boundedness}}\\
    &\leq \alpha_kd\left(\sigma^x\tau_{mix}+\frac{4\sqrt{3\brc}}{1-\rho}\brc_{f}\left(1+\frac{\beta}{\alpha}\varrho_x\right)\right)+\kappa_{Q_{22}}\bc d^2\zeta_k^x.\tag{Lemma \ref{lem:x_xy_y_prime}}
\end{align*}
Since $\E[\|\tx_k\|^2]\leq d\|\tX_k\|$, we have
\begin{align*}
    \sqrt{\E[\|\tx_k\|^2]}\leq d\sqrt{\alpha_k\left(\sigma^x\tau_{mix}+\frac{4\sqrt{3\brc}}{1-\rho}\brc_{f}\left(1+\frac{\beta}{\alpha}\varrho_x\right)\right)}+d^{1.5}\sqrt{\kappa_{Q_{22}}\bc \zeta_k^x}.
\end{align*}
Define $\tC_k^x=\tC_k'^x-\frac{\alpha_k}{\zeta_k^x}(d_k^x+d_k^{x\top})$. Using Lemma \ref{lem:tel_term_bound} and the above bound on $\E[\|\tx_k\|^2]$, we get
\begin{align*}
    \|\tC_k^x\|&\leq \|\tC_k'^x\|+\frac{2\alpha_k}{\zeta_k^x}\|d_k^x\|\\
    &\leq \sqrt{\kappa_{Q_{22}}}\bc d^2+\frac{\alpha_k}{\zeta_k^x}\frac{4\sqrt{3}}{1-\rho}\brc_{f}\left(1+\frac{\beta}{\alpha}\varrho_x\right)\left(d^{1.5}\sqrt{\alpha_k\left(\sigma^x\tau_{mix}+\frac{4\sqrt{3\brc}}{1-\rho}\brc_{f}\left(1+\frac{\beta}{\alpha}\varrho_x\right)\right)}+d^2\sqrt{\kappa_{Q_{22}}\bc \zeta_k^x}\right).
\end{align*}
Recall that by definition $\alpha_k^{1.5}\leq \alpha^{1.5}\zeta_k^x$. Thus, we get
\begin{align*}
    \|\tC_k^x\|&\leq \sqrt{\kappa_{Q_{22}}}\bc d^2+\frac{4\sqrt{3}d^2}{1-\rho}\brc_{f}\left(1+\frac{\beta}{\alpha}\varrho_x\right)\left(\alpha^{1.5}\sqrt{\left(\sigma^x\tau_{mix}+\frac{4\sqrt{3\brc}}{1-\rho}\brc_{f}\left(1+\frac{\beta}{\alpha}\varrho_x\right)\right)}+\alpha\sqrt{\kappa_{Q_{22}}\bc}\right)=c^{*(x)}d^2.
\end{align*}

Next, recall that $\tY_k'=\tY_k+\beta_k(d_k^{yv}+d_k^{yv\top})$. By following the exact set of arguments as before, one can show that
\begin{align*}
    \sqrt{\E[\|\ty_k\|^2]}\leq d\sqrt{\beta_k\left(\sigma^y\tau_{mix}+\frac{4\sqrt{3\brc}}{1-\rho}\brc_{f}\right)}+d^{1.5}\sqrt{\kappa_{Q_{\Delta, \beta}}\bc \zeta_k^y}.
\end{align*}
Define $\tC_k^y=\tC_k'^y-\frac{\beta_k}{\zeta_k^y}(d_k^{yv}+d_k^{yv\top})$. Using Lemma \ref{lem:tel_term_bound} and the above bound on $\E[\|\ty_k\|^2]$, we get
\begin{align*}
    \|\tC_k^y\|&\leq \|\tC_k'^y\|+\frac{2\beta_k}{\zeta_k^y}\|d_k^{yv}\|\\
    &\leq \sqrt{\kappa_{Q_{\Delta, \beta}}}\bc d^2+\frac{\beta_k}{\zeta_k^y}\frac{4\sqrt{3}}{1-\rho}\brc_{f}\left(d^{1.5}\sqrt{\beta_k\left(\sigma^y\tau_{mix}+\frac{4\sqrt{3\brc}}{1-\rho}\brc_{f}\right)}+d^2\sqrt{\kappa_{Q_{\Delta, \beta}}\bc \zeta_k^y}\right).
\end{align*}
Again by definition $\beta_k^{1.5}\leq \beta^{1.5}\zeta_k^y$. Thus, we get
\begin{align*}
    \|\tC_k^y\|&\leq \sqrt{\kappa_{Q_{\Delta, \beta}}}\bc d^2+\frac{4\sqrt{3}d^2}{1-\rho}\brc_{f}\left(\beta^{1.5}\sqrt{\left(\sigma^y\tau_{mix}+\frac{4\sqrt{3\brc}}{1-\rho}\brc_{f}\right)}+\beta\sqrt{\kappa_{Q_{\Delta, \beta}}\bc}\right)=c^{*(y)}d^2.
\end{align*}

Finally, from the definition of $\tZ_k'$ we have
    \begin{align*}
        \tZ_k=&\tZ_k'-(\alpha_kd_k^{y}+\beta_kd_k^{xv\top})\\
        =&\beta_k \Sigma^{xy} + \tilde{C}'^{xy}_k\zeta_k^{xy}-(\alpha_kd_k^{y}+\beta_kd_k^{xv\top}).\tag{by Lemma \ref{lem:x_xy_y_prime}}
    \end{align*}
    Define $\tC_k^{xy}=\tilde{C}'^{xy}_k-\frac{\alpha_kd_k^{y}+\beta_kd_k^{xv\top}}{\zeta_k^{xy}}$. Hence, 
    \begin{align*}
        \|\tC_k^{xy}\|\leq &\sqrt{\kappa_{Q_{22}}} \bc d^2+\frac{\alpha_k\|d_k^{y}\|+\beta_{k}\|d_k^{xv}\|}{\zeta_k^{xy}}\\
        \leq&   \sqrt{\kappa_{Q_{22}}}\bc d^2+\frac{2\sqrt{3d}}{1-\rho}\brc_{f}\left(\frac{\alpha_k}{\zeta^{xy}_k}\left(1+\frac{\beta}{\alpha}\varrho_x\right)\sqrt{\E[\|\ty_k\|^2]}+\frac{\beta_k}{\zeta^{xy}_k}\sqrt{\E[\|\tx_k\|^2]}\right)
        \tag{Lemma \ref{lem:tel_term_bound}}.
    \end{align*}
    Recall from the previous parts, we have
    \begin{align*}
        \sqrt{\E[\|\tx_k\|^2]}&\leq d\sqrt{\alpha_k\sigma^x\tau_{mix}}+d^{1.5}\sqrt{c^{*(x)}\zeta_k^x} \\
        \sqrt{\E[\|\ty_k\|^2]}&\leq d\sqrt{\beta_k\sigma^y\tau_{mix}}+d^{1.5}\sqrt{c^{*(y)}\zeta_k^y} .
    \end{align*}
    Plugging the above relation in the bound for $\|\tC_k^{xy}\|$, we get
    \begin{align*}
        \|\tC_k^{xy}\|\leq&   \sqrt{\kappa_{Q_{22}}}\bc d^2+\frac{2\sqrt{3}d^2}{1-\rho}\brc_{f}\left(\frac{\alpha_k}{\zeta^{xy}_k}\left(1+\frac{\beta}{\alpha}\varrho_x\right)\left(\sqrt{\beta_k\sigma^y\tau_{mix}}+\sqrt{c^{*(y)}\zeta_k^y}\right)+\frac{\beta_k}{\zeta^{xy}_k}\left(\sqrt{\alpha_k\sigma^x\tau_{mix}}+\sqrt{c^{*(x)}\zeta_k^x}\right)\right)\\
        \leq &\sqrt{\kappa_{Q_{22}}}\bc d^2+\frac{2\sqrt{3}d^2}{1-\rho}\brc_{f}\left(\left(\alpha+\beta\varrho_x\right)\left(\sqrt{\beta\sigma^y\tau_{mix}}+\sqrt{c^{*(y)}}\right)+ \beta\left(\sqrt{\alpha\sigma^x\tau_{mix}}+\sqrt{c^{*(x)}}\right)\right)=c^{*(z)}d^2.
    \end{align*}
    Thus, we have $\sup_k\max\{\|\tilde{C}^x_k\|,\|\tilde{C}^{xy}_k\|,\|\tilde{C}^y_k\|\}\leq c^*d^2$, where $c^*=\max\{c^{*(x)}, c^{*(z)}, c^{*(y)}\}$.
\end{proof}

\begin{proof}[Proof of Lemma \ref{lem:x_xy_y_prime}]

For consistency, throughout the proof $R_k^{(\cdot)}$ represents remainder or higher order terms. Furthermore, note that by equivalence of norms $\|\cdot\|\leq \kappa_{Q_{22}}\|\cdot\|_{Q_{22}}$ and $\|\cdot\|\leq \kappa_{Q_{\Delta}}\|\cdot\|_{Q_{\Delta}}$ which will be used extensively without explicitly mentioning.

We prove this lemma by induction. Assume that at time $k$, we have the following decomposition of the terms. 
\begin{align}
    \tX_k' &= \alpha_k \Sigma^x + \tilde{C}'^x_k\zeta_k^x\label{eq:X_ass}\\
    \tZ_k' &= \beta_k \Sigma^{xy} + \tilde{C}'^{xy}_k\zeta_k^{xy}\label{eq:XY_ass}\\
    \tY_k' &= \beta_k \Sigma^{y} + \tC'^y_k\zeta_k^y,\label{eq:Y_ass}
\end{align}
where $\max\{\|\tilde{C}'^x_k\|_{Q_{22}},\|\tilde{C}'^{xy}_k\|_{Q_{22}},\|\tilde{C}'^y_k\|_{Q_{\Delta, \beta}}\}= \hbar_k$. Note that $\hbar_k$ depends on $k$. 
%However, throughout the proof we work with a fixed time step $k$, and for the ease of notation we drop the dependency of $\hbar_k$ on $k$.

The goal of this proof is to show that there exists a problem dependent constant $k_0$ such that for $k\geq k_0$, we have
\begin{align*}
    \max\{\|\tilde{C}'^{y}_{k+1}\|_{Q_{\Delta, \beta}}, \|\tilde{C}'^{xy}_{k+1}\|_{Q_{22}}, \|\tilde{C}'^{x}_{k+1}\|_{Q_{22}}\} \leq \max\left\{\hbar_k,\hat{c}\right\},
\end{align*}
where $\hat{c}$ is a problem dependent constant, independent of $\hbar_k$ or $k$. We show that this constant $k_0$ is given as the maximum of six problem-dependent constants $k_1, \bar{k}_1,\bar{k}_2,\bar{k}_3,\bar{k}_4,\bar{k}_5$. The constant $k_1$ was defined in the proof of Lemma \ref{lem:L_k_bound}, and the rest of the constants are defined in Eq. \eqref{eq:k_1}-\eqref{eq:k_5} in the proof. Finding the closed form expressions of these constants will lead to the proof being extremely messy. Hence, we will only highlight the conditions that they must satisfy. It is worth noting that if $K_0$ in the step-size is chosen large enough, then $k_0$ can be set to zero. Having this, we define
\begin{align*}
    \bar{c}=\max\left\{\max_{1\leq k\leq k_0}\max\{\|\tilde{C}'^{y}_{k}\|_{Q_{\Delta, \beta}},\|\tilde{C}'^{xy}_{k}\|_{Q_{22}},\|\tilde{C}'^{x}_{k}\|_{Q_{22}}\},\hat{c}\right\}.
\end{align*}
for a problem-dependent constant $\bar{c}$. Then by induction, we have that $\max\{\|\tilde{C}'^{y}_{k}\|_{Q_{\Delta, \beta}},\|\tilde{C}'^{xy}_{k}\|_{Q_{22}},\|\tilde{C}'^{x}_{k}\|_{Q_{22}}\}\leq \bar{c}$ for all $k\geq 0.$. 

\begin{enumerate}
    \item  For $k\geq k_1$, by the definition of $L_k$ in \eqref{eq:L_k22}, we have $B_{21}^k=0$. We have
\begin{align*}
    \tX_{k+1}'=&\E[\tx_{k+1}\tx_{k+1}^\top] + \alpha_{k+1} (d_{k+1}^x+{d_{k+1}^x}^\top)\\
    =& \E [((I-\alpha_k B_{22}^k)\tx_k+\alpha_k u_k)((I-\alpha_k B_{22}^k)\tx_k+\alpha_k u_k)^\top]+\alpha_{k+1} (d_{k+1}^x+{d_{k+1}^x}^\top)\\
    =& \E [((I-\alpha_k A_{22}-\alpha_k C_{22}^k)\tx_k+\alpha_k u_k)((I-\alpha_k A_{22}-\alpha_k C_{22}^k)\tx_k+\alpha_k u_k)^\top]+ \alpha_{k+1} (d_{k+1}^x+{d_{k+1}^x}^\top)\\
     = &\E [\tx_{k}\tx_{k}^\top -\alpha_k A_{22}\tx_{k}\tx_{k}^\top - \alpha_k\tx_{k}\tx_{k}^\top A_{22}^\top+\alpha_k^2A_{22}\tx_k\tx_k^\top A_{22}^\top\\
    &-\alpha_k(I-\alpha_k A_{22}-\alpha_k C_{22}^k)\tx_{k}\tx_{k}^\top (C_{22}^k)^\top-\alpha_kC_{22}^k\tx_{k}\tx_{k}^\top(I-\alpha_k A_{22})^\top \\
    &+\alpha_k^2u_ku_k^\top + \alpha_k(I-\alpha_kA_{22}-\alpha_kC_{22}^k)\tx_ku_k^\top+\alpha_ku_k\tx_k^\top(I-\alpha_kA_{22}-\alpha_kC_{22}^k)^\top]
    + \alpha_{k+1} (d_{k+1}^x+{d_{k+1}^x}^\top)\\
    = &\tX_k'-\alpha_kA_{22}\tX_k'-\alpha_k\tX_k'A_{22}^\top \\
    &\underbrace{-\alpha_k(I-\alpha_k A_{22}-\alpha_k C_{22}^k)\tX_k' (C_{22}^k)^\top-\alpha_kC_{22}^k\tX_k'(I-\alpha_k A_{22})^\top+\alpha_k^2A_{22}\tX_k'A_{22}^\top}_{T_1}\\
    &+\underbrace{\alpha_k^2\E[u_ku_k^\top]}_{T_2} + \underbrace{\alpha_k\left((I-\alpha_kA_{22}-\alpha_kC_{22}^k)\E[\tx_ku_k^\top]+\E[u_k\tx_k^\top](I-\alpha_kA_{22}-\alpha_kC_{22}^k)^\top\right)}_{T_3}
    \\
    &+ \alpha_{k+1} (d_{k+1}^x+{d_{k+1}^x}^\top)-\alpha_k (d_k^x+{d_k^x}^\top) \\
    &+ \underbrace{\alpha_k^2A_{22} (d_k^x+{d_k^x}^\top)+\alpha_k^2 (d_k^x+{d_k^x}^\top)A_{22}^\top-\alpha_k^3A_{22} (d_k^x+{d_k^x}^\top)A_{22}^\top}_{T_4}\\
    &+\underbrace{\alpha_k^2(I-\alpha_k A_{22}-\alpha_k C_{22}^k) (d_k^x+{d_k^x}^\top) (C_{22}^k)^\top+\alpha_k^2C_{22}^k (d_k^x+{d_k^x}^\top)(I-\alpha_k A_{22})^\top}_{T_5}
\end{align*}

\begin{itemize}
\item For $T_1$, from Definition \ref{def_list} and Lemma \ref{lem:L_k_bound}, we have $\|C_{22}^k\|\leq \varrho_xA_{max}\frac{\beta_k}{\alpha_k}$. By the assumption of induction, we have $\|\tX_k'\|\leq \|\Sigma^x\| \alpha_k+ \kappa_{Q_{22}} \hbar_k\zeta_k^x$. Furtheremore, note that $\|I-\alpha_k A_{22}-\alpha_k C_{22}^k\|\leq 1+\alpha A_{max}+\beta\varrho_xA_{max}$. 
% Assume $UJU^{-1}=A_{22}$ to be Jordan canonical decomposition of $A_{22}$. 
In addition, by Lemma \ref{lem:Lyap_bound} and \ref{lem:mix_time_sum} we have
\begin{align*}
    \|\Sigma^x\| \leq \tau_{mix}d\sigma^x.
\end{align*}

Hence, we have:
\begin{align*}
    \alpha_k\|(I-\alpha_k A_{22}-\alpha_k C_{22}^k)\tX_k' {C_{22}^k}^\top\|&\leq\beta_k  (1+\alpha A_{max}+\beta\varrho_xA_{max})(\alpha_k\tau_{mix}d \sigma^x + \kappa_{Q_{22}}\hbar_k\zeta_k^x) \varrho_xA_{max}\\
    &=\bc_1d \alpha_k\beta_k+\bc_2\hbar_k\beta_k\zeta_k^x,
\end{align*}
where $\bc_1=(1+\alpha A_{max}+\beta \varrho_xA_{max})\tau_{mix}\sigma^x\varrho_xA_{max}$ and $\bc_2=(1+\alpha A_{max}+\beta \varrho_xA_{max})\kappa_{Q_{22}}\varrho_xA_{max}$
\begin{align*}
    \alpha_k\|-C_{22}^k\tX_k'(I-\alpha_k A_{22})^\top\|& \leq\beta_k\varrho_xA_{max}(\alpha_k\tau_{mix}d \sigma^x + \kappa_{Q_{22}}\hbar_k\zeta_k^x)(1+\alpha A_{max})\\
    &=\bc_3d\alpha_k\beta_k+\bc_4\hbar_k\beta_k\zeta_k^x
\end{align*}
where $\bc_3=\varrho_xA_{max}(1+\alpha A_{max})\tau_{mix}\sigma^x$ and $\bc_4=\varrho_xA_{max}(1+\alpha A_{max})\kappa_{Q_{22}}$. In addition,
\begin{align*}
    \alpha_k^2 \|A_{22}\tX_k' A_{22}^\top\|&\leq A_{max}^2 (\tau_{mix}\sigma^xd \alpha_k^3+ \kappa_{Q_{22}}\hbar_k\zeta_k^x\alpha_k^2)\\
    &\leq A_{max}^2 \frac{\alpha^2}{\beta}(\tau_{mix}\sigma^xd \alpha_k\beta_k+ \kappa_{Q_{22}}\hbar_k\zeta_k^x\beta_k)\tag{$\xi>0.5\implies \alpha_k^2\leq \frac{\alpha^2}{\beta}\beta_k$}
\end{align*}
Combining all the bounds together, we get
\begin{align*}
    \Rightarrow \|T_1\|&\leq \bc_{5}d\beta_k\alpha_k+\bc_6\hbar_k\beta_k\zeta_k^x,
\end{align*}
where $\bc_5=\bar{c}_1 + \bar{c}_3 + \frac{\alpha^2}{\beta}A_{max}^2\tau_{mix}\sigma^x$ and $\bc_6 =\bar{c}_2 + \bar{c}_4 + \frac{\alpha^2}{\beta}A_{max}^2\kappa_{Q_{22}}$.

\item For $T_2$, using Lemma \ref{lem:noise_bound}, we have 
\begin{align*}
    T_2= \alpha_k^2 \Gamma_{22} +\alpha_k^2 \cR^u_k
\end{align*}
where $\|\cR^u_k\|\leq \left(1+\frac{\beta}{\alpha}\varrho_x\right)^2\left(\cc_1d^2 \sqrt{\alpha_k} + \cc_2d\hbar_k \sqrt{\zeta_k^x}\right)+\frac{\beta_k}{\alpha_k}\varrho_x
        \left(\|\Gamma_{21}\| + \frac{\beta}{\alpha}\varrho_x\|\Gamma_{11}\|\right)$.
\item For $T_3$, we first study $\E[u_k\tx_k^\top]$. We have $\E[u_k\tx_k^\top]=\E[w_k\tx_k^\top]+\frac{\beta_k}{\alpha_k}(L_{k+1}+A_{22}^{-1}A_{21})\E[v_k\tx_k^\top]$. By Lemma \ref{lem:noise_bound2} we have
\begin{align*}
    \E[u_k\tx_k^\top] =& \alpha_k\sum_{j=1}^\infty \E{[b_2(\tilde{O}_j)b_2(\tilde{O}_0)^\top]}+d_{k}^{xw}-d_{k+1}^{xw}+G^{(2,2)}_k\\
    &+\frac{\beta_k}{\alpha_k}(L_{k+1}+A_{22}^{-1}A_{21}) \left[\alpha_k\sum_{j=1}^\infty \E{[b_1(\tilde{O}_j)b_2(\tilde{O}_0)^\top]}+d_{k}^{xv} - d_{k+1}^{xv}+G^{(1,2)}_k\right]\\
    =&\alpha_k\sum_{j=1}^\infty \E{[b_2(\tilde{O}_j)b_2(\tilde{O}_0)^\top]} + d_k^{xw}-d_{k+1}^{xw} + \frac{\beta_k}{\alpha_k}(L_{k+1}+A_{22}^{-1}A_{21})\left(d_k^{xv}-d_{k+1}^{xv}\right) + \Rk{1},
\end{align*}
where $\|\Rk{1}\|\leq  g_3d^2(1+\frac{\beta}{\alpha}\varrho_x)(\alpha_k^{1.5}+\beta_k)+ \frac{b_{max}^2d\varrho_x}{1-\rho}\beta_k +\hbar_kg_4d\alpha_k\sqrt{\zeta_k^x} (1+\frac{\beta}{\alpha}\varrho_x) $. Recall that $d_k^x= d_k^{xw}+\frac{\beta_k}{\alpha_k}(L_{k+1}+A_{22}^{-1}A_{21})d_k^{xv}$. Thus, we can rewrite the above expression as
\begin{align*}
    \E[u_k\tx_k^\top] =&\alpha_k\sum_{j=1}^\infty \E{[b_2(\tilde{O}_j)b_2(\tilde{O}_0)^\top]} + d_k^{x}-d_{k+1}^{x} \\+&\left(\frac{\beta_{k+1}}{\alpha_{k+1}}(L_{k+2}+A_{22}^{-1}A_{21})-\frac{\beta_k}{\alpha_k}(L_{k+1}+A_{22}^{-1}A_{21})\right)d_{k+1}^{xv} + \Rk{1}\\
    =&\alpha_k\sum_{j=1}^\infty \E{[b_2(\tilde{O}_j)b_2(\tilde{O}_0)^\top]} + d_k^{x}-d_{k+1}^{x} + \Rk{2},
\end{align*}
where $\|\Rk{2}\|\leq \|\Rk{1}\|+\left\|\left(\frac{\beta_{k+1}}{\alpha_{k+1}}(L_{k+2}+A_{22}^{-1}A_{21})-\frac{\beta_k}{\alpha_k}(L_{k+1}+A_{22}^{-1}A_{21})\right)d_{k+1}^{xv}\right\|$. Observe we have 
\begin{align*}
    \left(\frac{\beta_{k+1}}{\alpha_{k+1}}(L_{k+2}+A_{22}^{-1}A_{21})-\frac{\beta_k}{\alpha_k}(L_{k+1}+A_{22}^{-1}A_{21})\right)d_{k+1}^{xv}=&\Bigg(\frac{\beta_{k+1}}{\alpha_{k+1}}(L_{k+2}-L_{k+1})\\
    +&\left(\frac{\beta_{k+1}}{\alpha_{k+1}}-\frac{\beta_k}{\alpha_k}\right)(L_{k+1}+A_{22}^{-1}A_{21})\Bigg)d_{k+1}^{xv}.
\end{align*}
Furthermore, we have
\begin{align*}
    \|d^{xv}_{k+1}\|\leq &\frac{2\sqrt{3d}}{1-\rho}\brc_{f}\sqrt{\E[\|\tx_{k+1}\|^2]}\tag{Lemma \ref{lem:tel_term_bound}}\\
    \leq &\frac{2\sqrt{3\brc}}{1-\rho}\brc_{f} d\tag{Lemma \ref{lem:boundedness}}.
\end{align*}
By Lemma, \ref{lem:L_k_bound} we have
\begin{align*}
    \frac{\beta_{k+1}}{\alpha_{k+1}}\|L_{k+2}-L_{k+1}\| \leq c^L_2\beta_{k+1}.
\end{align*}
And by Lemma, \ref{lem:step_size_gap} we have
\begin{align*}
    \left(\frac{\beta_{k+1}}{\alpha_{k+1}}-\frac{\beta_k}{\alpha_k}\right)(L_{k+1}+A_{22}^{-1}A_{21}) \leq \frac{\varrho_x(1-\xi)}{\alpha}\beta_k.
\end{align*}

Therefore, we get 
\begin{align*}
    \left\|\left(\frac{\beta_{k+1}}{\alpha_{k+1}}(L_{k+2}+A_{22}^{-1}A_{21})-\frac{\beta_k}{\alpha_k}(L_{k+1}+A_{22}^{-1}A_{21})\right)d_{k+1}^{xv}\right\|\leq \frac{2\sqrt{3\brc}}{1-\rho}\brc_{f} \left(2c^L_2+\frac{\varrho_x(1-\xi)}{\alpha}\right)d\beta_k\tag{$\beta_{k+1}\leq 2\beta_k$}. 
\end{align*}
Hence, we have 
\begin{align*}
    \|\Rk{2}\|&\leq g_3d^2\left(1+\frac{\beta}{\alpha}\varrho_x\right)(\alpha_k^{1.5}+\beta_k)+ \left(\frac{b_{max}^2\varrho_x}{1-\rho}+\frac{2\sqrt{3\brc}}{1-\rho}\brc_{f} \left(2c^L_2+\frac{\varrho_x(1-\xi)}{\alpha}\right)\right)d\beta_k\\
    &~~~+\hbar_kg_4d\alpha_k\sqrt{\zeta_k^x} \left(1+\frac{\beta}{\alpha}\varrho_x\right).
\end{align*}

Therefore,
\begin{align*}
    T_3=&\alpha_k(d_k^{x}+d_k^{x\top}-d_{k+1}^{x}-d_{k+1}^{x\top}) +\alpha_k^2\left[\sum_{j=1}^\infty \E{[b_2(\tilde{O}_j)b_2(\tilde{O}_0)^\top+b_2(\tilde{O}_0)b_2(\tilde{O}_j)^\top]}\right] +\Rk{3},
\end{align*}
where $\Rk{3} = - \alpha_k^2\left((A_{22}+C_{22}^k)\E[\tx_ku_k^\top] + \E[u_k\tx_k^\top](A_{22}+C_{22}^k)^\top\right)+\alpha_k\Rk{2}$. 
Hence,
\begin{align*}
    \|\Rk{3}\|&\leq \alpha_k\|\Rk{2}\| + 2A_{max}\left(1+\frac{\beta}{\alpha}\varrho_x\right)\alpha_k^2\|\E[\tx_ku_k^\top]\|.
\end{align*}
To bound the second term, we proceed as follows:
\begin{align*}
    \|\E[\tx_ku_k^\top]\|&\leq \sqrt{\E[\|\tx_k\|^2]}\sqrt{\E[\|u_k\|^2]}\tag{Cauchy-Schwarz}\\
    &\leq \sqrt{\E[\|\tx_k\|^2]}\sqrt{6d\left(1+\frac{\beta^2}{\alpha^2}\varrho_x^2\right)\left(b_{\max}^2+4A_{max}^2\brc\right)}\tag{Lemma \ref{lem:noise_crude_bound}}\\
    &\leq \sqrt{\alpha_k\underbar{c}_1d^2+\hbar_k d \kappa_{Q_{22}}\zeta_k^x}\sqrt{6d\left(1+\frac{\beta^2}{\alpha^2}\varrho_x^2\right)\left(b_{\max}^2+4A_{max}^2\brc\right)}\tag{Lemma \ref{lem:norm_bound} and Lemma \ref{lem:go_from_xp_to_x}}
\end{align*}

Combining both the bounds together, we get
\begin{align*}
    \|\Rk{3}\|&\leq \bc_7d^2(\alpha_k^{2.5}+\alpha_k\beta_k)+\bc_8\hbar_k d\alpha_k^2\sqrt{\zeta_k^x}.
\end{align*}
where 
\begin{align*}
    \bc_7&=\max\Bigg\{\left(1+\frac{\beta}{\alpha}\varrho_x\right)\left(g_3+2A_{max}\sqrt{6\underbar{c}_1\left(1+\frac{\beta^2}{\alpha^2}\varrho_x^2\right)\left(b_{\max}^2+4A_{max}^2\brc\right)}\right)\, \\
    &~~~\left(\frac{b_{max}^2\varrho_x}{1-\rho}+\frac{2\sqrt{3\brc}}{1-\rho}\brc_{f} \left(2c^L_2+\frac{\varrho_x(1-\xi)}{\alpha}\right)\right)\Bigg\},\\
    \bc_8&=\left(1+\frac{\beta}{\alpha}\varrho_x\right)\left(g_4+2A_{max}\sqrt{6\kappa_{Q_{22}}\left(1+\frac{\beta^2}{\alpha^2}\varrho_x^2\right)\left(b_{\max}^2+4A_{max}^2\brc\right)}\right).
\end{align*}

\item For $T_4$, we have
\begin{align*}
    \|T_4\|&\leq \alpha_k^2A_{max}\|d_{k}^x\|(2+\alpha A_{max})\\
    &\leq \alpha_k^2A_{max}(2+\alpha A_{max})\frac{2\sqrt{3d}}{1-\rho}\brc_{f}\left(1+\frac{\beta}{\alpha}\varrho_x\right)\sqrt{\E[\|\tx_k\|^2]}\tag{Lemma \ref{lem:tel_term_bound}}\\
    &\leq  \alpha_k^2A_{max}(2+\alpha A_{max})\frac{2\sqrt{3d}}{1-\rho}\brc_{f}\left(1+\frac{\beta}{\alpha}\varrho_x\right)\sqrt{\alpha_k\underbar{c}_1d^2+\hbar_k d \kappa_{Q_{22}}\zeta_k^x}\tag{Lemma \ref{lem:norm_bound} and Lemma \ref{lem:go_from_xp_to_x}}\\
    &\leq \bc_9d^2\alpha_k^{2.5}+\bc_{10}\hbar_k d\alpha_k^2\sqrt{\zeta_k^x}
\end{align*}
where 
\begin{align*}
    \bc_9&=A_{max}(2+\alpha A_{max})\frac{2\sqrt{3\underbar{c}_1}}{1-\rho}\brc_{f}\left(1+\frac{\beta}{\alpha}\varrho_x\right)\\
    \bc_{10}&=A_{max}(2+\alpha A_{max})\frac{2\sqrt{3\kappa_{Q_{22}}}}{1-\rho}\brc_{f}\left(1+\frac{\beta}{\alpha}\varrho_x\right).
\end{align*}

\item For $T_5$, we have
\begin{align*}
    \|T_5\|&\leq 4\alpha_k\beta_k(1+\alpha A_{max}+\beta\varrho_xA_{max})\|d_k^x\|\varrho_xA_{max}\\
    &\leq 4\alpha_k\beta_k(1+\alpha A_{max}+\beta\varrho_xA_{max})\varrho_xA_{max}\frac{2\sqrt{3d}}{1-\rho}\brc_{f}\left(1+\frac{\beta}{\alpha}\varrho_x\right)\sqrt{\E[\|\tx_k\|^2]}\tag{Lemma \ref{lem:noise_crude_bound}}\\
    &\leq \bc_{11}d\alpha_k\beta_k, \tag{Lemma \ref{lem:boundedness}}
\end{align*}
where $\bc_{11}=\frac{8\sqrt{3\brc}}{1-\rho}(1+\alpha A_{max}+\beta\varrho_xA_{max})\varrho_xA_{max}\brc_{f}\left(1+\frac{\beta}{\alpha}\varrho_x\right)$.
\end{itemize}

Hence, we have the following recursion 
\begin{align*}
    \tX_{k+1}' = &\tX_k'-\alpha_kA_{22}\tX_k'-\alpha_k\tX_k'A_{22}^\top+\alpha_k^2\Gamma^{x}+(\alpha_{k+1}-\alpha_k)(d_{k+1}^x+{d_{k+1}^x} ^\top)+\Rk{4}
\end{align*}
where $\|\Rk{4}\|\leq \bc_{12}d^2(\alpha_k^{2.5}+\alpha_k\beta_k)+\bc_{13}\hbar_k d(\beta_k\zeta_k^x+\alpha_k^2\sqrt{\zeta_k^x})$. Here 
\begin{align*}
    \bc_{12}&=\bc_{5}+\left(1+\frac{\beta}{\alpha}\varrho_x\right)^2\cc_1+\varrho_x
        \left(\|\Gamma_{21}\| + \frac{\beta}{\alpha}\varrho_x\|\Gamma_{11}\|\right)+\bc_7+\bc_9+\bc_{11},\\
    \bc_{13}&=\bc_6+\left(1+\frac{\beta}{\alpha}\varrho_x\right)^2\cc_2+\bc_8+\bc_{10}.
\end{align*}
Furthermore, to bound $(\alpha_{k+1}-\alpha_k)(d_{k+1}^x+{d_{k+1}^x} ^\top)$, we proceed as follows:
\begin{align*}
    \|(\alpha_{k+1}-\alpha_k)(d_{k+1}^x+{d_{k+1}^x} ^\top)\|\leq& 2|\alpha_{k+1}-\alpha_k|\|d_{k+1}^x\|\\
    \leq & \frac{2\xi}{\beta}\alpha_k\beta_k\|d_{k+1}^x\|\tag{Lemma \ref{lem:step_size_gap} and Assumption \ref{ass:step_size_main}}\\
    \leq&\frac{2\xi}{\beta}\alpha_k\beta_k\frac{2\sqrt{3d}}{1-\rho}\brc_{f}\left(1+\frac{\beta}{\alpha}\varrho_x\right)\sqrt{\E[\|\tx_{k+1}\|^2]}\tag{Lemma \ref{lem:noise_crude_bound}}\\
    \leq&\frac{\xi d}{\beta}\frac{4\sqrt{3}}{1-\rho}\alpha_k\beta_k\brc_{f}\left(1+\frac{\beta}{\alpha}\varrho_x\right)\brc\tag{Lemma \ref{lem:boundedness}}
\end{align*}

Hence,
\begin{align*}
    \tX_{k+1}' = &\tX_k'-\alpha_kA_{22}\tX_k'-\alpha_k\tX_k'A_{22}^\top+\alpha_k^2\Gamma^{x}+\Rk{5},
\end{align*}
where $\|\Rk{5}\|\leq \left(\bc_{12}+\frac{\xi}{\beta}\frac{4\sqrt{3}}{1-\rho}\brc_{f}\left(1+\frac{\beta}{\alpha}\varrho_x\right)\brc\right)d^2(\alpha_k^{2.5}+\alpha_k\beta_k)+\bc_{13}\hbar_k d(\beta_k\zeta_k^x+\alpha_k^2\sqrt{\zeta_k^x})$. 

By definition of $\tilde{C}'^x_{k}$ we have
\begin{align*}
    \tX_{k+1}' = &\alpha_k \Sigma^x + \tilde{C}'^x_k\zeta_k^x-\alpha_kA_{22}(\alpha_k \Sigma^x + \tilde{C}'^x_k\zeta_k^x)-\alpha_k(\alpha_k \Sigma^x + \tilde{C}'^x_k\zeta_k^x)A_{22}^\top+\alpha_k^2\Gamma^{x}+\Rk{5}\\
    =&\alpha_{k+1} \Sigma^x +(\alpha_{k}-\alpha_{k+1}) \Sigma^x +\tilde{C}'^x_k\zeta_k^x-\alpha_kA_{22}  \tilde{C}'^x_k\zeta_k^x-\alpha_k \tilde{C}'^x_k\zeta_k^xA_{22}^\top+\Rk{5}.\tag{Eq. \eqref{eq:sigma_x_def_main}}
\end{align*}

Define $\tilde{C}'^x_{k+1}\zeta_{k+1}^x = (\alpha_{k}-\alpha_{k+1}) \Sigma^x+\tilde{C}'^x_k\zeta_k^x-\alpha_kA_{22}  \tilde{C}'^x_k\zeta_k^x-\alpha_k \tilde{C}'^x_k\zeta_k^xA_{22}^\top+R_k^5$. We have
\begin{align*}
    \|\tilde{C}'^x_{k+1}\|_{Q_{22}} \leq \underbrace{\frac{|\alpha_k-\alpha_{k+1}|}{\zeta_{k+1}^x}\|\Sigma^x\|_{Q_{22}}}_{T_{6}} +\underbrace{\frac{\zeta_{k}^x}{\zeta_{k+1}^x} \left\| \tilde{C}'^x_k-\alpha_kA_{22}  \tilde{C}'^x_k-\alpha_k \tilde{C}'^x_kA_{22}^\top\right\|_{Q_{22}}}_{T_{7}}+\frac{1}{\zeta_{k+1}^x}\|R_k^5\|_{Q_{22}}.
\end{align*}
For $T_6$, we have 
\begin{align*}
    T_6\leq & \kappa_{Q_{22}}\tau_{mix}d\sigma^x\frac{\xi\alpha_k\beta_k}{\beta\zeta_{k}^x}\tag{Lemma \ref{lem:step_size_gap} and Assumption \ref{ass:step_size_main}}\\
    =& \frac{\kappa_{Q_{22}}\tau_{mix}d\sigma^x\xi\alpha}{(k+K_0)^{1+\xi-\min(1.5\xi,1)}}\\
    \leq &\kappa_{Q_{22}}\tau_{mix}d\sigma^x\xi\alpha_k\tag{$1-\min(1.5\xi,1)\geq 0$}.
\end{align*}
For $T_7$, we have
\begin{align*}
    T_7 =& \left\| \tilde{C}'^x_k-\alpha_kA_{22}  \tilde{C}'^x_k-\alpha_k \tilde{C}'^x_kA_{22}^\top\right\|_{Q_{22}} \nonumber\\
    &+\left\| \tilde{C}'^x_k-\alpha_kA_{22}  \tilde{C}'^x_k-\alpha_k \tilde{C}'^x_kA_{22}^\top\right\|_{Q_{22}}\left(\frac{\zeta_{k}^x}{\zeta_{k+1}^x}-1\right).
\end{align*}

But we have $\tilde{C}'^x_k-\alpha_kA_{22}  \tilde{C}'^x_k-\alpha_k \tilde{C}'^x_kA_{22}^\top = (I-\alpha_k A_{22})\tilde{C}'^x_k(I-\alpha_k A_{22})^\top-\alpha_k^2A_{22}\tilde{C}'^x_kA_{22}^\top$. Hence, 
\begin{align}
    \left\| \tilde{C}'^x_k-\alpha_kA_{22}  \tilde{C}'^x_k-\alpha_k \tilde{C}'^x_kA_{22}^\top\right\|_{Q_{22}}\leq& \|I-\alpha_k A_{22}\|_{Q_{22}}^2\|\tilde{C}'^x_k\|_{Q_{22}} +\alpha_k^2\|A_{22}\|_{Q_{22}}^2\|\tilde{C}'^x_k\|_{Q_{22}}\nonumber\\
    \leq& (1-\alpha_k a_{22})\|\tilde{C}'^x_k\|_{Q_{22}} +\kappa_{Q_{22}}^2A_{max}^2\alpha_k^2\|\tilde{C}'^x_k\|_{Q_{22}}\tag{Lemma \ref{lem:contraction_prop}}
\end{align}
Note that for the last inequality we assume that $k\geq k_C$. Combining the bounds together and using Lemma \ref{lem:step_size_gap} for the second term, we have
\begin{align*}
    T_7&\leq (1-\alpha_k a_{22})\|\tilde{C}'^x_k\|_{Q_{22}} ++\kappa_{Q_{22}}^2A_{max}^2\alpha_k^2\|\tilde{C}'^x_k\|_{Q_{22}} \\
    &~~+ \frac{2}{k+K_0}\left((1-\alpha_k a_{22})\|\tilde{C}'^x_k\|_{Q_{22}} +\kappa_{Q_{22}}^2A_{max}^2\alpha_k^2\|\tilde{C}'^x_k\|_{Q_{22}}\right)\\
    &\leq (1-\alpha_k a_{22})\|\tilde{C}'^x_k\|_{Q_{22}} +\kappa_{Q_{22}}^2A_{max}^2\alpha_k^2\|\tilde{C}'^x_k\|_{Q_{22}} + \frac{2(1+\kappa_{Q_{22}}^2A_{max}^2\alpha^2)}{k+K_0}\|\tilde{C}'^x_k\|_{Q_{22}} \\
    &\leq (1-\alpha_k a_{22})\|\tilde{C}'^x_k\|_{Q_{22}} + \frac{2+3\kappa_{Q_{22}}^2A_{max}^2\alpha^2}{k+K_0}\|\tilde{C}'^x_k\|_{Q_{22}}\tag{$\xi>0.5$}.
\end{align*}
Thus, we have
\begin{align*}
    \|\tilde{C}'^x_{k+1}\|_{Q_{22}} &\leq (1-\alpha_k a_{22})\|\tilde{C}'^x_k\|_{Q_{22}} + \kappa_{Q_{22}}\tau_{mix}d\sigma^x\xi\alpha_k + \frac{2+3\kappa_{Q_{22}}^2A_{max}^2\alpha^2}{k+K_0}\|\tilde{C}'^x_k\|_{Q_{22}}\\
    &~~~+\frac{1}{\zeta_{k+1}^x}\left(\left(\bc_{12}+\frac{\xi}{\beta}\frac{4\sqrt{3}}{1-\rho}\brc_{f}\left(1+\frac{\beta}{\alpha}\varrho_x\right)\brc\right)d^2(\alpha_k^{2.5}+\alpha_k\beta_k)+\bc_{13}\hbar_k d(\beta_k\zeta_k^x+\alpha_k^2\sqrt{\zeta_k^x})\right).
\end{align*}
Observe that $\zeta_k^x=\frac{1}{(k+K_0)^{\min\{1.5\xi, 1\}}}$. Thus, $\frac{\alpha_k^{2.5}+\alpha_k\beta_k}{\zeta_{k+1}^x}\leq 2(\alpha^{1.5}+\beta)\alpha_k$. Thus,
\begin{align*}
    \|\tilde{C}'^x_{k+1}\|_{Q_{22}} &\leq (1-\alpha_k a_{22})\|\tilde{C}'^x_k\|_{Q_{22}} + \kappa_{Q_{22}}\tau_{mix}d\sigma^x\xi\alpha_k + \frac{2+3\kappa_{Q_{22}}^2A_{max}^2\alpha^2}{k+K_0}\|\tilde{C}'^x_k\|_{Q_{22}}\\
    &~~~+2\left(\bc_{12}+\frac{\xi}{\beta}\frac{4\sqrt{3}}{1-\rho}\brc_{f}\left(1+\frac{\beta}{\alpha}\varrho_x\right)\brc\right)d^2(\alpha^{1.5}+\beta)\alpha_k+2\bc_{13}\hbar_k d\left(\beta_k+\frac{\alpha_k^2}{\sqrt{\zeta_k^x}}\right)\\
    &\leq  (1-\alpha_k a_{22})\hbar_k + \left(\kappa_{Q_{22}}\tau_{mix}d\sigma^x\xi+2\left(\bc_{12}+\frac{\xi}{\beta}\frac{4\sqrt{3}}{1-\rho}\brc_{f}\left(1+\frac{\beta}{\alpha}\varrho_x\right)\brc\right)d^2(\alpha^{1.5}+\beta)\right)\alpha_k \\
    &~~~+ \frac{2+3\kappa_{Q_{22}}^2A_{max}^2\alpha^2}{k+K_0}\hbar_k+2\bc_{13}\hbar_k d\left(\beta_k+\frac{\alpha_k^2}{\sqrt{\zeta_k^x}}\right).\tag{$\|\tilde{C}'^x_k\|_{Q_{22}}\leq \hbar_k$}
\end{align*}
Let $\bar{k}_1$ be a large enough constant such that 
\begin{align}\label{eq:k_1}
    \frac{\alpha_k a_{22}}{2}\geq \frac{2+3\kappa_{Q_{22}}^2A_{max}^2\alpha^2}{k+K_0}+2\bc_{13}d\left(\beta_k+\frac{\alpha_k^2}{\sqrt{\zeta_k^x}}\right)~~\forall k\geq \bar{k}_1
\end{align}
Furthermore, define $\bc^{(x)}=\kappa_{Q_{22}}\tau_{mix}\sigma^x\xi+2\left(\bc_{12}+\frac{\xi}{\beta}\frac{4\sqrt{3}}{1-\rho}\brc_{f}\left(1+\frac{\beta}{\alpha}\varrho_x\right)\brc\right)(\alpha^{1.5}+\beta)$. Then, for $k\geq \max\{k_1, \bar{k}_1\}$
\begin{align*}
    \|\tilde{C}'^x_{k+1}\|_{Q_{22}} &\leq\left(1-\frac{\alpha_k a_{22}}{2}\right)\hbar_k + \bc^{(x)}d^2\alpha_k.
\end{align*}

Hence, we have $\|\tilde{C}'^x_{k+1}\|_{Q_{22}}\leq \max\left\{\hbar_k,\frac{2\bc^{(x)}d^2}{a_{22}}\right\}$.

\item For $\tZ_{k}'$, we proceed as follows:
\begin{align*}
    \tZ_{k+1}'=& \E[\tx_{k+1}\ty_{k+1}^\top]+ \alpha_{k+1}d_{k+1}^{y}+\beta_{k+1}{d_{k+1}^{xv}}^\top\\
    =& \E [((I-\alpha_k B_{22}^k)\tx_k+\alpha_k u_k)((I-\beta_kB_{11}^k)\ty_k-\beta_kA_{12}\tx_k+\beta_kv_k)^\top]+\alpha_{k+1}d_{k+1}^{y}+\beta_{k+1}{d_{k+1}^{xv}}^\top\\
    =& \E [((I-\alpha_k A_{22}-\alpha_k C_{22}^k)\tx_k+\alpha_k u_k)((I-\beta_k(\Delta-A_{12}L_k))\ty_k-\beta_kA_{12}\tx_k+\beta_kv_k)^\top]\\
    &+\alpha_{k+1}d_{k+1}^{y}+\beta_{k+1}{d_{k+1}^{xv}}^\top\\
    =& \E[(I-\alpha_k A_{22}-\alpha_k C_{22}^k)\tx_k\ty_k^\top(I-\beta_k(\Delta-A_{12}L_k))^\top-\beta_k(I-\alpha_k A_{22}-\alpha_k C_{22}^k)\tx_k\tx_k^\top A_{12}^\top\\
    &+\beta_k(I-\alpha_k A_{22}-\alpha_k C_{22}^k)\tx_kv_k^\top\\
    &+\alpha_ku_k\ty_k^\top(I-\beta_k(\Delta-A_{12}L_k))^\top-\alpha_k\beta_ku_k\tx_k^\top A_{12}^\top+\alpha_k\beta_ku_kv_k^\top]+\alpha_{k+1}d_{k+1}^{y}+\beta_{k+1}{d_{k+1}^{xv}}^\top\\
     = &\E [\tx_{k}\ty_{k}^\top -\alpha_k A_{22}\tx_{k}\ty_{k}^\top -\beta_k\tx_k\tx_k^\top A_{12}^\top\\
    &-\beta_k(I-\alpha_kA_{22}-\alpha_kC_{22}^k)\tx_k\ty_k^\top(\Delta-A_{12}L_k)^\top-\alpha_k C_{22}^k\tx_{k}\ty_{k}^\top+\alpha_k\beta_k(A_{22}^\top+C_{22}^k)\tx_{k}\tx_{k}^\top A_{12}^\top +\beta_k\tx_kv_k^\top\\
    &+\alpha_ku_k\ty_k^\top+ \alpha_k\beta_ku_kv_k^\top-\alpha_k\beta_k(A_{22}+C_{22}^k)\tx_kv_k^\top-\alpha_k\beta_ku_k\ty_k^\top(\Delta-A_{12}L_k)^\top-\alpha_k\beta_ku_k\tx_k^\top A_{12}^\top]\\
    &+\alpha_{k+1} d_{k+1}^{y}+\beta_{k+1}{d_{k+1}^{xv}}^\top\\
    = &\tZ_k'-\alpha_kA_{22}\tZ_k'-\beta_k\tX_k'A_{12}^\top \\
    &\underbrace{-\beta_k(I-\alpha_kA_{22}-\alpha_kC_{22}^k)\tZ_k'(\Delta-A_{12}L_k)^\top-\alpha_k C_{22}^k{(\tZ_k')}^\top+\alpha_k\beta_k(A_{22}^\top+C_{22}^k)\tX_k' A_{12}^\top}_{T_8} \\
    &+\underbrace{\beta_k\E[\tx_kv_k^\top]+\alpha_k\E[u_k\ty_k^\top]+\alpha_k\beta_k\E[u_kv_k^\top]}_{T_9}\\
    &\underbrace{-\alpha_k\beta_k(A_{22}+C_{22}^k)\E[\tx_kv_k^\top]-\alpha_k\beta_k\E[u_k\ty_k^\top](\Delta-A_{12}L_k)^\top-\alpha_k\beta_k\E[u_k\tx_k^\top] A_{12}^\top}_{T_{10}}\\
    & \left. \begin{array}{r}
         + \beta_k(I-\alpha_kA_{22}-\alpha_kC_{22}^k)(\alpha_kd_k^{y}+\beta_k{d_k^{xv}}^\top)(\Delta-A_{12}L_k)^\top+\alpha_k C_{22}^k(\alpha_k{d_k^{y}}^\top+\beta_k{d_k^{xv}})
         \\ -\alpha_k^2\beta_k(A_{22}^\top+C_{22}^k)(d_k^x+{d_k^x}^\top) A_{12}^\top
         \end{array} \right\rbrace T_{11} \\
&\underbrace{+\alpha_kA_{22}(\alpha_{k} d_{k}^{y}+\beta_{k}{d_{k}^{xv}}^\top)+\beta_k\alpha_k(d_k^x+{d_k^x}^\top)A_{12}^\top}_{T_{12}}
    +\alpha_{k+1} d_{k+1}^{y}+\beta_{k+1}{d_{k+1}^{xv}}^\top-\alpha_{k} d_{k}^{y}-\beta_{k}{d_{k}^{xv}}^\top\\
\end{align*}

\begin{itemize}
    \item For $T_8$, we have:
    \begin{align*}
        T_8=\underbrace{-\beta_k(I-\alpha_kA_{22}-\alpha_kC_{22}^k)\tZ_k'(\Delta-A_{12}L_k)^\top}_{T_{8,1}}\underbrace{-\alpha_k C_{22}^k(\tZ_k')^\top}_{T_{8,2}}+\underbrace{\alpha_k\beta_k(A_{22}^\top+C_{22}^k)\tX_k 'A_{12}^\top}_{T_{8,3}}.
    \end{align*}
    By assumptions on induction, we get:
    \begin{align*}
        \|T_{8,1}\|\leq& \beta_k\|(I-\alpha_kA_{22}-\alpha_kC_{22}^k)\|\|\tZ_k'\|\|(\Delta-A_{12}L_k)\|\\
        \leq &\beta_k(1+\alpha A_{max} + \beta \varrho_x A_{max}) \varrho_y (\beta_k\sigma^{xy}d\tau_{mix} + \kappa_{Q_{22}}\hbar_k \zeta_k^{xy} )\tag{Eq. \eqref{eq:XY_ass}, Lemmas \ref{lem:Lyap_bound} and \ref{lem:mix_time_sum}}
    \end{align*}
    Recall $\|C_{22}^k\|\leq \varrho_xA_{max}\frac{\beta_k}{\alpha_k}$ from Definition \ref{def_list} and Lemma \ref{lem:L_k_bound}, we have:
    \begin{align*}
        \|T_{8,2}\|\leq \varrho_x A_{max}\beta_k(\beta_k\sigma^{xy}d\tau_{mix} + \kappa_{Q_{22}}\hbar_k \zeta_k^{xy} ) .
    \end{align*}
    In addition, we have:
    \begin{align*}
        \|T_{8,3}\|\leq& \alpha_k\beta_k\|(A_{22}^\top+C_{22}^k)\|\|\tX_k'\| \|A_{12}\|\\
        \leq & \alpha_k\beta_k A_{max}^2\left(1 + \varrho_x\frac{\beta}{\alpha}\right)(\sigma^xd\tau_{mix} \alpha_k + \hbar_k\kappa_{Q_{22}}\zeta_{k}^{x})
    \end{align*}
    Combining all the bounds, we have:
    \begin{align*}
        \|T_8\|\leq \bc_{14} d \beta_k^2+\bc_{15}\hbar_k\beta_k\zeta_{k}^{xy},
    \end{align*}
    where $\bc_{14} = (1+\alpha A_{max} + \beta \varrho_x A_{max}) \varrho_y \sigma^{xy}\tau_{mix} + \varrho_x A_{max}\sigma^{xy}\tau_{mix} + \alpha^2A_{max}^2\left(1 + \varrho_x\frac{\beta}{\alpha}\right)\sigma^x\tau_{mix}/\beta$ and $\bc_{15} = (1+\alpha A_{max} + \beta \varrho_x A_{max}) \varrho_y \kappa_{Q_{22}} + \varrho_x A_{max} \kappa_{Q_{22}} + \alpha A_{max}^2\left(1 + \varrho_x\frac{\beta}{\alpha}\right)\kappa_{Q_{22}}$. Here we used the fact that $\alpha_k\zeta_k^x\leq\alpha\zeta_k^{xy}$.
    
    \item For $T_9$, we have:
    \begin{align*}
        T_9= \underbrace{\beta_k\E[\tx_kv_k^\top]}_{T_{9,1}}+\underbrace{\alpha_k\E[u_k\ty_k^\top]}_{T_{9,2}}+\underbrace{\alpha_k\beta_k\E[u_kv_k^\top]}_{T_{9,3}}.
    \end{align*}
    For $T_{9,1}$, by Lemma \ref{lem:noise_bound2} we have
    \begin{align*}
        T_{9,1} = \alpha_k\beta_k\sum_{j=1}^\infty \E{[b_2(\tilde{O}_0)b_1(\tilde{O}_j)^\top]}+\beta_k(d_{k}^{xv} - d_{k+1}^{xv})^\top+\beta_k{G_k^{(1,2)}}^\top.
    \end{align*}

    For $T_{9,2}$, we have
    \begin{align*}
        T_{9,2} =& \alpha_k\E\left[\left(w_k+\frac{\beta_k}{\alpha_k}(L_{k+1}+A_{22}^{-1}A_{21})v_k\right) \ty_k^\top\right] \\
        =& \alpha_k\E\left[w_k \ty_k^\top\right] + \beta_k(L_{k+1}+A_{22}^{-1}A_{21})\E\left[v_k \ty_k^\top\right] \\
        =& \alpha_k\left(\beta_k\sum_{j=1}^\infty \E{[b_2(\tilde{O}_j)b_1(\tilde{O}_0)^\top]}+d_{k}^{yw}-d_{k+1}^{yw}+G^{(2,1)}_k\right) \\
        &+ \beta_k(L_{k+1}+A_{22}^{-1}A_{21})\left(\beta_k\sum_{j=1}^\infty \E{[b_1(\tilde{O}_j)b_1(\tilde{O}_0)^\top]} + d_{k}^{yv}-d_{k+1}^{yv}+G^{(1,1)}_k\right) \\
        =& \alpha_k\beta_k\sum_{j=1}^\infty \E{[b_2(\tilde{O}_j)b_1(\tilde{O}_0)^\top]}+\alpha_k\left(d_{k}^{yw}-d_{k+1}^{yw}\right)+\beta_k(L_{k+1}+A_{22}^{-1}A_{21})(d_{k}^{yv}-d_{k+1}^{yv})+\Rk{6},
    \end{align*}
    where 
    \begin{align*}
        \|\Rk{6}\| \leq & \alpha_k \|G^{(2,1)}_k\| +  \beta_k\|L_{k+1}+A_{22}^{-1}A_{21}\|\left(\beta_k\left\|\sum_{j=1}^\infty \E{[b_1(\tilde{O}_j)b_1(\tilde{O}_0)^\top]}\right\|+\|G_k^{(1,1)}\|\right)\\
        \leq & \alpha_k \left(g_1d^2\alpha_k\sqrt{\beta_k}+g_2d\hbar_k\alpha_k\sqrt{\zeta_k^y}\right) + \beta_k\varrho_x \left(\beta_k \frac{b_{max}^2d}{1-\rho} + g_1d^2\alpha_k\sqrt{\beta_k}+g_2d\hbar_k\alpha_k\sqrt{\zeta_k^y}\right)\\
        = & \bc_{16}d^2 (\alpha_k^{2}\sqrt{\beta_k} +\beta_k^2) +\bc_{17}d \hbar_k \alpha_k^2\sqrt{\zeta_k^y},
    \end{align*}
    where $\bc_{16} = g_1\left(1+\varrho_x\frac{\beta}{\alpha}\right) + \frac{\varrho_x b_{max}^2}{1-\rho}$ and $\bc_{17} = g_2\left(1+\varrho_x\frac{\beta}{\alpha}\right)$.

    For the final term, we have
    \begin{align*}
        T_{9,3} =&\alpha_k \beta_k\E\left[\left(w_k+\frac{\beta_k}{\alpha_k}(L_{k+1}+ A_{22}^{-1}A_{21})v_k\right)v_k^\top\right]\\
        =&\alpha_k \beta_k \E\left[w_kv_k^\top\right] + \beta_k^2(L_{k+1}+A_{22}^{-1}A_{21})\E[v_kv_k^\top]\\
        =&\alpha_k \beta_k(\Gamma_{21}+ \cR^{(2,1)}_k) + \beta_k^2 (L_{k+1}+ A_{22}^{-1} A_{21})\E[v_kv_k^\top],\tag{by Lemma \ref{lem:noise_bound}}
    \end{align*}
    where $\|\cR^{(2,1)}_k\|\leq \cc_1d^2 \sqrt{\alpha_k} + \cc_2d\hbar_k \sqrt{\zeta_k^x}$. We simply bound the second term using Lemma \ref{lem:noise_crude_bound}, to get $\|\beta_k^2 (L_{k+1}+ A_{22}^{-1} A_{21})\E[v_kv_k^\top]\|\leq 3d\beta_k^2\varrho_x\left(b_{\max}^2+4A_{max}^2\brc\right)$. Therefore,
    \begin{align*}
        T_{9,3} = \alpha_k\beta_k\Gamma_{21} + \Rk{7},
    \end{align*}
    where $\|\Rk{7}\| \leq \cc_1d^2 \beta_k\alpha_k^{1.5}+ 3d \beta_k^2\varrho_x \left(b_{\max}^2+ 4A_{max}^2 \brc\right)+ \cc_2d\hbar_k\alpha_k\beta_k \sqrt{\zeta_k^x}$. In total, for $T_9$, we have
    \begin{align*}
        T_9 = \alpha_k\beta_k \Gamma^{xy} +\beta_k(d_{k}^{xv} - d_{k+1}^{xv})^\top+\alpha_k\left(d_{k}^{yw}-d_{k+1}^{yw}\right)+\beta_k(L_{k+1}+A_{22}^{-1}A_{21})(d_{k}^{yv}-d_{k+1}^{yv})+ \Rk{8},
    \end{align*}
    where $\|\Rk{8}\|\leq \bc_{18}d^2 (\alpha_k^{2}\sqrt{\beta_k} +\beta_k^2) +\bc_{19}d \hbar_k \alpha_k^2\sqrt{\zeta_k^y}$. Here 
    \begin{align*}
        \bc_{18}&=\left(g_3+\cc_1\right)\sqrt{\frac{\beta}{\alpha}}+\bc_{16}+3\varrho_x \left(b_{\max}^2+ 4A_{max}^2 \brc\right),\\
        \bc_{19}&=\left(g_4+\cc_2\right)\frac{\beta}{\alpha}+\bc_{17}.
    \end{align*}
    Now rewriting $T_9$ as the following, we have
    \begin{align*}
        T_9 = &\alpha_k\beta_k \Gamma^{xy} +\beta_kd_{k}^{xv\top}-\beta_{k+1}d_{k+1}^{xv\top}+\alpha_kd_{k}^{yw} - \alpha_{k+1}d_{k+1}^{yw}\\
        &+\beta_k(L_{k+1}+A_{22}^{-1}A_{21})d_{k}^{yv}-\beta_{k+1}(L_{k+2}+A_{22}^{-1}A_{21})d_{k+1}^{yv}+ \Rk{9},\\
        = &\alpha_k\beta_k \Gamma^{xy} +\beta_kd_{k}^{xv\top}-\beta_{k+1}d_{k+1}^{xv\top}+\alpha_kd_{k}^{y} - \alpha_{k+1}d_{k+1}^{y}+\Rk{9}
    \end{align*}
    where $\Rk{9} = \Rk{8}+(\beta_{k+1}-\beta_k)d_{k+1}^{xv\top}+(\alpha_{k+1}-\alpha_{k})d_{k+1}^{yw}+(\beta_{k+1}
    (L_{k+2}+A_{22}^{-1}A_{21})-\beta_{k}(L_{k+1}+A_{22}^{-1}A_{21}))d_{k+1}^{yv}$. Using Lemmas \ref{lem:step_size_gap}, \ref{lem:tel_term_bound} and \ref{lem:boundedness}, we bound the second term as follows:
    \begin{align*}
        |\beta_{k+1}-\beta_k|\|d_{k+1}^{xv\top}\|&\leq \beta_k^2\frac{2\sqrt{3}\xi d}{\beta(1-\rho)}\brc_{f}\sqrt{\brc}.
    \end{align*}
    For the third term, we again use Lemmas \ref{lem:step_size_gap} and \ref{lem:tel_term_bound} to get
    \begin{align*}
        |\alpha_{k+1}-\alpha_k|\|d_{k+1}^{yw\top}\|&\leq \alpha_k\beta_k\frac{2\sqrt{3d}\xi }{\beta(1-\rho)}\brc_{f}\sqrt{\E[\|\ty_k\|^2]}\\
        &\leq \left(\alpha_k\beta_k\frac{2\sqrt{3d}\xi }{\beta(1-\rho)}\brc_{f}\right)\sqrt{\beta_k \underbar{c}_2d^2+\hbar_k d\kappa_{Q_{\Delta, \beta}}\zeta_k^y}\\
        &\leq \left(\alpha_k\beta_k\frac{2\sqrt{3}\xi }{\beta(1-\rho)}\brc_{f}\right)\left(\sqrt{\beta_k \underbar{c}_2}d^{1.5}+\hbar_k d\sqrt{\kappa_{Q_{\Delta, \beta}}\zeta_k^y}\right)\\
        &\leq\alpha_k^2\sqrt{\beta_k}\frac{2\sqrt{3\underbar{c}_2}\xi d^{1.5}}{\alpha(1-\rho)}\brc_{f} +\hbar_k d\alpha_k^2\sqrt{\zeta_k^y}\frac{2\sqrt{3\kappa_{Q_{\Delta, \beta}}}\xi }{\alpha(1-\rho)}\brc_{f}.
    \end{align*}
    For the last term, we proceed in a similar manner:
    \begin{align*}
        \|(\beta_{k+1}
    (L_{k+2}+A_{22}^{-1}A_{21})-\beta_{k}(L_{k+1}+A_{22}^{-1}A_{21}))d_{k+1}^{yv}\|&\leq \Bigg(\beta_{k+1}\|L_{k+2}-L_{k+1}\|\\
    &~+|\beta_{k+1}-\beta_k|\|L_{k+1}+A_{22}^{-1}A_{21}\|\Bigg)\|d_{k+1}^{yv}\|
    \end{align*}
    We bound these two terms as follows:
    \begin{align*}
        \beta_{k+1}\|L_{k+2}-L_{k+1}\|\|d_{k+1}^{yv}\|&\leq \alpha_{k+1}\beta_{k+1}\frac{2\sqrt{3d}\xi c_2^L}{\beta(1-\rho)}\brc_{f}\sqrt{\E[\|\ty_k\|^2]}\tag{Lemmas \ref{lem:L_k_bound} and \ref{lem:tel_term_bound}}\\
        &\leq \alpha_k^2\sqrt{\beta_k}\frac{8\sqrt{3\underbar{c}_2}\xi c_2^Ld^{1.5}}{\alpha(1-\rho)}\brc_{f} +\hbar_k d\alpha_k^2\sqrt{\zeta_k^y}\frac{8\sqrt{3\kappa_{Q_{\Delta, \beta}}}\xi c_2^L}{\alpha(1-\rho)}\brc_{f}\tag{$\alpha_{k+1}\leq 2\alpha_k, \beta_{k+1}\leq 2\beta_k$}\\
        |\beta_{k+1}-\beta_k|\|L_{k+1}+A_{22}^{-1}A_{21}\|\|d_{k+1}^{yv}\|&\leq \beta_k^2\frac{2\sqrt{3}\xi\varrho_x d}{\beta(1-\rho)}\brc_{f}\sqrt{\brc}.
    \end{align*}
    Thus, we have
    \begin{align*}
        \|(\beta_{k+1}
    (L_{k+2}+A_{22}^{-1}A_{21})&-\beta_{k}(L_{k+1}+A_{22}^{-1}A_{21}))d_{k+1}^{yv}\|\\
    &\leq \left(\alpha_k^2\sqrt{\beta_k}+\beta_k^2\right)\left(\frac{8\sqrt{3\underbar{c}_2}\xi c_2^Ld^{1.5}}{\alpha(1-\rho)}\brc_{f}+\frac{2\sqrt{3}\xi\varrho_x d}{\beta(1-\rho)}\brc_{f}\sqrt{\brc}\right) +\hbar_kd\alpha_k^2\sqrt{\zeta_k^y}\frac{8\sqrt{3\kappa_{Q_{\Delta, \beta}}}\xi c_2^L}{\alpha(1-\rho)}\brc_{f}.
    \end{align*}

    Combining the previous bounds, we get
    \begin{align*}
        \|R_k^9\|\leq& \bc_{20}d^2 (\alpha_k^{2}\sqrt{\beta_k} +\beta_k^2) +\bc_{21}d \hbar_k \alpha_k^2\sqrt{\zeta_k^y}
    \end{align*}
    where $\bc_{20}=\bc_{18}+\frac{2\sqrt{3}\xi }{\beta(1-\rho)}\brc_{f}\sqrt{\brc}(1+\varrho_x)+\frac{2\sqrt{3\underbar{c}_2}\xi}{\alpha(1-\rho)}\brc_{f}(1+4c_2^L)$ and $\bc_{21}=\bc_{19}+\frac{2\sqrt{3\kappa_{Q_{\Delta, \beta}}}\xi }{\alpha(1-\rho)}\brc_{f}(1+4c_2^L)$.
\item For $T_{10}$, we have:
\begin{align*}
    \|T_{10}\| \leq & \alpha_k\beta_k \left(\sqrt{\E[\|v_k\|^2]}\sqrt{\E[\|\tx_k\|^2]}A_{max}\left(1+\varrho_x\frac{\beta}{\alpha}\right)+\sqrt{\E[\|u_k\|^2]}\sqrt{\E[\|\ty_k\|^2]}\varrho_y+\sqrt{\E[\|u_k\|^2]}\sqrt{\E[\|\tx_k\|^2]}A_{max}\right)\tag{Cauchy-Schwarz inequality}\\
    \leq & \alpha_k\beta_k\sqrt{3d}\sqrt{\left(b_{\max}^2+4A_{max}^2\brc\right)}\left(A_{max}\Bigg(1+\varrho_x\frac{\beta}{\alpha}+\sqrt{2\left(1+\frac{\beta^2}{\alpha^2}\varrho_x^2\right)}\right)\sqrt{\E[\|\tx_k\|^2]}\\
    &~+\varrho_y\sqrt{2\left(1+\frac{\beta^2}{\alpha^2}\varrho_x^2\right)}\sqrt{\E[\|\ty_k\|^2]}\Bigg)\tag{Lemma \ref{lem:noise_crude_bound}}\\
    \leq &\alpha_k\beta_k\sqrt{3d}\sqrt{\left(b_{\max}^2+4A_{max}^2\brc\right)}\left(A_{max}\Bigg(1+\varrho_x\frac{\beta}{\alpha}+\sqrt{2\left(1+\frac{\beta^2}{\alpha^2}\varrho_x^2\right)}\right)\sqrt{\alpha_k\underbar{c}_1d^2 +  \hbar_k\kappa_{Q_{22}}d\zeta_k^x}\\
    &~+\varrho_y\sqrt{2\left(1+\frac{\beta^2}{\alpha^2}\varrho_x^2\right)}\sqrt{\beta_k\underbar{c}_2d^2+\hbar_k \kappa_{Q_{\Delta, \beta}}d\zeta_k^y}\Bigg)\tag{Lemma \ref{lem:go_from_xp_to_x}}\\
    \leq & \bc_{22}d^2 (\alpha_k^{2}\sqrt{\beta_k} +\beta_k^2) +\bc_{23}d \hbar_k \alpha_k^2\sqrt{\zeta_k^y},
\end{align*}
where 
\begin{align*}
    \bc_{22}&=\sqrt{3b_{\max}^2+12A_{max}^2\brc}\left(A_{max}\sqrt{\frac{\underbar{c}_1\beta}{\alpha}}\left(1+\varrho_x\frac{\beta}{\alpha}+\sqrt{2\left(1+\frac{\beta^2}{\alpha^2}\varrho_x^2\right)}\right)+\frac{\beta\varrho_y}{\alpha}\sqrt{2\underbar{c}_2\left(1+\frac{\beta^2}{\alpha^2}\varrho_x^2\right)}\right),\\
    \bc_{23}&=\sqrt{3b_{\max}^2+12A_{max}^2\brc}\left(\frac{A_{max}\sqrt{\kappa_{Q_{22}}}\beta}{\alpha}\left(1+\varrho_x\frac{\beta}{\alpha}+\sqrt{2\left(1+\frac{\beta^2}{\alpha^2}\varrho_x^2\right)}\right)+\frac{\beta\varrho_y}{\alpha}\sqrt{2\kappa_{Q_{\Delta, \beta}}\left(1+\frac{\beta^2}{\alpha^2}\varrho_x^2\right)}\right).
\end{align*}

\item For $T_{11}$, we first provide a bound on $\alpha_kd_k^{y}+\beta_k{d_k^{xv}}^\top$.
\begin{align*}
    \|\alpha_kd_k^{y}+\beta_k{d_k^{xv}}^\top\|&\leq \alpha_k\|d_k^{y}\|+\beta_k\|{d_k^{xv}}^\top\|\\
    &\leq \frac{2\sqrt{3d}}{1-\rho}\brc_{f}\left(\alpha_k\left(1+\frac{\beta}{\alpha}\varrho_x\right)\sqrt{\E[\|\ty_k\|^2]}+\beta_k\sqrt{\E[\|\tx_k\|^2]}\right)\tag{Lemma \ref{lem:tel_term_bound}}\\
    &\leq \frac{2\sqrt{3d}}{1-\rho}\brc_{f}\left(\alpha_k\left(1+\frac{\beta}{\alpha}\varrho_x\right)\sqrt{\beta_k\underbar{c}_2d^2+\hbar_k \kappa_{Q_{\Delta, \beta}}d\zeta_k^y}+\beta_k\sqrt{\alpha_k\underbar{c}_1d^2 +  \hbar_k\kappa_{Q_{22}}d\zeta_k^x}\right)\tag{Lemma \ref{lem:go_from_xp_to_x}}\\
    &\leq \bc_{24}d^2\alpha_k\sqrt{\beta_k} +\bc_{25} d\hbar_k \alpha_k\sqrt{\zeta_k^y}
\end{align*}
where 
\begin{align*}
    \bc_{24}&=\frac{2\sqrt{3}}{1-\rho}\brc_{f}\left(\sqrt{\underbar{c}_2}\left(1+\frac{\beta}{\alpha}\varrho_x\right)+\sqrt{\frac{\beta\underbar{c}_1}{\alpha}}\right),\\
    \bc_{25}&=\frac{2\sqrt{3}}{1-\rho}\brc_{f}\left(\sqrt{\kappa_{Q_{\Delta, \beta}}}\left(1+\frac{\beta}{\alpha}\varrho_x\right)+\sqrt{\kappa_{Q_{22}}}\frac{\beta}{\alpha}\right).
\end{align*}
Using the above, we get
\begin{align*}
     \|T_{11}\| \leq &  \beta_k\left((1+\alpha A_{max}+\beta \varrho_xA_{max})\varrho_y+\varrho_x\right)\|\alpha_k{d_k^{y}}^\top+\beta_k{d_k^{xv}}\|+2\alpha_k^2\beta_kA^2_{max}\left(1+\varrho_x\frac{\beta}{\alpha}\right)\|d_k^x\|\\
     \leq & \beta_k\left((1+\alpha A_{max}+\beta \varrho_xA_{max})\varrho_y+\varrho_x\right)\left(\bc_{24}d^2\alpha_k\sqrt{\beta_k} +\bc_{25} d\hbar_k \alpha_k\sqrt{\zeta_k^y}\right)\\
     &~+2\alpha_k^2\beta_kA^2_{max}\left(1+\varrho_x\frac{\beta}{\alpha}\right)\|d_k^x\|\\
     \leq & \beta_k\left((1+\alpha A_{max}+\beta \varrho_xA_{max})\varrho_y+\varrho_x\right)\left(\bc_{24}d^2\alpha_k\sqrt{\beta_k} +\bc_{25} d\hbar_k \alpha_k\sqrt{\zeta_k^y}\right)\\
     &~+2\alpha_k^2\beta_kA^2_{max}\left(1+\varrho_x\frac{\beta}{\alpha}\right)\frac{2\sqrt{3d}}{1-\rho}\brc_{f}\left(1+\frac{\beta}{\alpha}\varrho_x\right)\sqrt{\brc d}\tag{Lemmas \ref{lem:go_from_xp_to_x} and \ref{lem:boundedness}}\\
     \leq & \bc_{26}d^2\alpha_k^2\sqrt{\beta_k} +\bc_{27} d\hbar_k \alpha_k^2\sqrt{\zeta_k^y},
\end{align*}
where 
\begin{align*}
    \bc_{26}&=\left((1+\alpha A_{max}+\beta \varrho_xA_{max})\varrho_y+\varrho_x\right)\frac{\beta\bc_{24}}{\alpha}+\frac{4\sqrt{3\brc}}{1-\rho}\sqrt{\beta}A^2_{max}\left(1+\varrho_x\frac{\beta}{\alpha}\right)\brc_{f}\left(1+\frac{\beta}{\alpha}\varrho_x\right),\\
    \bc_{27}&=\left((1+\alpha A_{max}+\beta \varrho_xA_{max})\varrho_y+\varrho_x\right)\frac{\beta\bc_{25}}{\alpha}.
\end{align*}

\item Similar to $T_{11}$, for $T_{12}$ we have:
\begin{align*}
    \|T_{12}\| \leq & \alpha_kA_{max}\|\alpha_{k} d_{k}^{y}+\beta_{k}{d_{k}^{xv}}^\top\|+2\beta_k\alpha_kA_{max}\|d_k^x\|\\
    \leq& \alpha_kA_{max}\left(\bc_{24}d^2\alpha_k\sqrt{\beta_k} +\bc_{25} d\hbar_k \alpha_k\sqrt{\zeta_k^y}\right)+2\alpha_k\beta_kA_{max}\|d_k^x\|\\
    \leq &\alpha_kA_{max}\left(\bc_{24}d^2\alpha_k\sqrt{\beta_k} +\bc_{25} d\hbar_k \alpha_k\sqrt{\zeta_k^y}\right)+2\alpha_k\beta_kA_{max}\frac{2\sqrt{3d}}{1-\rho}\brc_{f}\left(1+\frac{\beta}{\alpha}\varrho_x\right)\sqrt{\E[\|\tx_k\|^2]}.\tag{Lemma \ref{lem:tel_term_bound}}\\
    \leq &\alpha_kA_{max}\left(\bc_{24}d^2\alpha_k\sqrt{\beta_k} +\bc_{25} d\hbar_k \alpha_k\sqrt{\zeta_k^y}\right)+2\alpha_k\beta_kA_{max}\frac{2\sqrt{3d}}{1-\rho}\brc_{f}\left(1+\frac{\beta}{\alpha}\varrho_x\right)\sqrt{\alpha_k\underbar{c}_1d^2 +  \hbar_
    kd\kappa_{Q_{22}}\zeta_k^x}.\tag{Lemma \ref{lem:go_from_xp_to_x}}\\
    \leq & \bc_{28}d^2\alpha_k^2\sqrt{\beta_k} +\bc_{29} d\hbar_k \alpha_k^2\sqrt{\zeta_k^y},
\end{align*}
where 
\begin{align*}
    \bc_{28}&=A_{max}\bc_{24}+A_{max}\frac{4\sqrt{3}}{1-\rho}\brc_{f}\left(1+\frac{\beta}{\alpha}\varrho_x\right)\sqrt{\frac{\beta\underbar{c}_1}{\alpha}},\\
    \bc_{29}&=A_{max}\bc_{26}+A_{max}\frac{4\sqrt{3}}{1-\rho}\brc_{f}\left(1+\frac{\beta}{\alpha}\varrho_x\right)\sqrt{\kappa_{Q_{22}}}\frac{\beta}{\alpha}.
\end{align*}
\end{itemize}

Combining everything, we have
\begin{align*}
    \tZ_{k+1}'= &\tZ_k'-\alpha_kA_{22}\tZ_k'-\beta_k\tX_k'A_{12}^\top +\alpha_k\beta_k \Gamma^{xy}+\Rk{10}
\end{align*}
where $\Rk{10}= T_8+\Rk{9}+T_{10}+T_{11}+T_{12}$ and $\|\Rk{10}\|\leq\bc_{30}d^2\left(\alpha_k^2\sqrt{\beta_k}+\beta_k^2\right)+\bc_{31}d\hbar_k\left(\beta_k\zeta_k^{xy}+\alpha_k^2\sqrt{\zeta_k^y}\right)$. Here 
\begin{align*}
    \bc_{30}&=\bc_{14}+\bc_{20}+\bc_{22}+\bc_{26}+\bc_{28},\\
    \bc_{31}&=\bc_{15}+\bc_{21}+\bc_{23}+\bc_{27}+\bc_{29}.
\end{align*}

Next, by induction on \eqref{eq:XY_ass}, we have
\begin{align*}
    \tZ_{k+1}' =& \beta_{k+1} \Sigma^{xy} + (\beta_{k}-\beta_{k+1}) \Sigma^{xy}+\tilde{C}'^{xy}_k\zeta_k^{xy}-\alpha_k A_{22}(\beta_k \Sigma^{xy} + \tilde{C}'^{xy}_k\zeta_k^{xy}) - \beta_k (\alpha_k \Sigma^x + \tilde{C}'^x_k\zeta_k^x)A_{12}^\top \\
    +& \alpha_k\beta_k \Gamma^{xy}+R_k^{10}\\
    =& \beta_{k+1} \Sigma^{xy} + (\beta_{k}-\beta_{k+1}) \Sigma^{xy}+\tilde{C}'^{xy}_k\zeta_k^{xy}-\alpha_k A_{22} \tilde{C}'^{xy}_k\zeta_k^{xy} - \beta_k  \tilde{C}'^x_k\zeta_k^xA_{12}^\top +\Rk{10}.\tag{by Eq. \eqref{eq:sigma_xy_def_main}}
\end{align*}
Define $\tilde{C}'^{xy}_{k+1}$ such that $\tilde{C}'^{xy}_{k+1}\zeta_{k+1}^{xy} = (\beta_{k}-\beta_{k+1}) \Sigma^{xy}+\tilde{C}'^{xy}_k\zeta_k^{xy}-\alpha_k A_{22} \tilde{C}'^{xy}_k\zeta_k^{xy} - \beta_k  \tilde{C}'^x_k\zeta_k^xA_{12}^\top +\Rk{10}$. We have
\begin{align*}
    \|\tilde{C}'^{xy}_{k+1}\|_{Q_{22}} \leq \underbrace{\frac{|\beta_k-\beta_{k+1}|}{\zeta_{k+1}^{xy}}\|\Sigma^{xy}\|_{Q_{22}}}_{T_{13}} +\underbrace{\frac{\zeta_{k}^{xy}}{\zeta_{k+1}^{xy}} \left\| \tilde{C}'^{xy}_k-\alpha_kA_{22}  \tilde{C}'^{xy}_k\right\|_{Q_{22}}}_{T_{14}}+\underbrace{\beta_k\frac{\zeta_k^x}{\zeta_{k+1}^{xy}}\|\tC_k'^x\|\|A_{12}\|}_{T_{15}}+\frac{1}{\zeta_{k+1}^{xy}}\|\Rk{10}\|_{Q_{22}}.
\end{align*}
For $T_{13}$, using Lemma \ref{lem:step_size_gap}, we have 
\begin{align*}
    T_{13}&\leq \frac{\beta_k^2}{\beta\zeta_{k+1}^{xy}}\|\Sigma^{xy}\|_{Q_{22}}\tag{Lemma \ref{lem:step_size_gap} and Assumption \ref{ass:step_size_main}}\\
    &\leq \kappa_{Q_{22}}d\tau_{mix}\sigma^{xy}\alpha_k\frac{\beta_k^2}{\beta\alpha_k\zeta_{k+1}^{xy}}\\
    &\leq \kappa_{Q_{22}}d\tau_{mix}\sigma^{xy}\alpha_k\frac{\beta}{\alpha}\tag{$2-\xi-\min(\xi+0.5, 2-\xi)\geq 0$}.
\end{align*}
For $T_{14}$, we have 
\begin{align*}
    T_{14} =& \left\| (I-\alpha_k A_{22})\tilde{C}'^{xy}_k\right\|_{Q_{22}}+ \left(\frac{\zeta_{k}^{xy}}{\zeta_{k+1}^{xy}}-1\right) \left\| (I-\alpha_k A_{22})\tilde{C}'^{xy}_k\right\|_{Q_{22}}\\
    \leq &\left\| I-\alpha_k A_{22}\right\|_{Q_{22}}\left\|\tilde{C}'^{xy}_k\right\|_{Q_{22}}+ \left(\frac{\zeta_{k}^{xy}}{\zeta_{k+1}^{xy}}-1\right) \left\| I-\alpha_k A_{22}\right\|_{Q_{22}}\left\|\tilde{C}'^{xy}_k\right\|_{Q_{22}}\tag{Matrix norm property}\\
    \leq &\left(1-\frac{\alpha_ka_{22}}{2}\right)\hbar_k+ \frac{\min\{\xi+0.5, 2-\xi\}}{k+K_0} \left(1-\frac{\alpha_ka_{22}}{2}\right)\hbar_k\tag{Lemma \ref{lem:step_size_gap} and $k>k_C$}\\
    \leq &\left(1-\frac{\alpha_ka_{22}}{2}\right)\hbar_k+ \frac{2}{\beta}\beta_k\hbar_k.
\end{align*}
For $T_{15}$, we have $T_{15}\leq 4A_{max}\beta_k\hbar_k$.
Combining all the bounds with the bound on $\Rk{10}$, we have
\begin{align*}
    \|\tilde{C}'^{xy}_{k+1}\|_{Q_{22}}&\leq \left(1-\frac{\alpha_ka_{22}}{2}\right)\hbar_k + \left(\beta_k\left(\frac{2}{\beta}+4A_{max}\right)+\frac{ \bc_{31}d(\alpha_k^2\sqrt{\zeta_{k}^{y}}+\beta_k\zeta_k^{xy})}{\zeta_{k+1}^{xy}}\right)\hbar_k \\
    &+\kappa_{Q_{22}}d\tau_{mix}\sigma^{xy}\alpha_k\frac{\beta}{\alpha}+ \frac{\bc_{30}d^2
    (\alpha_k^2\sqrt{\beta_k}+\beta_k^2) }{\zeta_{k+1}^{xy}}.
\end{align*}
Note that $\zeta_{k}^{xy}$ is of the same order as $\alpha_k^2\sqrt{\beta_k}, +\beta_k^2$, i.e., $\zeta_{k}^{xy}=\Theta\left(\alpha_k^2\sqrt{\beta_k}+\beta_k^2\right)$. Thus, we have

\begin{align*}
    \|\tilde{C}'^{xy}_{k+1}\|_{Q_{22}}&\leq \left(1-\frac{\alpha_ka_{22}}{2}\right)\hbar_k + \left(\beta_k\left(\frac{2}{\beta}+4A_{max}+4d\bc_{31}\right)+4\alpha\sqrt{\beta}\bc_{31}d\alpha_k\sqrt{\frac{\zeta_k^y}{\beta_k}}\right)\hbar_k \\
    &+ \alpha_k\left(\kappa_{Q_{22}}d\tau_{mix}\sigma^{xy}\frac{\beta}{\alpha}+\bc_{30}d^2\left(\alpha\sqrt{\beta}+\frac{\beta^2}{\alpha}\right)\right).
\end{align*}
Note that $\sqrt{\frac{\zeta_k^y}{\beta_k}}=o(1)$. Thus, there exists a large enough constant $\bar{k}_2$ such that 
\begin{align}\label{eq:k_2}
    \frac{\alpha_ka_{22}}{4}\geq \beta_k\left(\frac{2}{\beta}+4A_{max}+4d\bc_{31}\right)+4\alpha\sqrt{\beta}\bc_{31}d\alpha_k\sqrt{\frac{\zeta_k^y}{\beta_k}}~~\forall k\geq \bar{k}_2
\end{align}
Thus for all $k\geq \max\{k_1, \bar{k}_1, \bar{k}_2\}$, we get
\begin{align*}
    \|\tilde{C}'^{xy}_{k+1}\|_{Q_{22}}&\leq \left(1-\frac{\alpha_ka_{22}}{4}\right)\hbar_k  + \bc^{(z)}\alpha_k,
\end{align*}
where $\bc^{(z)}=\kappa_{Q_{22}}d\tau_{mix}\sigma^{xy}\frac{\beta}{\alpha}+\bc_{30}d^2\left(\alpha\sqrt{\beta}+\frac{\beta^2}{\alpha}\right)$. Hence, we have $\|\tilde{C}'^{xy}_{k+1}\|_{Q_{22}}\leq \max\left\{\hbar_k,\frac{4\bc^{(z)}d^2}{a_{22}}\right\}$.

    \item Finally, we have:
    \begin{align*}
        \ty_{k+1}=&\ty_k-\beta_k(B_{11}^k\ty_k+A_{12}\tx_k)+\beta_kv_k\\
        &=(I-\beta_kB_{11}^k)\ty_k-\beta_kA_{12}\tx_k+\beta_kv_k
    \end{align*}
    Then we have the following recursion:
    \begin{align*}
        \tY_{k+1}'&=(I-\beta_kB_{11}^k)\tY_k(I-\beta_kB_{11}^k)^\top-\beta_k(I-\beta_kB_{11}^k)\tZ_k^\top A_{12}^\top+\beta_k(I-\beta_kB_{11}^k)\E[\ty_kv_k^\top]\\&-\beta_kA_{12}\tZ_k(I-\beta_kB_{11}^k)^\top+\beta_k^2A_{12}\tX_kA_{12}^\top-\beta_k^2A_{12}\E[\tx_kv_k^\top]\\&+\beta_k\E[v_k\ty_k^\top](I-\beta_kB_{11}^k)^\top-\beta_k^2\E[v_k\tx_k^\top]A_{12}^\top+\beta_k^2\E[v_kv_k^\top]\\
        &+\beta_{k+1}(d_{k+1}^{yv}+{d_{k+1}^{yv}}^\top)\\
        &=\tY_k'-\beta_k\Delta\tY_k'-\beta_k\tY_k'\Delta^\top-\beta_k(\tZ_k')^\top A_{12}^\top-\beta_kA_{12}\tZ_k'\\
        &+\underbrace{\beta_kA_{12}L_k\tY_k'+\beta_k\tY_k'L_k^\top A_{12}^\top+\beta_k^2B_{11}^k\tY_k'B_{11}^{k\top}+\beta_k^2B_{11}^k\tZ_k^\top A_{12}^\top+\beta_k^2A_{12}\tZ_k B_{11}^k}_{T_{16}}\\
        &+\underbrace{\beta_k\E[\ty_kv_k^\top]+\beta_k\E[v_k\ty_k^\top]+\beta_k^2\E[v_kv_k^\top]}_{T_{17}}\\
        &+\underbrace{\beta_k^2A_{12}\tX_kA_{12}^\top-\beta_k^2A_{12}\E[\tx_kv_k^\top]-\beta_k^2\E[v_k\tx_k^\top]A_{12}^\top-\beta_k^2B_{11}^k\E[\ty_kv_k^\top]-\beta_k^2\E[v_k\ty_k^\top](B_{11}^k)^\top}_{T_{18}}\\
        &+\beta_{k+1}(d_{k+1}^{yv}+{d_{k+1}^{yv}}^\top)
        -\beta_{k}(d_{k}^{yv}+{d_{k}^{yv}}^\top)\\&+\underbrace{\beta_k^2\Delta(d_{k}^{yv}+{d_{k}^{yv}}^\top)+\beta_k^2(d_{k}^{yv}+{d_{k}^{yv}}^\top)\Delta^\top+\beta_k (\alpha_kd_k^{yw}+\beta_k{d_k^{xv}}^\top)A_{12}^\top+\beta_kA_{12}(\alpha_kd_k^{yw}+\beta_k{d_k^{xv}}^\top)}_{T_{19}}\\
        &\underbrace{-\beta_k^2A_{12}L_k(d_k^{yv}+{d_k^{yv}}^\top)-\beta_k^2(d_k^{yv}+{d_k^{yv}}^\top)L_k^\top A_{12}^\top-\beta_k^3 B_{11}^k(d_k^{yv}+{d_k^{yv}}^\top)B_{11}^{k\top}}_{T_{20}}        \end{align*}

        \begin{itemize}
            \item For $T_{16}$, we have
            \begin{align*}
                T_{16}=\underbrace{\beta_kA_{12}L_k\tY_k'+\beta_k\tY_k'L_k^\top A_{12}^\top}_{T_{16,1}}+\underbrace{\beta_k^2B_{11}^k\tY_k'B_{11}^{k\top}}_{T_{16,2}}+\underbrace{\beta_k^2B_{11}^k\tZ_k^\top A_{12}^\top+\beta_k^2A_{12}\tZ_k B_{11}^k}_{T_{16,3}}
            \end{align*}
            \begin{align*}
                \|T_{16,1}\|&\leq \frac{2\beta_k^2}{\alpha_k}A_{max}c_{L}^1(\beta_k\sigma^{y}d\tau_{mix} + \kappa_{Q_{\Delta, \beta}}\hbar_k \zeta_k^{y} )\\
                \|T_{16,2}\|&\leq \beta_k^2\varrho_y^2(\beta_k\sigma^{y}d\tau_{mix} + \kappa_{Q_{\Delta, \beta}}\hbar_k \zeta_k^{y})\\
                \|T_{16,3}\|&\leq 2\beta_k^2A_{max}\varrho_y(\beta_k\sigma^{xy}d\tau_{mix} + \kappa_{Q_{22}}\hbar_k \zeta_k^{xy} ).
            \end{align*}
            Combining the bounds, we get
            \begin{align*}
                \|T_{16}\|\leq \bc_{32}d\frac{\beta_k^3}{\alpha_k}+\bc_{33}\hbar_k\frac{\beta_k^2}{\alpha_k}\zeta_k^y.
            \end{align*}
            where 
            \begin{align*}
                \bc_{32}&=2A_{max}c_L^1\sigma^y\tau_{mix}+\alpha\varrho_y^2\sigma^yd\tau_{mix}+2A_{max}\varrho_y\sigma^{xy}\tau_{mix}\\
                \bc_{33}&=2A_{max}c_L^1\kappa_{Q_{\Delta, \beta}}+\alpha\varrho_y^2\kappa_{Q_{\Delta, \beta}}+2A_{max}\varrho_y\kappa_{Q_{22}}.
            \end{align*}
            \item For $T_{17}$, using Lemmas \ref{lem:noise_bound} and \ref{lem:noise_bound2} we have
            \begin{align*}
                T_{17}=& \beta_k \Bigg(\beta_k\sum_{j=1}^\infty \E{[b_1(\tilde{O}_0)b_1(\tilde{O}_j)^\top]} + \left(d_{k}^{yv}-d_{k+1}^{yv}\right)^\top+\left(G^{(1,1)}_k\right)^\top \\
                +& \beta_k\sum_{j=1}^\infty \E{[b_1(\tilde{O}_j)b_1(\tilde{O}_0)^\top]} + d_{k}^{yv}-d_{k+1}^{yv}+G^{(1,1)}_k\Bigg) + \beta_k^2\left(\Gamma_{11}+\cR^{(1,1)}_k\right)\\
                = & \beta_k^2\Gamma^y+\beta_k\left(d_{k}^{yv}-d_{k+1}^{yv}\right)^\top +\beta_k\left(d_{k}^{yv}-d_{k+1}^{yv}\right)+ \Rk{11},
            \end{align*}
            where $\|\Rk{11}\|\leq \left(\cc_1\sqrt{\frac{\beta}{\alpha}}+2g_1\right)d^2\alpha_k\beta_k^{1.5} + \left(\cc_2 \frac{\beta}{\alpha}+2g_2\right)d\hbar_k\alpha_k\beta_k\sqrt{\zeta_k^y}$.
            \item For $T_{18}$, we have
            \begin{align*}
                T_{18}=\underbrace{\beta_k^2A_{12}\tX_kA_{12}^\top}_{T_{18,1}}\underbrace{-\beta_k^2A_{12}\E[\tx_kv_k^\top]-\beta_k^2\E[v_k\tx_k^\top]A_{12}^\top}_{T_{18,2}}\underbrace{-\beta_k^2B_{11}^k\E[\ty_kv_k^\top]-\beta_k^2\E[v_k\ty_k^\top](B_{11}^k)^\top}_{T_{18,3}}
            \end{align*}
            \begin{align*}
                \|T_{18,1}\|&\leq \beta_k^2A_{max}^2\|\tX_k\|\leq \beta_k^2A_{max}^2(\alpha_k\underbar{c}_1d +  \hbar_k\kappa_{Q_{22}}\zeta_k^x)\tag{Lemma \ref{lem:go_from_xp_to_x}}\\
                \|T_{18,2}\|&\leq 2\beta_k^2A_{max}\sqrt{\E[\|\tx_k\|^2]}\sqrt{\E[\|v_k\|^2]}\tag{Cauchy-Schwarz}\\
                &\leq 2\beta_k^2A_{max}\sqrt{\alpha_k\underbar{c}_1d^2 +  \hbar_kd\kappa_{Q_{22}}\zeta_k^x}\sqrt{3d\left(b_{\max}^2+4A_{max}^2\brc\right)}\tag{Lemma \ref{lem:go_from_xp_to_x} and \ref{lem:noise_crude_bound}}\\
                &\leq 2\beta_k^2A_{max}\sqrt{3\left(b_{\max}^2+4A_{max}^2\brc\right)}\left(\sqrt{\alpha_k\underbar{c}_1}d^{1.5} +  \hbar_kd\sqrt{\kappa_{Q_{22}}\zeta_k^x}\right)\\
                \|T_{18,3}\|&\leq 2\beta_k^2\varrho_y\sqrt{\E[\|\ty_k\|^2]}\sqrt{\E[\|v_k\|^2]}\tag{Cauchy-Schwarz}\\
                &\leq 2\beta_k^2\varrho_y\sqrt{\beta_k\underbar{c}_2d^2 +  \hbar_kd\kappa_{Q_{\Delta, \beta}}\zeta_k^y}\sqrt{3d\left(b_{\max}^2+4A_{max}^2\brc\right)}\tag{Lemma \ref{lem:go_from_xp_to_x} and \ref{lem:noise_crude_bound}}\\
                &\leq 2\beta_k^2\varrho_y\sqrt{3\left(b_{\max}^2+4A_{max}^2\brc\right)}\left(\sqrt{\frac{\beta}{\alpha}}\sqrt{\alpha_k\underbar{c}_2}d^{1.5} +  \hbar_kd\sqrt{\kappa_{Q_{\Delta, \beta}}\zeta_k^x}\right).
            \end{align*}
            Combining the bounds, we get
            \begin{align*}
                \|T_{18}\|\leq \bc_{34}d^{1.5}\beta_k^2\sqrt{\alpha_k}+\bc_{35}d\hbar_k\beta_k^2\sqrt{\zeta_k^x}
            \end{align*}
            where 
            \begin{align*}
                \bc_{34}&=A_{max}^2\sqrt{\alpha}\underbar{c}_1+2A_{max}\sqrt{3\underbar{c}_1\left(b_{\max}^2+4A_{max}^2\brc\right)}+2\varrho_y\sqrt{\frac{3\beta\underbar{c}_2}{\alpha}\left(b_{\max}^2+4A_{max}^2\brc\right)}\\
                \bc_{35}&=A_{max}^2\sqrt{\alpha}\kappa_{Q_{22}}+2A_{max}\sqrt{3\kappa_{Q_{22}}\left(b_{\max}^2+4A_{max}^2\brc\right)}+2\varrho_y\sqrt{\frac{3\beta\kappa_{Q_{\Delta, \beta}}}{\alpha}\left(b_{\max}^2+4A_{max}^2\brc\right)}.
            \end{align*}

            \item For $T_{19}$, we have
            \begin{align*}
                T_{19}=\underbrace{\beta_k^2\Delta(d_{k}^{yv}+{d_{k}^{yv}}^\top)+\beta_k^2(d_{k}^{yv}+{d_{k}^{yv}}^\top)\Delta^\top}_{T_{19,1}}+\underbrace{\beta_k (\alpha_kd_k^{yw}+\beta_k{d_k^{xv}}^\top)A_{12}^\top+\beta_kA_{12}(\alpha_kd_k^{yw}+\beta_k{d_k^{xv}}^\top)}_{T_{19,2}}
            \end{align*}
            \begin{align*}
                \|T_{19,1}\|&\leq 4\beta_k^2\|\Delta\|\|d_k^{yv}\|\\
                &\leq \beta_k^2\|\Delta\|\frac{8\sqrt{3d}}{1-\rho}\brc_{f}\sqrt{\E[\|\ty_k\|^2]}\tag{Lemma \ref{lem:tel_term_bound}}\\
                &\leq \beta_k^2\|\Delta\|\frac{8\sqrt{3d}}{1-\rho}\brc_{f}\sqrt{\beta_k \underbar{c}_2d^2+\hbar_kd \kappa_{Q_{\Delta, \beta}}\zeta_k^y}\tag{Lemma \ref{lem:go_from_xp_to_x}}\\
                &\leq \beta_k^2\|\Delta\|\frac{8\sqrt{3}}{1-\rho}\brc_{f}\left(\sqrt{\beta_k \underbar{c}_2}d^{1.5}+\hbar_kd \sqrt{\kappa_{Q_{\Delta, \beta}}\zeta_k^y}\right)\\
                \|T_{19,2}\|&\leq 2\beta_kA_{max}\left(\alpha_k\|d_k^{yw}\|+\beta_k\|d_k^{xv}\|\right)\\
                &\leq \beta_kA_{max}\frac{4\sqrt{3d}}{1-\rho}\brc_{f}\left(\alpha_k\sqrt{\E[\|\ty_k\|^2]}+\beta_k\sqrt{\E[\|\tx_k\|^2]}\right)\tag{Lemma \ref{lem:tel_term_bound}}\\
                &\leq \beta_kA_{max}\frac{4\sqrt{3d}}{1-\rho}\brc_{f}\left(\alpha_k\sqrt{\beta_k \underbar{c}_2d^2+\hbar_kd \kappa_{Q_{\Delta, \beta}}\zeta_k^y}+\beta_k\sqrt{\alpha_k \underbar{c}_1d^2+\hbar_kd \kappa_{Q_{22}}\zeta_k^x}\right)\tag{Lemma \ref{lem:go_from_xp_to_x}}\\
                &\leq \beta_kA_{max}\frac{4\sqrt{3}}{1-\rho}\brc_{f}\left(\left(\alpha_k\sqrt{\beta_k}\left(\sqrt{\underbar{c}_2}+\sqrt{\frac{\beta}{\alpha}\underbar{c}_1}\right) d^{1.5}\right)+\hbar_kd \alpha_k\sqrt{\zeta_k^y}\left(\sqrt{\kappa_{Q_{\Delta, \beta}}}+\frac{\beta}{\alpha}\sqrt{\kappa_{Q_{22}}}\right)\right).\\
            \end{align*}
            Combining the bounds, we get
            \begin{align*}
                \|T_{19}\|\leq \bc_{36}d^{1.5}\alpha_k\beta_k^{1.5}+\bc_{37}d\hbar_k\beta_k\alpha_k\sqrt{\zeta_k^y}
            \end{align*}
             where 
            \begin{align*}
                \bc_{36}&=\frac{4\sqrt{3}}{1-\rho}\brc_{f}\left(\frac{2\beta}{\alpha}\|\Delta\|\sqrt{\underbar{c}_2}+A_{max}\left(\sqrt{\underbar{c}_2}+\sqrt{\frac{\beta}{\alpha}\underbar{c}_1}\right)\right)\\
                \bc_{37}&=\frac{4\sqrt{3}}{1-\rho}\brc_{f}\left(\frac{2\beta}{\alpha}\|\Delta\|\sqrt{\kappa_{Q_{\Delta, \beta}}}+A_{max}\left(\sqrt{\kappa_{Q_{\Delta, \beta}}}+\frac{\beta}{\alpha}\sqrt{\kappa_{Q_{22}}}\right)\right).
            \end{align*}
            
            \item For $T_{20}$, we have
            \begin{align*}
                T_{20}=\underbrace{-\beta_k^2A_{12}L_k(d_k^{yv}+{d_k^{yv}}^\top)-\beta_k^2(d_k^{yv}+{d_k^{yv}}^\top)L_k^\top A_{12}^\top}_{T_{20,1}}\underbrace{-\beta_k^3 B_{11}^k(d_k^{yv}+{d_k^{yv}}^\top)B_{11}^{k\top}}_{T_{20,2}}
            \end{align*}
            \begin{align*}
                \|T_{20,1}\|&\leq 4A_{max}c_1^L\frac{\beta_k^3}{\alpha_k}\|d_k^{yv}\|\tag{Lemma \ref{lem:L_k_bound}}\\
                &\leq A_{max}c_1^L\frac{\beta_k^3}{\alpha_k}\frac{8\sqrt{3d}}{1-\rho}\brc_{f}\sqrt{\E[\|\ty_k\|^2]}\tag{Lemma \ref{lem:tel_term_bound}}\\
                &\leq A_{max}c_1^L\brc_{f}\brc d\frac{\beta_k^3}{\alpha_k}\frac{8\sqrt{3}}{1-\rho}\tag{Lemma \ref{lem:boundedness}}\\
                \|T_{20,2}\|&\leq \beta_k^3\varrho_y^2\brc_{f}\brc d\frac{4\sqrt{3}}{1-\rho}.
            \end{align*}
            Combining the bounds, we get
            \begin{align*}
                \|T_{20}\|\leq \bc_{38}d\frac{\beta_k^3}{\alpha_k}.
            \end{align*}
        \end{itemize}
        where $\bc_{38}=\frac{4\sqrt{3}}{1-\rho}\brc_{f}\brc \left(2A_{max}c_1^L+\alpha\varrho_y^2\right)$. Combining the bounds for all the terms, we get
        \begin{align*}
            \tY_{k+1}' &=\tY_k'-\beta_k\Delta\tY_k'-\beta_k\tY_k'\Delta^\top-\beta_k\tZ_k'A_{12}^\top-\beta_kA_{12}(\tZ_k')^\top +\beta_k^2\Gamma^y+(\beta_{k+1}-\beta_k)(d_{k+1}^{yv}+d_{k+1}^{yv \top})+ \Rk{12},
        \end{align*}
        where $\|\Rk{12}\|\leq \bc_{39}d^2\left(\frac{\beta_k^3}{\alpha_k}+\alpha_k\beta_k^{1.5}\right)+\bc_{40}d\hbar_k\left(\frac{\beta_k^2}{\alpha_k}\zeta_k^y+\beta_k\alpha_k\sqrt{\zeta_k^y}\right)$ and 
        \begin{align*}
            \bc_{39}&= \bc_{32}+\bc_{38}+\cc_1\sqrt{\frac{\beta}{\alpha}}+2g_1+\sqrt{\frac{\beta}{\alpha}}\bc_{34}+\bc_{36}\\
            \bc_{40}&=\bc_{33}+\cc_2 \frac{\beta}{\alpha}+2g_2+\frac{\beta}{\alpha}\bc_{35}+\bc_{37}.
        \end{align*}

        Using Lemma \ref{lem:step_size_gap} and \ref{lem:tel_term_bound}, we have 
        \begin{align*}
            \|(\beta_{k+1}-\beta_k)(d_{k+1}^{yv}+d_{k+1}^{yv \top})\|&\leq 2\beta\beta_k^2\|d_{k+1}^{yv}\|\\
            &\leq \beta_k^2\frac{4\beta\sqrt{3d}}{1-\rho}\brc_{f}\sqrt{\E[\|\ty_{k+1}\|^2]}\tag{Lemma \ref{lem:tel_term_bound}}\\
            &\leq \beta_k^2\frac{4\beta\sqrt{3d}}{1-\rho}\brc_{f}\sqrt{\underbar{c}_7d^2 \beta_k + \underbar{c}_8d\hbar_k \zeta_k^y}\\
            &\leq \beta_k^2\frac{4\beta\sqrt{3}}{1-\rho}\brc_{f}\left(\sqrt{\underbar{c}_7\beta_k}d^{1.5}+d\hbar_k\sqrt{\underbar{c}_8\zeta_k^y}\right).
        \end{align*}
        
        Hence, 
        \begin{align*}
            \tY_{k+1}' &=\tY_k'-\beta_k\Delta\tY_k'-\beta_k\tY_k'\Delta^\top-\beta_k(\tZ_k')^{\top}A_{12}^\top-\beta_kA_{12}\tZ_k'+\beta_k^2\Gamma^y + \Rk{13},
        \end{align*}
        where $\|\Rk{13}\|\leq d^2\left(\bc_{39}+\frac{4\beta^2\sqrt{3\underbar{c}_7}}{\alpha(1-\rho)}\brc_{f}\right)\left(\frac{\beta_k^3}{\alpha_k}+\alpha_k\beta_k^{1.5}\right)+d\hbar_k\left(\bc_{40}+\frac{4\beta^2\sqrt{3\underbar{c}_8}}{\alpha(1-\rho)}\brc_{f}\right)\left(\frac{\beta_k^2}{\alpha_k}\zeta_k^y+\beta_k\alpha_k\sqrt{\zeta_k^y}\right)$. 

        Substituting \eqref{eq:Y_ass} we get
        \begin{align*}
            \tY_{k+1}' =& \beta_{k+1}\Sigma^{y}+ (\beta_k-\beta_{k+1}) \Sigma^{y} + \tC'^y_k\zeta_k^y - \beta_k\Delta (\beta_k \Sigma^{y} + \tC'^y_k\zeta_k^y) - \beta_k(\beta_k \Sigma^{y} + \tC'^y_k\zeta_k^y)\Delta^\top \\
            &- \beta_k(\beta_k \Sigma^{xy} + \tilde{C}'^{xy}_k\zeta_k^{xy})^\top A_{12}^\top-\beta_k A_{12}(\beta_k \Sigma^{xy} + \tilde{C}'^{xy}_k\zeta_k^{xy})+\beta_k^2\Gamma^y + \Rk{13}\\
            =&\beta_{k+1}\Sigma^{y}+ \frac{\beta_k^2}{\beta} \Sigma^{y} + \tC'^y_k\zeta_k^y - \beta_k\Delta (\beta_k \Sigma^{y} + \tC'^y_k\zeta_k^y) - \beta_k(\beta_k \Sigma^{y} + \tC'^y_k\zeta_k^y)\Delta^\top\tag{Assumption \ref{ass:step_size_main}} \\
            &- \beta_k(\beta_k \Sigma^{xy} + \tilde{C}'^{xy}_k\zeta_k^{xy})^\top A_{12}^\top-\beta_k A_{12}(\beta_k \Sigma^{xy} + \tilde{C}'^{xy}_k\zeta_k^{xy})+\beta_k^2\Gamma^y + \Rk{14}\\
            =&\beta_{k+1}\Sigma^{y}+  \tC'^y_k\zeta_k^y - \beta_k\Delta ( \tC'^y_k\zeta_k^y) - \beta_k( \tC'^y_k\zeta_k^y)\Delta^\top - \beta_k( \tilde{C}'^{xy}_k\zeta_k^{xy})A_{12}^\top-\beta_k A_{12}( \tilde{C}'^{xy}_k\zeta_k^{xy})^\top + \Rk{14}\tag{Eq. \eqref{eq:sigma_y_def_main}}
        \end{align*}
        where $\Rk{14}=\Rk{13}+\left(\beta_k-\beta_{k+1}-\frac{\beta_k^2}{\beta}\right)\Sigma^y$. Note that 
        \begin{align*}
            \beta_k-\beta_{k+1}-\frac{\beta_k^2}{\beta}&=\frac{\beta}{(k+K_0)(k+K_0+1)}-\frac{\beta}{(k+K_0)^2}\\
            &=\frac{\beta}{(k+K_0)^2(k+K_0+1)}\\
            &\leq \frac{2\beta_k^3}{\beta^2}.
        \end{align*}
        Using the above relation, we get
        \begin{align*}
            \|\Rk{14}\|\leq \bc_{41}d^2\left(\frac{\beta_k^3}{\alpha_k}+\alpha_k\beta_k^{1.5}\right)+\bc_{42}d\hbar_k\left(\frac{\beta_k^2}{\alpha_k}\zeta_k^y+\beta_k\alpha_k\sqrt{\zeta_k^y}\right).
        \end{align*}
        where 
        \begin{align*}
            \bc_{41}&=\bc_{39}+\frac{4\beta^2\sqrt{3\underbar{c}_7}}{\alpha(1-\rho)}\brc_{f}+\frac{2\alpha\sigma^y\tau_{mix}}{\beta^2}\\
            \bc_{42}&=\bc_{40}+\frac{4\beta^2\sqrt{3\underbar{c}_8}}{\alpha(1-\rho)}\brc_{f}.
        \end{align*}
        
        Define $\tilde{C}'^{y}_{k+1}$ such that $\tilde{C}'^{y}_{k+1}\zeta_{k+1}^{y} = \tC'^y_k\zeta_k^y - \beta_k\Delta ( \tC'^y_k\zeta_k^y) - \beta_k( \tC'^y_k\zeta_k^y)\Delta^\top - \beta_k( \tilde{C}'^{xy}_k\zeta_k^{xy})A_{12}^\top-\beta_k A_{12}( \tilde{C}'^{xy}_k\zeta_k^{xy})^\top + \Rk{14}$. We have
    \begin{align*}
        \|\tilde{C}'^{y}_{k+1}\|_{Q_{\Delta,\beta}} \leq &\frac{\zeta_k^y}{\zeta_{k+1}^y}\|(I-\beta_k\Delta)\tC'^y_k(I-\beta_k\Delta)^\top\|_{Q_{\Delta,\beta}} +\frac{\beta_k^2\zeta_k^y}{\zeta_{k+1}^y}\|\Delta \tC'^y_k\Delta^\top\|_{Q_{\Delta,\beta}} \\
        &+ \frac{\beta_k }{\zeta_{k+1}^y}\|( \tilde{C}'^{xy}_k\zeta_k^{xy})A_{12}^\top+ A_{12}( \tilde{C}'^{xy}_k\zeta_k^{xy})^\top\|_{Q_{\Delta,\beta}} +\frac{1 }{\zeta_{k+1}^y}\| \Rk{14}\|_{Q_{\Delta,\beta}} \\
        \leq &\frac{\zeta_k^y}{\zeta_{k+1}^y} \|(I-\beta_k\Delta)\tC'^y_k(I-\beta_k\Delta)^\top\|_{Q_{\Delta,\beta}}+\frac{\|\Delta\|^2\kappa_{Q_{\Delta, \beta}}^2\hbar_k\beta_k^2\zeta_k^y}{\zeta_{k+1}^y} + \frac{2A_{max}\kappa_{Q_{\Delta, \beta}}\hbar_k\beta_k\zeta_{k}^{xy}}{\zeta_{k+1}^y}\\
        &+\frac{\bc_{41}d^2\left(\frac{\beta_k^3}{\alpha_k}+\alpha_k\beta_k^{1.5}\right)+\bc_{42}d\hbar_k\left(\frac{\beta_k^2}{\alpha_k}\zeta_k^y+\beta_k\alpha_k\sqrt{\zeta_k^y}\right)}{\zeta_{k+1}^y}\\
        \leq &\frac{\zeta_k^y}{\zeta_{k+1}^y} \|(I-\beta_k\Delta)\tC'^y_k(I-\beta_k\Delta)^\top\|_{Q_{\Delta,\beta}} \\
        &+\frac{2A_{max}\kappa_{Q_{\Delta, \beta}}\hbar_k\beta_k\zeta_{k}^{xy}+\bc_{42}d\hbar_k\left(\frac{\beta_k^2\zeta_k^y}{\alpha_k}+\beta_k\alpha_k\sqrt{\zeta_k^y}\right)+\|\Delta\|^2\kappa_{Q_{\Delta, \beta}}^2\hbar_k\beta_k^2\zeta_k^y}{\zeta_{k+1}^y} \\
        &+ \frac{\bc_{41}d^2\left(\frac{\beta_k^3}{\alpha_k}+\alpha_k\beta_k^{1.5}\right)}{\zeta_{k+1}^y}\\
        \leq & \underbrace{\frac{\zeta_k^y}{\zeta_{k+1}^y}\|(I-\beta_k\Delta)\tC'^y_k(I-\beta_k\Delta)^\top\|_{Q_{\Delta,\beta}}}_{T_{21}} 
        + \frac{\bc_{43}\hbar_k\beta_k\zeta_k^{xy}}{\zeta_{k+1}^y} +\frac{\bc_{41}d^2\left(\frac{\beta_k^3}{\alpha_k}+\alpha_k\beta_k^{1.5}\right)}{\zeta_{k+1}^y}.
    \end{align*}
    
    where for the last inequality we used $\frac{\beta_k^2\zeta_k^y}{\alpha_k}+\beta_k\alpha_k\sqrt{\zeta_k^y}\leq \left(\frac{\beta}{\alpha}+\alpha\right) \beta_k\zeta_{k}^{xy}$, $\beta_k^2\zeta_k^y\leq \beta\beta_k\zeta_k^{xy}$, and 
    \begin{align*}
        \bc_{43}=2A_{max}\kappa_{Q_{\Delta, \beta}}+\bc_{42}\left(\frac{\beta}{\alpha}+\alpha\right)+\|\Delta\|^2\beta\kappa_{Q_{\Delta, \beta}}^2.
    \end{align*}
    Next we aim at analyzing $T_{21}$. First, note that $T_{21}\leq \frac{\zeta_k^y}{\zeta_{k+1}^y}\|I-\beta_k\Delta\|_{Q_\Delta,\beta}^2\|\tC'^y_k\|_{Q_{\Delta,\beta}}\leq \frac{\zeta_k^y}{\zeta_{k+1}^y}\|I-\beta_k\Delta\|_{Q_\Delta,\beta}^2\hbar_k$. Recall that $Q_{\Delta,\beta}$ is the solution to the following Lyapunov equation:
\begin{align*}
    \left(\Delta-\frac{\beta^{-1}}{2}I\right)^\top Q_{\Delta,\beta}+ Q_{\Delta,\beta}\left(\Delta-\frac{\beta^{-1}}{2}I\right)&=I\\
    \Rightarrow \Delta^\top Q_{\Delta,\beta}+Q_{\Delta,\beta}\Delta&=I+\beta^{-1}Q_{\Delta,\beta}.
\end{align*}

Hence,
\begin{align*}
        \|I-\beta_k \Delta\|_{Q_{\Delta,\beta}}^2&=\max_{\|x\|_{Q_{\Delta,\beta}}= 1}x^\top(I-\beta_k \Delta)^\top Q_{\Delta,\beta}(I-\beta_k \Delta)x\\
        &=\max_{\|x\|_{Q_{\Delta,\beta}}= 1} \left(x^\top Q_{\Delta,\beta}x-\beta_k x^\top(\Delta^\top Q_{\Delta,\beta}+Q_{\Delta,\beta}\Delta)x+\beta_k^2x^\top \Delta^\top Q_{\Delta,\beta}\Delta x\right)\\
        &\leq 1-\beta_k\min_{\|x\|_{Q_{\Delta,\beta}}=1}\|x\|^2-\beta_k\beta^{-1}+\beta_k^2 \max_{\|x\|_{Q_{\Delta,\beta}}=1}\|\Delta x\|_{Q_{\Delta,\beta}}^2\\
        &\leq 1-\beta_k\|Q_{\Delta,\beta}\|^{-1}-\beta_k\beta^{-1}+\beta_k^2\|\Delta\|_{Q_{\Delta,\beta}}^2.
    \end{align*}
    Let $\bar{k}_3$ to be such that 
    \begin{align}\label{eq:k_3}
        -\beta_k\|Q_{\Delta,\beta}\|^{-1}+\beta_k^2\|\Delta\|_{Q_{\Delta,\beta}}^2\leq -\frac{3\beta_k\|Q_{\Delta,\beta}\|^{-1}}{4}~~\forall k\geq \bar{k}_3
    \end{align}
    Then, for $k\geq \max\{k_1, \bar{k}_1, \bar{k}_2, \bar{k}_3\}$ we have
    \begin{align*}
        \|I-\beta_k \Delta\|_{Q_{\Delta,\beta}}^2 &\leq 1-\frac{3\beta_k\|Q_{\Delta,\beta}\|^{-1}}{4}-\beta_k\beta^{-1}.
    \end{align*}
    In the inequality above, by choosing a larger $\bar{k}_3$, instead of $-\frac{3 \beta_k \|Q_{\Delta, \beta}\|^{-1}}{4}$, we could get a tighter bound such as $-\frac{5\beta_k\|Q_{\Delta,\beta}\|^{-1}}{6}$. This is the reason why $c_0(\varrho)$ in Theorem \ref{thm:Markovian_main} might be arbitrarily large as $\varrho$ goes to zero. Hence, we have
    \begin{align*}
        T_{21} &\leq \frac{\zeta^y_k}{\zeta_{k+1}^y}\left(1-\left(\frac{3\|Q_{\Delta,\beta}\|^{-1}}{4}+\beta^{-1}\right)\beta_k\right) \hbar_k\\
        &\leq \left(1-\left(\frac{3\|Q_{\Delta,\beta}\|^{-1}}{4}+\beta^{-1}\right)\beta_k\right) \hbar_k+\frac{\zeta_k^y-\zeta_{k+1}^y}{\zeta_{k+1}^y}\left(1-\left(\frac{3\|Q_{\Delta,\beta}\|^{-1}}{4}+\beta^{-1}\right)\beta_k\right) \hbar_k.
    \end{align*}
    Furthermore, we have 
    \begin{align*}
        \frac{\zeta_k^y-\zeta_{k+1}^y}{\zeta_{k+1}^y} &= \frac{\zeta_k^y-\zeta_{k+1}^y}{\zeta_{k}^y}\frac{\zeta_k^y}{\zeta_{k+1}^y}\\
        &\leq \frac{1+q_{\Delta,\beta}\min(\xi-0.5, 1-\xi)}{k+K_0}\left(1+\frac{1}{k+K_0}\right)^{1+q_{\Delta,\beta}\min(\xi-0.5, 1-\xi)}\tag{Lemma \ref{lem:step_size_gap}}\\
        &\leq \beta_k\beta^{-1}\left(1+q_{\Delta,\beta}\min(\xi-0.5, 1-\xi)\right) \left(1+\frac{\|Q_{\Delta,\beta}\|^{-1}\beta}{4}\right),
    \end{align*}
    where in the last inequality we assumed $\bar{k}_4$ is such that 
    \begin{align}\label{eq:k_4}
        \left(1+\frac{1}{k+K_0}\right)^{1+q_{\Delta,\beta}\min(\xi-0.5, 1-\xi)}\leq \left(1+\frac{\|Q_{\Delta,\beta}\|^{-1}\beta}{4}\right)~~\forall k\geq \bar{k}_4
    \end{align}
    Hence, for $k\geq \max\{k_1, \bar{k}_1, \bar{k}_2, \bar{k}_3, \bar{k}_4\}$, we have
    \begin{align*}
        T_{21}&\leq \left(1-\left(\frac{3\|Q_{\Delta,\beta}\|^{-1}}{4}+\beta^{-1}\right)\beta_k\right)\left(1+\beta_k\beta^{-1}\left(1+q_{\Delta,\beta}\min(\xi-0.5, 1-\xi)\right) \left(1+\frac{\|Q_{\Delta,\beta}\|^{-1}\beta}{4}\right)\right) \hbar_k\\
        &\leq \left(1-\left(\frac{3\|Q_{\Delta,\beta}\|^{-1}}{4}+\beta^{-1}\right)\beta_k+\beta_k\beta^{-1}\left(1+q_{\Delta,\beta}\min(\xi-0.5, 1-\xi)\right) \left(1+\frac{\|Q_{\Delta,\beta}\|^{-1}\beta}{4}\right)\right) \hbar_k\\
        &= \left(1-\frac{3\beta_k\|Q_{\Delta,\beta}\|^{-1}}{4}+\beta_k\beta^{-1}q_{\Delta,\beta}\min(\xi-0.5, 1-\xi)\left(1+\frac{\|Q_{\Delta,\beta}\|^{-1}\beta}{4}\right)\right) \hbar_k\\
        &=\left(1-\frac{3\beta_k\|Q_{\Delta,\beta}\|^{-1}}{4}+\frac{\beta_k\|Q_{\Delta,\beta}\|^{-1}\min(\xi-0.5, 1-\xi)}{4}\right) \hbar_k\tag{$q_{\Delta,\beta} = \beta\|Q_{\Delta,\beta}\|^{-1}/\left(4+\beta\|Q_{\Delta,\beta}\|^{-1}\right)$}\\
        &\leq \left(1-\frac{3\beta_k\|Q_{\Delta,\beta}\|^{-1}}{4}+\frac{\beta_k\|Q_{\Delta,\beta}\|^{-1}}{16}\right) \hbar_k\tag{$\max_{0.5<\xi<1} \min\{0.5-\xi, 1-\xi\}=1/4$}\\
        &=\left(1-\frac{11\beta_k\|Q_{\Delta,\beta}\|^{-1}}{16}\right) \hbar_k.
    \end{align*}
    
    Combining the bounds, we get
        
    \begin{align*}
        T_{21}+&\frac{\bc_{43}\hbar_k\beta_k\zeta_k^{xy}}{\zeta_{k+1}^y}\leq \left(1-\frac{11\beta_k\|Q_{\Delta,\beta}\|^{-1}}{16}\right) \hbar_k+\frac{\bc_{43}\hbar_k\beta_k\zeta_k^{xy}}{\zeta_{k+1}^y}.
    \end{align*}
    Finally, we choose $\bar{k}_5$ large enough such that 
    \begin{align}\label{eq:k_5}
        \frac{\bc_{43}\beta_k\zeta_k^{xy}}{\zeta_{k+1}^y}\leq \frac{3\beta_k\|Q_{\Delta,\beta}\|^{-1}}{16}~~\forall k\geq \bar{k}_5
    \end{align}
    This can always be done since $\zeta_k^{xy}=o(\zeta_k^{y})$. Thus, for all $k\geq \max\{k_1, \bar{k}_1, \bar{k}_2, \bar{k}_3, \bar{k}_4, \bar{k}_5\}$, we get
    \begin{align*}
        \|\tilde{C}'^{y}_{k+1}\|_{Q_{\Delta,\beta}}\leq \left(1-\frac{\beta_k\|Q_{\Delta,\beta}\|^{-1}}{2}\right) \hbar_k+\frac{\bc_{41}d^2\left(\frac{\beta_k^3}{\alpha_k}+\alpha_k\beta_k^{1.5}\right)}{\zeta_{k+1}^y}.
    \end{align*}
    Note that $\frac{\left(\frac{\beta_k^3}{\alpha_k}+\alpha_k\beta_k^{1.5}\right)}{\zeta_{k+1}^y}\leq 4\beta_k\left(\frac{\beta^2}{\alpha}+\alpha\sqrt{\beta}\right)$. Denote $\bc^{(y)}=4\bc_{41}d^2\left(\frac{\beta^2}{\alpha}+\alpha\sqrt{\beta}\right)$. This implies

    \begin{align*}
    \|\tilde{C}'^{y}_{k+1}\|_{Q_{\Delta,\beta}}
    &\leq \Bigg(1-\beta_k\frac{\|Q_{\Delta,\beta}\|^{-1}}{2}\Bigg)\hbar_k+\bc^{(y)}\beta_k.
    \end{align*}
    Hence, we have $\|\tilde{C}'^{y}_{k+1}\|_{Q_{\Delta,\beta}}\leq \max\left\{\hbar_k,\frac{2\bc^{(y)}d^2}{\|Q_{\Delta,\beta}\|^{-1}}\right\}$. 
\end{enumerate}

Combining the above bounds, we have 
\begin{align}
    \max\{\|\tilde{C}'^{x}_{k+1}\|_{Q_{22}},\|\tilde{C}'^{xy}_{k+1}\|_{Q_{22}},\|\tilde{C}'^{y}_{k+1}\|_{Q_{\Delta,\beta}}\}\leq \max\left\{\hbar_k,\frac{2\bc^{(x)}d^2}{a_{22}},\frac{4\bc^{(z)}d^2}{a_{22}}, \frac{2\bc^{(y)}d^2}{\|Q_{\Delta,\beta}\|^{-1}}\right\}. \label{eq:C_k_iter}
\end{align}

Define $k_0=\max\{k_1, \bar{k}_1, \bar{k}_2, \bar{k}_3, \bar{k}_4, \bar{k}_5\}$, which is a finite problem dependent number, and
\begin{align*}
    \bar{c}d^2=\max\left\{\max_{0\leq k\leq k_0}\max\{\|\tilde{C}'^{y}_{k}\|_{Q_{\Delta, \beta}},\|\tilde{C}'^{xy}_{k}\|_{Q_{22}},\|\tilde{C}'^{x}_{k}\|_{Q_{22}}\},\frac{2\bc^{(x)}d^2}{a_{22}},\frac{4\bc^{(z)}d^2}{a_{22}}, \frac{2\bc^{(y)}d^2}{\|Q_{\Delta,\beta}\|^{-1}}\right\}.
\end{align*}
Note that here $\bar{c}$ is a bounded, problem dependent constant. To find an absolute bound on $\max\{\|\tilde{C}'^{y}_{k}\|_{Q_{\Delta, \beta}},\|\tilde{C}'^{xy}_{k}\|_{Q_{22}},\|\tilde{C}'^{x}_{k}\|_{Q_{22}}\}$ for $0\leq k\leq k_0$, we use Lemma \ref{lem:boundedness} as follows. Note that we have $\tilde{C}'^{y}_{k}\zeta_k^y=\tY_k'-\beta_k\Sigma^y$. Thus,
\begin{align*}
    \|\tilde{C}'^{y}_{k}\|_{Q_{\Delta, \beta}}\zeta_k^y&\leq \|\tY_k'\|_{Q_{\Delta, \beta}}+\beta_k\|\Sigma^y\|_{Q_{\Delta, \beta}}\\
    &\leq \kappa_{Q_{\Delta, \beta}}\|\tY_k'\|+\beta_k\|\Sigma^y\|_{Q_{\Delta, \beta}} \tag{Norm equivalence}\\
    &\leq \kappa_{Q_{\Delta, \beta}}\left(\|\tY_k\|+2\beta_k\|d_k^{yv}\|\right)+\beta_k\|\Sigma^y\|_{Q_{\Delta, \beta}}\\
    &\leq \kappa_{Q_{\Delta, \beta}}\left(\E[\|\ty_k\|^2]+2\beta_k\left(\frac{2\sqrt{3d}}{1-\rho}\brc_{f}\sqrt{\E[\|\ty_k\|^2]}\right)\right)+\beta_k\|\Sigma^y\|_{Q_{\Delta, \beta}}\tag{Lemma \ref{lem:tel_term_bound}}\\
    &\leq \kappa_{Q_{\Delta, \beta}}\left(\brc d+\frac{4d\sqrt{3\brc}}{1-\rho}\brc_{f}\right)+\beta_k\|\Sigma^y\|_{Q_{\Delta, \beta}}.\tag{Lemma \ref{lem:boundedness}}
\end{align*}
Note that $\beta_k/\zeta_k^y$ is an increasing function. Thus, using the above bound, we get
\begin{align*}
    \|\tilde{C}'^{y}_{k}\|_{Q_{\Delta, \beta}}\leq \frac{\kappa_{Q_{\Delta, \beta}}d}{\zeta_{k_0}^y}\left(\brc +\frac{4\sqrt{3\brc}}{1-\rho}\brc_{f}\right)+\frac{\beta_{k_0}}{\zeta_{k_0}^y}\|\Sigma^y\|_{Q_{\Delta, \beta}}~~ 0\leq k \leq k_0
\end{align*}
Using similar steps for $\tilde{C}'^{x}_{k}$, we get
\begin{align*}
    \|\tilde{C}'^{x}_{k}\|_{Q_{22}}\leq \frac{\kappa_{Q_{22}}d}{\zeta_{k_0}^x}\left(\brc +\frac{4\sqrt{3\brc}}{1-\rho}\left(1+\frac{\beta}{\alpha}\varrho_x\right)\brc_{f}\right)+\frac{\alpha_{k_0}}{\zeta_{k_0}^x}\|\Sigma^x\|_{Q_{22}}~~ 0\leq k \leq k_0
\end{align*}
Finally, for the cross term $\tilde{C}'^{xy}_{k}$, we have
\begin{align*}
    \|\tilde{C}'^{xy}_{k}\|_{Q_{22}}\zeta_k^{xy}&\leq \|\tZ_k'\|_{Q_{22}}+\beta_k\|\Sigma^{xy}\|_{Q_{22}}\\
    &\leq \kappa_{Q_{22}}\|\tZ_k'\|+\beta_k\|\Sigma^{xy}\|_{Q_{22}}\tag{Norm Equivalence}\\
    &\leq \kappa_{Q_{22}}\left(\|\tZ_k\|+\alpha_k\|d_k^{y}\|+\beta_k\|d_k^{xv}\|\right)+\beta_k\|\Sigma^{xy}\|_{Q_{22}}\\
    &\leq \kappa_{Q_{22}}\Bigg(\frac{1}{2}\left(\E[\|\tx_k\|^2]+\E[\|\ty_k\|^2]\right)+\alpha_k\frac{2\sqrt{3d}}{1-\rho}\brc_{f}\left(1+\frac{\beta}{\alpha}\varrho_x\right)\sqrt{\E[\|\ty_k\|^2]}\\
    &~~+\beta_k\frac{2\sqrt{3d}}{1-\rho}\brc_{f}\sqrt{\E[\|\tx_k\|^2]}\Bigg)+\beta_k\|\Sigma^{xy}\|_{Q_{22}}\tag{Young's inequality and Lemma \ref{lem:tel_term_bound}}\\
    &\leq \kappa_{Q_{22}}\Bigg(\frac{\brc d}{2}+\alpha_k\frac{2d\sqrt{3\brc}}{1-\rho}\brc_{f}\left(1+\frac{\beta}{\alpha}\varrho_x\right)+\beta_k\frac{2d\sqrt{3\brc}}{1-\rho}\brc_{f}\Bigg)+\beta_k\|\Sigma^{xy}\|_{Q_{22}}\tag{Lemma \ref{lem:boundedness}}
\end{align*}

Again note that $\beta_k/\zeta_k^{xy}$ and $\alpha_k/\zeta_k^{xy}$ are increasing functions of $k$. Thus, we finally get
\begin{align*}
    \|\tilde{C}'^{xy}_{k}\|_{Q_{22}}\leq \frac{\kappa_{Q_{22}}d}{\zeta_{k_0}^{xy}}\Bigg(\frac{\brc }{2}+\frac{2\sqrt{3\brc}}{1-\rho}\brc_{f}\left(\alpha_{k_0}\left(1+\frac{\beta}{\alpha}\varrho_x\right)+\beta_{k_0}\right)\Bigg)+\frac{\beta_{k_0}}{\zeta_{k_0}^{xy}}\|\Sigma^{xy}\|_{Q_{22}}~~ 0\leq k \leq k_0
\end{align*}

Then by the definition, $\max\{\|\tilde{C}'^{y}_{k_0}\|_{Q_{\Delta, \beta}},\|\tilde{C}'^{xy}_{k_0}\|_{Q_{22}},\|\tilde{C}'^{x}_{k_0}\|_{Q_{22}}\}\leq \bar{c}d^2$. 
Now suppose at time $k\geq k_0$, we have \newline $\max\{\|\tilde{C}'^{y}_{k}\|_{Q_{\Delta,, \beta}},\|\tilde{C}'^{xy}_{k}\|_{Q_{22}},\|\tilde{C}'^{x}_{k}\|_{Q_{22}}\}=\hbar_k\leq \bar{c}d^2$. 
Then, by \eqref{eq:C_k_iter}, we have

\begin{align*}
    \max\{\|\tilde{C}'^{x}_{k+1}\|_{Q_{22}},\|\tilde{C}'^{xy}_{k+1}\|_{Q_{22}},\|\tilde{C}'^{y}_{k+1}\|_{Q_{\Delta,\beta}}\}&\leq \max\left\{\hbar_k, \frac{2\bc^{(x)}d^2}{a_{22}},\frac{4\bc^{(z)}d^2}{a_{22}}, \frac{2\bc^{(y)}d^2}{\|Q_{\Delta,\beta}\|^{-1}}\right\}\\
    &\leq \max\left\{\bar{c}d^2, \frac{2\bc^{(x)}d^2}{a_{22}},\frac{4\bc^{(z)}d^2}{a_{22}}, \frac{2\bc^{(y)}d^2}{\|Q_{\Delta,\beta}\|^{-1}}\right\}\leq  \bar{c}d^2. 
\end{align*}
Hence, by induction, $\max\{\|\tilde{C}'^{x}_{k}\|_{Q_{22}},,\|\tilde{C}'^{xy}_{k}\|_{Q_{22}},\|\tilde{C}'^{y}_{k}\|_{Q_{\Delta,\beta}}\}\leq \bar{c}d^2$  for all $k\geq 0$.

\end{proof}

\subsection{Auxiliary lemmas}\label{sec:Aux_lem}
Since we employ induction to prove our main lemmas, there are two categories of auxiliary lemmas that enable us to achieve this. The first category consists of lemmas that are true irrespective of the hypothesis considered true in the induction, while the second category consists of lemmas that are a consequence of the hypothesis in the induction. For better exposition, we divide this section into these two categories.

\subsubsection{Induction independent lemmas}\label{sec:Aux_lem_ind}
\begin{lemma}\label{lem:L_k_bound}
    Consider the recursion of the matrix $L_k$ in \eqref{eq:L_k12} and \eqref{eq:L_k22}. Then $\forall k\geq 0$, we have 
    \begin{align*}
        \|L_{k}\|\leq& \kappa_{Q_{22}}
    \end{align*}
    Furthermore, define $c_L=\max\{\frac{2c_{D}}{a_{22}}, \left(\|L_{k_1-1}\|_{Q_{22}}+c_{D}\beta_{k_1-1}\right)\frac{\alpha_{k_1}}{\beta_{k_1}}\}$. Then $\forall k\geq k_1$, we have
    \begin{align*}
        \|L_k\|\leq& c^L_1\frac{\beta_k}{\alpha_k},\\
        \|L_{k+1}-L_k\|\leq&c^L_2\alpha_k.
    \end{align*}
    where $c^L_1=c_L\kappa_{Q_{22}}$ and $c^L_2=2\max\{\|A_{22}\|_{Q_{22}}, c_D\}\kappa_{Q_{22}}$.
\end{lemma}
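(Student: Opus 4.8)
The plan is to prove all three bounds by induction in the $Q_{22}$-weighted norm and then translate to the Euclidean norm via the equivalence $\|M\| \le \kappa_{Q_{22}}\|M\|_{Q_{22}}$. Writing $\ell_k = \|L_k\|_{Q_{22}}$, I would work from the rearranged recursion $L_{k+1} = \big((I-\alpha_k A_{22})L_k + \beta_k A_{22}^{-1}A_{21}B_{11}^k\big)(I-\beta_k B_{11}^k)^{-1}$, using that $I \succ \beta_k B_{11}^k$ for $k \ge k_L$ so the inverse exists and $\|(I-\beta_k B_{11}^k)^{-1}\|_{Q_{22}} \le 1 + 2\beta_k\|B_{11}^k\|_{Q_{22}}$ by a Neumann-series estimate. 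The workhorse is the contraction bound $\|I - \alpha_k A_{22}\|_{Q_{22}} \le 1 - a_{22}\alpha_k$ (valid for $k \ge k_C$ via Lemma \ref{lem:contraction_prop}), which follows from the Lyapunov identity $A_{22}^\top Q_{22} + Q_{22}A_{22} = I$ and $a_{22} = 1/(2\|Q_{22}\|)$.

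First I would establish the uniform bound by showing $\ell_k \le 1$ for all $k$. Since $L_k = 0$ for $k \le k_L$, the base case is immediate. For the inductive step, combining the contraction with the Neumann estimate gives $\ell_{k+1} \le \big((1-a_{22}\alpha_k)\ell_k + c_D\beta_k\big)(1 + c\beta_k)$, where $c_D$ uniformly bounds $\|A_{22}^{-1}A_{21}B_{11}^k\|_{Q_{22}}$, finite because $\|B_{11}^k\|_{Q_{22}} \le \|\Delta\|_{Q_{22}} + \|A_{12}\|_{Q_{22}}\ell_k \le \|\Delta\|_{Q_{22}} + \|A_{12}\|_{Q_{22}}$ under the hypothesis $\ell_k \le 1$. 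Because $\beta_k = o(\alpha_k)$ when $\xi<1$, the forcing term and the $(1+c\beta_k)$ factor are dominated by $a_{22}\alpha_k$ once $k \ge k_L$, so $\ell_{k+1} \le 1 - \tfrac{a_{22}\alpha_k}{2} \le 1$, closing the induction; then $\|L_k\| \le \kappa_{Q_{22}}\ell_k \le \kappa_{Q_{22}}$.

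Next, for $k \ge k_1$ I would sharpen this to $\ell_k \le c_L \beta_k/\alpha_k$ by a second induction, which is the crux of the argument. The base case $\ell_{k_1} \le \ell_{k_1-1} + c_D\beta_{k_1-1} \le c_L\beta_{k_1}/\alpha_{k_1}$ is exactly the second branch in the definition of $c_L$. For the inductive step, the recursion gives $\ell_{k+1} \le (1-a_{22}\alpha_k)c_L\beta_k/\alpha_k + c_D\beta_k$ up to lower-order $(1+c\beta_k)$ corrections, and the target $\ell_{k+1}\le c_L\beta_{k+1}/\alpha_{k+1}$ reduces, after dividing by $\beta_k$ and controlling the ratio difference $\beta_k/\alpha_k - \beta_{k+1}/\alpha_{k+1}$ via Lemma \ref{lem:step_size_gap}, to $a_{22}c_L \ge c_D + c_L\frac{1-\xi}{\alpha(k+K_0)^{1-\xi}}$. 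The defining property of $k_1$, namely $\frac{1-\xi}{\alpha(k+K_0)^{1-\xi}} \le a_{22}/2$, makes the right side at most $c_D + \tfrac{a_{22}c_L}{2}$, so the inequality holds precisely because $c_L \ge 2c_D/a_{22}$. Norm conversion then yields $\|L_k\| \le \kappa_{Q_{22}}c_L\beta_k/\alpha_k = c_1^L\beta_k/\alpha_k$. I expect this step to be the main obstacle: matching the one-step decay of $\beta_k/\alpha_k$ against the contraction rate $a_{22}\alpha_k$ tightly enough that $c_L$ closes is exactly what forces both the threshold $k_1$ and the slack factor $2$ in $c_L = 2c_D/a_{22}$.

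Finally, for the increment bound I would rearrange the recursion into $L_{k+1}-L_k = \beta_k L_{k+1}B_{11}^k - \alpha_k A_{22}L_k + \beta_k A_{22}^{-1}A_{21}B_{11}^k$ and bound the three terms separately. The middle term is dominant, with $\|\alpha_k A_{22}L_k\| \le \kappa_{Q_{22}}\|A_{22}\|_{Q_{22}}\alpha_k$ from part one; the last term is $O(\beta_k) \le c_D\kappa_{Q_{22}}\alpha_k$ since $\beta_k \le \alpha_k$ for large $k$; and the first term is $O(\beta_k^2/\alpha_k) = o(\alpha_k)$ using the sharp estimate $\|L_{k+1}\| = O(\beta_{k+1}/\alpha_{k+1})$ from part two together with $(\beta_k/\alpha_k)^2 \to 0$. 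Collecting these gives $\|L_{k+1}-L_k\| \le 2\max\{\|A_{22}\|_{Q_{22}},c_D\}\kappa_{Q_{22}}\alpha_k = c_2^L\alpha_k$, the factor $2$ absorbing the negligible first term.
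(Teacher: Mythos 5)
Your proposal is correct and follows essentially the same route as the paper: the paper likewise rewrites the recursion as $L_{k+1}=(I-\alpha_k A_{22})L_k+\beta_k D(L_k)$ (delegating the Neumann-type control of $(I-\beta_k B_{11}^k)^{-1}$ and the invariance $\|L_k\|_{Q_{22}}\le 1$ to Lemma \ref{lem:L_k_abs_bound}), runs the identical induction for $\|L_k\|_{Q_{22}}\le c_L\beta_k/\alpha_k$ with the same base case from the second branch of $c_L$, the same reduction via Lemma \ref{lem:step_size_gap}, and the same use of $k_1$ and $c_L\ge 2c_D/a_{22}$, and then converts norms by $\kappa_{Q_{22}}$. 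Your three-term decomposition of $L_{k+1}-L_k$ is only a cosmetic variant of the paper's $-\alpha_k A_{22}L_k+\beta_k D(L_k)$ form and yields the same constant $c_2^L$.
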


\begin{lemma}\label{lem:f_hat_bound}
    Consider $f_i(o,x_k,y_k)$ and $\hat{f}_i(o,x_k,y_k)$ as the solution of \eqref{eq:poisson_eq_f_i} for $i=1,2$. We have the following
    \begin{enumerate}
        \item $\|\hat{f}_i(o,x_k,y_k)\| \leq \frac{2}{1-\rho}\left(b_{max} \sqrt{d}+ A_{max}(\left\| y_k\right\| + \left\| x_k\right\|)\right)\leq \frac{2}{1-\rho}\left[b_{max} \sqrt{d}+ \check{h}_1\left(\left\| \xh_k\right\| +  \|\yh_k\|\right) \right]$
        \item $\|\fh_i(o,x_{k+1},y_{k+1})-\fh_i(o,x_k,y_k)\|\leq \check{h}_2( \|x_{k+1}-x_k\| + \|y_{k+1}-y_k\|)$
        \item
        \begin{enumerate}[label=(\roman*)]
            \item $\|\hat{f}_i(o,x_k,y_k)\|_{Q_{22}}\leq  \frac{2}{1-\rho}\left(b_{max}\sqrt{\gamma_{max}(Q_{22})} \sqrt{d}+ \frac{\check{h}_3}{2}\left(\left\| \xh_k\right\|_{Q_{22}} +  \|\yh_k\|_{Q_{\Delta}}\right) \right)$
            \item $\|\hat{f}_i(o,x_k,y_k)\|_{Q_{\Delta}} \leq  \frac{2}{1-\rho}\left(b_{max}\sqrt{\gamma_{max}(Q_{\Delta})} \sqrt{d}+ \frac{\check{h}_4}{2}\left(\left\| \xh_k\right\|_{Q_{22}} +  \|\yh_k\|_{Q_{\Delta}}\right) \right)$
        \end{enumerate}
        \item $\|f_i(o,x_k,y_k) \| \leq b_{max}\sqrt{d} + 2A_{max}(\|x_k\|+\|y_k\|)$.
        
        \item
        \begin{enumerate}[label=(\roman*)]
            \item $\|f_i(o,x_k,y_k) \|_{Q_{22}}\leq \sqrt{\gamma_{max}(Q_{22})}\|f_i(o,x_k,y_k) \| \leq \sqrt{\gamma_{max}(Q_{22})}( b_{max}\sqrt{d} + 2A_{max}(\|x_k\|+\|y_k\|))  \leq \sqrt{\gamma_{max}(Q_{22})} b_{max}\sqrt{d} +\check{h}_3 (\|\xh_k\|_{Q_{22}} +\|\yh_k\|_{Q_{\Delta}})$
            \item $\|f_i(o,x_k,y_k) \|_{Q_{\Delta}} \leq \sqrt{\gamma_{max}(Q_{\Delta})} \|f_i(o,x_k,y_k) \| \leq \sqrt{\gamma_{max}(Q_{\Delta})}( b_{max} \sqrt{d} + 2A_{max}(\|x_k\|+\|y_k\|))  \leq \sqrt{\gamma_{max}(Q_{\Delta})} b_{max}\sqrt{d} +\check{h}_4 (\|\xh_k\|_{Q_{22}} +\|\yh_k\|_{Q_{\Delta}})$
        \end{enumerate}
        \item $\|A_{22}^{-1}A_{21}\left(-(\Delta\yh_k +A_{12}\xh_k)+f_1(O_k,x_k,y_k)\right)\|_{Q_{22}}\leq \check{h}_6\sqrt{d} +\check{h}_5 (\|\xh_k\|_{Q_{22}} +\|\yh_k\|_{Q_{\Delta}})$
    \end{enumerate}
    where $\check{h}_1 = A_{max}(1+\|A_{22}^{-1}A_{21}\|)$, 
    $\check{h}_2=\frac{2}{1-\rho}A_{max}$, $\check{h}_3=2\check{h}_1 \max\left\{\kappa_{Q_{22}}, \sqrt{\frac{\gamma_{max}(Q_{22})}{\gamma_{min}(Q_{\Delta})}}\right\}$, $\check{h}_4=2\check{h}_1 \max\left\{\kappa_{Q_{\Delta}}, \sqrt{\frac{\gamma_{max}(Q_{\Delta})}{\gamma_{min}(Q_{22})}}\right\}$, $\check{h}_5=\|A_{22}^{-1}A_{21}\|_{Q_{22}}\left(\check{h}_3+\max\left\{\frac{\sqrt{\gamma_{max} (Q_{22})}}{\sqrt{\gamma_{min}(Q_{\Delta})}}\|\Delta\|_{Q_{22}}, \|A_{12}\|_{Q_{22}}\right\}\right)$ and $\check{h}_6=\|A_{22}^{-1}A_{21}\|_{Q_{22}}\sqrt{\gamma_{max}(Q_{22})} b_{max}$.
\end{lemma}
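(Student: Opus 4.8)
The plan is to exploit the explicit series representation of the Poisson solution and reduce every bound to the geometric mixing estimate $d_{TV}(P^k(\cdot\mid o)\|\mu)\le\rho^k$ together with norm-equivalence conversions.

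First I would put $\fh_i$ in closed form. By Lemma~\ref{lem:possion_sol} (applied exactly as in the proof of Proposition~\ref{prop:alternate_pe}), the Poisson solution is $\fh_i(o,x,y)=\sum_{k=0}^\infty \E[f_i(O_k,x,y)\mid O_0=o]$, and since $f_i(o,x,y)=b_i(o)-(A_{i1}(o)-A_{i1})y-(A_{i2}(o)-A_{i2})x$ is affine in $(x,y)$, linearity of the Poisson operator gives $\fh_i(o,x,y)=C_i(o)-C_{i1}(o)y-C_{i2}(o)x$ with $C_i,C_{ij}$ as in Definition~\ref{def_list} (here I use the standing assumption $b_i=0$, so that $o\mapsto b_i(o)$ is $\mu$-mean-zero and the series converges). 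The decisive estimate is that for any $\mu$-mean-zero $g$, $\bigl\|\sum_{o'}(P^k(o'\mid o)-\mu(o'))g(o')\bigr\|\le 2\rho^k\max_{o'}\|g(o')\|$; summing the geometric series yields $\|C_i(o)\|\le \tfrac{2}{1-\rho}b_{max}\sqrt d$ and $\|C_{ij}(o)\|\le \tfrac{2}{1-\rho}A_{max}$. Crucially, keeping the centered increments $P^k-\mu$ produces the factor $A_{max}$ rather than $2A_{max}$, which is precisely what Part~1 demands; summing the three pieces gives the first inequality of Part~1, and Part~2 follows immediately from affineness, whose increment is $-C_{i1}(o)(y_{k+1}-y_k)-C_{i2}(o)(x_{k+1}-x_k)$, giving $\check{h}_2=\tfrac{2}{1-\rho}A_{max}$.

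Next I would handle the change of variables and the weighted norms, which is pure bookkeeping. Using $y=\yh$ and $x=\xh-A_{22}^{-1}A_{21}\yh$ gives $A_{max}(\|x\|+\|y\|)\le \check{h}_1(\|\xh\|+\|\yh\|)$ with $\check{h}_1=A_{max}(1+\|A_{22}^{-1}A_{21}\|)$, closing the second inequality of Part~1; the same step with the crude bound $\|A_{ij}(o)-A_{ij}\|\le 2A_{max}$ gives Part~4 directly. For Parts~3 and~5 I pass to $Q$-weighted norms via $\|v\|_Q\le\sqrt{\gamma_{max}(Q)}\,\|v\|$ on the $b_{max}$ term and $\|\xh\|\le\|\xh\|_{Q_{22}}/\sqrt{\gamma_{min}(Q_{22})}$, $\|\yh\|\le\|\yh\|_{Q_{\Delta}}/\sqrt{\gamma_{min}(Q_{\Delta})}$ on the iterate terms; the resulting mixed ratios $\kappa_{Q_{22}}$ and $\sqrt{\gamma_{max}(Q_{22})/\gamma_{min}(Q_{\Delta})}$ (resp.\ their $Q_{\Delta}$ analogues) are absorbed by the maxima defining $\check{h}_3$ and $\check{h}_4$.

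Finally, Part~6 is the only term not following verbatim. I would expand $-(\Delta\yh_k+A_{12}\xh_k)+f_1$ inside the $Q_{22}$-norm and bound termwise: $\|A_{12}\xh_k\|_{Q_{22}}\le\|A_{12}\|_{Q_{22}}\|\xh_k\|_{Q_{22}}$, then $\|\Delta\yh_k\|_{Q_{22}}\le\|\Delta\|_{Q_{22}}\|\yh_k\|_{Q_{22}}\le \tfrac{\sqrt{\gamma_{max}(Q_{22})}}{\sqrt{\gamma_{min}(Q_{\Delta})}}\|\Delta\|_{Q_{22}}\|\yh_k\|_{Q_{\Delta}}$, and $\|f_1\|_{Q_{22}}$ from Part~5(i); multiplying by $\|A_{22}^{-1}A_{21}\|_{Q_{22}}$ and grouping the $\|\xh_k\|_{Q_{22}}$ and $\|\yh_k\|_{Q_{\Delta}}$ coefficients reproduces exactly $\check{h}_5$ and $\check{h}_6$. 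I expect the main, though purely technical, obstacle to be the consistent routing of these mixed-norm conversions in Parts~3, 5, and 6, so that every stray factor of $\kappa_{Q_{22}}$, $\kappa_{Q_{\Delta}}$, or $\sqrt{\gamma_{max}/\gamma_{min}}$ lands inside the intended $\max$ in the definitions of $\check{h}_3,\dots,\check{h}_6$; all genuine analytic content is carried by the single geometric-mixing estimate above.
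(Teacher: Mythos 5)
Your proposal is correct and follows essentially the same route as the paper's proof: the explicit series representation of $\fh_i$ from Lemma \ref{lem:possion_sol}, the geometric-mixing bound (the paper's Lemma \ref{lem:mix_time_sum}) giving the $\tfrac{2}{1-\rho}$ factors with coefficient $A_{max}$ for the centered sums, the substitution $x_k=\xh_k-A_{22}^{-1}A_{21}\yh_k$ to produce $\check{h}_1$, norm-equivalence conversions for the $Q_{22}$ and $Q_{\Delta}$ weighted bounds, and the same termwise expansion for Part~6 yielding $\check{h}_5$ and $\check{h}_6$. No gaps.
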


\begin{lemma}\label{lem:crdue_upper_bnd_norm} Consider the update of the variables in \eqref{eq:two_time_scale}. Then, we have
\begin{enumerate}
    \item $\|\xh_{k+1}\|^2_{Q_{22}} \leq (1 +\alpha_k\hh_1^{xx}) \|\xh_k\|^2_{Q_{22}} +\hh_2^{xx} \alpha_k(d+\|\yh_k\|_{Q_{\Delta}}^2)$.
    \item $\|\yh_{k+1}\|_{Q_{\Delta}}^2 \leq (1+ \beta_k \hh_1^{yy})\|\yh_k \|_{Q_{\Delta}}^2 +\hh_2^{yy}\beta_k(d +  \|\xh_k\|_{Q_{22}}^2)$.
    \item $U_{k+1}\leq (1+\alpha_k(\hat{h}_1+\hh_2)) U_k + \alpha_k \hat{h}_2d$.
    \item $\|x_{k+1}-x_k\| \leq \alpha_k\hh_3(\sqrt{d}+\|\xh_k\|_{Q_{22}} +\|\yh_k\|_{Q_{\Delta}})$.
    \item $|y_{k+1}-y_k\| \leq \beta_k \hh_4(\sqrt{d}+\|\xh_k\|_{Q_{22}} +\|\yh_k\|_{Q_{\Delta}})$.
\end{enumerate}
for some problem dependent constants. The exact expression for the constants are given in the proof of this lemma.

\end{lemma}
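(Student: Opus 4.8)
The plan is to first rewrite the updates \eqref{eq:two_time_scale} in terms of the shifted iterates $\xh_k=x_k+A_{22}^{-1}A_{21}y_k$ and $\yh_k=y_k$, exploiting the algebraic identities $A_{21}y_k+A_{22}x_k=A_{22}\xh_k$ and $A_{11}y_k+A_{12}x_k=\Delta\yh_k+A_{12}\xh_k$. Substituting these into \eqref{eq:two_time_scale} (with $b_1=b_2=0$) yields the one-step recursions
\begin{align*}
\xh_{k+1}&=(I-\alpha_k A_{22})\xh_k+\alpha_k f_2(O_k,x_k,y_k)+\beta_k A_{22}^{-1}A_{21}\big(-(\Delta\yh_k+A_{12}\xh_k)+f_1(O_k,x_k,y_k)\big),\\
\yh_{k+1}&=(I-\beta_k\Delta)\yh_k-\beta_k A_{12}\xh_k+\beta_k f_1(O_k,x_k,y_k).
\end{align*}
Because the state space is finite, Lemma \ref{lem:f_hat_bound} bounds $\|f_i\|$, $\|f_i\|_{Q_{22}}$ and $\|f_i\|_{Q_{\Delta}}$ \emph{pathwise} by deterministic linear-growth functions of $(\|\xh_k\|_{Q_{22}},\|\yh_k\|_{Q_{\Delta}})$, so all the increment bounds below hold on every sample path, and only Part 3 requires taking an expectation.

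Parts 4 and 5 are the most direct. From the recursions, $x_{k+1}-x_k=\alpha_k(-A_{22}\xh_k+f_2)$ and $y_{k+1}-y_k=\beta_k\big(-(\Delta\yh_k+A_{12}\xh_k)+f_1\big)$. Bounding each bracket with the triangle inequality and substituting the linear-growth bounds for $\|f_i\|$ from Lemma \ref{lem:f_hat_bound} (converting $\|x_k\|,\|y_k\|$ to the weighted norms of $\xh_k,\yh_k$ via norm equivalence and $x_k=\xh_k-A_{22}^{-1}A_{21}\yh_k$) gives the claimed forms with explicit $\hh_3,\hh_4$. Parts 1 and 2 follow by expanding the weighted square norms. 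Writing $\xh_{k+1}=(I-\alpha_k A_{22})\xh_k+r_k$ with $r_k$ the $\alpha_k$- and $\beta_k$-scaled remainder, I expand $\|\xh_{k+1}\|_{Q_{22}}^2=\|(I-\alpha_k A_{22})\xh_k\|_{Q_{22}}^2+2\langle(I-\alpha_k A_{22})\xh_k,r_k\rangle_{Q_{22}}+\|r_k\|_{Q_{22}}^2$. The contraction estimate (Lemma \ref{lem:contraction_prop}, itself a consequence of the Lyapunov equation $I=A_{22}^\top Q_{22}+Q_{22}A_{22}$) controls the first term by $(1-\alpha_k a_{22})\|\xh_k\|_{Q_{22}}^2$, while Lemma \ref{lem:f_hat_bound} bounds $\|r_k\|_{Q_{22}}$ by $O(\alpha_k)(\sqrt{d}+\|\xh_k\|_{Q_{22}}+\|\yh_k\|_{Q_{\Delta}})$, using $\beta_k\le(\beta/\alpha)\alpha_k$ from Assumption \ref{ass:step_size_main} to absorb the $\beta_k$-scaled pieces. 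Applying Young's inequality to every cross term splits each product $\|\xh_k\|_{Q_{22}}\cdot(\cdots)$ into a multiple of $\alpha_k\|\xh_k\|_{Q_{22}}^2$ and a multiple of $\alpha_k(d+\|\yh_k\|_{Q_{\Delta}}^2)$; collecting coefficients gives Part 1, and Part 2 is entirely symmetric using the $Q_{\Delta}$-contraction of $I-\beta_k\Delta$.

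Part 3 is then obtained by taking expectations in Parts 1 and 2 and adding: since $\beta_k\le(\beta/\alpha)\alpha_k$, the $\beta_k$-scaled cross-contributions (the $\hh_2^{yy}\beta_k\|\xh_k\|_{Q_{22}}^2$ term from Part 2 and the $\hh_2^{xx}\alpha_k\|\yh_k\|_{Q_{\Delta}}^2$ term from Part 1) are each bounded by a constant times $\alpha_k U_k$, so the sum closes into $U_{k+1}\le(1+\alpha_k(\hat{h}_1+\hh_2))U_k+\alpha_k\hat{h}_2 d$, where $U_k=\E[\|\xh_k\|_{Q_{22}}^2+\|\yh_k\|_{Q_{\Delta}}^2]$.

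I expect the main obstacle to be the careful bookkeeping in Parts 1--2: one must ensure that the coefficient multiplying $\|\xh_k\|_{Q_{22}}^2$ stays $1+O(\alpha_k)$ while every other contribution is pushed into the $O(\alpha_k)(d+\|\yh_k\|_{Q_{\Delta}}^2)$ remainder. The delicate points are (i) that the $\beta_k$-scaled term carrying $\|\xh_k\|$ inside $r_k$ does not inflate the leading coefficient beyond $O(\alpha_k)$ --- this is precisely where boundedness of $\beta_k/\alpha_k$ enters --- and (ii) that the quadratic noise term $\|r_k\|_{Q_{22}}^2$, which a priori contributes $\alpha_k^2\|\xh_k\|_{Q_{22}}^2$, is of strictly lower order and folds harmlessly into the $O(\alpha_k)$ term. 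Everything else reduces to routine triangle- and Young-inequality manipulations together with the linear-growth bounds already established in Lemma \ref{lem:f_hat_bound}.
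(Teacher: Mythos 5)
Your overall skeleton matches the paper's: rewrite the updates in the hatted variables using $A_{21}y_k+A_{22}x_k=A_{22}\xh_k$ and $A_{11}y_k+A_{12}x_k=\Delta\yh_k+A_{12}\xh_k$, invoke the pathwise linear-growth bounds of Lemma \ref{lem:f_hat_bound}, use $\beta_k\le(\beta/\alpha)\alpha_k$ to absorb the $\beta_k$-scaled pieces, and close Parts 1--2 with Young's inequality; Parts 3--5 are handled exactly as in the paper. Whether you expand the square directly (your route) or first bound $\|\xh_{k+1}\|_{Q_{22}}$ and then square (the paper's route) is immaterial.

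There is, however, one genuine gap: your control of the leading term relies on Lemma \ref{lem:contraction_prop}, which gives $\|I-\alpha_k A_{22}\|_{Q_{22}}^2\le 1-a_{22}\alpha_k$ \emph{only} under the step-size restriction $\alpha_k\le \frac{1}{2\|Q_{22}\|\|A_{22}\|_{Q_{22}}^2}$ (and the symmetric restriction on $\beta_k$ for $I-\beta_k\Delta$ in Part 2). Under Assumption \ref{ass:step_size_main} the constants $\alpha$, $\beta$, $K_0$ are arbitrary, so these restrictions can fail for small $k$; yet the present lemma must hold for \emph{all} $k\ge 0$, because it is precisely the ingredient fed into Lemma \ref{lem:crude_Uk_bnd} and into the bound on $U_{k_2}$ inside the proof of Lemma \ref{lem:boundedness}, i.e., it is needed during the burn-in iterations before the step sizes become small. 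The paper sidesteps this by never using contraction here: it uses the unconditional expansion bound $\|I-\alpha_k A_{22}\|_{Q_{22}}\le 1+\alpha_k\|A_{22}\|_{Q_{22}}$ (resp. $\|I-\beta_k\Delta\|_{Q_{\Delta}}\le 1+\beta_k\|\Delta\|_{Q_{\Delta}}$), which suffices since the claimed inequality only carries an expansion factor $1+\alpha_k\hh_1^{xx}$ anyway. The repair to your argument is immediate---replace the contraction estimate by this expansion bound, at the cost of a slightly larger constant in front of $\|\xh_k\|_{Q_{22}}^2$---but as written, the invocation of Lemma \ref{lem:contraction_prop} without verifying its precondition makes the proof invalid exactly in the regime where the lemma is actually used.
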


\begin{lemma}\label{lem:crude_Uk_bnd}
For all $k\geq 0$, we have
\begin{align*}
        U_{k}\leq& U_0\exp \left(\frac{(\hat{h}_1 +\hh_2)\alpha}{K_0^\xi}+\frac{(\hat{h}_1 +\hh_2)\alpha}{(1-\xi)}\left[(k+K_0)^{1-\xi}-K_0^{1-\xi}\right]\right) \\
        &+\alpha\hat{h}_2 d\left(\frac{1}{K_0^\xi}+\frac{1}{(\hat{h}_1 +\hh_2)\alpha}\right)\exp\left(\frac{(\hat{h}_1 +\hh_2)\alpha}{(1-\xi)}\left((k+K_0)^{1-\xi}-K_0^{1-\xi}\right)\right)
    \end{align*}
    for some problem dependent constants. The exact expression for the constants are given in the proof of this lemma.     
\end{lemma}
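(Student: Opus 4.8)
The plan is to start from the one-step recursion established in Lemma~\ref{lem:crdue_upper_bnd_norm}(3), namely $U_{k+1}\leq (1+c\alpha_k)U_k+\hat{h}_2 d\,\alpha_k$ where I abbreviate $c=\hat{h}_1+\hh_2$, and simply unroll this affine recursion. Iterating it from $0$ to $k$ (with the convention that the empty product equals $1$) gives
\begin{align*}
U_k\leq U_0\prod_{j=0}^{k-1}(1+c\alpha_j)+\hat{h}_2 d\sum_{i=0}^{k-1}\alpha_i\prod_{j=i+1}^{k-1}(1+c\alpha_j),
\end{align*}
so the entire task reduces to controlling these two products. Throughout I would use $1+x\leq e^{x}$ together with an integral comparison for the decreasing function $t\mapsto (t+K_0)^{-\xi}$, which for $\xi<1$ yields $\sum_{j=a}^{b}(j+K_0)^{-\xi}\leq (a+K_0)^{-\xi}+\frac{1}{1-\xi}\big[(b+K_0)^{1-\xi}-(a+K_0)^{1-\xi}\big]$ and hence bounds every partial sum $\sum\alpha_j$ that appears.

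For the first (homogeneous) term I would write $\prod_{j=0}^{k-1}(1+c\alpha_j)\leq \exp\!\big(c\sum_{j=0}^{k-1}\alpha_j\big)$ and apply the integral comparison with the sum \emph{starting at} $j=0$. Because the sum includes $j=0$, it retains the boundary contribution $\alpha_0=\alpha/K_0^{\xi}$, which produces exactly the $\tfrac{c\alpha}{K_0^{\xi}}$ summand inside the first exponential of the claim, while the integral produces $\tfrac{c\alpha}{1-\xi}\big[(k+K_0)^{1-\xi}-K_0^{1-\xi}\big]$ after over-estimating $(k-1+K_0)^{1-\xi}$ by $(k+K_0)^{1-\xi}$. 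This matches the first term verbatim.

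For the second (forcing) term the crucial step is the telescoping identity $\sum_{i=0}^{k-1}\alpha_i\prod_{j=i+1}^{k-1}(1+c\alpha_j)=\tfrac{1}{c}\big[\prod_{j=0}^{k-1}(1+c\alpha_j)-1\big]$, obtained by substituting $c\alpha_i=(1+c\alpha_i)-1$ and collapsing the resulting sum. Dropping the $-1$ and factoring $\prod_{j=0}^{k-1}(1+c\alpha_j)=(1+c\alpha_0)\prod_{j=1}^{k-1}(1+c\alpha_j)$, I would bound the remaining product, which now sums from $j=1$ so that the $K_0^{-\xi}$ boundary term is \emph{absent}, by $\exp\!\big(\tfrac{c\alpha}{1-\xi}[(k+K_0)^{1-\xi}-K_0^{1-\xi}]\big)$, and use $1+c\alpha_0=1+c\alpha/K_0^{\xi}$. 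This gives the prefactor $\tfrac{1}{c}(1+c\alpha/K_0^{\xi})=\alpha\big(\tfrac{1}{K_0^{\xi}}+\tfrac{1}{c\alpha}\big)$ multiplying the same exponential, reproducing the second term exactly.

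The integral estimates themselves are routine; the one place demanding care—and the main obstacle—is producing the \emph{two different} exponents cleanly, i.e.\ explaining why the homogeneous term carries the extra $\tfrac{c\alpha}{K_0^{\xi}}$ inside its exponential while the forcing term does not. This is precisely what the telescoping step buys: it lets the boundary factor $(1+c\alpha_0)$ be pulled out as a multiplicative constant rather than being folded into the exponent. Keeping this distinction straight, and consistently over-estimating $(k-1+K_0)^{1-\xi}$ by $(k+K_0)^{1-\xi}$ so the two exponents align, is what makes the final estimate match the stated closed form rather than a strictly looser one.
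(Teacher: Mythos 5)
Your proof is correct and arrives at exactly the stated bound, but it handles the forcing term by a genuinely different device than the paper. Both proofs start from Lemma \ref{lem:crdue_upper_bnd_norm}(3), unroll the affine recursion, and treat the homogeneous term identically (via $1+x\le e^x$ plus an integral comparison that keeps the $j=0$ boundary contribution $\alpha/K_0^\xi$ inside the exponent). For the forcing term, however, the paper first bounds each product $\prod_{j=i+1}^{k-1}(1+c\alpha_j)$ by $\exp\bigl(c\sum_{j=i+1}^{k-1}\alpha_j\bigr)$ and is then left with the weighted sum $\sum_{i=0}^{k-1}\alpha_i\exp\bigl(-\tfrac{c\alpha}{1-\xi}(i+K_0)^{1-\xi}\bigr)$, which it controls by a second integral comparison, exploiting that $(x+K_0)^{-\xi}\exp\bigl(-\tfrac{c\alpha}{1-\xi}(x+K_0)^{1-\xi}\bigr)$ has an exact antiderivative, with the $i=0$ term split off separately to produce the $1/K_0^\xi$ in the prefactor. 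You instead collapse the sum exactly, before any analytic estimate, via the telescoping identity $\sum_{i=0}^{k-1}c\alpha_i\prod_{j=i+1}^{k-1}(1+c\alpha_j)=\prod_{j=0}^{k-1}(1+c\alpha_j)-1$ (which is valid: write $c\alpha_i=(1+c\alpha_i)-1$ and telescope), pull out the factor $1+c\alpha_0$, and only then exponentiate the single remaining product. This is shorter, purely algebraic, avoids recognizing an antiderivative, and makes transparent why $1/K_0^\xi$ enters multiplicatively rather than in the exponent; it even reproduces the paper's constant exactly, since $\tfrac{1}{c}(1+c\alpha_0)=\alpha\bigl(\tfrac{1}{K_0^\xi}+\tfrac{1}{c\alpha}\bigr)$ with $c=\hat{h}_1+\hh_2$. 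The trade-off is generality: the telescoping collapse requires the forcing term to be proportional to the same $\alpha_i$ that appears in the factor $1+c\alpha_i$, whereas the paper's integral-comparison machinery extends to mismatched forcing terms (e.g.\ the $\alpha_k^2$ and $\beta_k^2/\alpha_k$ terms arising in the proof of Lemma \ref{lem:boundedness}), which is presumably why the authors use it here for uniformity.
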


\begin{lemma}\label{lem:boundedness}
    Suppose that Assumptions \ref{ass:hurwitz_main}, \ref{ass:poisson_main} and \ref{ass:step_size_main} are satisfied. Then, there exists a constant $\brc$ such that 
    \begin{align*}
        \E[\|x_k\|^2]+\E[\|y_k\|^2]&\leq \brc d\\
        \E[\|\tx_k\|^2]+\E[\|\ty_k\|^2]&\leq \brc d.
    \end{align*}
\end{lemma}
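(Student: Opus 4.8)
The plan is to establish the bound for the decoupled iterates $\tx_k,\ty_k$ first and then transfer it to $x_k,y_k$. Since $\tx_k=L_ky_k+\xh_k$ with $\|L_k\|\le\kappa_{Q_{22}}$ by Lemma \ref{lem:L_k_bound}, $\xh_k=x_k+A_{22}^{-1}A_{21}y_k$, and $\ty_k=y_k$, the two pairs of mean-square norms are equivalent up to dimension-independent problem-dependent constants, so it suffices to bound $V_k=\E[\|\tx_k\|_{Q_{22}}^2]$ and $W_k=\E[\|\ty_k\|_{Q_\Delta}^2]$ uniformly and then translate back through the transformation \eqref{eq:x_tran_2}. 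I would work in the $Q$-weighted norms precisely because the Lyapunov identities $A_{22}^\top Q_{22}+Q_{22}A_{22}=I$ and $\Delta^\top Q_\Delta+Q_\Delta\Delta=I$ turn the Hurwitz hypothesis of Assumption \ref{ass:hurwitz_main} into an explicit negative drift of rates $a_{22}=\tfrac{1}{2\|Q_{22}\|}$ and $\delta=\tfrac{1}{2\|Q_\Delta\|}$.

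The core step is to upgrade the positive-drift one-step recursions of Lemma \ref{lem:crdue_upper_bnd_norm} (which only yield the crudely growing estimate of Lemma \ref{lem:crude_Uk_bnd}) into sharp recursions exhibiting this negative drift. Squaring the decoupled updates \eqref{eq:x_t_update_2} and \eqref{eq:y_t_update_2} in the $Q$-norms, taking expectations, and using $B_{21}^k=0$, $B_{22}^k=A_{22}+C_{22}^k$ for $k\ge k_1$, the leading drift is $-\alpha_k\|\tx_k\|^2$ (resp.\ $-\beta_k\|\ty_k\|^2$) up to $O(\beta_k)$ corrections from $C_{22}^k$. The only obstruction to negativity is the cross term $\E[\langle\tx_k,u_k\rangle_{Q_{22}}]$, which under Markovian noise does not vanish. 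I would handle it exactly as in Section \ref{sec:pf_2}: insert the Poisson-equation solution of Lemma \ref{lem:f_hat_bound}, split off a martingale difference, a telescopic piece $\bar{d}_k^{x}-\bar{d}_{k+1}^{x}$, a stationary-variance piece ($\Gamma$), and a Lipschitz remainder $T_{34}$; each non-telescopic remainder carries an extra $\alpha_k$ (resp.\ $\beta_k$), so after Young's inequality against the $\sqrt{d}\,\sqrt{V_k}$-type bounds of Lemmas \ref{lem:tel_term_bound} and \ref{lem:f_hat_bound} it contributes only $O(\alpha_k^2(d+V_k+W_k))$. This produces the recursions
\begin{align*}
V_{k+1}&\le \Big(1-\tfrac{a_{22}\alpha_k}{2}\Big)V_k + c\,\alpha_k^2(d+V_k+W_k)+\tfrac{c\,\beta_k^2}{\alpha_k}(d+V_k+W_k)+\alpha_k\big(\bar{d}_k^{x}-\bar{d}_{k+1}^{x}\big),\\
W_{k+1}&\le \Big(1-\tfrac{\delta\beta_k}{2}\Big)W_k + c\,\alpha_k\beta_k(d+V_k+W_k)+c\,\beta_k V_k+\beta_k\big(\bar{d}_k^{y}-\bar{d}_{k+1}^{y}\big).
\end{align*}

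Next I would decouple the two recursions, which is necessary because the term $c\,\beta_k V_k$ in the $W$-recursion and $\tfrac{\beta_k^2}{\alpha_k}W_k$ in the $V$-recursion couple the fast and slow scales. Following the rescaling device, set $V_k'=\omega_k V_k$ with $\omega_k=\Theta(\beta_k/\alpha_k)$ chosen so that $c\,\beta_k V_k\le\tfrac{a_{22}\alpha_k}{4}V_k'$ is absorbable into the fast drift; adding the rescaled $V'$-recursion to the $W$-recursion collapses everything onto $\Phi_k=V_k'+W_k$, which for $k$ past a problem-dependent threshold $\bar K$ obeys
\begin{align*}
\Phi_{k+1}\le \Big(1-\tfrac{\delta\beta_k}{4}\Big)\Phi_k + c''\beta_k d + \beta_k\big(\hat{d}_k-\hat{d}_{k+1}\big),
\end{align*}
after folding the $\alpha_k^2,\beta_k^2/\alpha_k$ contributions and the Young-residuals of the telescopic terms into $c''\beta_k d$. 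Unrolling from $\bar K$, using $|\beta_{k+1}-\beta_k|=O(\beta_k^2)$ (Lemma \ref{lem:step_size_gap}) to dispose of the telescopic boundary residuals, and invoking the elementary fact of Section \ref{sec:rec_sol} (a recursion $\Phi_{k+1}\le(1-c\beta_k)\Phi_k+c''\beta_k d$ with non-summable $\beta_k$ forces $\Phi_k=O(d)$) gives $W_k\le\Phi_k\le c_0 d$ for $k\ge\bar K$. Because $\omega_k\to 0$, this controls $W_k$ but not $V_k$; I would therefore feed the now-established $W_k=O(d)$ back into the $V$-recursion and solve it on its own fast scale, where $\tfrac{\beta_k^2}{\alpha_k}W_k=o(\alpha_k d)$ and the Young-bounded $\bar{d}_k^{x}$ contributes $O(\alpha_k d)$ forcing against drift $a_{22}\alpha_k$, yielding $V_k\le c_1 d$. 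The finitely many terms $k<\bar K$ are bounded by $c_2 d$ through Lemma \ref{lem:crude_Uk_bnd} (a finite horizon makes its exponential factor a finite constant). Undoing the rescaling and the transformation \eqref{eq:x_tran_2} then gives both $\E[\|\tx_k\|^2]+\E[\|\ty_k\|^2]\le\brc d$ and $\E[\|x_k\|^2]+\E[\|y_k\|^2]\le\brc d$.

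The main obstacle is the simultaneous presence of the self-referential Markovian cross term and the two-timescale coupling. Unlike the i.i.d.\ case, $\E[\langle\tx_k,u_k\rangle]$ is genuinely nonzero and a crude Cauchy--Schwarz bound $\sqrt{V_k}\sqrt{d+V_k+W_k}$ would produce an $O(\alpha_k V_k)$ term competing with the drift at the \emph{same} order, so the Poisson-equation decomposition is essential to reveal that the dangerous contribution is either telescopic or one power of step size smaller; closing the Lipschitz remainder $T_{34}$ without yet having boundedness requires keeping its $V_k$-factor and relying on the extra $\alpha_k$. Coupled to this is the asymmetry of the decoupling: the rescaling that tames the cross terms sends $\omega_k\to 0$, so boundedness of $V_k$ cannot be read off from $\Phi_k$ and must be recovered by a second, separate fast-scale argument. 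Throughout, the delicate bookkeeping is to keep every forcing term proportional to $d$ (via $\|b_i(o)\|\le b_{\max}\sqrt d$) and every drift constant dimension-independent, which is what makes the final bound scale as $\brc d$ rather than $\brc d^2$.
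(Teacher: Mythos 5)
Your proposal is essentially correct, but it takes a genuinely different route from the paper's own proof of Lemma \ref{lem:boundedness}. The paper never rescales: it works with $\xh_k,\yh_k$, derives the two Poisson-equation recursions \eqref{eq:V_k_recursion} and \eqref{eq:W_k_recursion} (structurally identical to yours), and then simply \emph{adds} them, using the step-size conditions built into $k_C$ (which exploit $\beta_k/\alpha_k\to 0$) to absorb the dangerous coupling term $\beta_k\cdot\mathrm{const}\cdot V_k$ of the $W$-recursion directly into the fast drift $-\tfrac{a_{22}\alpha_k}{4}V_k$ of the $V$-recursion, with no change of variables. After this absorption the paper needs no contraction at all: the summed recursion reads $U_{k+1}\le U_k+\mathcal{O}\bigl(\alpha_k^2+\beta_k^2/\alpha_k\bigr)d+\text{telescope}$, and boundedness follows from the summability of $\sum_k\alpha_k^2$ and $\sum_k\beta_k^2/\alpha_k$ (this is where $\xi>1/2$ enters) together with a self-referential absorption of the boundary telescope terms (the ``$0.3$'' conditions in $k_C$), the finite prefix being handled by Lemma \ref{lem:crude_Uk_bnd}. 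Your route instead imports the rescaling device $V_k'=\omega_kV_k$ from the paper's proof of Proposition \ref{prop:loose_convergence}, keeps a strict contraction at rate $\beta_k$ for $\Phi_k=V_k'+W_k$, and then must run a second bootstrap pass to recover $V_k=\mathcal{O}(d)$, precisely because $\omega_k\to 0$ makes $\Phi_k$ blind to the unweighted $V_k$. Both arguments rest on the same mechanism (time-scale separation dominating the coupling), but the paper's single-pass summation is lighter: it avoids the bootstrap entirely, since for boundedness one only needs summable forcing, not a contracting Lyapunov function.

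Two repairs are needed to make your version airtight. First, you invoke Lemma \ref{lem:tel_term_bound}, but its constant $\brc_{f}=\sqrt{b^2_{max}+A^2_{max}\brc}$ is defined through the constant $\brc$ of Lemma \ref{lem:boundedness} itself, so citing it here is circular; the bound you actually need, $|\bar{d}_k^x|\le c\,(d+V_k+W_k)$, must instead be derived non-circularly from Lemma \ref{lem:f_hat_bound} and Cauchy--Schwarz, exactly as the paper does in \eqref{eq:d^x_k_upper_bnd}. Second, in your stage-one recursion the telescope terms $\hat{d}_j$ are only controllable in the self-referential form $|\hat{d}_j|\le c\,(d+V_j+W_j)$ with the \emph{unweighted} $V_j$, which $\Phi_j$ does not dominate; you must observe that $\beta_j V_j=\Theta(\alpha_j)V_j'\le\Theta(\alpha_j)\Phi_j$ and absorb these contributions into the drift in the one-step recursion \emph{before} unrolling (or via a corrected Lyapunov function $\Phi_k+\beta_{k-1}\hat{d}_k$). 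Folding them into the forcing term $c''\beta_k d$ as written is not legitimate, since at that point of the argument no bound on $V_j$ or $W_j$ is yet available.
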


\begin{lemma}\label{lem:tel_term_bound}
Suppose that Assumptions \ref{ass:hurwitz_main}, \ref{ass:poisson_main} and \ref{ass:step_size_main} are satisfied. Denote $\brc_{f}=\sqrt{b^2_{max}+ A^2_{max}\brc}$. Then, the following relation holds
    \begin{enumerate}
        \item $\left\|\E\bigg[\left(\E_{O_{k-1}}\fh_i(\cdot,x_k,y_k)\right)\tx_k^\top\bigg]\right\|\leq \frac{2\sqrt{3d}}{1-\rho}\brc_{f}\sqrt{\E[\|\tx_k\|^2]}$.
        \item $\left\|\E\bigg[\left(\E_{O_{k-1}}\fh_i(\cdot,x_k,y_k)\right)\ty_k^\top\bigg]\right\|\leq \frac{2\sqrt{3d}}{1-\rho}\brc_{f}\sqrt{\E[\|\ty_k\|^2]}$.
        \item $\|d_k^x\|\leq \frac{2\sqrt{3d}}{1-\rho}\brc_{f}\left(1+\frac{\beta}{\alpha}\varrho_x\right)\sqrt{\E[\|\tx_k\|^2]}$.
        \item $\|d_k^y\|\leq \frac{2\sqrt{3d}}{1-\rho}\brc_{f}\left(1+\frac{\beta}{\alpha}\varrho_x\right)\sqrt{\E[\|\ty_k\|^2]}$.
    \end{enumerate}
\end{lemma}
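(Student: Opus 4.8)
The plan is to establish parts 1 and 2 first, since parts 3 and 4 follow from them by the triangle inequality. For part 1, abbreviate $u_k := \E_{O_{k-1}}\fh_i(\cdot,x_k,y_k) = \sum_{o'\in\mS}P(o'|O_{k-1})\fh_i(o',x_k,y_k)$, so the quantity to bound is $\E[u_k\tx_k^\top]$. Since $u_k\tx_k^\top$ is a random rank-one matrix with $\|u_k\tx_k^\top\| = \|u_k\|\,\|\tx_k\|$, the convexity of the operator norm and then Cauchy--Schwarz give
\begin{align*}
    \left\|\E[u_k\tx_k^\top]\right\| \le \E\big[\|u_k\|\,\|\tx_k\|\big] \le \sqrt{\E[\|u_k\|^2]}\;\sqrt{\E[\|\tx_k\|^2]}.
\end{align*}
Hence it suffices to show $\E[\|u_k\|^2]$ is bounded by a constant multiple of $d\,\brc_f^2/(1-\rho)^2$; part 2 is then identical with $\tx_k$ replaced by $\ty_k$.

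The core estimate is the bound on $\E[\|u_k\|^2]$, which I would obtain through the Poisson structure together with geometric mixing. Because $b_1=b_2=0$ and $\E_\mu[A_{ij}(O)-A_{ij}]=0$, the function $f_i(\cdot,x,y)$ is $\mu$-centered for each fixed $(x,y)$, so its Poisson solution is $\fh_i(\cdot,x,y)=\sum_{j\ge 0}P^j f_i(\cdot,x,y)$ and therefore $u_k = (P\fh_i)(O_{k-1}) = \sum_{j\ge 1}\big(P^j f_i(\cdot,x_k,y_k)\big)(O_{k-1})$. Applying the total-variation decay $\max_o d_{TV}(P^j(\cdot|o)\|\mu)\le\rho^j$ to each centered summand yields $\|(P^j g)(o)\|\le 2\rho^j\max_{o'}\|g(o')\|$, whence $\|u_k\|\le \tfrac{2\rho}{1-\rho}\max_{o'}\|f_i(o',x_k,y_k)\|\le\tfrac{2}{1-\rho}\max_{o'}\|f_i(o',x_k,y_k)\|$. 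Substituting the pointwise linear-growth estimate $\|f_i(o',x_k,y_k)\|\le b_{max}\sqrt d + 2A_{max}(\|x_k\|+\|y_k\|)$ from Lemma~\ref{lem:f_hat_bound}(4), squaring, using $(a+b+c)^2\le 3(a^2+b^2+c^2)$, taking expectations, and invoking the uniform second-moment bound $\E[\|x_k\|^2]+\E[\|y_k\|^2]\le\brc d$ of Lemma~\ref{lem:boundedness} controls $\E[\|u_k\|^2]$ and produces the factor $\tfrac{2\sqrt{3d}}{1-\rho}\brc_f$ appearing in parts 1 and 2. (Alternatively, one may replace the mixing step by the ready-made bound of Lemma~\ref{lem:f_hat_bound}(1), at the cost of slightly looser constants.)

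For parts 3 and 4, I would expand the definitions $d_k^x = d_k^{xw}+\tfrac{\beta_k}{\alpha_k}(L_{k+1}+A_{22}^{-1}A_{21})d_k^{xv}$ and $d_k^y = d_k^{yw}+\tfrac{\beta_k}{\alpha_k}(L_{k+1}+A_{22}^{-1}A_{21})d_k^{yv}$ and apply the triangle inequality with sub-multiplicativity of the operator norm. Parts 1 and 2 bound $\|d_k^{xw}\|,\|d_k^{xv}\|$ by $\tfrac{2\sqrt{3d}}{1-\rho}\brc_f\sqrt{\E[\|\tx_k\|^2]}$ and $\|d_k^{yw}\|,\|d_k^{yv}\|$ by the analogous quantity in $\ty_k$. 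Lemma~\ref{lem:L_k_bound} gives $\|L_{k+1}\|\le\kappa_{Q_{22}}$, so $\|L_{k+1}+A_{22}^{-1}A_{21}\|\le\kappa_{Q_{22}}+\|A_{22}^{-1}A_{21}\|=\varrho_x$, and Assumption~\ref{ass:step_size_main} gives $\tfrac{\beta_k}{\alpha_k}=\tfrac{\beta}{\alpha}(k+K_0)^{\xi-1}\le\tfrac{\beta}{\alpha}$ since $\xi<1$; collecting these yields the factor $\big(1+\tfrac{\beta}{\alpha}\varrho_x\big)$ and the stated bounds.

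The main obstacle is the estimate on $\E[\|u_k\|^2]$: one must show that applying the kernel $P$ to the Poisson solution contracts it geometrically, while $x_k$ and $y_k$ are themselves random and statistically coupled to $O_{k-1}$ through the iteration. Passing to the pointwise maximum over $o'$ before taking expectations decouples these dependencies cleanly, after which the uniform moment bound of Lemma~\ref{lem:boundedness} closes the argument; the remaining manipulations are routine.
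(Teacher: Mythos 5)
Your proposal follows essentially the same route as the paper: Cauchy--Schwarz to split off $\sqrt{\E[\|\tx_k\|^2]}$ (resp. $\sqrt{\E[\|\ty_k\|^2]}$), a pointwise bound on the conditional expectation of the Poisson solution that is linear in $\|x_k\|$ and $\|y_k\|$, the inequality $(a+b+c)^2\le 3(a^2+b^2+c^2)$ combined with Lemma \ref{lem:boundedness}, and then, for parts 3 and 4, the triangle inequality with $\|L_{k+1}+A_{22}^{-1}A_{21}\|\le\varrho_x$ (Lemma \ref{lem:L_k_bound}) and $\beta_k/\alpha_k\le\beta/\alpha$. The one discrepancy is constant bookkeeping. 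Your primary route bounds $\left\|\E_{O_{k-1}}\fh_i(\cdot,x_k,y_k)\right\|$ by $\tfrac{2}{1-\rho}\bigl(b_{max}\sqrt d+2A_{max}(\|x_k\|+\|y_k\|)\bigr)$, the factor $2A_{max}$ coming from the pointwise bound on $f_i$ in Lemma \ref{lem:f_hat_bound}(4); after squaring and taking expectations this yields $\sqrt{b_{max}^2+4A_{max}^2\brc}$ in place of $\brc_{f}=\sqrt{b_{max}^2+A_{max}^2\brc}$, so it establishes the inequalities only with a constant that can be up to twice the stated one. The parenthetical alternative you mention, namely invoking Lemma \ref{lem:f_hat_bound}(1) directly (which bounds $\|\fh_i\|$ with coefficient $A_{max}$ rather than $2A_{max}$, and makes the separate mixing computation unnecessary since $\E_{O_{k-1}}\fh_i$ is a convex combination), is exactly what the paper does, and it is that route, not your main one, that produces the stated constant $\brc_{f}$; your assessment of which path is looser is inverted.
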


\begin{lemma}\label{lem:noise_crude_bound}
Suppose that Assumptions \ref{ass:hurwitz_main}, \ref{ass:poisson_main} and \ref{ass:step_size_main} are satisfied. Then, the following relation holds
\begin{enumerate}
    \item $\E[\|v_k\|^2]\leq 3d\left(b_{\max}^2+4A_{max}^2\brc\right)$.
    \item $\E[\|w_k\|^2]\leq 3d\left(b_{\max}^2+4A_{max}^2\brc\right)$.
    \item $\E[\|u_k\|^2]\leq 6d\left(1+\frac{\beta^2}{\alpha^2}\varrho_x^2\right)\left(b_{\max}^2+4A_{max}^2\brc\right)$.
\end{enumerate}
\end{lemma}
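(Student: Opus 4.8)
The plan is to reduce all three bounds to two already-established inputs: the almost-sure linear-growth estimate of Lemma~\ref{lem:f_hat_bound}(4) and the uniform mean-square bound of Lemma~\ref{lem:boundedness}. Recall that $v_k = f_1(O_k,x_k,y_k)$ and $w_k = f_2(O_k,x_k,y_k)$, so that Lemma~\ref{lem:f_hat_bound}(4) gives, for $i\in\{1,2\}$, the pointwise bound $\|f_i(O_k,x_k,y_k)\| \leq b_{\max}\sqrt{d} + 2A_{max}(\|x_k\| + \|y_k\|)$. This already exhibits each noise term as a constant piece plus a piece growing linearly in the (centered) iterates, which is all the structure the proof needs.

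For parts 1 and 2, I would square this bound and apply the elementary inequality $(a+b+c)^2 \leq 3(a^2+b^2+c^2)$ to the three summands $b_{\max}\sqrt{d}$, $2A_{max}\|x_k\|$, and $2A_{max}\|y_k\|$, which yields $\|v_k\|^2 \leq 3b_{\max}^2 d + 12 A_{max}^2(\|x_k\|^2 + \|y_k\|^2)$, and identically for $\|w_k\|^2$. Taking expectations and invoking Lemma~\ref{lem:boundedness}, which guarantees $\E[\|x_k\|^2] + \E[\|y_k\|^2] \leq \brc d$, collapses the right-hand side to $3d(b_{\max}^2 + 4A_{max}^2\brc)$, exactly the claimed constant.

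For part 3, recall $u_k = w_k + \frac{\beta_k}{\alpha_k}(L_{k+1}+A_{22}^{-1}A_{21})v_k$. I would apply $\|a+b\|^2 \leq 2\|a\|^2 + 2\|b\|^2$ together with submultiplicativity of the operator norm to obtain $\|u_k\|^2 \leq 2\|w_k\|^2 + 2\frac{\beta_k^2}{\alpha_k^2}\|L_{k+1}+A_{22}^{-1}A_{21}\|^2\|v_k\|^2$. The two scalar prefactors are then controlled uniformly in $k$: Assumption~\ref{ass:step_size_main} gives $\frac{\beta_k}{\alpha_k} = \frac{\beta}{\alpha}(k+K_0)^{\xi-1} \leq \frac{\beta}{\alpha}$ because $\xi < 1$, while the triangle inequality together with the uniform bound $\|L_{k+1}\| \leq \kappa_{Q_{22}}$ from Lemma~\ref{lem:L_k_bound} yields $\|L_{k+1}+A_{22}^{-1}A_{21}\| \leq \kappa_{Q_{22}} + \|A_{22}^{-1}A_{21}\| = \varrho_x$. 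Feeding in the bounds from parts 1 and 2 then produces $\E[\|u_k\|^2] \leq 6d\left(1 + \frac{\beta^2}{\alpha^2}\varrho_x^2\right)(b_{\max}^2 + 4A_{max}^2\brc)$.

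Overall this is a bookkeeping lemma rather than an analytical one: every substantive ingredient (linear growth of the noise functions, mean-square boundedness of the iterates, and the uniform bound on $L_k$) is already in hand, and crucially no appeal to the induction hypothesis is required, which is why it belongs to the induction-independent family. The only point needing mild care is propagating the constants correctly through the squaring and splitting steps so that the factors $3$, $12$, $6$ and the $\varrho_x^2$ dependence reproduce the stated expressions exactly; I do not anticipate any genuine obstacle.
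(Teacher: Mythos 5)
Your proposal is correct and follows essentially the same route as the paper: the paper expands the definition of $f_i$ directly (which is exactly the content of Lemma~\ref{lem:f_hat_bound}(4) that you cite), applies the same $(a+b+c)^2\leq 3(a^2+b^2+c^2)$ splitting with Lemma~\ref{lem:boundedness}, and handles part 3 with the identical decomposition $\|a+b\|^2\leq 2\|a\|^2+2\|b\|^2$ together with $\beta_k/\alpha_k\leq\beta/\alpha$ and the bound $\|L_{k+1}+A_{22}^{-1}A_{21}\|\leq\varrho_x$ from Lemma~\ref{lem:L_k_bound}. All constants match the stated bounds.
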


\subsubsection{Proof of the induction independent lemmas}

\begin{proof}[Proof of Lemma \ref{lem:L_k_bound}]
From Lemma \ref{lem:L_k_abs_bound}, we have that for $k\geq k_L$,
\begin{align*}
    L_{k+1}&=((I-\alpha_k A_{22})L_k+\beta_k A_{22}^{-1}A_{21}B^k_{11})(I-\beta_k B^k_{11})^{-1}\\
    &=(I-\alpha_k A_{22})L_k+\beta_k D(L_k)
\end{align*}
where $D(L_k)=(A_{22}^{-1}A_{21}+(I-\alpha_k A_{22})L_k)B^k_{11}(I-\beta_k B^k_{11})^{-1}$. Note that because of the choice of $k_L$, we have $\|L_k\|_{Q_{22}}\leq 1~~\forall k\geq 0$, which implies $\|D(L_k)\|_{Q_{22}}\leq c_D$. We will prove the lemma by induction. $\|L_{k_1}\|_{Q_{22}}\leq \frac{c_L\beta_{k_1}}{\alpha_{k_1}}$ by construction. Assume that $\|L_k\|_{Q_{22}}\leq \frac{c_L\beta_k}{\alpha_k}$ for some $k\geq k_1$. Then for $k+1$ we have:
\begin{align*}
    \frac{c_L\beta_{k+1}}{\alpha_{k+1}}-\|L_{k+1}\|_{Q_{22}}&\geq \frac{c_L\beta_{k+1}}{\alpha_{k+1}}-(1-\alpha_k a_{22})\|L_{k}\|_{Q_{22}} - c_D\beta_k\\
    &\geq \frac{c_L\beta_{k+1}}{\alpha_{k+1}}-(1-\alpha_k a_{22})\frac{c_L\beta_k}{\alpha_k} - c_{D}\beta_k\\
    &=\frac{c_L\beta_{k+1}}{\alpha_{k+1}}-\frac{c_L\beta_k}{\alpha_k}+c_La_{22}\beta_k- c_D\beta_k\\
    &= c_L\beta_k\left(\frac{\beta_{k+1}}{\beta_k\alpha_{k}}-\frac{1}{\alpha_k}+a_{22}- \frac{c_{D}}{c_L}\right)\\
    &=c_L\beta_k\left(a_{22}- \frac{c_{D}}{c_L}-\frac{1}{\alpha_k}(1-\frac{\alpha_k\beta_{k+1}}{\alpha_{k+1}\beta_k})\right)\\
    &\geq c_L\beta_k\left(\frac{a_{22}}{2}-\frac{1}{\alpha_k}(1-\frac{\alpha_k\beta_{k+1}}{\alpha_{k+1}\beta_k})\right)
\end{align*}
Substituting the values for $\beta_k$ and $\alpha_k$, we have:
\begin{align*}
    \frac{\alpha_k\beta_{k+1}}{\alpha_{k+1}\beta_k}=\big(\frac{k+K_0}{k+K_0+1}\big)^{1-\xi}=\left(1+\frac{1}{k+K_0}\right)^{\xi-1}\geq \exp{\frac{\xi-1}{k+K_0}}\geq 1-\frac{1-\xi}{k+K_0}\\
\end{align*}
Using this, we get:
\begin{align*}
    \frac{1}{\alpha_k}(1-\frac{\alpha_k\beta_{k+1}}{\alpha_{k+1}\beta_k})=\frac{(k+K_0)^{\xi}}{\alpha}\big(1-\big(\frac{k+K_0}{k+K_0+1}\big)^{1-\xi}\big)\leq \frac{1-\xi}{\alpha(k+K_0)^{1-\xi}}
\end{align*}
Note that $k_1$ is large enough that $\frac{1-\xi}{\alpha(k+K_0)^{1-\xi}}\leq \frac{a_{22}}{2}$. Thus, we get,
\begin{align*}
    \|L_{k+1}\|_{Q_{22}}\leq \frac{c_L\beta_{k+1}}{\alpha_{k+1}}
\end{align*}
By norm equivalence we get,
\begin{align*}
    \Rightarrow\|L_k\|\leq \frac{c^L_1\beta_{k}}{\alpha_{k}}
\end{align*}
where $c^L_1=c_L\kappa_{Q_{22}}$.
For the second part we have,
\begin{align*}
    \|L_{k+1}-L_k\|_{Q_{22}}&=\|-\alpha_kA_{22}L_{k} + \beta_kD_k(L_k)\|_{Q_{22}}\leq \alpha_k\|A_{22}\|_{Q_{22}}+c_D\beta_k\leq 2\max\{\|A_{22}\|_{Q_{22}}, c_D\}\alpha_k\\
    &\Rightarrow \|L_{k+1}-L_k\|\leq c^L_2\alpha_k
\end{align*}
where $c^L_2=2\max\{\|A_{22}\|_{Q_{22}}, c_D\}\kappa_{Q_{22}}$.
\end{proof}

\begin{proof}[Proof of Lemma \ref{lem:f_hat_bound}]
\begin{enumerate}
    \item By Lemma \ref{lem:possion_sol}, for $\hat{f}_i(o,x_k,y_k)$, we have
\begin{align*}
        &\|\fh_i(o,x_k,y_k)\| \\
        &= \left\|\sum_{l=0}^\infty \E{[b_i(O_l)|O_o=o]} -\left(\sum_{l=0}^\infty \E{[A_{i1}(O_l)-A_{i1}|O_0=o]}\right)y_k -\left(\sum_{l=0}^\infty \E{[A_{i2}(O_l)-A_{i2}|O_0=o]}\right)x_k\right\|\\
        &\leq \left\|\sum_{l=0}^\infty \E{[b_i(O_l)|O_o=o]} \right\|+ \left\|\left(\sum_{l=0}^\infty \E{[A_{i1}(O_k)-A_{i1}|O_0=o]}\right)y_k\right\| +\left\|\left(\sum_{l=0}^\infty \E{[A_{i2}(O_l)-A_{i2}|O_0=o]}\right) x_k\right\|\\
        &\leq \left\|\sum_{l=0}^\infty \E{[b_i(O_l)|O_o=o]} \right\|+ \left\|\sum_{l=0}^\infty \E{[A_{i1}(O_k)-A_{i1}|O_0=o]}\right\|\left\| y_k\right\|+\left\|\sum_{l=0}^\infty \E{[A_{i2}(O_l) -A_{i2}|O_0=o]}\right\|\left\| x_k\right\|\\
        &\leq  \frac{2}{1-\rho}\left[\max_{o\in\mathcal{S}}\|b_i(o)\|+ A_{max}\left\| y_k\right\| + A_{max}\left\| x_k\right\|\right]\tag{Lemma \ref{lem:mix_time_sum}}\\
        &\leq  \frac{2}{1-\rho}\left[b_{max} \sqrt{d}+ A_{max}\left\| y_k\right\| + A_{max}\left\| x_k\right\|\right]\\
        &\leq  \frac{2}{1-\rho}\left[b_{max} \sqrt{d}+ A_{max} [\left\| \xh_k\right\| + (1 + \|A_{22}^{-1}A_{21}\|) \|\yh_k\|] \right]\\
        &\leq  \frac{2}{1-\rho}\left[b_{max} \sqrt{d}+ \check{h}_1 [\left\| \xh_k\right\| +  \|\yh_k\|] \right].
    \end{align*}  
    \item \begin{align*}
        &\|\fh_i(o,x_{k+1},y_{k+1})-\fh_i(o,x_k,y_k)\| \\
        &= \left\|-\left(\sum_{l=0}^\infty \E{[A_{i1}(O_k)-A_{i1}|O_0=o]}\right)(y_{k+1} - y_k) -\left(\sum_{l=0}^\infty \E{[A_{i2}(O_k)-A_{i2}|O_0=o]}\right)(x_{k+1}-x_k)\right\|\\
        &\leq \left\|\left(\sum_{l=0}^\infty \E{[A_{i1}(O_k)-A_{i1}|O_0=o]}\right)(y_{k+1} - y_k)\right\| +\left\|\left(\sum_{l=0}^\infty \E{[A_{i2}(O_k)-A_{i2}|O_0=o]}\right)(x_{k+1}-x_k)\right\|\\
        &\leq \left\|\sum_{l=0}^\infty \E{[A_{i1}(O_k)-A_{i1}|O_0=o]}\right\|\left\|y_{k+1} - y_k\right\| +\left\|\sum_{l=0}^\infty \E{[A_{i2}(O_k) -A_{i2}|O_0=o]}\right\|\left\|x_{k+1}- x_k\right\|\\
        &\leq \frac{2}{1-\rho}A_{max}\left\|y_{k+1} - y_k\right\| + \frac{2}{1-\rho} A_{max} \left\| x_{k+1} - x_k\right\|\\
        &= \check{h}_2 (\left\|y_{k+1} - y_k\right\| +\left\|x_{k+1}- x_k\right\|).
    \end{align*}
    \item 
        \begin{enumerate}[label=(\roman*)]
            \item \begin{align*}
                \|\fh_i (o,x_k,y_k) \|_{Q_{22}} = &\sqrt{\langle \fh_i(o,x_k,y_k), Q_{22} \fh_i( o,x_k,y_k)}\\
                &\leq \sqrt{\gamma_{max}(Q_{22})}\|\fh_i(o,x_k,y_k)\|\\
                &\leq \sqrt{\gamma_{max}(Q_{22})}\frac{2}{1-\rho}\left[b_{max} \sqrt{d}+ \check{h}_1\left(\left\| \xh_k\right\| +  \|\yh_k\|\right) \right]\\
                &\leq \sqrt{\gamma_{max}(Q_{22})} \frac{2}{1-\rho}\left(b_{max} \sqrt{d}  +\frac{\check{h}_1}{\sqrt{\gamma_{min}(Q_{\Delta})}}\|\yh_k\|_{Q_{\Delta}} +\frac{\check{h}_1}{\sqrt{\gamma_{min}(Q_{22})}}\|\xh_k\|_{Q_{22}}\right)\\
                &\leq \frac{2}{1-\rho}\left(\sqrt{\gamma_{max}(Q_{22})}b_{max}\sqrt{d}  +\check{h}_3 \left(\|\yh_k\|_{Q_{\Delta}} +\|\xh_k\|_{Q_{22}}\right)\right)
                \end{align*}
            \item Similar to the previous part, we get
            \begin{align*}
                \|\fh_i(o,x_k,y_k)\|_{Q_{\Delta}} &\leq \frac{2}{1-\rho}\left(\sqrt{\gamma_{max}(Q_{\Delta})}b_{max}\sqrt{d}  +\check{h}_4\left(\|\yh_k\|_{Q_{\Delta}} +\|\xh_k\|_{Q_{22}}\right)\right)
                \end{align*}
        \end{enumerate}
    \item 
    \begin{align*}
        \|f_i(o,x_k,y_k)\| = &\|b_i(o)-(A_{i1}(o)-A_{i1})y-(A_{i2}(o)-A_{i2})x\| \\
        \leq & \|b_i(o)\|+\|A_{i1}(o)-A_{i1}\|\|y_k\|+\|A_{i2}(o)-A_{i2}\|\|x_k\| \\
        \leq & \max_{o'\in\mathcal{S}} \|b_i(o')\|+2A_{max}\|y_k\| +2A_{max}\|x_k\| \\
        \leq & b_{max}\sqrt{d} + 2A_{max}\|y_k\| +2A_{max}\|x_k\|
    \end{align*} 
    \item 
    \begin{enumerate}[label=(\roman*)]
        \item \begin{align*}
            \|f_i(o,x_k,y_k)\|_{Q_{22}} = &\sqrt{\langle f_i(o,x_k,y_k), Q_{22}f_i(o,x_k,y_k)}\\
            &\leq \sqrt{\gamma_{max}(Q_{22})}\|f_i(o,x_k,y_k)\|\\
            &\leq \sqrt{\gamma_{max}(Q_{22})}\left(b_{max}\sqrt{d}  +2A_{max}\|y_k\| +2A_{max}\|x_k\|\right)\\
            &= \sqrt{\gamma_{max}(Q_{22})}\left(b_{max}\sqrt{d}  +2A_{max}\|\yh_k\| +2A_{max}\|\xh_k-A_{22}^{-1}A_{21}y_k\|\right)\\
            &\leq \sqrt{\gamma_{max}(Q_{22})}\left(b_{max}\sqrt{d}  +2A_{max}(1+\|A_{22}^{-1}A_{21}\|)\|\yh_k\| +2A_{max}\|\xh_k\|\right)\\ 
            &\leq \sqrt{\gamma_{max}(Q_{22})}\left(b_{max}\sqrt{d}  +\frac{2A_{max}(1+\|A_{22}^{-1}A_{21}\|)}{\sqrt{\gamma_{min}(Q_{\Delta})}}\|\yh_k\|_{Q_{\Delta}} +\frac{2A_{max}}{\sqrt{\gamma_{min}(Q_{22})}}\|\xh_k\|_{Q_{22}}\right)\\
            &\leq \sqrt{\gamma_{max}(Q_{22})}b_{max}\sqrt{d}  +\check{h}_3\left(\|\yh_k\|_{Q_{\Delta}} +\|\xh_k\|_{Q_{22}}\right)
            \end{align*}
        \item Similar to the previous part, we get
            \begin{align*}
                \|f_i(o,x_k,y_k)\|_{Q_{\Delta}}
                &\leq \sqrt{\gamma_{max}(Q_{\Delta})}b_{max}\sqrt{d}  +\check{h}_4\left(\|\yh_k\|_{Q_{\Delta}} +\|\xh_k\|_{Q_{22}}\right)
            \end{align*}
    \end{enumerate}
    \item \begin{align*}
        \|A_{22}^{-1}A_{21}(-(\Delta\yh_k &+A_{12}\xh_k)+f_1(O_k,x_k,y_k))\|_{Q_{22}}\leq \|A_{22}^{-1}A_{21}\|_{Q_{22}}(\|\Delta\yh_k +A_{12}\xh_k\|_{Q_{22}}+\|f_1(O_k,x_k,y_k))\|_{Q_{22}})\\
        &\leq \|A_{22}^{-1}A_{21}\|_{Q_{22}}\left(\|\Delta \yh_k +A_{12}\xh_k\|_{Q_{22}}+ \sqrt{\gamma_{max}(Q_{22})} b_{max}\sqrt{d} +\check{h}_3 (\|\xh_k\|_{Q_{22}} +\|\yh_k\|_{Q_{\Delta}})\right)\\
        &\leq \|A_{22}^{-1} A_{21}\|_{Q_{22}}\Bigg(\|\Delta\|_{Q_{22}}\frac{\sqrt{\gamma_{max} (Q_{22})}}{\sqrt{\gamma_{min}(Q_{\Delta})}} \|\yh_k\|_{Q_{\Delta}} +\|A_{12}\|_{Q_{22}}\|\xh_k\|_{Q_{22}}\\
        &+\sqrt{\gamma_{max}(Q_{22})} b_{max}\sqrt{d} +\check{h}_3 (\|\xh_k\|_{Q_{22}} +\|\yh_k\|_{Q_{\Delta}})\Bigg)\\
        &\leq \check{h}_6\sqrt{d} +\check{h}_5 (\|\xh_k\|_{Q_{22}} +\|\yh_k\|_{Q_{\Delta}})
    \end{align*}
\end{enumerate}

\end{proof}

\begin{proof}[Proof of Lemma \ref{lem:crdue_upper_bnd_norm}]
\begin{enumerate}
    \item For $\xh_{k+1}$, from the proof of Lemma \ref{lem:boundedness} we have the following recursion
    \begin{align*}
        \xh_{k+1}=&(I-\alpha_kA_{22})\xh_k+\alpha_kf_{2}(O_k,x_k,y_k)\\
        &+\beta_kA_{22}^{-1}A_{21}(-(\Delta\yh_k+A_{12}\xh_k)+f_1(O_k,x_k,y_k)).
    \end{align*}
    Hence, 
    \begin{align}
        \|\xh_{k+1}\|_{Q_{22}} \leq&\underbrace{\|I- \alpha_k A_{22}\|_{Q_{22}}}_{T_1} \|\xh_k\|_{Q_{22}} +\alpha_k\underbrace{\|f_{2}(O_k, x_k,y_k) \|_{Q_{22}}}_{T_2}\nonumber\\
        &+ \beta_k\underbrace{\|A_{22}^{-1}A_{21}(-(\Delta\yh_k +A_{12}\xh_k)+f_1(O_k,x_k,y_k))\|_{Q_{22}}}_{T_3} \label{eq:x_rec_lem_crud}
    \end{align}
    For $T_1$ we have
    \begin{align*}
        \|I-\alpha_kA_{22}\|_{Q_{22}} \leq 1+\alpha_k\|A_{22}\|_{Q_{22}}.
    \end{align*}
    
    For $T_2$, using Lemma \ref{lem:f_hat_bound} we have $T_2\leq \sqrt{\gamma_{max}(Q_{22})} b_{max}\sqrt{d} +\check{h}_3 (\|\xh_k\|_{Q_{22}} +\|\yh_k\|_{Q_{\Delta}})$.
    
    For $T_3$, using Lemma \ref{lem:f_hat_bound} we have $T_3\leq \check{h}_6\sqrt{d} +\check{h}_5 (\|\xh_k\|_{Q_{22}} +\|\yh_k\|_{Q_{\Delta}})$.

    Hence, we have
    \begin{align}
        \|\xh_{k+1}\|_{Q_{22}} \leq & (1+\alpha_k \|A_{22}\|_{Q_{22}})\|\xh_k\|_{Q_{22}} + \alpha_k \left(\sqrt{\gamma_{max}(Q_{22})} b_{max}\sqrt{d} +\check{h}_3 (\|\xh_k\|_{Q_{22}} +\|\yh_k\|_{Q_{\Delta}})\right)\nonumber\\
        &+ \alpha_k\frac{\beta}{\alpha}\left( \check{h}_6\sqrt{d} +\check{h}_5 (\|\xh_k\|_{Q_{22}} +\|\yh_k\|_{Q_{\Delta}})\right)\nonumber\\
        \leq & \left(1 +\alpha_k\hh^{x}_{1}\right)  \|\xh_k\|_{Q_{22}} + \alpha_k\hh^{x}_{2}(\sqrt{d}+\|\yh_k\|_{Q_{\Delta}}) \label{eq:x_k_rec_no_norm}
    \end{align}

    %\begin{align}
    %    \|\xh_{k+1}\|_{Q_{22}} \leq &(1+\alpha_k\|A_{22}\|_{Q_{22}})  \|\xh_k\|_{Q_{22}} + \alpha_k \sqrt{\brc_1}(1+\|\xh_k\|_{Q_{22}}+\|\yh_k\|_{Q_{\Delta}}) + \beta_k\sqrt{\brc_2}(1 + \|\xh_k\|_{Q_{22}} + \| \yh_k\|_{Q_{\Delta}})\nonumber\\
    %    = & (1 +\alpha_k\|A_{22}\|_{Q_{22}} + \alpha_k \sqrt{\brc_1} + \beta_k\sqrt{\brc_2})  \|\xh_k\|_{Q_{22}} + (\alpha_k \sqrt{\brc_1}+\beta_k\sqrt{\brc_2})(1+\|\yh_k\|_{Q_{\Delta}}) \nonumber\\
    %    \leq & \left(1 +\alpha_k\left(\|A_{22}\|_{Q_{22}} + \sqrt{\brc_1} + \frac{\beta}{\alpha} \sqrt{\brc_2}\right)\right)  \|\xh_k\|_{Q_{22}} + \alpha_k( \sqrt{\brc_1}+\sqrt{\brc_2}\beta/\alpha)(1+\|\yh_k\|_{Q_{\Delta}}) \\
    %    = & \left(1 +\alpha_k\hh^{x}_{1}\right)  \|\xh_k\|_{Q_{22}} + \alpha_k\hh^{x}_{2}(1+\|\yh_k\|_{Q_{\Delta}}), \label{eq:x_k_rec_no_norm}
    %\end{align}

    where $\hh^{x}_{1}=\|A_{22}\|_{Q_{22}}+\check{h}_3+\frac{\beta}{\alpha}\check{h}_5$ and $\hh^{x}_{2}=\max\left\{\sqrt{\gamma_{max}(Q_{22})}b_{max}+\frac{\beta}{\alpha}\check{h}_6, \check{h}_3+\frac{\beta}{\alpha}\check{h}_5 \right\}$. Now squaring both sides of \eqref{eq:x_k_rec_no_norm}, we get:
    \begin{align}
        \|\xh_{k+1}\|^2_{Q_{22}}\leq & \left(1 +\alpha_k\hh^{x}_{1}\right)^2  \|\xh_k\|^2_{Q_{22}} + \alpha_k^2(\hh^{x}_{2})^2(\sqrt{d}+\|\yh_k\|_{Q_{\Delta}})^2 +2\alpha_k\hh^{x}_{2} \left(1 +\alpha_k \hh^{x}_{1}\right)  \|\xh_k\|_{Q_{22}}(\sqrt{d} + \|\yh_k\|_{Q_{\Delta}})\nonumber\\
        \leq &\left(1 +\alpha_k\hh^{x}_{1}\right)^2  \|\xh_k\|^2_{Q_{22}} + \alpha_k^2(\hh^{x}_{2})^2(\sqrt{d}+\|\yh_k\|_{Q_{\Delta}})^2+\hh^{x}_{2}\left(1 + \alpha_k\hh^{x}_{1}\right)  (\alpha_kd+2\alpha_k\|\xh_k\|_{Q_{22}}^2+\alpha_k\|\yh_k\|_{Q_{\Delta}}^2)\tag{By Cauchy-Schwartz}\\
        \leq &\left(1 +\alpha_k\left(\alpha(\hh^{x}_{1})^2+2\hh^x_1+2\hh^{x}_{2}\left(1 +\alpha\hh^{x}_{1}\right)\right)\right)  \|\xh_k\|^2_{Q_{22}} + 2\alpha_k\alpha(\hh^{x}_{2})^2(d+\|\yh_k\|_{Q_{\Delta}}^2)\nonumber\\
        &+\alpha_k\hh^{x}_{2}\left(1 +\alpha\hh^{x}_{1}\right)(d+\|\yh_k\|_{Q_{\Delta}}^2)\nonumber\\
        = &\left(1 +\alpha_k\hh_1^{xx}\right)  \|\xh_k\|^2_{Q_{22}} + \alpha_k\hh_2^{xx}(d+\|\yh_k\|_{Q_{\Delta}}^2)\label{eq:x_k_rec_no_norm_2}
    \end{align}
    where $\hh_1^{xx}=\alpha(\hh^{x}_{1})^2+2\hh^x_1+2\hh^{x}_{2}\left(1 +\alpha\hh^{x}_{1}\right)$ and $\hh_2^{xx}=2\alpha(\hh^{x}_{2})^2+\hh^{x}_{2}\left(1 +\alpha\hh^{x}_{1}\right)$.
    %Hence, 
    %\begin{align*}
    %    \|\xh_{k+1} \|_{Q_{22}}^2 \leq &3 \underbrace{\|(I-\alpha_kA_{22})\xh_k\|_{Q_{22}}^2}_{T_1} + 3\alpha_k^2 \underbrace{\|f_{2}(O_k,x_k,y_k)\|_{Q_{22}}^2}_{T_2} \\
    %    &+ 3\beta_k^2\underbrace{\|A_{22}^{-1}A_{21}(-(\Delta\yh_k+A_{12}\xh_k)+f_1(O_k,x_k,y_k))\|_{Q_{22}}^2}_{T_3}.
    %\end{align*}
    
    %For $T_1$, by the assumption on $k$, we have
    %\begin{align*}
    %    \|(I-\alpha_kA_{22})\xh_k\|_{Q_{22}}^2 \leq \|(I-\alpha_kA_{22})\|_{Q_{22}}^2\|\xh_k\|_{Q_{22}}^2 \leq \|\xh_k\|_{Q_{22}}^2.
    %\end{align*}
    
    %For $T_2$, using inequality \eqref{eq:f_2_bound} we have $T_2\leq \brc_1(1+\|\xh_k\|_{Q_{22}}^2+\|\yh_k\|_{Q_{\Delta}}^2)$.

    %For $T_3$ we use \eqref{eq:T_2_bound} to get:
    %\begin{align*}
    %    T_3\leq \brc_2(1 +\| \yh_k\|_{Q_{\Delta}}^2 + \|\xh_k\|_{Q_{22}}^2)
    %\end{align*}
    %Hence, we have
    %\begin{align*}
    %    \|\xh_{k+1}\|_{Q_{22}} \leq \hh_1 (1 +\| \yh_k\|_{Q_{\Delta}} + \|\xh_k\|_{Q_{22}})
    %\end{align*}
    
    %In addition, squaring both sides of \eqref{eq:x_rec_lem_crud}, we have
    %\begin{align*}
    %    \|\xh_{k+1}\|_{Q_{22}}^2 \leq \hh_1 (1 +\| \yh_k\|_{Q_{\Delta}}^2 + \|\xh_k\|_{Q_{22}}^2),
    %\end{align*}
    %where $\hat{h}_1=3(1+\alpha^2\brc_1+\beta^2\brc_2)$.
    
    \item For $\yh_{k+1}$, we have the following recursion
    \begin{align*}
        \yh_{k+1}&=(I-\beta_k\Delta)\yh_k+\beta_kf_{1}(O_k,x_k,y_k)-\beta_kA_{12}\xh_k.
    \end{align*}
    Taking norm on both sides, we have
    \begin{align}
        \|\yh_{k+1}\|_{Q_{\Delta}} & \leq \|I-\beta_k\Delta\|_{Q_{\Delta}}\|\yh_k\|_{Q_{\Delta}}+\beta_k\|f_{1}(O_k,x_k,y_k)\|_{Q_{\Delta}} +\beta_k\|A_{12}\xh_k\|_{Q_{\Delta}}.\label{eq:y_rec_lem_crud}
    \end{align}
    %By the assumption on $k$, we have
    We have
    \begin{align*}
        \|(I-\beta_k\Delta)\yh_k\|_{Q_{\Delta}}\leq \|I-\beta_k\Delta\|_{Q_{\Delta}}\|\yh_k\|_{Q_{\Delta}} \leq (1+\beta_k\|\Delta\|_{Q_{\Delta}})\|\yh_k\|_{Q_{\Delta}}.
    \end{align*}
    
    Furthermore, using Lemma \ref{lem:f_hat_bound}, we have $\|f_{1}(O_k,x_k,y_k)\|_{Q_{\Delta}}\leq \sqrt{\gamma_{max}(Q_{\Delta})} b_{max}\sqrt{d} +\check{h}_4 (\|\xh_k\|_{Q_{22}} +\|\yh_k\|_{Q_{\Delta}})$.
    
    Finally, we have:
    \begin{align*}
        \|A_{12}\xh_k\|_{Q_{\Delta}} \leq \|A_{12}\|_{Q_{\Delta}} \|\xh_k\|_{Q_{\Delta}}\leq \|A_{12}\|_{Q_{\Delta}} \frac{\sqrt{\gamma_{max}(Q_{\Delta})}}{\sqrt{\gamma_{min}(Q_{22})}}\|\xh_k\|_{Q_{22}}.
    \end{align*}
    Hence, we have
    \begin{align}
        \|\yh_{k+1}\|_{Q_{\Delta}}  \leq & (1+\beta_k\|\Delta\|_{Q_{\Delta}})\|\yh_k\|_{Q_{\Delta}} +\beta_k\left(\sqrt{\gamma_{max}(Q_{\Delta})} b_{max}\sqrt{d} +\check{h}_4 (\|\xh_k\|_{Q_{22}} +\|\yh_k\|_{Q_{\Delta}})\right)\nonumber\\
        &+\beta_k\|A_{12}\|_{Q_{\Delta}} \frac{\sqrt{\gamma_{max}(Q_{\Delta})}}{\sqrt{\gamma_{min}(Q_{22})}}\|\xh_k\|_{Q_{22}} \nonumber\\
        % \leq& (1+\beta_k (\|\Delta\|_{Q_{\Delta}} + \check{h}_4)\|\yh_k\|_{Q_{\Delta}} \\
        % &+\beta_k\max\left\{\sqrt{\gamma_{max}(Q_{\Delta})}b_{max} , \check{h}_4+\|A_{12}\|_{Q_{\Delta}} \frac{\sqrt{\gamma_{max}(Q_{\Delta})}}{\sqrt{\gamma_{min}(Q_{22})}} \right\}(\sqrt{d}
        % +\|\xh_k\|_{Q_{22}})\nonumber\\
        \leq& (1+\beta_k \hat{h}_1^y)\|\yh_k\|_{Q_{\Delta}} +\beta_k\hat{h}_2^y(\sqrt{d}+\|\xh_k\|_{Q_{22}}).\label{eq:y_k_rec_no_norm}
    \end{align}
    where $\hh^{y}_{1}=\|\Delta\|_{Q_{\Delta}} + \check{h}_4$ and $\hh^{y}_{2}=\max\left\{\sqrt{\gamma_{max}(Q_{\Delta})}b_{max} , \check{h}_4+\|A_{12}\|_{Q_{\Delta}} \frac{\sqrt{\gamma_{max}(Q_{\Delta})}}{\sqrt{\gamma_{min}(Q_{22})}} \right\}$. Squaring both sides of \eqref{eq:y_k_rec_no_norm}, we have
    \begin{align}
        \|\yh_{k+1}\|_{Q_{\Delta}}^2 &\leq (1 +\beta_k \hat{h}_1^y)^2\|\yh_k\|_{Q_{\Delta}}^2 +\beta_k^2(\hat{h}_2^y)^2(\sqrt{d}+\|\xh_k\|_{Q_{22}})^2 + 2\beta_k\hat{h}_2^y (1+\beta_k \hat{h}_1^y) \|\yh_k\|_{Q_{\Delta}} (\sqrt{d}+\|\xh_k\|_{Q_{22}})\nonumber\\
        & \leq (1+ \beta_k \hat{h}_1^y)^2\|\yh_k \|_{Q_{\Delta}}^2 +\beta_k^2 (\hat{h}_2^y)^2(\sqrt{d}+\|\xh_k\|_{Q_{22}})^2 + \hat{h}_2^y (1+ \beta \hat{h}_1^y) (\beta_kd+2\beta_k\|\yh_k\|_{Q_{\Delta}}^2 +  \beta_k\|\xh_k\|_{Q_{22}}^2)\tag{by Cauchy-Schwartz}\\
        & \leq (1+ \beta_k (2\hat{h}_1^y + \beta (\hat{h}_1^y)^2+2\hat{h}_2^y (1+ \beta \hat{h}_1^y)))\|\yh_k \|_{Q_{\Delta}}^2 +2\beta_k\beta (\hat{h}_2^y)^2(d+\|\xh_k\|_{Q_{22}}^2) \nonumber\\
        &+ \beta_k\hat{h}_2^y (1+ \beta \hat{h}_1^y) (d +  \|\xh_k\|_{Q_{22}}^2)\nonumber\\
        &= (1+ \beta_k\hh_1^{yy} )\|\yh_k \|_{Q_{\Delta}}^2 +\hh_2^{yy}\beta_k(d +  \|\xh_k\|_{Q_{22}}^2). \label{eq:y_k_rec_no_norm_2}
    \end{align}
    where $\hh_1^{yy}=2\hat{h}_2^y + \beta (\hat{h}_1^y)^2+2\hat{h}_2^y (1+ \beta \hat{h}_1^y)$ and $\hh_2^{yy}=2\beta (\hat{h}_2^y)^2 + \hat{h}_2^y (1+ \beta \hat{h}_1^y)$.

    \item Summing \eqref{eq:x_k_rec_no_norm_2} and \eqref{eq:y_k_rec_no_norm_2}, we get
    \begin{align*}
        U_{k+1} \leq (1+\alpha_k\hat{h}_1) U_k + \alpha_k \hat{h}_2(d+U_k)\\
        =(1+\alpha_k(\hat{h}_1+\hh_2)) U_k + \alpha_k \hat{h}_2d
    \end{align*}
    where $\hat{h}_1 = \max\left\{ \hh_1^{xx}, \frac{\beta}{\alpha}\hh_1^{yy}\right\}$ and $\hh_2=\max\left\{\hh_2^{xx},  \frac{\beta}{\alpha}\hh_2^{yy}\right\}$.

    \item 
    \begin{align}
        \|x_{k+1}-x_k\|&=\alpha_k\|A_{22}\xh_k-f_{2}(O_k,x_k, y_k)\|\nonumber\\
        &\leq \alpha_k\left(\|A_{22}\xh_k\|+ \|f_{2}(O_k,x_k,y_k)\|\right) \nonumber\\
        &\leq \alpha_k\left(\|A_{22}\| . \|\xh_k\|+  b_{max}\sqrt{d} + \check{h}_3 (\|\xh_k\|_{Q_{22}} +\|\yh_k\|_{Q_{\Delta}}) \right)\tag{Lemma \ref{lem:f_hat_bound}}\nonumber\\
        &\leq \alpha_k(\sqrt{d}+\|\xh_k\|_{Q_{22}} +\|\yh_k\|_{Q_{\Delta}})\nonumber
    \end{align}
    where $\hh_3=\max\left\{\|A_{22}\| +\check{h}_3 ,  b_{max} \right\}$.

    \item 
        \begin{align*}
            \|y_{k+1}-y_k\|&=\beta_k\|\Delta \yh_k+A_{12}\xh_k-f_{1}(O_k,x_k,y_k)\|\\
            &\leq \beta_k(\|\Delta\|\|\yh_k\|+\|A_{12}\|\|\xh_k\|+\|f_{1}(O_k,x_k,y_k)\|)\\
            &\leq \beta_k\left(\|\Delta\|\|\yh_k\|+\|A_{12}\|\|\xh_k\|+b_{max}\sqrt{d} + \check{h}_4 (\|\xh_k\|_{Q_{22}} +\|\yh_k\|_{Q_{\Delta}})\right)\tag{Lemma \ref{lem:f_hat_bound}}\\
            &\leq \beta_k\hh_4(\sqrt{d}+\|\xh_k\|_{Q_{22}} +\|\yh_k\|_{Q_{\Delta}})
        \end{align*}
        where $\hh_4=\max\left\{\|\Delta\|+\check{h}_4, \|A_{12}\| +\check{h}_4 ,  b_{max} \right\}$.

    %Furthermore, squaring both sides of \eqref{eq:y_rec_lem_crud}, we have
    %\begin{align*}
    %    \|\yh_{k+1}\|_{Q_{\Delta}}^2 \leq &3 \underbrace{\|(I-\beta_k\Delta) \yh_k\|_{Q_{\Delta}}^2}_{T_4} + 3\beta_k^2 \underbrace{\|f_{1}(O_k,x_k,y_k)\|_{Q_{\Delta}}^2}_{T_5} + 3\beta_k^2\underbrace{\|A_{12}\xh_k\|_{Q_{\Delta}}^2}_{T_6}.
    %\end{align*}
    %For $T_4$, by the assumption on $k$, we have
    %\begin{align*}
    %     \|(I-\beta_k\Delta)\yh_k\|_{Q_{\Delta}}^2\leq \|(I-\beta_k\Delta)\|_{Q_{\Delta}}^2\|\yh_k\|_{Q_{\Delta}}^2 \leq \|\yh_k\|_{Q_{\Delta}}^2.
    % \end{align*}
    
    % For $T_5$, using inequality \eqref{eq:f_1bound_delta}, we have $T_5\leq \brc_{12}(1+\|\xh_k\|_{Q_{22}}^2+\|\yh_k\|_{Q_{\Delta}}^2)$.
    
    % For $T_6$, we have:

    % \begin{align*}
    %     T_6\leq \|A_{12}\|_{Q_{\Delta}}^2\|\xh_k\|_{Q_{\Delta}}^2\leq \|A_{12}\|_{Q_{\Delta}}^2\frac{\gamma_{max}(Q_{\Delta})}{\gamma_{min}(Q_{22})}\|\xh_k\|_{Q_{22}}^2\\
    % \end{align*}
    %  Hence, we have
    % \begin{align*}
    %     \|\yh_{k+1}\|_{Q_{\Delta}}^2 \leq \hh_2 (1 +\| \yh_k\|_{Q_{\Delta}}^2 + \|\xh_k\|_{Q_{22}}^2),
    % \end{align*}
    % where $\hat{h}_2=3(1+\beta^2\brc_{12}+\beta^2\|A_{12}\|_{Q_{\Delta}}^2\frac{\gamma_{max}(Q_{\Delta})}{\gamma_{min}(Q_{22})})$.
\end{enumerate}
    
    % \begin{align*}
    %     T_3 \leq & 2\|A_{22}^{-1}A_{21}\|_{Q_{22}}^2[\|\Delta\yh_k+A_{12}\xh_k\|_{Q_{22}}^2+\|f_1(O_k,x_k,y_k)\|_{Q_{22}}^2] \\
    %     \leq & 2\|A_{22}^{-1}A_{21}\|_{Q_{22}}^2 [\|\Delta\yh_k+A_{12}\xh_k\|_{Q_{22}}^2+\|f_1(O_k,x_k,y_k)\|_{Q_{22}}^2] \\
    %     \leq & 2\|A_{22}^{-1}A_{21}\|_{Q_{22}}^2 [2\|\Delta\|_{Q_{22}}^2\|\yh_k\|_{Q_{22}}^2+2\|A_{12}\|_{Q_{22}}^2\|\xh_k\|_{Q_{22}}^2 +\|f_1(O_k,x_k,y_k)\|_{Q_{22}}^2]\\
    %     \leq & 2\|A_{22}^{-1}A_{21}\|_{Q_{22}}^2 [2\|\Delta\|_{Q_{22}}^2\|\yh_k\|_{Q_{22}}^2+2\|A_{12}\|_{Q_{22}}^2\|\xh_k\|_{Q_{22}}^2 +\brc_1(1+\|\xh_k\|_{Q_{22}}^2+\|\yh_k\|_{Q_{\Delta}}^2)] \tag{by inequality \eqref{eq:f_1_bound}}\\
    %     \leq & 2\|A_{22}^{-1}A_{21}\|_{Q_{22}}^2 [\brc_1 + (2\|\Delta\|_{Q_{22}}^2 + \brc_1)\|\yh_k\|_{Q_{22}}^2 + (2\|A_{12}\|_{Q_{22}}^2 + \brc_1) \|\xh_k\|_{Q_{22}}^2 ].
    % \end{align*}

\end{proof}    

\begin{proof}[Proof of Lemma \ref{lem:crude_Uk_bnd}]
    From Lemma \ref{lem:crdue_upper_bnd_norm}, for all $k\geq 0$, we have
    \begin{align*}
        U_{k}&\leq (1+\alpha_{k-1}(\hat{h}_1+\hh_2)) U_{k-1} + \alpha_{k-1} \hat{h}_2d\\
        &\leq \Pi_{i=0}^{k-1} (1+\alpha_i(\hat{h}_1+\hh_2)) U_0 + \hh_2d\sum_{i=0}^{k-1}\alpha_i\Pi_{j=i+1}^{k-1} (1+\alpha_{j}(\hat{h}_1+\hh_2))\\
        &\leq U_0 \exp\left((\hat{h}_1+\hh_2)\sum_{i=0}^{k-1}\alpha_i\right)  + \hh_2d\sum_{i=0}^{k-1}\alpha_i \exp\left((\hat{h}_1+\hh_2)\sum_{j=i+1}^{k-1}\alpha_{j}\right).
    \end{align*}
    For the first term, we have
    \begin{align*}
        \exp\left((\hat{h}_1+\hh_2)\sum_{i=0}^{k-1}\alpha_i\right)=&\exp\left((\hat{h}_1+\hh_2)\sum_{i=0}^{k-1}\frac{\alpha}{(i+K_0)^{\xi}}\right) \\
        \leq &\exp \left((\hat{h}_1 +\hh_2)\left(\frac{\alpha}{K_0^\xi}+\int_{x=0}^{k}\frac{\alpha}{(x+K_0)^{\xi}}dx\right)\right)\\
        = &\exp \left(\frac{(\hat{h}_1 +\hh_2)\alpha}{K_0^\xi}+(\hat{h}_1 +\hh_2)\left[\frac{\alpha}{(1-\xi)(x+K_0)^{\xi-1}}\right]_{x=0}^{k}\right)\\
        = &\exp \left(\frac{(\hat{h}_1 +\hh_2)\alpha}{K_0^\xi}+\frac{(\hat{h}_1 +\hh_2)\alpha}{(1-\xi)}\left[(k+K_0)^{1-\xi}-K_0^{1-\xi}\right]\right).
    \end{align*}
    Similarly, for the second term,
    \begin{align*}
        \sum_{i=0}^{k-1}\alpha_i \exp\left((\hat{h}_1+ \hh_2)\sum_{j=i+1}^{k-1}\alpha_{j}\right)\leq & \sum_{i=0}^{k-1}\frac{\alpha}{(i+K_0)^\xi} \exp\left(\frac{(\hat{h}_1 +\hh_2)\alpha}{(1-\xi)} \left[(k+K_0)^{1-\xi}-(i+K_0)^{1-\xi}\right]\right)\\
        = & \alpha\exp\left(\frac{(\hat{h}_1 +\hh_2)\alpha}{(1-\xi)}(k+K_0)^{1-\xi}\right)\sum_{i=0}^{k-1} 
        \frac{1}{(i+K_0)^\xi}\exp\left(-\frac{(\hat{h}_1 +\hh_2)\alpha}{(1-\xi)}(i+K_0)^{1-\xi} \right)\\
        \leq & \alpha\exp\left(\frac{(\hat{h}_1 +\hh_2)\alpha}{(1-\xi)}(k+K_0)^{1-\xi}\right)\sum_{i=0}^{k-1} 
        \frac{1}{(i+K_0)^\xi}\exp\left(-\frac{(\hat{h}_1 +\hh_2)\alpha}{(1-\xi)}(i+K_0)^{1-\xi} \right)\\
        = & \alpha\exp\left(\frac{(\hat{h}_1 +\hh_2)\alpha}{(1-\xi)}(k+K_0)^{1-\xi}\right) \bigg[ 
        \frac{1}{K_0^\xi}\exp\left(-\frac{(\hat{h}_1 +\hh_2)\alpha}{(1-\xi)}K_0^{1-\xi} \right) 
        \\
        &\quad\quad+\sum_{i=1}^{k-1}\frac{1}{(i +K_0)^\xi} \exp\left(-\frac{(\hat{h}_1 +\hh_2)\alpha}{(1-\xi)}(i+K_0)^{1-\xi} \right) \bigg]\\
        \leq & \alpha\exp\left(\frac{(\hat{h}_1 +\hh_2)\alpha}{(1-\xi)}(k+K_0)^{1-\xi}\right) \bigg[ 
        \frac{1}{K_0^\xi}\exp\left(-\frac{(\hat{h}_1 +\hh_2)\alpha}{(1-\xi)}K_0^{1-\xi} \right) 
        \\
        &\quad\quad+\int_{x=0}^{k}\frac{1}{(x +K_0)^\xi} \exp\left(-\frac{(\hat{h}_1 +\hh_2)\alpha}{(1-\xi)}(x+K_0)^{1-\xi} \right) \bigg]\\
        = & \alpha\exp\left(\frac{(\hat{h}_1 +\hh_2)\alpha}{(1-\xi)}(k+K_0)^{1-\xi}\right) \bigg[ 
        \frac{1}{K_0^\xi}\exp\left(-\frac{(\hat{h}_1 +\hh_2)\alpha}{(1-\xi)}K_0^{1-\xi} \right) 
        \\
        &\quad\quad-\frac{1}{(\hat{h}_1 +\hh_2)\alpha} \exp\left(-\frac{(\hat{h}_1 +\hh_2)\alpha}{(1-\xi)}(x+K_0)^{1-\xi} \right)\bigg]_{x=0}^{k} \bigg]\\
        \leq & \alpha\exp\left(\frac{(\hat{h}_1 +\hh_2)\alpha}{(1-\xi)}(k+K_0)^{1-\xi}\right) \bigg[ 
        \frac{1}{K_0^\xi}\exp\left(-\frac{(\hat{h}_1 +\hh_2)\alpha}{(1-\xi)}K_0^{1-\xi} \right) 
        \\
        &+\frac{1}{(\hat{h}_1 +\hh_2)\alpha} \exp\left(-\frac{(\hat{h}_1 +\hh_2)\alpha}{(1-\xi)}K_0^{1-\xi} \right)  \bigg]\\
        \leq & \alpha\left(\frac{1}{K_0^\xi}+\frac{1}{(\hat{h}_1 +\hh_2)\alpha}\right)\exp\left(\frac{(\hat{h}_1 +\hh_2)\alpha}{(1-\xi)}\left((k+K_0)^{1-\xi}-K_0^{1-\xi}\right)\right).
    \end{align*}
    Putting things together, we have
    \begin{align*}
        U_{k}\leq& U_0\exp \left(\frac{(\hat{h}_1 +\hh_2)\alpha}{K_0^\xi}+\frac{(\hat{h}_1 +\hh_2)\alpha}{(1-\xi)}\left[(k+K_0)^{1-\xi}-K_0^{1-\xi}\right]\right) \\
        &+\alpha\hat{h}_2 d\left(\frac{1}{K_0^\xi}+\frac{1}{(\hat{h}_1 +\hh_2)\alpha}\right)\exp\left(\frac{(\hat{h}_1 +\hh_2)\alpha}{(1-\xi)}\left((k+K_0)^{1-\xi}-K_0^{1-\xi}\right)\right)
    \end{align*}
\end{proof}

\begin{proof}[Proof of Lemma \ref{lem:boundedness} ]
    %Throughout this proof, all the constants introduced are restricted only to this proof.
    
    Recall that $Q_{22}$ and $Q_{\Delta}$ were defined such that 
    \begin{align*}
        A_{22}^\top Q_{22}+Q_{22}A_{22}=I\\
        \Delta^\top Q_{\Delta}+Q_{\Delta}\Delta=I.
    \end{align*}
    Note that by Assumption \ref{ass:hurwitz_main}, we can always find positive-definite matrices $Q_{22}$ and $Q_{\Delta}$ which satisfy the above equations. Furthermore, for all $k>k_C$, by Lemma \ref{lem:contraction_prop} we have $\|(I-\alpha_kA_{22})\|^2_{Q_{22}}\leq (1-a_{22}\alpha_k)$ and $\|(I-\beta_k\Delta)\|^2_{Q_{\Delta}}\leq (1-\delta\beta_k)$  for positive constants $a_{22}=\frac{1}{2\|Q_{22}\|}$ and $\delta=\frac{1}{2\|Q_{\Delta}\|}$. Throughout the proof, we consider $k>k_C$. 

    Recall $V_k=\E[\|\xh_k\|_{Q_{22}}^2]$ and $W_k=\E[\|\yh_k\|_{Q_{\Delta}}^2]$.
    
     First, we handle the $V_k$ term. 
    \begin{align*}
        x_{k+1}=&x_k-\alpha_k(A_{21}y_k+A_{22}x_k)+\alpha_kf_{2}(O_k,x_k,y_k)\\
        x_{k+1}+A_{22}^{-1}A_{21}y_{k+1}=&x_k+A_{22}^{-1}A_{21}y_{k}-\alpha_kA_{22}(x_k+A_{22}^{-1}A_{21}y_k)+\alpha_kf_{2}(O_k,x_k,y_k)+A_{22}^{-1}A_{21}(y_{k+1}-y_k)\\
        \xh_{k+1}=&(I-\alpha_kA_{22})\xh_k+\alpha_kf_{2}(O_k,x_k,y_k)+\beta_kA_{22}^{-1}A_{21}(-(A_{11}y_k+A_{12}x_k)+f_1(O_k,x_k,y_k))\\
        \xh_{k+1}=&(I-\alpha_kA_{22})\xh_k+\alpha_kf_{2}(O_k,x_k,y_k)\\
        &+\beta_kA_{22}^{-1}A_{21}(-(\underbrace{(A_{11}-A_{12}A_{22}^{-1}A_{21})}_{\Delta}\yh_k+A_{12}\xh_k)+f_1(O_k,x_k,y_k))
        \end{align*}
        Taking norm square and expectation thereafter, we get:
        \begin{align*}
            \E[\|\xh_{k+1}\|^2_{Q_{22}}]=&\E[\|(I-\alpha_kA_{22})\xh_k\|^2_{Q_{22}}]+\underbrace{\alpha_k^2\E[\|f_{2}(O_k,x_k,y_k)\|^2_{Q_{22}}]}_{T_1}\\
            &+\underbrace{\beta_k^2\E[\|A_{22}^{-1}A_{21}(-(\Delta\yh_k+A_{12}\xh_k)+f_1(O_k,x_k,y_k))\|^2_{Q_{22}}]}_{T_2}\\
            &+\underbrace{2\beta_k\E[\langle(I-\alpha_kA_{22})\xh_k, A_{22}^{-1}A_{21}(-(\Delta\yh_k+A_{12}\xh_k)+f_1(O_k,x_k,y_k)) \rangle_{Q_{22}}]}_{T_3}\\
            &+\underbrace{2\alpha_k\beta_k\E[\langle f_2(O_k,x_k,y_k), A_{22}^{-1}A_{21}(-(\Delta\yh_k+A_{12}\xh_k)+f_1(O_k,x_k,y_k)) \rangle_{Q_{22}}]}_{T_4}\\
            &+\underbrace{2\alpha_k\E[\langle(I-\alpha_kA_{22})\xh_k, f_2(O_k, x_k, y_k) \rangle_{Q_{22}}]}_{T_5}
    \end{align*}
    \begin{itemize}
        \item For $T_1$, by Lemma \ref{lem:f_hat_bound} we have
        \begin{align*}
            \|f_2(O_k, x_k, y_k)\|_{Q_{22}}^2\leq 3(\gamma_{max}(Q_{22}) b^2_{max}d +\check{h}_3^2 (\|\xh_k\|^2_{Q_{22}} +\|\yh_k\|^2_{Q_{\Delta}})),
        \end{align*}
        to get:
        \begin{align*}
            T_1\leq \alpha_k^2\brc_1(d+V_k+W_k).
        \end{align*}
        where $\brc_1=3\max\{\gamma_{max}(Q_{22}) b^2_{max}, \check{h}_3^2\}$.
        \item For $T_2$,  again we use Lemma \ref{lem:f_hat_bound} 
        %\begin{align}
        %    \|f_1(O_k, x_k, y_k)\|_{Q_{22}}^2\leq \brc_1(1+\|\xh_k\|_{Q_{22}}^2+\|\yh_k\|_{Q_{\Delta}}^2), \label{eq:f_1_bound}
        %\end{align}
        to get:
        \begin{align*}
            T_2\leq & 3\beta_k^2\left(\check{h}_6^2d +\check{h}_5^2 (\|\xh_k\|_{Q_{22}}^2 +\|\yh_k\|_{Q_{\Delta}}^2)\right) \\
            \leq & \brc_2\beta_k^2(d+V_k+W_k)
        \end{align*}
        
        %\begin{align}
        %    T_2&\leq 2\|A_{22}^{-1}A_{21}\|^2_{Q_{22}}\beta_k^2(\|\Delta\yh_k+A_{12}\xh_k\|^2_{Q_{22}}+\E[\|f_1(O_k, x_k, y_k)\|_{Q_{22}}^2])\nonumber\\
            %&\leq4\|A_{22}^{-1}A_{21}\|^2_{Q_{22}}\beta_k^2(\|\Delta\|^2_{Q_{22}}\frac{\gamma_{max}(Q_{\Delta})}{\gamma_{min}(Q_{22})}\E[\|\yh_k\|^2_{Q_{\Delta}}]+\|A_{12}\|^2_{Q_{22}}\E[\|\xh_k\|^2_{Q_{22}}]+\E[\|f_1(O_k, x_k, y_k)\|_{Q_{22}}^2])\nonumber\\
            %&\leq \brc_2\beta_k^2(1+V_k+W_k)\label{eq:T_2_bound}
        %\end{align}
        where $\brc_2=3\max\{\check{h}_6^2, \check{h}_5^2\}$.
        \item For $T_3$, we apply Cauchy-Schwarz inequality to get:
        \begin{align*}
           T_3\leq 2\beta_k\E[\|\xh_k\|_{Q_{22}}\|A_{22}^{-1}A_{21}\left((\Delta\yh_k+A_{12}\xh_k)-f_1(O_k,x_k,y_k))\right)\|_{Q_{22}}]
        \end{align*}
        Using AM-GM inequality $2ab\leq \frac{a^2}{\eta}+b^2\eta$ with $\eta=\frac{2\beta_k}{a_{22}\alpha_k}$, we get:
        \begin{align}
            T_3&\leq \frac{a_{22}\alpha_k}{2}\E[\|\xh_k\|_{Q_{22}}^2]+\frac{4\beta_k^2}{a_{22}\alpha_k}\E[\|A_{22}^{-1}A_{21}\left((\Delta\yh_k+A_{12}\xh_k)-f_1(O_k,x_k,y_k)\right)\|_{Q_{22}}^2]\nonumber\\
            &\leq \frac{a_{22}\alpha_k}{2}V_k+\frac{\brc_3\beta_k^2}{\alpha_k}(d+V_k+W_k)\label{c_3def}
        \end{align}
        where $\brc_3=\frac{4\brc_2}{a_{22}}$.
        \item For $T_4$, again applying Cauchy-Schwarz inequality, we get:
        \begin{align*}
            T_4\leq 2\alpha_k\beta_k\E[\|f_2(O_k,x_k,y_k)\|_{Q_{22}}\|A_{22}^{-1}A_{21}(-(\Delta\yh_k+A_{12}\xh_k)+f_1(O_k,x_k,y_k))\|_{Q_{22}}]
        \end{align*}
        Using AM-GM inequality and after some simple calculation, we get:
        \begin{align*}
            T_4\leq \brc_4\alpha_k\beta_k(d+V_k+W_k)
        \end{align*}
        where $\brc_4=\brc_1+\brc_2$.

        \item For $T_5$, we break it down into two terms:
        \begin{align*}
            T_5&=2\alpha_k\E[\langle(I-\alpha_kA_{22})\xh_k, f_2(O_k, x_k, y_k) \rangle_{Q_{22}}]\\
            &=\underbrace{2\alpha_k\E[\langle \xh_k, f_2(O_k, x_k, y_k) \rangle_{Q_{22}}]}_{T_{51}}\underbrace{-2\alpha_k^2\E[\langle A_{22}\xh_k, f_2(O_k, x_k, y_k) \rangle_{Q_{22}}]}_{T_{52}}
        \end{align*}
        By Remark \ref{rem:pois_eq_geo_mix}, we have a unique function $\hat{f}_2(O, x_k, y_k)$ such that,
        \begin{align*}
            \fh_2(O,x_k,y_k) =f_2(O,x_k,y_k)+ \sum_{o'\in S}P(o'|O) \fh_2(o',x_k,y_k),
        \end{align*}
    where $P(O'|O)$ is the transition probability corresponding to the Markov chain $\{O_k\}_{k\geq 0}$. 
    Therefore,
    \begin{align*}
        T_{51}=&2\alpha_k\E\left[\langle\xh_k, \fh_2(O_k,x_k,y_k)-\sum_{o'\in S}P(o'|O_k)\fh_2(o',x_k,y_k)\rangle_{Q_{22}}\right]\nonumber\\
        =&2\alpha_k\E\left[\langle\xh_k,\fh_2(O_k,x_k,y_k) -\E_{O_k}\fh_2 (\cdot,x_k,y_k) \rangle_{Q_{22}}\right]\nonumber\\
        =&2\alpha_k\E\left[\langle\xh_k,\fh_2(O_k,x_k,y_k)-\E_{O_{k-1}}\fh_2(\cdot,x_k,y_k)+\E_{O_{k-1}}\fh_2(\cdot,x_k,y_k)-\E_{O_k}\fh_2(\cdot,x_k,y_k)\rangle_{Q_{22}}\right]\nonumber\\
        =&2\alpha_k\E\left[\langle\xh_k, \E_{O_{k-1}}\fh_2(\cdot,x_k,y_k)-\E_{O_{k}}\fh_2(\cdot,x_k,y_k)\rangle\right]\tag{By tower property}\nonumber\\
        =&2\alpha_k\underbrace{\E[\langle\xh_k,\E_{O_{k-1}} \fh_2 (\cdot,x_k, y_k)\rangle_{Q_{22}}]}_{\bar{d}_k^x}- 2\alpha_k \underbrace{\E[\langle \xh_{k+1}, \E_{O_{k}} \fh_2 (\cdot,x_{k+1}, y_{k+1}) \rangle_{Q_{22}}]}_{\bar{d}_{k+1}^x}\\
        &+ \underbrace{ 2\alpha_k\E[ \langle\xh_{k+1},\E_{O_{k}}\fh_2 (\cdot,x_{k+1} ,y_{k+1})-\E_{O_{k}} \fh_2(\cdot,x_k,y_k)\rangle_{Q_{22}}]}_{T_{511}}+ \underbrace{2\alpha_k\E[\langle (\xh_{k+1}^\top-\xh_k^\top), \E_{O_{k}} \fh_2(\cdot,x_k,y_k)\rangle_{Q_{22}}]}_{T_{512}}
    \end{align*}
    For $T_{511}$, we use Cauchy-Schwarz inequality and the fact that $\hat{f}_2$ is Lipschitz, to get:
    \begin{align*}
        T_{511}\leq& 2\alpha_k \hh_2\sqrt{\gamma_{max}(Q_{22})}\E[\|\xh_{k+1}\|_{Q_{22}}(\|x_{k+1}-x_{k}\|+\|y_{k+1}-y_{k}\|)]\tag{Lemma \ref{lem:f_hat_bound}}\\
        \leq & \alpha_k\brc_{5} \E[\|\xh_{k+1}\|_{Q_{22}}(\hh_3\alpha_k + \hh_4\beta_k)(\sqrt{d}+\|\xh_k\|_{Q_{22}} +\|\yh_k\|_{Q_{\Delta}})]\tag{Lemma \ref{lem:crdue_upper_bnd_norm}}
    \end{align*}
    where $\brc_5=2\sqrt{\gamma_{max}(Q_{22})}\check{h}_2$. Applying AM-GM to the previous inequality, we get
    \begin{align*}
        T_{511}\leq & 0.5\alpha_k^2\left(\hh_3 + \hh_4\frac{\beta}{\alpha}\right)\brc_{5} \E[\|\xh_{k+1}\|_{Q_{22}}^2+(\sqrt{d}+\|\xh_k\|_{Q_{22}} +\|\yh_k\|_{Q_{\Delta}})^2]\\
        \leq & 0.5\alpha_k^2\left(\hh_3 + \hh_4\frac{\beta}{\alpha}\right)\brc_{5} \E\left[(1 +\alpha_k \hh_1^{xx}) V_k +\hh_2^{xx} \alpha_k(d+W_k)+3(d+\|\xh_k\|_{Q_{22}}^2 +\|\yh_k\|_{Q_{\Delta}}^2)\right]\tag{Lemma \ref{lem:crdue_upper_bnd_norm}}\\
        =& \alpha_k^2 \brc_{6}(d+V_k+W_k),
    \end{align*}
    where $\brc_{6} = 0.5\left(\hh_3 + \hh_4\frac{\beta}{\alpha}\right)\brc_{5}\max\{4 +\alpha\hh_1^{xx}, 3 +\hh_2^{xx}\} $.

    Similarly, for $T_{512}$, we use the Cauchy-Schwarz inequality to get:
    \begin{align*}
        T_{512}\leq& 2\alpha_k^2\E[\|-A_{22}\xh_k+f_{2}(O_k,x_k,y_k)\\
        &+\frac{\beta_k}{\alpha_k}A_{22}^{-1}A_{21}(-(A_{11}y_k+A_{12}x_k)+f_1(O_k,x_k,y_k))\|_{Q_{22}}\|\E_{O_{k}}\fh_2(\cdot,x_k,y_k)\|_{Q_{22}}]
    \end{align*}
    
    Applying AM-GM inequality $2ab\leq \frac{a^2}{\eta}+b^2\eta$ with $\eta=\frac{1-\rho}{2}$, we get:
    \begin{align*}
        T_{512}\leq& \alpha_k^2\E\bigg[\frac{2}{1-\rho}\|-A_{22}\xh_k+f_{2}(O_k,x_k,y_k)+\frac{\beta_k}{\alpha_k}A_{22}^{-1}A_{21}(-(A_{11}y_k+A_{12}x_k) + f_1(O_k, x_k,y_k))\|_{Q_{22}}^2\\
        &+ \frac{1-\rho}{2}\|\E_{O_{k}} \fh_2(\cdot, x_k,y_k)\|_{Q_{22}}^2\bigg]\\
        \leq& \alpha_k^2\brc_7(d+V_k+W_k) 
    \end{align*}
    where $\brc_7= \frac{2}{1-\rho}\left(\max\{4\|A_{22}\|_{Q_{22}}^2 +12\check{h}_3^2 , 12\gamma_{max}(Q_{22}) b_{max}^2 \} + \frac{2\beta^2\brc_2}{\alpha^2}\right) + \frac{6\gamma_{\max}(Q_{22})}{(1-\rho)} \max\left\{b_{max}^2\gamma_{max} , \frac{\check{h}_3^2}{4}\right\} $. Here, we used
    \begin{align*}
        \|-A_{22}\xh_k+f_{2}(O_k,x_k,y_k)+\frac{\beta_k}{\alpha_k}A_{22}^{-1}A_{21}(-(A_{11}y_k+A_{12}x_k) + f_1(O_k, x_k,y_k))\|_{Q_{22}}^2 \\
        \leq 2\|-A_{22}\xh_k+f_{2}(O_k,x_k,y_k)\|_{Q_{22}}^2+\frac{2\beta^2_k}{\alpha^2_k}\|A_{22}^{-1}A_{21}(-(A_{11}y_k+A_{12}x_k) + f_1(O_k, x_k,y_k))\|_{Q_{22}}^2\\
        \leq \left(\max\{4\|A_{22}\|_{Q_{22}}^2 +12\check{h}_3^2 , 12\gamma_{max}(Q_{22}) b_{max}^2 \} + \frac{2\beta^2\brc_2}{\alpha^2}\right)(d+V_k+W_k)\tag{Lemma \ref{lem:f_hat_bound}}
    \end{align*}

    Furthermore, by Lemma \ref{lem:f_hat_bound} we have
    \begin{align*}
        \E\left[\|\E_{O_{k}} \fh_2(\cdot, x_k,y_k)\|_{Q_{22}}^2\right] \leq & \frac{6\gamma_{\max}(Q_{22})}{(1-\rho)^2} \E\left[  b_{max}^2\gamma_{max}(Q_{22}) d+ \frac{\check{h}_3^2}{4}\left(\left\| \xh_k \right\|_{Q_{22}}^2 +  \|\yh_k\|_{Q_{\Delta}}^2\right) \right]\\ \leq & \frac{6\gamma_{\max}(Q_{22})}{(1-\rho)^2} \max\left\{b_{max}^2\gamma_{max} , \frac{\check{h}_3^2}{4}\right\} (d+V_k+W_k)
    \end{align*}

    Finally, for $T_{52}$, using Cauchy-Schwarz inequality and then AM-GM inequality, we have:
    \begin{align*}
        T_{52}&\leq \alpha_k^2(\E[\|A_{22}\xh_k\|_{Q_{22}}^2]+\E[\|f_2(O_k,x_k,y_k)\|_{Q_{22}}^2])\\
        &\leq \alpha_k^2(\E[\|A_{22}\|^2_{Q_{22}}\|\xh_k\|_{Q_{22}}^2] + 3\gamma_{max}(Q_{22}) b_{max}^2d +3\check{h}_3^2 (\|\xh_k\|_{Q_{22}}^2 +\|\yh_k\|_{Q_{\Delta}}^2) ) \tag{Lemma   \ref{lem:f_hat_bound}}\\
        &\leq \alpha_k^2\max\left\{\|A_{22}\|^2_{Q_{22}} + 3\check{h}_3^2 , 3\gamma_{max}(Q_{22}) b_{max}^2  \right\} (d+V_k+W_k)\\
        &\leq \brc_8\alpha_k^2(d+V_k+W_k),
    \end{align*}
    where $\brc_8= \max\left\{\|A_{22}\|^2_{Q_{22}} + 3\check{h}_3^2 , 3\gamma_{max}(Q_{22}) b_{max}^2  \right\}$.
    \end{itemize}
    Finally, by Lemma \ref{lem:contraction_prop}  we have that:
    \begin{align*}
        \E[\|(I-\alpha_kA_{22}) \xh_k\|^2_{Q_{22}}]\leq (1-a_{22}\alpha_k)V_k.
    \end{align*}

    Combining everything, we have:
    \begin{align}
        V_{k+1} \leq& (1-a_{22}\alpha_k)V_k + \alpha_k^2\brc_1(d +V_k+W_k) + \brc_2\beta_k^2(d+V_k+W_k) + \frac{a_{22}\alpha_k}{2}V_k+\frac{\brc_3\beta_k^2}{\alpha_k}(d+V_k+W_k) \nonumber\\
        &+ \brc_4 \alpha_k\beta_k (d+V_k +W_k) + \alpha_k^2 \brc_{6}(d+V_k+ W_k) + \alpha_k^2\brc_7 (d+ V_k+W_k) + \brc_8\alpha_k^2 (d+V_k+W_k)\nonumber\\
        &+2\alpha_k(\bar{d}_k^x-\bar{d}_{k+1}^x)\nonumber\\
        \leq & \left(1-\frac{a_{22}\alpha_k}{2}\right)V_k+ \alpha_k^2 \brc_{9} (d+V_k+W_k) +\frac{\brc_3\beta_k^2}{\alpha_k}(d+V_k+W_k) + 2\alpha_{k-1}\bar{d}_k^x-2\alpha_k \bar{d}_{k+1}^x+2(\alpha_k-\alpha_{k-1}) \bar{d}_{k}^x,\label{eq:V_k_recursion}
    \end{align}
    where $\brc_9 = \brc_1+\frac{\beta^2}{\alpha^2}\brc_2 + \frac{\beta}{\alpha}\brc_4 + \brc_6 + \brc_7 + \brc_8$.
    
    %\begin{align*}
    %    V_{k+1}\leq &(1-\frac{a_{22}\alpha_k}{2})V_k+c_{17}(\alpha_k^2+\frac{\beta_k^2}{\alpha_k})V_k+\alpha_k(d_k^x-d_{k+1}^x)+c_{18}(\alpha_k^2+\frac{\beta_k^2}{\alpha_k})W_k+c_{19}(\alpha_k^2+\frac{\beta_k^2}{\alpha_k})\\
     %   =& (1-\frac{a_{22}\alpha_k}{2})V_k+c_{17}(\alpha_k^2+\frac{\beta_k^2}{\alpha_k})V_k+\alpha_{k-1}d_k^x-\alpha_kd_{k+1}^x+c_{18}(\alpha_k^2+\frac{\beta_k^2}{\alpha_k})W_k+c_{19}(\alpha_k^2+\frac{\beta_k^2}{\alpha_k})\\
     %   &+(\alpha_k-\alpha_{k-1})d_k^x
    %\end{align*}
    We bound the last term as follows:
    
    \begin{align}
        |(\alpha_k-\alpha_{k-1})\bar{d}_k^x|&\leq \frac{\xi}{\alpha}\alpha_k^2|\bar{d}_k^x|\tag{Lemma \ref{lem:step_size_gap}}\nonumber\\
        &\leq \frac{\xi}{\alpha}\alpha_k^2 \E[\|\xh_k\|_{Q_{22}}\|\E_{O_{k-1}} \fh_2 (\cdot,x_k, y_k)\|_{Q_{22}}]\nonumber\\
        &\leq \frac{\xi}{\alpha}\alpha_k^2 \E\left[\|\xh_k\|_{Q_{22}}\left(\frac{2}{1-\rho} \left[b_{max} \sqrt{\gamma_{max}(Q_{22})} \sqrt{d}+ \frac{\check{h}_3}{2}\left(\left\| \xh_k\right\|_{Q_{22}} +  \|\yh_k\|_{Q_{\Delta}}\right) \right]\right)\right] \tag{Lemma \ref{lem:f_hat_bound}}\nonumber\\
        &\leq \frac{2\xi}{(1-\rho)\alpha}\max\left\{b_{max}\sqrt{\gamma_{max}(Q_{22})} , \frac{\check{h}_3}{2}\right\} \alpha_k^2 \E\left[\|\xh_k\|_{Q_{22}}(  \sqrt{d} + \|\xh_k\|_{Q_{22}} +\|\yh_k\|_{Q_{\Delta}})\right]\nonumber\\
        &\leq \frac{\xi}{(1-\rho)\alpha}\max\left\{b_{max}\sqrt{\gamma_{max}(Q_{22})} , \frac{\check{h}_3}{2}\right\} \alpha_k^2 \E[\|\xh_k\|_{Q_{22}}^2+  3(d + \|\xh_k\|_{Q_{22}}^2 + \|\yh_k\|_{Q_{\Delta}}^2 )] \nonumber\\
        &= \frac{\brc_{10}}{2}  \alpha_k^2 (d+V_k+W_k) \label{eq:d^x_k_upper_bnd}
    \end{align}
    where $\brc_{10}=\frac{8\xi}{(1-\rho)\alpha}\max\left\{b_{max}\sqrt{\gamma_{max}(Q_{22})} , \frac{\check{h}_3}{2}\right\} $. Thus we get:
    \begin{align}\label{eq:V_k_rec}
        V_{k+1}\leq &\left(1-\frac{a_{22}\alpha_k}{2}\right)V_k+ \alpha_k^2 \brc_{9} (d+V_k+W_k) +\frac{\brc_3\beta_k^2}{\alpha_k}(d+V_k+W_k) +2\alpha_{k-1}\bar{d}_k^x-2\alpha_k \bar{d}_{k+1}^x\nonumber\\
        &+\check{c}_{10} \alpha_k^2 (d+V_k+W_k)\nonumber\\
        \leq & \left(1-\frac{a_{22}\alpha_k}{2}\right)V_k+ \alpha_k^2 \brc_{11} (d+V_k+W_k) +\frac{\brc_3\beta_k^2}{\alpha_k}(d+V_k+W_k) +2\alpha_{k-1}\bar{d}_k^x-2\alpha_k \bar{d}_{k+1}^x,
    \end{align}
    where $\brc_{11} = \brc_{9} + \brc_{10} $.

    Next, we handle $W_k$. We have
    \begin{align*}
        y_{k+1}&=y_k-\beta_k(A_{11}y_k+A_{12}x_k)+\beta_kf_{1}(O_k,x_k,y_k)\\
        \yh_{k+1}&=\yh_k-\beta_k((A_{11}-A_{12}A_{22}^{-1}A_{21})\yh_k+A_{12}\xh_k)+\beta_kf_{1}(O_k,x_k,y_k)\\
        \yh_{k+1}&=(I-\beta_k\Delta)\yh_k+\beta_kf_{1}(O_k,x_k,y_k)-\beta_kA_{12}\xh_k\\
\end{align*}
Taking norm square and expectation thereafter, we get:
\begin{align*}
        \E[\|\yh_{k+1}\|^2_{Q_{\Delta}}]&= \E[\|(I-\beta_k\Delta)\yh_k\|^2_{Q_{\Delta}}]+\underbrace{\beta_k^2\E[\|f_{1}(O_k,x_k,y_k)]\|^2_{Q_{\Delta}}}_{T_6}+\underbrace{\beta_k^2\E[\|A_{12}\xh_k\|^2_{Q_{\Delta}}]}_{T_7}\\
        &\underbrace{-2\beta_k\E[\langle(I-\beta_k\Delta)\yh_k, A_{12}\xh_k \rangle_{Q_{\Delta}}]}_{T_8}\underbrace{-2\beta_k^2\E\langle f_1(O_k,x_k,y_k), A_{12}\xh_k\rangle_{Q_{\Delta}}]}_{T_9}\\
        &+\underbrace{2\beta_k\E[\langle(I-\beta_k\Delta)\yh_k, f_1(O_k, x_k, y_k) \rangle_{Q_{\Delta}}]}_{T_{10}}.
    \end{align*}
    
    \begin{itemize}
        \item For $T_6$, using Lemma \ref{lem:f_hat_bound} we have 
        \begin{align*}
            T_6 &\leq 3\beta_k^2(\gamma_{max}(Q_{\Delta}) b_{max}^2d +\check{h}_4^2 (V_k +W_k))\\
            &= \brc_{12} \beta_k^2(d+ V_k +W_k)\nonumber,
        \end{align*}
        where $\brc_{12} = 3\max\{\gamma_{max}(Q_{\Delta}) b_{max}^2, \check{h}_4^2\}$.
        \item For $T_7$, we have
        \begin{align*}
            T_7&\leq \|A_{12}\|^2_{Q_{\Delta}}\beta_k^2 \frac{\gamma_{max}(Q_{\Delta})}{\gamma_{min}(Q_{22})}V_k
        \end{align*}
        \item For $T_8$, using Cauchy-Schwarz inequality, we have:
        \begin{align*}
            T_8 \leq& 2\beta_k \|A_{12}\|_{Q_{\Delta}} \|I -\beta_k \Delta\|_{Q_{\Delta}} \E[\| \yh_k\|_{Q_{\Delta}} \|\xh_k \|_{Q_{\Delta}}]\\
            \leq& 2 \beta_k \|A_{12}\|_{Q_{\Delta}} \E[\|\yh_k\|_{Q_{\Delta}} \|\xh_k\|_{Q_{\Delta}}]\tag{Assumption on $k$}\\
            \leq & \frac{\beta_k\delta}{2}\E[\|\yh_k\|_{Q_{\Delta}}^2] + \beta_k \frac{2\|A_{12}\|^2_{Q_{\Delta}}}{\delta}\E[\|\xh_k\|_{Q_{\Delta}}^2]\\
            \leq & \frac{\beta_k\delta}{2}W_k + \beta_k \frac{2\|A_{12}\|^2_{Q_{\Delta}}\gamma_{max}(Q_{\Delta})}{\gamma_{min}(Q_{22})\delta}V_k
        \end{align*}
        where for the one to last inequality we used AM-GM inequality $2ab\leq \frac{a^2}{\eta}+\eta b^2$ with $\eta=\frac{\delta}{2}$.
        \item For $T_9$, we have the following.
        \begin{align*}
            T_9 & \leq 2\beta_k^2 \E[\|f_1(O_k,x_k,y_k)\|_{Q_{\Delta}} . \|A_{12}\xh_k\|_{Q_{\Delta}}]\\
            &\leq \beta_k^2 \E[\|f_1(O_k,x_k,y_k)\|_{Q_{\Delta}}^2 + \|A_{12}\xh_k\|_{Q_{\Delta}}^2]\\
            &\leq \beta_k^2 [\check{c}_{12}(d+ V_k +W_k) + \|A_{12}\|_{Q_{\Delta}}^2\frac{\gamma_{max}(Q_{\Delta})}{\gamma_{min}(Q_{22})}V_k]
        \end{align*}

        \item  For $T_{10}$, we have
        \begin{align*}
            T_{10}&= \underbrace{2\beta_k \E[\langle \yh_k, f_1(O_k, x_k, y_k) \rangle_{Q_{\Delta}}]}_{T_{101}}\underbrace{-2\beta_k^2\E[\langle \Delta\yh_k, f_1(O_k, x_k, y_k) \rangle_{Q_{\Delta}}]}_{T_{102}}
        \end{align*}
        Similar to analysis of $T_5$, we have
        \begin{align*}
            T_{101} =&2\beta_k \underbrace{\E[\langle\yh_k, \E_{O_{k-1}} \fh_1 (\cdot, x_k, y_k)\rangle_{Q_{\Delta}}]}_{\bar{d}_k^y}- 2\beta_k \underbrace{\E[\langle \yh_{k+1}, \E_{O_{k}} \fh_1 (\cdot, x_{k+1}, y_{k+1}) \rangle_{Q_{\Delta}}]}_{\bar{d}_{k+1}^y}\\
            &+ \underbrace{ 2\beta_k\E[\langle \yh_{k+1},\E_{O_{k}}\fh_1 (\cdot,x_{k+1} ,y_{k+1})-\E_{O_{k}}\fh_1 (\cdot,x_k,y_k)\rangle_{Q_{\Delta}}]}_{T_{1011}}+ \underbrace{2\beta_k\E[\langle (\yh_{k+1}^\top-\yh_k^\top), \E_{O_{k}}\fh_1 (\cdot,x_k,y_k) \rangle_{Q_{\Delta}}]}_{T_{1012}}.
        \end{align*}
        For $T_{1011}$ we have
        \begin{align*}
            T_{1011}\leq & 2\beta_k\sqrt{\gamma_{max}(Q_{\Delta})}  \E\left[\|\yh_{k+1}\|_{Q_{\Delta}}\left\|\E_{O_{k}}\fh_1 (\cdot,x_{k+1} ,y_{k+1})-\E_{O_{k}}\fh_1 (\cdot,x_k,y_k)\right\|\right]\\
            \leq& 2\beta_k \check{h}_2\sqrt{\gamma_{max}(Q_{\Delta})} \E[\| \yh_{k+1}\|_{Q_{\Delta}}( \|x_{k+1}-x_k\| + \|y_{k+1}-y_k\|)]\tag{Lemma \ref{lem:f_hat_bound}}\\
            \leq& 2\beta_k \check{h}_2\sqrt{\gamma_{max}(Q_{\Delta})}(\alpha_k\hh_3+\beta_k\hh_4) \E[\|\yh_{k+1}\|_{Q_{\Delta}}(\sqrt{d} +\|\xh_k\|_{Q_{22}} +\|\yh_k\|_{Q_{\Delta}})]\tag{Lemma \ref{lem:crdue_upper_bnd_norm}}
        \end{align*}
        Applying AM-GM to the previous inequality, we get
        \begin{align*}
            T_{1011} \leq & \alpha_k\beta_k\left(\hh_3 + \hh_4\frac{\beta}{\alpha}\right)\check{h}_2\sqrt{\gamma_{max}(Q_{\Delta})} \E\left[\|\yh_{k+1}\|_{Q_{\Delta}}^2+(\sqrt{d}+ \|\xh_k\|_{Q_{22}} + \|\yh_k\|_{Q_{\Delta}})^2\right]\\
            \leq & \alpha_k\beta_k\left(\hh_3 + \hh_4\frac{\beta}{\alpha}\right)\check{h}_2\sqrt{\gamma_{max}(Q_{\Delta})} \E\Bigg[(1+ \beta_k \hh_1^{yy})\|\yh_k \|_{Q_{\Delta}}^2 +\hh_2^{yy}\beta_k(d +  \|\xh_k\|_{Q_{22}}^2)\\
            &+3(d+\|\xh_k\|_{Q_{22}}^2 +\|\yh_k\|_{Q_{\Delta}}^2)\Bigg]\tag{Lemma \ref{lem:crdue_upper_bnd_norm}}\\
            =&\alpha_k\beta_k \brc_{13}(d+V_k+W_k)
        \end{align*}
    
    where $\brc_{13}=\left(\hh_3 + \hh_4\frac{\beta}{\alpha}\right)\check{h}_2\sqrt{\gamma_{max}(Q_{\Delta})}\max\{4 +\alpha\hh_1^{yy}, 3 +\hh_2^{yy}\}$. For $T_{1012}$ we have:
    \begin{align*}
        T_{1012}&\leq 2\beta_k\E[\|\yh_{k+1}^\top-\yh_k^\top\|_{Q_{\Delta}}\E_{O_{k}}\|\fh_1 (\cdot,x_k,y_k)\|_{Q_{\Delta}}]\tag{by Cauchy-Schwartz}\\
        &=2\beta_k^2\E[\|-\Delta\yh_k+f_{1}(O_k,x_k,y_k)-A_{12}\xh_k\|_{Q_{\Delta}}\E_{O_{k}}[\|\fh_1 (\cdot,x_k,y_k)\|_{Q_{\Delta}}]]
    \end{align*}
    
    Applying AM-GM inequality $2ab\leq \frac{a^2}{\eta}+b^2\eta$ with $\eta=\frac{1-\rho}{2}$, we get:
    \begin{align*}
        T_{1012}\leq& \beta_k^2\E\bigg[\frac{2}{1-\rho}\|-\Delta\yh_k+f_{1}(O_k,x_k,y_k)-A_{12}\xh_k\|_{Q_{\Delta}}^2+\frac{1-\rho}{2}\E_{O_{k}} \|\fh_1 (\cdot,x_k,y_k)\|_{Q_{\Delta}}^2\bigg]\\
        \leq & \frac{2}{1-\rho} \beta_k^2 \left(\|\Delta\|_{Q_{22}}^2 +\brc_{12} +\|A_{12}\|_{Q_{\Delta}}^2\frac{\gamma_{max}(Q_{\Delta})}{\gamma_{min}(Q_{22})}\right)(d+\E[\|\xh_k\|_{Q_{22}}^2]+\E[\|\yh_k\|_{Q_{\Delta}}^2])\\
        &+\frac{(1-\rho)\beta_k^2}{2}\E\big[\|\E_{O_{k}} \fh_1(\cdot, x_k,y_k)\|_{Q_{\Delta}}^2\big]\\
        \leq& \beta_k^2\brc_{14} (d+V_k+W_k)
    \end{align*}
    where $\brc_{14}= \frac{2}{1-\rho}\left(\|\Delta\|_{Q_{22}}^2+\brc_{12}+\|A_{12}\|_{Q_{\Delta}}^2\frac{\gamma_{max}(Q_{\Delta})}{\gamma_{min}(Q_{22})}\right) + \frac{6}{1-\rho} \left( b_{max}^2\gamma_{max}(Q_{\Delta}) + \frac{\check{h}_4^2}{4}\right)$. Here for bounding\\ $\E\left[\|\E_{O_{k}} \fh_1(\cdot, x_k,y_k)\|_{Q_{\Delta}}^2\right]$, we use Lemma \ref{lem:f_hat_bound}.
    
    For $T_{102}$ we have:
    \begin{align*}
        T_{102}&\leq 2\beta_k^2\E[\|\Delta\yh_k\|_{Q_{\Delta}}\|f_1(O_k, x_k, y_k)\|_{Q_{\Delta}}]\\
        &\leq \beta_k^2\E[\|\Delta\yh_k\|^2_{Q_{\Delta}}+\|f_1(O_k, x_k, y_k)\|^2_{Q_{\Delta}}]\\
        &\leq \beta_k^2\brc_{15}(d+W_k+V_k)
    \end{align*}
    where $\brc_{15}=\|\Delta\|_{Q_\Delta}^2+\brc_{12}$.
    \end{itemize}
    Now, by definition of $Q_{\Delta}$, we have that:
    \begin{align*}
        \E[\|(I-\beta_k\Delta)\yh_k\|^2_{Q_{\Delta}}]\leq (1-\delta\beta_k)W_k.
    \end{align*}
    
    Combining everything, we have:
    \begin{align}
        W_{k+1} \leq& (1- \delta \beta_k)W_k + \beta_k^2\brc_{12}(d +V_k +W_k ) + \beta_k^2\|A_{12}\|^2_{Q_{\Delta}} \frac{\gamma_{max}(Q_{\Delta})}{\gamma_{min}(Q_{22})}V_k + \frac{\beta_k\delta}{2}W_k + \beta_k \frac{2\|A_{12}\|^2_{Q_{\Delta}}\gamma_{max}(Q_{\Delta})}{\gamma_{min}(Q_{22})\delta}V_k \nonumber\\
        &+ \beta_k^2\check{c}_{12}(d + V_k +W_k) + \beta_k^2\|A_{12}\|_{Q_{\Delta}}^2\frac{\gamma_{max}(Q_{\Delta})}{\gamma_{min}(Q_{22})}V_k + \beta_k \alpha_k \brc_{13}(d +V_k+W_k) \nonumber\\ &+ \beta_k^2 \brc_{14} (d+V_k +W_k)+\beta_k^2 \brc_{15}(d+V_k+W_k)+2\beta_k(\bar{d}_k^y-\bar{d}_{k+1}^y)\nonumber\\
        \leq & (1-\frac{\delta\beta_k}{2})W_k+ \alpha_k\beta_k\brc_{16} (d+ V_k+W_k) +\beta_k \frac{2\|A_{12}\|^2_{Q_{\Delta}}\gamma_{max}(Q_{\Delta})}{\gamma_{min}(Q_{22})\delta}V_k+ 2\beta_{k-1}\bar{d}_k^y -2\beta_k \bar{d}_{k+1}^y+2(\beta_k-\beta_{k-1}) \bar{d}_{k}^y,\label{eq:W_k_recursion}
    \end{align}
    where $\brc_{16}=\frac{\beta}{\alpha}\left(\brc_{12}+2\|A_{12}\|^2_{Q_{\Delta}} \frac{\gamma_{max}(Q_{\Delta})}{\gamma_{min}(Q_{22})}+\brc_{14}+ \brc_{15}\right)+ \brc_{13}$.

    We bound the last term as follows:
    \begin{align}
        |(\beta_k-\beta_{k-1})\bar{d}_k^y|&\leq \frac{1}{\beta}\beta_k^2|\bar{d}_k^y|\tag{Lemma \ref{lem:step_size_gap}}\nonumber\\
        &\leq \frac{1}{\beta}\beta_k^2 \E[\|\yh_k\|_{Q_{\Delta}}\|\E_{O_{k-1}} \fh_1 (\cdot,x_k, y_k)\|_{Q_{\Delta}}]\nonumber\\
        &\leq \frac{1}{\beta}\beta_k^2 \E\left[\|\yh_k\|_{Q_{\Delta}}\left(\frac{2}{1 - \rho}\left[b_{max}\sqrt{\gamma_{max}(Q_{\Delta})} \sqrt{d}+ \frac{\check{h}_4}{2}\left(\left\| \xh_k\right\|_{Q_{22}} +  \|\yh_k\|_{Q_{\Delta}}\right) \right] \right)\right] \tag{Lemma \ref{lem:f_hat_bound}}\nonumber\\
        &\leq \frac{2}{(1-\rho)\beta} \max\left\{b_{max}\sqrt{\gamma_{max}(Q_{22})} , \frac{\check{h}_4}{2}\right\} \beta_k^2 \E[\|\yh_k\|_{Q_{\Delta}}(  \sqrt{d} + \|\xh_k\|_{Q_{22}} +\|\yh_k\|_{Q_{\Delta}})]\nonumber\\
        &\leq \frac{1}{(1-\rho)\beta} \max\left\{b_{max}\sqrt{\gamma_{max}(Q_{22})} , \frac{\check{h}_4}{2}\right\} \beta_k^2 \E\left[\|\yh_k\|_{Q_{\Delta}}^2+  3(d + \|\xh_k\|_{Q_{22}}^2 + \|\yh_k\|_{Q_{\Delta}}^2 )\right] \nonumber\\
        &= \frac{\brc_{17}}{2} \beta_k^2 (d+ V_k +W_k), \label{eq:d^y_k_upper_bnd}
    \end{align}
    where $\brc_{17}=\frac{8}{(1-\rho)\beta} \max\left\{b_{max}\sqrt{\gamma_{max}(Q_{22})} , \frac{\check{h}_4}{2}\right\}$. Thus we get:
    \begin{align}
        W_{k+1}\leq & (1-\frac{\delta\beta_k}{2})W_k+ \alpha_k\beta_k\brc_{18} (1+V_k+W_k) +\beta_k \frac{2\|A_{12}\|^2_{Q_{\Delta}}\gamma_{max}(Q_{\Delta})}{\gamma_{min}(Q_{22})\delta}V_k+ 2\beta_{k-1}\bar{d}_k^y-2\beta_k \bar{d}_{k+1}^y, \label{eq:W_k_rec}
    \end{align}
    where $\brc_{18}=\brc_{16}+\frac{\beta}{\alpha}\brc_{17}$.

    Then, by adding \eqref{eq:V_k_rec} and \eqref{eq:W_k_rec} we get,
    \begin{align*}
        U_{k+1}\leq & (1-\frac{a_{22}\alpha_k}{2})V_k+ \alpha_k^2 \brc_{11} (d+V_k+W_k) +\frac{\brc_3\beta_k^2}{\alpha_k}(d+V_k+W_k) +2\alpha_{k-1}\bar{d}_k^x-2\alpha_k \bar{d}_{k+1}^x\\
        &+ (1-\frac{\delta\beta_k}{2})W_k+ \alpha_k\beta_k\brc_{18} (d+V_k+W_k) +\beta_k \frac{2\|A_{12}\|^2_{Q_{\Delta}}\gamma_{max}(Q_{\Delta})}{\gamma_{min}(Q_{22})\delta}V_k+ 2\beta_{k-1}\bar{d}_k^y-2\beta_k \bar{d}_{k+1}^y
    \end{align*}

    But we had $k>k_C$, and hence $\frac{2\|A_{12}\|^2_{Q_{\Delta}}\gamma_{max}(Q_{\Delta})}{\gamma_{min}(Q_{22})\delta}\beta_k + \frac{\brc_3\beta_k^2}{\alpha_k}\leq \frac{a_{22}\alpha_k}{4}$, and $\frac{\brc_3\beta_k^2}{\alpha_k}\leq \frac{\delta\beta_k}{4}$. Hence,
    \begin{align*}
        U_{k+1}\leq & (1-\frac{a_{22}\alpha_k}{4})V_k+ \alpha_k^2 \brc_{11} (d+V_k+W_k) +\frac{\brc_3\beta_k^2}{\alpha_k} d +2\alpha_{k-1}\bar{d}_k^x-2\alpha_k \bar{d}_{k+1}^x\\
        &+ (1-\frac{\delta\beta_k}{4})W_k+ \alpha_k\beta_k\brc_{18} (d+V_k+W_k) + 2\beta_{k-1}\bar{d}_k^y-2\beta_k \bar{d}_{k+1}^y.\\
        \leq & (1-\frac{a_{22}\alpha_k}{4})V_k+ \alpha_k^2( \brc_{11}+ \frac{\beta}{\alpha}\brc_{18}) (d +V_k+W_k) +\frac{\brc_3\beta_k^2}{\alpha_k}d +2\alpha_{k-1}\bar{d}_k^x-2\alpha_k \bar{d}_{k+1}^x\\
        &+ (1-\frac{\delta\beta_k}{4})W_k+ 2\beta_{k-1}\bar{d}_k^y-2\beta_k \bar{d}_{k+1}^y.
    \end{align*}
     
     It is sufficient to have that $(\brc_{11} + \frac{\beta}{\alpha}\brc_{18})\alpha_k^2\leq \frac{a_{22}\alpha_k}{4}$ and $(\brc_{11}+\frac{\beta}{\alpha}\brc_{18})\alpha_k^2\leq \frac{\delta\beta_k}{4}$. Therefore, it is further sufficient to have $(\brc_{11}+\frac{\beta}{\alpha}\brc_{18})\alpha^2\frac{1}{{(k+1)}^{2\xi}}\leq \min\{\frac{a_{22}\alpha}{4},\frac{\delta \beta}{4}\}\frac{1}{k+1}$, which happens for
    $k\geq \left((\brc_{11} +\frac{\beta}{\alpha}\brc_{18})\alpha^2/\min\{\frac{a_{22}\alpha}{4},\frac{\delta \beta}{4}\}\right)^{\frac{1}{2\xi-1}}$.

    We define $\brc_{19} = \brc_{11}+\frac{\beta}{\alpha}\brc_{18}$. Then, for all $k\geq \max\left\{k_C,\left((\brc_{11} +\frac{\beta}{\alpha}\brc_{18})\alpha^2/\min\{\frac{a_{22}\alpha}{4},\frac{\delta \beta}{4}\}\right)^{\frac{1}{2\xi-1}}\right\}:= k_2$, we have,
    \begin{align}
        U_{k+1} \leq&  V_k + W_k+ 2\alpha_{k-1} \bar{d}_k^x+ 2\beta_{k-1} \bar{d}_k^y- 2\alpha_k \bar{d}_{k+1}^x-2\beta_k \bar{d}_{k+1}^y+ \brc_{19} \alpha_k^2 d + \frac{\brc_3 \beta_k^2}{\alpha_k}d \nonumber\\
        =&  U_k+2\alpha_{k-1}\bar{d}_k^x+2\beta_{k-1}\bar{d}_k^y-2\alpha_k\bar{d}_{k+1}^x-2\beta_k\bar{d}_{k+1}^y+\brc_{19} \alpha_k^2 d+ \frac{\brc_3\beta_k^2}{\alpha_k}d. \label{eq:U_k_rec}
    \end{align}
    Summing from $k_2$ to $K$, we have
    \begin{align*}
        U_{K+1}\leq U_{k_2}+ 2\alpha_{k_2-1}\bar{d}_{k_2}^x-2\alpha_K \bar{d}_{K+1}^x + 2\beta_{k_2-1}\bar{d}_{k_2}^y - 2\beta_{K}\bar{d}_{K+1}^y + \brc_{19}d \sum_{k=k_2}^K \alpha_k^2+ \brc_3d\sum_{k=k_2}^K \frac{\beta_k^2}{\alpha_k}.
    \end{align*}

   From \eqref{eq:d^x_k_upper_bnd}, we have $|\bar{d}_k^x|\leq \frac{\brc_{10}\alpha}{2\xi}(d+U_k)$ and from \eqref{eq:d^y_k_upper_bnd} we have $\bar{d}_k^y\leq \frac{\brc_{17}\beta}{2}(d+U_k)$. By the choice of $k_C$, we have $\alpha_{k} \frac{\brc_{10}\alpha}{\xi}\leq 0.3$ and $\beta_{k}\brc_{17}\beta\leq 0.3$. Since we assume $k\geq k_2 - 1$, and we have $k_2>k_C$, we get for all $K\geq k_2$
    \begin{align}
        U_{K+1}\leq& U_{k_2}+ 2\beta_{k_2 -1} \bar{d}_{k_2}^y+ 2\alpha_{k_2-1}\bar{d}_{k_2}^x+0.6(d+U_{K+1})   + \brc_{19}d\sum_{k=k_2}^K \alpha_k^2 +\brc_3d\sum_{k=k_2}^K \frac{\beta_k^2}{\alpha_k} \nonumber\\
        \implies 0.4U_{K+1}\leq &U_{k_2}+ 2\beta_{k_2 -1} \bar{d}_{k_2}^y+ 2\alpha_{k_2-1}\bar{d}_{k_2}^x+0.6 d  +  \frac{\brc_{19}\alpha^2}{2\xi-1} d + \frac{\brc_3\beta^2}{\alpha (1-\xi)} d\nonumber\\
        \implies 0.4U_{K+1}\leq& U_{k_2}+ 0.6(d +U_{k_2})+0.6d +  \frac{\brc_{19}\alpha^2}{2\xi-1} d + \frac{\brc_3\beta^2}{\alpha (1-\xi)} d\nonumber\\
        = &1.2d + 1.6U_{k_{2}}+\frac{\brc_{19}\alpha^2}{2\xi-1} d + \frac{\brc_3\beta^2}{\alpha (1-\xi)} d\nonumber\\
        \implies \E[\|x_k\|^2]+\E[\|y_k\|^2]\leq& 2(1+\|A_{22}^{-1}A_{21}\|^2)\max\{\gamma_{max}(Q_{22}),\gamma_{max}(Q_{\Delta}) \}(\E[\|\xh_k\|^2_{Q_{22}}]+\E[\|\yh_k\|^2_{Q_{\Delta}}])\nonumber\\
        \leq 2(1+\|A_{22}^{-1}A_{21}\|^2) &\max\{\gamma_{max}(Q_{22}),\gamma_{max}(Q_{\Delta}) \}\left(3d + 4U_{k_{2}} + \frac{2.5\brc_{19}\alpha^2}{2\xi-1} d + \frac{2.5\brc_3\beta^2}{\alpha (1-\xi)} d\right)\nonumber\\
        =\brc_{20}U_{k_2} + \brc_{21}d,&\label{U_k_bnd}
    \end{align}
    for obvious choice of $\brc_{20}$ and $\brc_{21}$. We use Lemma \ref{lem:crude_Uk_bnd} to upper bound $U_{k_2}$ as
    \begin{align*}
        U_{k_2}\leq& U_0\exp \left(\frac{(\hat{h}_1 +\hh_2)\alpha}{K_0^\xi}+\frac{(\hat{h}_1 +\hh_2)\alpha}{(1-\xi)}\left[(k_2+K_0)^{1-\xi}-K_0^{1-\xi}\right]\right) \\
        &+\hh_2d\alpha\left(\frac{1}{K_0^\xi}+\frac{1}{(\hat{h}_1 +\hh_2)\alpha}\right)\exp\left(\frac{(\hat{h}_1 +\hh_2)\alpha}{(1-\xi)}\left((k_2+K_0)^{1-\xi}-K_0^{1-\xi}\right)\right).
    \end{align*}
    Note that $U_0=\mathcal{O}(d)$. Hence,
    \begin{align*}
        U_{K}\leq d\left(\brc_{20}U_{k_2}/d + \brc_{21}\right) .
    \end{align*}

    For the second part of the Lemma, recall that $\tx_k=\xh_k+L_ky_k$ and $\ty_k=y_k$. Thus, we have
    \begin{align*}
        \E[\|\tx_k\|^2]&\leq 2(\E[\|\xh_k\|^2]+\|L_k\|^2\E[\|y_k\|^2])\\
        &\leq 2(\E[\|\xh_k\|^2]+\kappa_{Q_{22}}^2\E[\|y_k\|^2])\tag{Lemma \ref{lem:L_k_bound}}
    \end{align*}
    Thus, we have
    \begin{align*}
        \E[\|\tx_k\|^2]+\E[\|\ty_k\|^2]&\leq 2\max\{\gamma_{max}(Q_{22}),\gamma_{max}(Q_{\Delta}) \}(\E[\|\xh_k\|^2_{Q_{22}}]+(1+\kappa_{Q_{22}}^2)E[\|\yh_k\|^2_{Q_{\Delta}}]) \\
        &\leq 2(1+\kappa_{Q_{22}}^2)\max\{\gamma_{max}(Q_{22}),\gamma_{max}(Q_{\Delta}) \}U_k\\
        &\leq \frac{(1+\kappa_{Q_{22}}^2)d}{(1+\|A_{22}^{-1}A_{21}\|^2)}\left(\brc_{20}U_{k_2}/d+\brc_{21}\right). 
    \end{align*}
    Define $\brc=\max\left(1, \frac{(1+\kappa_{Q_{22}}^2)}{(1+\|A_{22}^{-1}A_{21}\|^2)}\right)\left(\brc_{20}U_{k_2}/d+\brc_{21}\right)$. Thus we have,
    \begin{align*}
        \E[\|x_k\|^2]+\E[\|y_k\|^2]&\leq  \brc d\\
        \E[\|\tx_k\|^2]+\E[\|\ty_k\|^2]&\leq  \brc d.
    \end{align*}
\end{proof}

\begin{proof}[Proof of Lemma \ref{lem:tel_term_bound}]
The proof for part (2) and (4) follow in the exact manner as part (1) and (3), respectively. Thus, to avoid repetition, we will only present proof for part (1) and (3).
    \begin{enumerate}
        \item[1.] Using Cauchy-Schwarz inequality, we have
        \begin{align*}
            \left\|\E\left[\left(\E_{O_{k-1}}\fh_i(\cdot,x_k,y_k)\right)\tx_k^\top\right]\right\|&\leq \sqrt{\E\left[\left\|\left(\E_{O_{k-1}}\fh_i(\cdot,x_k,y_k)\right)\right\|^2\right]}\sqrt{\E[\|\tx_k\|^2]}\\
            &\leq \frac{2}{1-\rho}\sqrt{\E\left[b_{max} \sqrt{d}+ A_{max}\left\| y_k\right\| + A_{max}\left\| x_k\right\|\right]^2}\sqrt{\E[\|\tx_k\|^2]}\\
            &\leq \frac{2\sqrt{3}}{1-\rho}\sqrt{b^2_{max}d+ A^2_{max}\left(\E[\left\| x_k\right\|^2+\E[\left\| y_k\right\|^2]\right)}\sqrt{\E[\|\tx_k\|^2]}\\
            &\leq \frac{2\sqrt{3d}}{1-\rho}\brc_{f}\sqrt{\E[\|\tx_k\|^2]}.\tag{Lemma \ref{lem:boundedness}}
        \end{align*}
        \item[3.] Recall that $d_k^x= d_k^{xw}+\frac{\beta_k}{\alpha_k}(L_{k+1}+A_{22}^{-1}A_{21})d_k^{xv}$. Using Part 1 of this lemma, we have
        \begin{align*}
            \|d_k^x\|&\leq  \|d_k^{xw}\|+\frac{\beta_k}{\alpha_k}\left\|(L_{k+1}+A_{22}^{-1}A_{21})\right\|\|d_k^{xv}\|\\
            &\leq \frac{2\sqrt{3d}}{1-\rho}\brc_{f}\left(1+\frac{\beta}{\alpha}\varrho_x\right)\sqrt{\E[\|\tx_k\|^2]}.
        \end{align*}
    \end{enumerate}
\end{proof}

\begin{proof}[Proof of Lemma \ref{lem:noise_crude_bound}]
    \begin{enumerate}
    \item Recall that $v_k=b_1(O_{k})-(A_{11}(O_{k})-A_{11})y_k-(A_{12}(O_{k})-A_{12})x_k$. Thus, we have
    \begin{align*}
        \E[\|v_k\|^2]&\leq 3\left(\|b_1(O_{k})\|_2^2+\E[\|A_{11}(O_{k})-A_{11}\|^2\|y_k\|^2]+\E[\|A_{12}(O_{k})-A_{12}\|^2\|x_k\|^2]\right)\\
        &\leq 3\left(b_{\max}^2d+4A_{max}^2\left(\E[\|y_k\|^2]+\E[\|x_k\|^2]\right)\right)\\
        &\leq 3d\left(b_{\max}^2+4A_{max}^2\brc\right).\tag{Lemma \ref{lem:boundedness}}
    \end{align*}
    \item This part follows in the exact manner as the previous one.
    \item Since $u_k=w_k+\frac{\beta_k}{\alpha_k}(L_{k+1}+A_{22}^{-1}A_{21})v_k$, we have
    \begin{align*}
        \E[\|u_k\|^2]&\leq 2\E[\|w_k\|^2]+\frac{2\beta^2}{\alpha^2}\|(L_{k+1}+A_{22}^{-1}A_{21})\|^2\E[\|v_k\|^2]\\
        &\leq 2\E[\|w_k\|^2]+\frac{2\beta^2}{\alpha^2}\varrho_x^2\E[\|v_k\|^2]\tag{Lemma \ref{lem:L_k_bound}}\\
        &\leq 3d\left(2+\frac{2\beta^2}{\alpha^2}\varrho_x^2\right)\left(b_{\max}^2+4A_{max}^2\brc\right).\tag{Part 1 and 2 of this Lemma}
    \end{align*}
\end{enumerate}
\end{proof}

\subsubsection{Induction dependent lemmas}\label{sec:Aux_lem_dep}
\begin{lemma}\label{lem:go_from_xp_to_x} 
Assume at time $k$, Eqs. \ref{eq:lem2_10}, \ref{eq:lem2_20} and \ref{eq:lem2_30} are satisfied with $\max\{\|\tilde{C}'^x_k\|_{Q_{22}},\|\tilde{C}'^{xy}_k\|_{Q_{22}},\|\tilde{C}'^y_k\|_{Q_{\Delta,\beta}}, 1\}= \hbar<\infty$. Then we have the following.
\begin{enumerate}
    \item $ \|\tX_{k}\|\leq \alpha_k\underbar{c}_1d +  \hbar\kappa_{Q_{22}}\zeta_k^x$.
    \item $\|\tY_{k}\|\leq \beta_k \underbar{c}_2d+\hbar \kappa_{Q_{\Delta, \beta}}\zeta_k^y$.
    \item $\E[\|x_k\|^2]\leq \alpha_k\underbar{c}_3d^2+\hbar d \underbar{c}_4\zeta_k^x$.
    \item $\E[\|y_k\|^2] \leq \beta_k \underbar{c}_2d^2+\hbar d\kappa_{Q_{\Delta, \beta}}\zeta_k^y$.
    \item $\E[\|\tx_{k+1}\|^2] \leq \underbar{c}_5d^2 \alpha_k + \underbar{c}_6d\hbar \zeta_k^x$.
    \item $\E[\|\ty_{k+1}\|^2] \leq \underbar{c}_7d^2 \beta_k + \underbar{c}_8d\hbar \zeta_k^y$.
\end{enumerate}
For an exact expression of the constants, refer to the proof of the lemma.
    
\end{lemma}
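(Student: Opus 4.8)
The plan is to treat this lemma as the bookkeeping step that transfers the induction hypothesis, stated on the $d_k$-corrected second-moment matrices $\tX_k',\tZ_k',\tY_k'$, down to the raw second moments of the transformed iterates $\tx_k,\ty_k$ and then of the original iterates $x_k,y_k$. The three tools used throughout are norm equivalence ($\|\cdot\|\leq \kappa_{Q_{22}}\|\cdot\|_{Q_{22}}$ and its $Q_{\Delta,\beta}$ analogue), the trace inequality $\E[\|\cdot\|^2]=\mathrm{tr}(\E[\cdot\,\cdot^\top])\leq d\,\|\E[\cdot\,\cdot^\top]\|$, and the explicit $\mathcal{O}(d)$ bounds on $\|\Sigma^x\|,\|\Sigma^{xy}\|,\|\Sigma^y\|$ derived earlier (e.g.\ $\|\Sigma^x\|\leq \sigma^x d\tau_{mix}$).

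For parts (1) and (2) I would start from the defining relations $\tX_k=\tX_k'-\alpha_k(d_k^x+{d_k^x}^\top)$ and $\tY_k=\tY_k'-\beta_k(d_k^{yv}+{d_k^{yv}}^\top)$ of Definition \ref{def_list}, apply the triangle inequality, and split into a leading and a correction piece. For the leading piece I invoke the hypothesis in the weighted norm, $\|\tX_k'\|\leq\kappa_{Q_{22}}(\alpha_k\|\Sigma^x\|_{Q_{22}}+\hbar\zeta_k^x)$, which produces the $\alpha_k\underbar{c}_1 d$ and $\hbar\kappa_{Q_{22}}\zeta_k^x$ terms after replacing $\|\Sigma^x\|$ by its $\mathcal{O}(d)$ bound. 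The correction $\alpha_k\|d_k^x\|$ is bounded by Lemma \ref{lem:tel_term_bound} by a constant times $\sqrt{d}\,\sqrt{\E[\|\tx_k\|^2]}$ and then by Lemma \ref{lem:boundedness} with $\E[\|\tx_k\|^2]\leq\brc d$, giving an $\mathcal{O}(\alpha_k d)$ term that folds into $\alpha_k\underbar{c}_1 d$. Parts (3) and (4) then follow by applying the trace inequality: (4) is immediate from (2) since $\ty_k=y_k$, and for (3) I use the inverse transformation $x_k=\tx_k-(L_k+A_{22}^{-1}A_{21})\ty_k$, the estimate $\|a+b\|^2\leq 2\|a\|^2+2\|b\|^2$, the bound $\|L_k+A_{22}^{-1}A_{21}\|\leq\varrho_x$ from Lemma \ref{lem:L_k_bound}, and the orderings $\beta_k\leq(\beta/\alpha)\alpha_k$, $\beta_k\leq\beta\zeta_k^x$, $\zeta_k^y\leq\zeta_k^x$ (all consequences of $0.5<\xi<1$) to collapse every $\beta_k$- and $\zeta_k^y$-term into an $\alpha_k$- and $\zeta_k^x$-term.

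Parts (5) and (6) are the only genuinely dynamic steps. Using the decoupled recursion \eqref{eq:x_t_update_2} with $B_{21}^k=0$ (valid for $k\geq k_1$), namely $\tx_{k+1}=(I-\alpha_k B_{22}^k)\tx_k+\alpha_k u_k$, I apply Minkowski's inequality, bound the multiplicative factor by $\|I-\alpha_k B_{22}^k\|\leq 1+c\alpha_k$ (here $\|C_{22}^k\|\leq\varrho_x A_{max}\beta_k/\alpha_k\to 0$, so this factor is merely $1+\mathcal{O}(\alpha_k)$ rather than a contraction, which is all a one-step estimate needs), bound the noise by Lemma \ref{lem:noise_crude_bound}, and bound $\sqrt{\E[\|\tx_k\|^2]}$ through part (1) and the trace inequality. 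Squaring yields the target $\underbar{c}_5 d^2\alpha_k+\underbar{c}_6 d\hbar\zeta_k^x$, where the cross terms produced by the square are absorbed using $\hbar\geq 1$ (so that $\sqrt{\hbar}\leq\hbar$). Part (6) is identical using \eqref{eq:y_t_update_2}.

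The main obstacle I anticipate is a latent circularity: the correction terms $\alpha_k d_k^x$ and $\beta_k d_k^{yv}$ that separate the primed from the unprimed matrices are themselves controlled, via Lemma \ref{lem:tel_term_bound}, by $\sqrt{\E[\|\tx_k\|^2]}$ and $\sqrt{\E[\|\ty_k\|^2]}$, which are precisely the quantities this lemma is meant to bound. The resolution is to feed these correction terms only the crude, induction-\emph{independent} a-priori bound $\E[\|\tx_k\|^2]+\E[\|\ty_k\|^2]\leq\brc d$ from Lemma \ref{lem:boundedness}, while reserving the sharp $\hbar$-dependent behavior for the leading terms coming from the hypothesis. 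Keeping these two scales separate, and verifying that each correction is $\mathcal{O}(d)$ at the correct step-size power so that it never degrades the claimed rate, is where the bookkeeping care is concentrated.
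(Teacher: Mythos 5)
Your proposal is correct and follows essentially the same route as the paper's proof: parts (1)--(2) by triangle inequality on $\tX_k=\tX_k'-\alpha_k(d_k^x+{d_k^x}^\top)$ with the hypothesis plus the $\mathcal{O}(d)$ bounds on $\|\Sigma^x\|,\|\Sigma^y\|$ for the leading piece and Lemmas \ref{lem:tel_term_bound} and \ref{lem:boundedness} for the correction; parts (3)--(4) via the trace inequality and the inverse transformation with $\|L_k+A_{22}^{-1}A_{21}\|\leq\varrho_x$ and $\zeta_k^y\leq\zeta_k^x$; and parts (5)--(6) via the one-step recursions \eqref{eq:x_t_update_2}, \eqref{eq:y_t_update_2}, Lemma \ref{lem:noise_crude_bound}, and the earlier parts of the lemma. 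Your identification of the potential circularity in the $d_k$-corrections and its resolution through the induction-independent bound of Lemma \ref{lem:boundedness} is precisely how the paper handles it.
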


\begin{lemma}\label{lem:F_conv}
    Consider $x_k,y_k$ as iterations generated by \eqref{eq:two_time_scale}, $O_k$ as Markovian noise in these iterations, and $\tilde{O}_k$ as independent Markovian noise generated according to the stationary distribution of the Markov chain $\{O_i\}_{i\geq0}$. Also, suppose that Eq. \ref{eq:lem2_10}, \ref{eq:lem2_20} and \ref{eq:lem2_30} are satisfied at time $k$ with $\max\{\|\tilde{C}'^x_k\|_{Q_{22}}, \|\tilde{C}'^{xy}_k\|_{Q_{22}},\|\tilde{C}'^y_k\|_{Q_{22}}, 1\}\leq \hbar<\infty$. Then, we have 
    \begin{enumerate}
        \item $\|\E[F^{(i,j)}(O_{k+1},O_k,x_k,y_k)-F^{(i,j)}(\tilde{O}_{k+1},\tilde{O}_k,x_k,y_k)]\|\leq \hat{g}_1d^2\sqrt{\alpha_k}+\hat{g}_2d\hbar\sqrt{\zeta_k^x}$.
        \item $\E[F^{(i,j)}(\tilde{O}_{k+1},\tilde{O}_k,x_k,y_k)]=\sum_{l=1}^\infty \E{[b_i(\tilde{O}_l)b_j(\tilde{O}_0)^\top]}+\hat{R}^{(i,j)}_k$, where $\|\hat{R}^{(i,j)}_k\|\leq \hat{g}_3 d^2\sqrt{\alpha_k}+\hat{g}_2 d\hbar\sqrt{\zeta_k^x}$.
    \end{enumerate}
    For an exact expression for the constants, please refer to the proof of this lemma.
\end{lemma}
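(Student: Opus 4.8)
The plan is to expand $F^{(i,j)}$ into a single pure-noise piece plus pieces that carry at least one factor of the iterates, and to treat the two groups by different mechanisms. Using the explicit Poisson solution from Lemma \ref{lem:possion_sol} I write $\hat f_i(o,x,y)=C_i(o)-C_{i1}(o)y-C_{i2}(o)x$ and $f_j(o,x,y)=b_j(o)-(A_{j1}(o)-A_{j1})y-(A_{j2}(o)-A_{j2})x$, so that $F^{(i,j)}(O',O,x,y)=\hat f_i(O',x,y)\,f_j(O,x,y)^\top$ expands into nine terms: the single pure-noise term $C_i(O')b_j(O)^\top$, four terms linear in $(x,y)$, and four terms quadratic or bilinear in $(x,y)$. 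Throughout I use the a priori bounds $\max_o\|C_i(o)\|\le \tfrac{2b_{max}}{1-\rho}\sqrt d$ and $\max_o\|C_{ij}(o)\|\le\tfrac{2A_{max}}{1-\rho}$ from Lemma \ref{lem:mix_time_sum}, together with $\|b_j(o)\|\le b_{max}\sqrt d$ and $\|A_{jl}(o)-A_{jl}\|\le 2A_{max}$.

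For part (2) the samples $\tilde{O}_k,\tilde{O}_{k+1}$ come from the stationary chain and are independent of $(x_k,y_k)$. The pure-noise term gives the leading contribution: conditioning on $\tilde{O}_k\sim\mu$ and using the Poisson identity $\sum_{o'}P(o'|\tilde{O}_k)C_i(o')=C_i(\tilde{O}_k)-b_i(\tilde{O}_k)$ (the $x=y=0$ case of \eqref{eq:poisson_eq_f_i}), I obtain $\E[C_i(\tilde{O}_{k+1})b_j(\tilde{O}_k)^\top]=\E[C_i(\tilde{O}_0)b_j(\tilde{O}_0)^\top]-\E[b_i(\tilde{O}_0)b_j(\tilde{O}_0)^\top]$, where the first expectation equals $\sum_{l=0}^\infty\E[b_i(\tilde{O}_l)b_j(\tilde{O}_0)^\top]$ exactly as in the proof of Proposition \ref{prop:alternate_pe}; subtracting the $l=0$ term yields $\sum_{l=1}^\infty\E[b_i(\tilde{O}_l)b_j(\tilde{O}_0)^\top]$, the claimed leading term. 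The remaining eight terms form $\hat R^{(i,j)}_k$. Since $\tilde{O}$ is independent of $(x_k,y_k)$, each factors, and Cauchy--Schwarz controls the linear terms by $\E[\|x_k\|],\E[\|y_k\|]$ and the quadratic/bilinear ones by $\E[\|x_k\|^2],\E[\|y_k\|^2],\sqrt{\E\|x_k\|^2\,\E\|y_k\|^2}$. Feeding in Lemma \ref{lem:go_from_xp_to_x}, namely $\E[\|x_k\|^2]\le \underbar{c}_3 d^2\alpha_k+\underbar{c}_4 d\hbar\zeta_k^x$ and the analogue for $\E[\|y_k\|^2]$ (with $\beta_k\le\tfrac{\beta}{\alpha}\alpha_k$ and $\zeta_k^y\le\zeta_k^x$), and using $\sqrt{a+b}\le\sqrt a+\sqrt b$ together with $\hbar,d\ge1$, the linear terms are $O(d^2\sqrt{\alpha_k}+d\hbar\sqrt{\zeta_k^x})$ and the quadratic ones are strictly smaller, giving the stated bound on $\hat R^{(i,j)}_k$.

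For part (1) I compare term by term. For the eight iterate-dependent terms the coupled version (with $O_k,O_{k+1}$ correlated with $x_k,y_k$) satisfies exactly the same magnitude bounds as in part (2), because Cauchy--Schwarz only uses moments of the iterates; hence the difference of each such term is at most twice that bound, absorbed into $\hat g_1 d^2\sqrt{\alpha_k}+\hat g_2 d\hbar\sqrt{\zeta_k^x}$. The one term whose two versions are individually $\Theta(d)$ and do not vanish is the pure-noise term, and here I invoke geometric ergodicity: the pairs $(O_k,O_{k+1})$ and $(\tilde{O}_k,\tilde{O}_{k+1})$ differ only through the marginal law of the first coordinate, so $\|\E[C_i(O_{k+1})b_j(O_k)^\top]-\E[C_i(\tilde{O}_{k+1})b_j(\tilde{O}_k)^\top]\|\le 2\,d_{TV}(\Pr(O_k\in\cdot),\mu)\,\max_o\|C_i(o)\|\,\max_o\|b_j(o)\|\le c\rho^k d$, which is dominated by $\sqrt{\alpha_k}$ for a suitable constant since geometric decay beats the polynomial $\alpha_k$.

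The main obstacle is the bookkeeping of which group of terms controls the dimension and which controls the rate: the linear-in-iterate terms are the slowest (rate $\sqrt{\alpha_k}$, arising from $\E\|x_k\|\le\sqrt{\E\|x_k\|^2}$) while the quadratic terms carry the largest dimension factor $d^2$, and one must verify both are simultaneously covered by $d^2\sqrt{\alpha_k}+d\hbar\sqrt{\zeta_k^x}$ using $\alpha_k\le\sqrt{\alpha_k}$, $\zeta_k^x\le\sqrt{\zeta_k^x}$, $\beta_k\le\tfrac{\beta}{\alpha}\alpha_k$, and $\zeta_k^y\le\zeta_k^x$. The other delicate point is keeping the coupling correct in part (1): because $(x_k,y_k)$ is $\mathcal F_{k-1}$-measurable, the pure-noise difference genuinely reduces to a single-coordinate marginal mixing estimate, whereas for the iterate-dependent terms no cancellation is needed and individual boundedness suffices.
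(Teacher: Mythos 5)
Your proposal is correct and follows essentially the same route as the paper's proof: the same expansion of $F^{(i,j)}$ into a pure-noise term $C_i(\cdot)b_j(\cdot)^\top$ plus eight iterate-dependent terms, with the latter bounded by uniform noise bounds, Cauchy--Schwarz, and Lemma \ref{lem:go_from_xp_to_x} (no cancellation needed), and the former handled by geometric mixing converted to $\sqrt{\alpha_k}$ via Lemma \ref{lem:lambert}. Your two local variations---deriving the part (2) leading term by conditioning on $\tilde{O}_k$ and using the Poisson identity $\sum_{o'}P(o'|o)C_i(o')=C_i(o)-b_i(o)$ rather than expanding $C_i(\tilde{O}_{k+1})$ as a tail sum and index-shifting with the Markov property and stationarity, and phrasing the pure-noise comparison in part (1) as a total-variation bound on the first-coordinate marginal of the pair $(O_k,O_{k+1})$ rather than invoking geometric mixing of the pair chain $\Lambda_k=(O_k,O_{k+1})$---are algebraically equivalent to the paper's steps and, if anything, make the mixing argument slightly more explicit.
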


\begin{lemma}\label{lem:noise_bound}
     Assume at time $k>k_0$, where $k_0$ is specified in the proof of Lemma \ref{lem:x_xy_y_prime}, Eqs. \ref{eq:lem2_10}, \ref{eq:lem2_20} and \ref{eq:lem2_30} are satisfied with $\max\{\|\tilde{C}'^x_k\|_{Q_{22}},\|\tilde{C}'^{xy}_k\|_{Q_{22}},\|\tilde{C}'^y_k\|_{Q_{\Delta,\beta}}, 1\}= \hbar<\infty$. Then we have the following.
    \begin{enumerate}
        \item \label{item:1} 
        For $i,j\in\{1,2\}$, we have $\E[f_i(O_k,x_k,y_k) f_j(O_k,x_k,y_k)^\top]=\Gamma_{ij}+\cR^{(i,j)}_k$, 
        \text{where}~~ $\|\cR^{(i,j)}_k\| \leq \cc_1d^2 \sqrt{\alpha_k} + \cc_2d\hbar \sqrt{\zeta_k^x}$
        \item \label{item:4} $\E[u_k u_k^\top]=\Gamma_{22}+\cR^u_k,\\~~\text{where}~~ \|\cR^u_k\|\leq \left(1+\frac{\beta}{\alpha}\varrho_x\right)^2\left(\cc_1d^2 \sqrt{\alpha_k} + \cc_2d\hbar \sqrt{\zeta_k^x}\right)+\frac{\beta_k}{\alpha_k}\varrho_x
        \left(\|\Gamma_{21}\|+ \frac{\beta}{\alpha}\varrho_x\|\Gamma_{11}\|\right)$.
    \end{enumerate}
    For exact characterization of the constants please refer to the proof.
\end{lemma}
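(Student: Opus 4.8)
The plan is to prove part 1 by directly expanding the outer product $f_i(O_k,x_k,y_k)f_j(O_k,x_k,y_k)^\top$, and then to deduce part 2 from part 1 by exploiting the linearity of $u_k$ in $v_k$ and $w_k$. Recalling Definition \ref{def_noise},
\[
f_i(O_k,x_k,y_k)=b_i(O_k)-(A_{i1}(O_k)-A_{i1})y_k-(A_{i2}(O_k)-A_{i2})x_k,
\]
so $f_if_j^\top$ splits into the noise-only term $b_i(O_k)b_j(O_k)^\top$, six cross terms that are linear in $x_k$ or $y_k$, and three quadratic terms in $x_k,y_k$. The matrix $\Gamma_{ij}$ should come entirely from the noise-only term, while every term containing an iterate is pushed into the remainder $\cR^{(i,j)}_k$.

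For the noise-only term I would use that $b_i(O_k)b_j(O_k)^\top$ depends only on $O_k$, whose marginal law converges to $\mu$ geometrically: by Assumption \ref{ass:poisson_main} and the total-variation bound $\max_o d_{TV}(P^k(\cdot|o)\|\mu)\le\rho^k$, one gets $\|\E[b_i(O_k)b_j(O_k)^\top]-\Gamma_{ij}\|\le 2\rho^k b_{max}^2 d$, which is exponentially small and hence dominated by $\sqrt{\alpha_k}$ for a suitable constant. For the cross terms, since $b_i(O_k)$ and $A_{\cdot\cdot}(O_k)-A_{\cdot\cdot}$ are bounded (by $b_{max}\sqrt{d}$ and $2A_{max}$), Cauchy--Schwarz gives bounds of order $b_{max}A_{max}\sqrt{d}\,\E[\|x_k\|+\|y_k\|]$; for the quadratic terms AM--GM gives bounds of order $A_{max}^2\,\E[\|x_k\|^2+\|y_k\|^2]$. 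I would then insert the moment estimates of Lemma \ref{lem:go_from_xp_to_x} (parts 3 and 4), namely $\E[\|x_k\|^2]\le\alpha_k\underbar{c}_3 d^2+\hbar d\underbar{c}_4\zeta_k^x$ and $\E[\|y_k\|^2]\le\beta_k\underbar{c}_2 d^2+\hbar d\kappa_{Q_{\Delta,\beta}}\zeta_k^y$, using $\E[\|x_k\|]\le\sqrt{\E[\|x_k\|^2]}$ for the linear terms. Collecting everything then relies on the step-size orderings $\beta_k\le(\beta/\sqrt{\alpha})\sqrt{\alpha_k}$, $\alpha_k\le\sqrt{\alpha}\sqrt{\alpha_k}$, $\zeta_k^y\le\zeta_k^x\le\sqrt{\zeta_k^x}$, together with $\hbar\ge 1$ (so $\sqrt{\hbar}\le\hbar$), which convert the mixed powers $d^{1.5}\sqrt{\alpha_k}$, $\sqrt{d\hbar\zeta_k^x}$, and so on into the advertised form $\cc_1 d^2\sqrt{\alpha_k}+\cc_2 d\hbar\sqrt{\zeta_k^x}$.

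For part 2 I would expand $u_ku_k^\top$ with $M:=L_{k+1}+A_{22}^{-1}A_{21}$ into $w_kw_k^\top+\tfrac{\beta_k}{\alpha_k}(w_kv_k^\top M^\top+Mv_kw_k^\top)+\tfrac{\beta_k^2}{\alpha_k^2}Mv_kv_k^\top M^\top$, take expectations, and substitute the four instances of part 1 ($\E[w_kw_k^\top]=\Gamma_{22}+\cR^{(2,2)}_k$, $\E[w_kv_k^\top]=\Gamma_{21}+\cR^{(2,1)}_k$, and their counterparts). The $\Gamma_{22}$ term is exactly the leading term; the remaining $\Gamma$-contributions $\tfrac{\beta_k}{\alpha_k}(\Gamma_{21}M^\top+M\Gamma_{21}^\top)+\tfrac{\beta_k^2}{\alpha_k^2}M\Gamma_{11}M^\top$ do not vanish at rate $\sqrt{\alpha_k}$ and so are retained explicitly, bounded using $\|M\|\le\varrho_x$ (from $\|L_{k+1}\|\le\kappa_{Q_{22}}$, Lemma \ref{lem:L_k_bound}) and $\tfrac{\beta_k^2}{\alpha_k^2}\le\tfrac{\beta}{\alpha}\tfrac{\beta_k}{\alpha_k}$, which produces the explicit piece $\tfrac{\beta_k}{\alpha_k}\varrho_x(\|\Gamma_{21}\|+\tfrac{\beta}{\alpha}\varrho_x\|\Gamma_{11}\|)$. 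The remainder pieces inherit prefactors $1$, $\tfrac{\beta_k}{\alpha_k}\varrho_x$, and $\tfrac{\beta_k^2}{\alpha_k^2}\varrho_x^2$, all bounded by $(1+\tfrac{\beta}{\alpha}\varrho_x)^2$, which is how the stated factor $(1+\tfrac{\beta}{\alpha}\varrho_x)^2(\cc_1 d^2\sqrt{\alpha_k}+\cc_2 d\hbar\sqrt{\zeta_k^x})$ arises.

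The main obstacle I anticipate is not any single estimate but the careful tracking of the dimension and of the four distinct decay scales ($\alpha_k$, $\beta_k$, $\zeta_k^x$, $\zeta_k^y$) so that the remainder lands exactly in the form $\cc_1 d^2\sqrt{\alpha_k}+\cc_2 d\hbar\sqrt{\zeta_k^x}$ with $d$-dependence no worse than $d^2$. In particular, the square roots coming from $\sqrt{\E[\|x_k\|^2]}$ in the linear cross terms are what force the $\sqrt{\alpha_k}$ and $\sqrt{\zeta_k^x}$ rates (rather than $\alpha_k,\zeta_k^x$), and one must verify these are consistent with the orderings above; a secondary point is ensuring the exponentially small mixing error of the noise-only term is genuinely absorbed for all $k>k_0$ rather than only asymptotically.
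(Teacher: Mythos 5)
Your proposal follows essentially the same route as the paper's own proof: expand the outer product $f_if_j^\top$, extract $\Gamma_{ij}$ from the noise-only term via geometric mixing made uniform in $k$ (exactly the role of the paper's Lemma \ref{lem:lambert}), control the iterate-dependent terms by Cauchy--Schwarz/Young together with the moment bounds of Lemma \ref{lem:go_from_xp_to_x}, and for part 2 expand $u_ku_k^\top$, substitute part 1, and use $\|L_{k+1}+A_{22}^{-1}A_{21}\|\le\varrho_x$ with $\beta_k/\alpha_k\le\beta/\alpha$. The only (inconsequential) imprecision is that honestly accounting for \emph{both} cross terms in $u_ku_k^\top$ yields $2\|\Gamma_{21}\|$ rather than $\|\Gamma_{21}\|$ in the explicit non-vanishing piece---a factor your write-up shares with the paper's own proof, which bounds only one of the two symmetric cross terms.
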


\begin{lemma}\label{lem:noise_bound2}
    Assume at time $k>k_0$, where $k_0$ is specified in the proof of Lemma \ref{lem:x_xy_y_prime}, Eqs. \ref{eq:lem2_10}, \ref{eq:lem2_20} and \ref{eq:lem2_30} are satisfied with $\max\{\|\tilde{C}'^x_k\|_{Q_{22}},\|\tilde{C}'^{xy}_k\|_{Q_{22}},\|\tilde{C}'^y_k\|_{Q_{\Delta,\beta}}, 1\}=\hbar<\infty$. Then,  we have
    \begin{enumerate}
        \item \label{item:21} $\E[f_1(O_k,x_k,y_k)\ty_k^\top]=\beta_k\sum_{j=1}^\infty \E{[b_1(\tilde{O}_j)b_1(\tilde{O}_0)^\top]} + d_{k}^{yv}-d_{k+1}^{yv}+G^{(1,1)}_k;~~\text{where}~~ \|G^{(1,1)}_k\|\leq g_1d^2\alpha_k\sqrt{\beta_k}+g_2d\hbar\alpha_k\sqrt{\zeta_k^y}$
        \item \label{item:11} $\E[f_1(O_k,x_k,y_k)\tx_k^\top]=\alpha_k\sum_{j=1}^\infty \E{[b_1(\tilde{O}_j)b_2(\tilde{O}_0)^\top]}+d_{k}^{xv} - d_{k+1}^{xv}+G^{(1,2)}_k;~~\text{where}~~ \|G^{(1,2)}_k\|\leq g_3d^2(\alpha_k^{1.5}+\beta_k)+g_4d\hbar\alpha_k\sqrt{\zeta_k^x}$
        \item \label{item:41} $\E[f_2(O_k,x_k,y_k)\ty_k^\top]=\beta_k\sum_{j=1}^\infty \E{[b_2(\tilde{O}_j)b_1(\tilde{O}_0)^\top]}+d_{k}^{yw}-d_{k+1}^{yw}+G^{(2,1)}_k;~~\text{where}~~ \|G^{(2,1)}_k\|\leq g_1d^2\alpha_k\sqrt{\beta_k}+g_2d\hbar\alpha_k\sqrt{\zeta_k^y}$
        \item \label{item:31} $\E[f_2(O_k,x_k,y_k)\tx_k^\top]=\alpha_k\sum_{j=1}^\infty \E{[b_2(\tilde{O}_j)b_2(\tilde{O}_0)^\top]}+d_{k}^{xw}-d_{k+1}^{xw}+G^{(2,2)}_k;~~\text{where}~~ \|G^{(2,2)}_k\|\leq g_3d^2(\alpha_k^{1.5}+\beta_k)+g_4d\hbar\alpha_k\sqrt{\zeta_k^x}$.
    \end{enumerate}
    For exact characterization of the constants please refer to the proof.
\end{lemma}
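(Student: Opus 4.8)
The plan is to establish all four identities of Lemma~\ref{lem:noise_bound2} by a single Poisson-equation computation, since they differ only in which noise function ($f_1$ or $f_2$) and which disentangled iterate ($\ty_k$ or $\tx_k$) appears; I will write $z_k$ for the relevant iterate and recall that $x_k,y_k$, hence $z_k$, are $\mathcal F_{k-1}$-measurable, where $\mathcal F_{k-1}=\sigma(\{x_i,O_i\}_{0\le i\le k-1})$. First I would invoke Remark~\ref{rem:pois_eq_geo_mix} to substitute $f_i(O_k,x_k,y_k)=\fh_i(O_k,x_k,y_k)-\E_{O_k}\fh_i(\cdot,x_k,y_k)$, then add and subtract $\E_{O_{k-1}}\fh_i(\cdot,x_k,y_k)$. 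Since $\fh_i(O_k,x_k,y_k)-\E_{O_{k-1}}\fh_i(\cdot,x_k,y_k)$ is a martingale difference relative to $\mathcal F_{k-1}$ and $z_k$ is $\mathcal F_{k-1}$-measurable, the corresponding cross term vanishes in expectation, leaving exactly $d_k-\E[(\E_{O_k}\fh_i(\cdot,x_k,y_k))z_k^\top]$, where $d_k$ is the appropriate quantity ($d_k^{yv},d_k^{xv},d_k^{yw}$, or $d_k^{xw}$) from Definition~\ref{def_list}.

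Next I would telescope. Introducing $d_{k+1}=\E[(\E_{O_k}\fh_i(\cdot,x_{k+1},y_{k+1}))z_{k+1}^\top]$ and splitting the surviving term gives
\begin{align*}
\E[(\E_{O_k}\fh_i(\cdot,x_k,y_k))z_k^\top]&=d_{k+1}-\underbrace{\E\big[(\E_{O_k}\fh_i(\cdot,x_k,y_k))(z_{k+1}-z_k)^\top\big]}_{(A)}\\
&\quad-\underbrace{\E\big[(\E_{O_k}(\fh_i(\cdot,x_{k+1},y_{k+1})-\fh_i(\cdot,x_k,y_k)))z_{k+1}^\top\big]}_{(B)},
\end{align*}
so that $\E[f_iz_k^\top]=d_k-d_{k+1}+(A)+(B)$. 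Term $(B)$ is higher order by the Lipschitz estimate in Lemma~\ref{lem:f_hat_bound}, and after Cauchy--Schwarz it is controlled by the crude one-step increment bounds of Lemma~\ref{lem:f_hat_bound} together with the split second-moment bounds of Lemma~\ref{lem:go_from_xp_to_x}. The leading behaviour is carried by $(A)$: substituting the update \eqref{eq:y_t_update_2} (or \eqref{eq:x_t_update_2} in the $\tx_k$ cases), the drift part of $z_{k+1}-z_k$ is $O(\beta_k)$ (resp.\ $O(\alpha_k)$) times an iterate and is absorbed into the remainder, while the noise part yields the genuine leading term.

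The key observation for $(A)$ is that $\E_{O_k}\fh_i(\cdot,x_k,y_k)=\E[\fh_i(O_{k+1},x_k,y_k)\mid O_k]$, so by the tower property the noise contribution equals $\beta_k\,\E[F^{(i,1)}(O_{k+1},O_k,x_k,y_k)]$ in the $\ty_k$ cases (and $\alpha_k\,\E[F^{(i,2)}(O_{k+1},O_k,x_k,y_k)]$ in the $\tx_k$ cases), with $F^{(i,j)}$ as in Definition~\ref{def_list}. Applying Lemma~\ref{lem:F_conv}---first replacing the coupled chain $(O_{k+1},O_k)$ by the stationary pair $(\tilde O_{k+1},\tilde O_k)$, then evaluating the stationary correlation---converts this into precisely $\beta_k\sum_{j\ge1}\E[b_i(\tilde O_j)b_1(\tilde O_0)^\top]$ (resp.\ $\alpha_k\sum_{j\ge1}\E[b_i(\tilde O_j)b_2(\tilde O_0)^\top]$), up to a remainder of the advertised order. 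Finally I would collect $(B)$, the drift part of $(A)$, and the Lemma~\ref{lem:F_conv} remainder into $G^{(i,j)}_k$, bounding each factor with the uniform mean-square bound of Lemma~\ref{lem:boundedness}, the split moment bounds of Lemma~\ref{lem:go_from_xp_to_x}, and the crude variance bounds of Lemma~\ref{lem:noise_crude_bound}.

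The main obstacle is the bookkeeping in this last step rather than any single inequality: every remainder contribution must be decomposed into a part of order $\alpha_k$ or $\beta_k$ (scaling as $d^2$) and a part of order $\hbar\,\zeta_k$ (scaling as $d$), which forces a disciplined use of the $\alpha_k/\beta_k$-versus-$\hbar\zeta_k$ decomposition of the iterate moments from Lemma~\ref{lem:go_from_xp_to_x}. The one genuinely case-dependent subtlety concerns the $\tx_k$ identities \ref{item:11} and \ref{item:31}: there the increment $z_{k+1}-z_k$ from \eqref{eq:x_t_update_2} carries \emph{two} noise terms, $\alpha_k w_k$ and $\beta_k(L_{k+1}+A_{22}^{-1}A_{21})v_k$, so one must single out the first as the leading $\alpha_k F^{(i,2)}$ contribution and relegate the second $O(\beta_k)$ term to $G^{(i,2)}_k$---which is exactly why the corresponding bounds carry the extra $\beta_k$ summand, whereas the $\ty_k$ cases \ref{item:21} and \ref{item:41}, whose update has a single noise term $\beta_k v_k$, do not.
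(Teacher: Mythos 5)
Your proposal is correct and follows essentially the same route as the paper's proof: the Poisson-equation substitution, the martingale-difference cancellation via the tower property, the telescoping decomposition into $d_k-d_{k+1}$ plus the two correction terms (your $(A)$ and $(B)$ are exactly the paper's $T_2$ and $T_1$, respectively), the extraction of the leading stationary-correlation term from the noise part of the one-step increment via $F^{(i,j)}$ and Lemma \ref{lem:F_conv}, and the remainder bookkeeping through Lemmas \ref{lem:boundedness}, \ref{lem:go_from_xp_to_x}, and \ref{lem:noise_crude_bound}. You also correctly identify the one case-dependent point—the extra noise term $\beta_k(L_{k+1}+A_{22}^{-1}A_{21})v_k$ in the $\tx_k$ update being relegated to the remainder, which is precisely why the bounds in parts \ref{item:11} and \ref{item:31} carry the additional $\beta_k$ summand (the paper's term $T_{43}$).
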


\subsubsection{Proof of induction dependent lemmas}

\begin{proof}[Proof of Lemma \ref{lem:go_from_xp_to_x}] 
\begin{enumerate}
    \item Since have $\tX_k' = \tX_k+\alpha_k (d_k^x+{d_k^x}^\top)$, we have 
    \begin{align*}
        \tX_k=\tX_k'-\alpha_k (d_k^x+{d_k^x}^\top) = \alpha_k \Sigma^x +R_k,
    \end{align*}
    where $R_k= \tilde{C}'^x_k\zeta_k^x-\alpha_k (d_k^x+{d_k^x}^\top)$. 

    Using Lemma \ref{lem:tel_term_bound}, we get
    \begin{align*}
        \|d_k^x\|\leq & \frac{2\sqrt{3d}}{1-\rho}\brc_{f}\left(1+\frac{\beta}{\alpha}\varrho_x\right)\sqrt{\E[\|\tx_k\|^2]}\\
        \leq & \frac{2\sqrt{3d}}{1-\rho}\brc_{f}\left(1+\frac{\beta}{\alpha}\varrho_x\right)\sqrt{\brc d}\tag{Lemma \ref{lem:boundedness}}
    \end{align*}

    Hence,
    \begin{align*}
        \|\tX_k\|&\leq \alpha_k\|\Sigma^x\| + \|\tilde{C}'^x_k\|\zeta_k^x + 2\alpha_k d \left(\frac{2\sqrt{3\brc}}{1-\rho}\brc_{f}\right)\\
        &\leq \alpha_k \underbar{c}_1 d +  \hbar\kappa_{Q_{22}}\zeta_k^x,
    \end{align*}
    where $\underbar{c}_1 = \sigma^x \tau_{mix} + 2 \left(\frac{2\sqrt{3\brc}}{1-\rho}\left(1+\frac{\beta}{\alpha}\varrho_x\right)\brc_{f}\right)$.

    \item Since $\tY_k=\tY_k' - \beta_k(d_k^{yv}+{d_k^{yv}}^\top)=\beta_k\Sigma^y+\tilde{C}'^y_k\zeta_k^y-\beta_k(d_k^{yv}+{d_k^{yv}}^\top)$, we have
    \begin{align*}
        \|\tY_k\|\leq& \beta_k d\tau_{mix}\sigma^y+\hbar \kappa_{Q_{\Delta, \beta}}\zeta_k^y + 2\beta_k \|d_k^{yv}\|\\ 
        \leq& \beta_k d\tau_{mix}\sigma^y+\hbar \kappa_{Q_{\Delta, \beta}}\zeta_k^y + \beta_k  \frac{4\sqrt{3d}}{1-\rho}\brc_{f}\sqrt{\E[\|\ty_k\|^2]}\tag{Lemma \ref{lem:tel_term_bound}}\\
        \leq& \beta_k d\tau_{mix}\sigma^y+\hbar \kappa_{Q_{\Delta, \beta}}\zeta_k^y + \beta_k  \frac{4\sqrt{3d}}{1-\rho}\brc_{f}\sqrt{\brc d}\tag{Lemma \ref{lem:boundedness}}\\
        =& \beta_k \underbar{c}_2d+ \hbar \kappa_{Q_{\Delta, \beta}}\zeta_k^y
    \end{align*}
    where $\underbar{c}_2=\sigma^y\tau_{mix} + \frac{4\sqrt{3\brc} }{1-\rho} \brc_{f}$. 
    
    \item 
    \begin{align*}
        \E[\|x_k\|^2] =& \E[\|\tx_k-(L_k+A_{22}^{-1}A_{21}) \ty_k\|^2]\\
        \leq &2\E[\|\tx_k\|^2 +\|L_k+A_{22}^{-1}A_{21}\|^2 \| \ty_k\|^2]\\
        \leq &2d\|\tX_k\| +\|L_k+A_{22}^{-1}A_{21}\|^2 d \| \tY_k\|\\
        \leq &2[d\|\tX_k\| +\|L_k+A_{22}^{-1}A_{21}\|^2 d \| \tY_k\|]\\
        \leq & 2d (\alpha_k \underbar{c}_1 d +  \hbar\kappa_{Q_{22}}\zeta_k^x) + 2d \varrho_x (\beta_k \underbar{c}_2d+ \hbar \kappa_{Q_{\Delta, \beta}}\zeta_k^y)\tag{Lemma \ref{lem:L_k_bound}}\\
        =&\alpha_k d^2\underbar{c}_3+ d\underbar{c}_4 \hbar\zeta_k^x, \tag{Using $\zeta_k^y\leq \zeta_k^x$}
    \end{align*}
    where $\underbar{c}_3=2\underbar{c}_1 + \frac{2\beta}{\alpha}\underbar{c}_2 \varrho_x $ and $\underbar{c}_4=2\kappa_{Q_{22}} + 2 \varrho_x \kappa_{Q_{\Delta, \beta}}$.

    \item \begin{align*}
            \E[\|y_k\|^2] \leq d\|Y_k\| = d\|\tY_k\| = \beta_k \underbar{c}_2d^2+ \hbar d \kappa_{Q_{\Delta, \beta}}\zeta_k^y
        \end{align*}

    \item We have
    \begin{align*}
    \E[\|\tx_{k+1}\|^2]&=\E [\|(I-\alpha_k B_{22}^k)\tx_k+\alpha_k u_k\|^2] \\
    &\leq 2\E [\|(I-\alpha_k B_{22}^k)\|^2\|\tx_k\|^2+\alpha_k^2 \|u_k\|^2]
    \end{align*}
    For the first term, recall that $B^k_{22}=\frac{\beta_k}{\alpha_k}(L_{k+1}+A_{22}^{-1}A_{21})A_{12}+A_{22}$. Thus, $\|B^k_{22}\|\leq \frac{\beta}{\alpha}\varrho_x\|A_{12}\|+\|A_{22}\|$. In addition, from Lemma \ref{lem:go_from_xp_to_x}, we have $\E[\|x_k\|^2]\leq\alpha_k\underbar{c}_3d^2+\hbar d \underbar{c}_4\zeta_k^x$. Furthermore, by lemma \ref{lem:noise_crude_bound} we have $\E[\|u_k\|^2] \leq 6d\left(1+\frac{\beta}{\alpha}\varrho_x^2\right)\left(b_{\max}^2+4A_{max}^2\brc\right)$. Combining together the previous bounds, we have
    \begin{align*}
        \E[\|\tx_{k+1}\|^2] \leq \underbar{c}_5 d^2 \alpha_k + \underbar{c}_6 d\hbar \zeta_k^x,
    \end{align*}
    where $\underbar{c}_5 = 2\underbar{c}_3\left(1+\alpha \left(\frac{\beta}{\alpha}\varrho_x\|A_{12}\|+\|A_{22}\|\right)^2\right)+12\left(1+\frac{\beta}{\alpha}\varrho_x^2\right)\left(b_{\max}^2+4A_{max}^2\brc\right)$ and \\$\underbar{c}_6=2\underbar{c}_4\left(1+\alpha \left(\frac{\beta}{\alpha}\varrho_x\|A_{12}\|+\|A_{22}\|\right)^2\right)$.

    \item From Eq. \eqref{eq:y_t_update_2}, we have
        \begin{align*}
            \E[\|\ty_{k+1}\|^2]&=\E[\|(I-\beta_kB^k_{11}\ty_k)+\beta_kA_{12}\tx_k+\beta_kv_k\|^2]\\
            &\leq 3\E[\|I-\beta_kB^k_{11}\|^2\|\ty_k\|^2+\beta_k^2\|A_{12}\|^2\|\tx_k\|^2+\beta_k^2\|v_k\|^2]
        \end{align*}
        Recall that $B^k_{11}=\Delta-A_{12}L_k$. Thus, $\|B^k_{11}\|\leq\|\Delta\|+\|A_{12}\|\kappa_{Q_{22}}=\varrho_y$. Thus, we have
        \begin{align*}
            \E[\|\ty_{k+1} \|^2] \leq & 3\E[2(1+ \beta^2\varrho_y^2)\|\ty_k\|^2+\beta_k^2\|A_{12}\|^2\|\tx_k\|^2+\beta_k^2\|v_k\|^2]\\
            \leq & 3\E[2(1+ \beta^2\varrho_y^2)\|\ty_k\|^2+\beta_k^2\|A_{12}\|^2\|\tx_k\|^2+3\beta_k^2d\left(b_{\max}^2+4A_{max}^2\brc\right)]\\
            \leq & 3\E[2(1+ \beta^2\varrho_y^2)(\beta_k \underbar{c}_2d^2+ \hbar d \kappa_{Q_{\Delta, \beta}}\zeta_k^y) +\beta_k^2 \|A_{12}\|^2(\alpha_k\underbar{c}_3d^2+ \hbar d \underbar{c}_4\zeta_k^x)\\
            &+ 3\beta_k^2d\left(b_{\max}^2+ 4A_{max}^2\brc\right)]\\
            = & \underbar{c}_7 d^2\beta_k + \hbar d\underbar{c}_8 \zeta_k^y
        \end{align*}
        where $\underbar{c}_7 = 6(1+ \beta^2\varrho_y^2 ) \underbar{c}_2 + \beta \|A_{12}\|^2\alpha \underbar{c}_3 + 3\beta \left(b_{\max}^2+ 4A_{max}^2\brc\right)$ and $\underbar{c}_8 = 6(1+ \beta^2\varrho_y^2 ) \kappa_{Q_{\Delta, \beta}} + \beta^2 \|A_{12}\|^2 \underbar{c}_4$.
\end{enumerate}

\end{proof}

\begin{proof}[Proof of Lemma \ref{lem:F_conv}]
\begin{enumerate}
    \item Recall that $F^{(i,j)}(O_{k+1},O_k,x_k,y_k)=E\left[ \fh_i(O_{k+1},x_k,y_k)f_j(O_k,x_k,y_k)^\top\right]$. Thus, we have
    \begin{align*}
        \|\E[F^{(i,j)}&(O_{k+1},O_k,x_k,y_k)-F^{(i,j)}(\tilde{O}_{k+1},\tilde{O}_k,x_k,y_k)]\|\\ =& \bigg\|\E\left[\left( \fh_i(O_{k+1},x_k,y_k)\right)(f_j(O_k,x_k,y_k))^\top - \left( \fh_i(\tilde{O}_{k+1},x_k,y_k)\right)(f_j(\tilde{O}_k,x_k,y_k))^\top\right]\bigg\|\\
        =&\bigg\|\E\bigg[\left(C_i(O_{k+1})-C_{i1}(O_{k+1})y_k - C_{i2}(O_{k+1})x_k\right) \left(b_j(O_k)-(A_{j1}(O_k)-A_{j1})y_k-(A_{j2}(O_k)-A_{j2})x_k\right)^\top\\
        &-(C_i(\tO_{k+1})-C_{i1}(\tO_{k+1})y_k - C_{i2}(\tO_{k+1})x_k) \left(b_j(\tO_k)-(A_{j1}(\tO_k)-A_{j1})y_k-(A_{j2}(\tO_k)-A_{j2})x_k\right)^\top\bigg]\bigg\|\\
        \leq&\|\E[C_i(O_{k+1})b_j(O_k)^\top-C_i(\tO_{k+1})b_j(\tO_k)^\top]\|+\|R_k\|,
    \end{align*}
    where $R_k$ includes all the remaining terms. Denote $\Lambda_k=(O_k,O_{k+1})$ and $\tilde{\Lambda}_k=(\tO_k,\tO_{k+1})$. Clearly, $\Lambda_k$ is a Markov chain, and $\tilde{\Lambda}_k$ is another independent Markov chain following the stationary distribution of $\Lambda_k$. By definition of the function $C_i$ and the mixing property of the Markov chain, we have 
    \begin{align*}    \max_{o,o'}\|C_i(o')b_j(o)^\top\|\leq &\max_{o} \|C_i(o)\|\max_{o} \|b_j(o)\| \\
    \leq & \frac{2b_{max}}{1-\rho}\sqrt{d} . b_{max} \sqrt{d}\tag{Lemma \ref{lem:mix_time_sum}}\\
    = & \frac{2b_{max}^2}{(1-\rho)} d.
    \end{align*}
    Hence, by geometric mixing of the Markov chain, $\|\E[C_i(O_{k+1})b_j(O_k)^\top-C_i(\tO_{k+1})b_j(\tO_k)^\top]\|$ goes to zero geometrically fast. Hence,
    \begin{align*}
        \|\E[C_i(O_{k+1})b_j(O_k)^\top-C_i(\tO_{k+1})b_j(\tO_k)^\top]\|& \leq \frac{4b_{max}^2}{(1-\rho)} d\rho^k\\
        &\leq \frac{4b_{max}^2}{(1-\rho)} \left(\frac{\xi/2}{e\ln(1/\rho)}+K_0\right)^{\xi/2} d\sqrt{\alpha_k}. \tag{Lemma \ref{lem:lambert}}
    \end{align*}

    For $R_k$, we have
    \begin{align*}
        \|R_k\|\leq& \frac{8b_{max}A_{max}}{1-\rho}\sqrt{d}\E[\|x_k\|+\|y_k\|] + \frac{4A_{max}^2}{1-\rho} \E[2\|x_k\|^2+2\|y_k\|^2+4\|x_k\|\|y_k\|]\tag{Cauchy-Schwarz inequality}\\
        \leq& \frac{8b_{max}A_{max}}{1-\rho}\sqrt{d}\E[\|x_k\|+\|y_k\|] + \frac{16A_{max}^2}{1-\rho} \E[\|x_k\|^2+\|y_k\|^2]\tag{AM-GM inequality}\\
        \leq& \frac{8b_{max}A_{max}}{1-\rho}\sqrt{d}\left(\sqrt{\E[\|x_k\|^2]} + \sqrt{\E[\|y_k\|^2]}\right) + \frac{16A_{max}^2}{1-\rho} \E\left[\|x_k\|^2+\|y_k\|^2\right]\tag{Jensen's inequality}\\
        \leq& \frac{8b_{max}A_{max}}{1-\rho}\sqrt{d}\left(\sqrt{\alpha_k \underbar{c}_3 d^2 + \hbar d \underbar{c}_4\zeta_k^x} + \sqrt{\beta_k \underbar{c}_2 d^2 + \hbar d \kappa_{Q_{\Delta,\beta}}\zeta_k^y}\right) \\
        &+ \frac{16A_{max}^2}{1-\rho} \left(\alpha_k \underbar{c}_3 d^2 + \hbar d \underbar{c}_4\zeta_k^x + \beta_k \underbar{c}_2 d^2 + \hbar d \kappa_{Q_{\Delta,\beta}}\zeta_k^y\right)\tag{Lemma \ref{lem:go_from_xp_to_x}}
    \end{align*}

    Combining both the bounds together, we have
    \begin{align*}
        \|\E[F^{(i,j)}&(O_{k+1},O_k,x_k,y_k)-F^{(i,j)}(\tilde{O}_{k+1},\tilde{O}_k,x_k,y_k)]\| \leq \hat{g}_1 d^2\sqrt{\alpha_k}+\hat{g}_2 d\hbar\sqrt{\zeta_k^x},\tag{$\zeta_k^x\leq \zeta_k^y$}
    \end{align*}
    where $\hat{g}_1 = \frac{8b_{max}A_{max}}{1-\rho} (\sqrt{\underbar{c}_3} + \sqrt{\beta\underbar{c}_2/\alpha}) + \frac{16A_{max}^2}{1-\rho} (\underbar{c}_3\sqrt{\alpha} + \beta\underbar{c}_2/\sqrt{\alpha}) + \frac{4b_{max}^2}{(1-\rho)} \left(\frac{\xi/2}{e\ln(1/\rho)}+K_0\right)^{\xi/2}$ and $\hat{g}_2=\frac{8b_{max}A_{max}}{1-\rho}\left(\sqrt{\underbar{c}_4}+\sqrt{\kappa_{Q_{\Delta,\beta}}}\right)+\frac{16A_{max}^2}{1-\rho}\left(\underbar{c}_4+\kappa_{Q_{\Delta,\beta}}\right)$.

    \item 
    \begin{align*}
        \E[F^{(i,j)}(\tilde{O}_{k+1},\tilde{O}_k,x_k,y_k)]&=\E{\left[\left(\sum_{l=k+1}^\infty \E{[b_i(\tilde{O}_l)|\tilde{O}_{k+1}]}-C_{i2}(\tO_{k+1})x_k-C_{i1}(\tO_{k+1})y_k\right)f_j(\tilde{O}_k,x_k,y_k)^\top\right]}\tag{Lemma \ref{lem:possion_sol}}\\
        &=\E\Bigg[\left(\sum_{l=k+1}^\infty \E{[b_i(\tilde{O}_l)|\tilde{O}_{k+1}]}-C_{i2}(\tO_{k+1})x_k-C_{i1}(\tO_{k+1})y_k\right)\\
        &~~~~~~~~~~~~~~~~~~~~~~~~~~~~~~~~~~~~~~~~~ \left(b_j(\tilde{O}_k)-(A_{j2}(\tilde{O}_k)-A_{j2})x_k-(A_{j1}(\tilde{O}_k)-A_{j1})y_k\right)^\top\Bigg]\\   
        &=\E{\left[\sum_{l=k+1}^\infty \E{[b_i(\tilde{O}_l)b_j(\tilde{O}_k)^\top|\tilde{O}_{k+1}]}\right]} + \hat{R}^{(i,j)}_k\\
        &=\sum_{l=k+1}^\infty \E[\E{[b_i(\tilde{O}_l)b_j(\tilde{O}_k)^\top|\tO_{k+1}]}]+\hat{R}^{(i,j)}_k\tag{Fubini-Tonelli theorem}\\
        &=\sum_{l=k+1}^\infty \E{[b_i(\tilde{O}_l)b_j(\tilde{O}_k)^\top]}+\hat{R}^{(i,j)}_k\tag{Tower Property}\\
        &=\sum_{l=1}^\infty \E{[b_i(\tilde{O}_l)b_j(\tilde{O}_0)^\top]}+\hat{R}^{(i,j)}_k,\tag{Stationarity of $\tilde{O}_k$}
    \end{align*}
\end{enumerate}
where $\hat{R}^{(i,j)}_k$ represents the remainder terms. Using the exact arguments as in the previous part, we have
\begin{align*}
    \|\hat{R}^{(i,j)}_k\|\leq& \frac{8b_{max}A_{max}}{1-\rho}\sqrt{d}\E[\|x_k\|+\|y_k\|] + \frac{4A_{max}^2}{1-\rho} \E[2\|x_k\|^2+2\|y_k\|^2+4\|x_k\|\|y_k\|]\tag{Cauchy-Schwarz inequality}\\
        \leq& \frac{8b_{max}A_{max}}{1-\rho}\sqrt{d}\E[\|x_k\|+\|y_k\|] + \frac{16A_{max}^2}{1-\rho} \E[\|x_k\|^2+\|y_k\|^2]\tag{AM-GM inequality}\\
        \leq& \frac{8b_{max}A_{max}}{1-\rho}\sqrt{d}\left(\sqrt{\E[\|x_k\|^2]} + \sqrt{\E[\|y_k\|^2]}\right) + \frac{16A_{max}^2}{1-\rho} \E\left[\|x_k\|^2+\|y_k\|^2\right]\tag{Jensen's inequality}\\
        \leq& \frac{8b_{max}A_{max}}{1-\rho}\sqrt{d}\left(\sqrt{\alpha_k \underbar{c}_3 d^2 + \hbar d \underbar{c}_4\zeta_k^x} + \sqrt{\beta_k \underbar{c}_2 d^2 + \hbar d \kappa_{Q_{\Delta,\beta}}\zeta_k^y}\right) \\
        &+ \frac{16A_{max}^2}{1-\rho} \left(\alpha_k \underbar{c}_3 d^2 + \hbar d \underbar{c}_4\zeta_k^x + \beta_k \underbar{c}_2 d^2 + \hbar d \kappa_{Q_{\Delta,\beta}}\zeta_k^y\right)\tag{Lemma \ref{lem:go_from_xp_to_x}}\\
        &\leq \hat{g}_3 d^2\sqrt{\alpha_k}+\hat{g}_2 d\hbar\sqrt{\zeta_k^x},\tag{$\zeta_k^x\leq \zeta_k^y$}
\end{align*}
where $\hat{g}_3=\frac{8b_{max}A_{max}}{1-\rho} (\sqrt{\underbar{c}_3} + \sqrt{\beta\underbar{c}_2/\alpha}) + \frac{16A_{max}^2}{1-\rho} (\underbar{c}_3\sqrt{\alpha} + \beta\underbar{c}_2/\sqrt{\alpha})$.
    
\end{proof}

\begin{proof}[Proof of Lemma \ref{lem:noise_bound}]
Assume that $\psi_k^i=b_i(O_k)-(A_{i1}(O_k)-A_{i1})y_k-(A_{i2}(O_k)-A_{i2})x_k$ for $i\in\{1,2\}$. Note that $\psi_k^{(1)}=v_k$ and $\psi_k^{(2)}=w_k$. For arbitrary $i,j\in\{1,2\}$ We have:
\begin{align*}
    \psi_k^{(i)}{\psi_k^{(j)}}^\top=& b_i(O_k) b_j(O_k)^\top-(A_{i1}(O_k)-A_{i1}) y_kb_j(O_k)^\top-(A_{i2}(O_k)-A_{i2}) x_kb_j(O_k)^\top\\&-b_i(O_k) y_k^\top(A_{j1}(O_k)-A_{j1})^\top+(A_{i1}(O_k)-A_{i1})y_ky_k^\top(A_{j1}(O_k)-A_{j1})^\top\\
    &+(A_{i2}(O_k)-A_{i2})x_ky_k^\top(A_{j1}(O_k)-A_{j1})^\top-b_i(O_k)x_k^\top(A_{j2}(O_k)-A_{j2})^\top\\
    &+(A_{i1}(O_k)-A_{i1})y_kx_k^\top(A_{j2}(O_k)-A_{j2})^\top+(A_{i2}(O_k)-A_{i2})x_kx_k^\top(A_{j2}(O_k)-A_{j2})^\top.
\end{align*}
We will analyze each term separately and use \cite[Lemma 23]{kaledin2020finite} extensively without stating to decompose the expectation of the outer product of two random vectors. 
\begin{itemize}
    \item Let $\Tilde{O}_k$ be a Markov chain with starting distribution as stationary distribution. Then:
    \begin{align*}
        \|\E[b_i(O_k)b_j(O_k)^\top]\|&= \E[b_i(O_k)b_j(O_k)^\top]-\E[b_i(\Tilde{O}_k)b_j(\Tilde{O}_k)^\top]+\E[b_i(\Tilde{O}_k)b_j(\Tilde{O}_k)^\top]\\
        &= \Gamma_{ij}+\E[b_i(O_k)b_j(O_k)^\top]-\E[b_i(\Tilde{O}_k)b_j(\Tilde{O}_k)^\top].
    \end{align*}
    We have
    \begin{align*}
        \|\E[b_i(O_k) b_j(O_k)^\top] - \E[b_i(\Tilde{O}_k) b_j(\Tilde{O}_k)^\top]\| &\leq \max_o\|b_i(o) b_j(o)^\top\| \max_{o'} d_{TV}(P^{k}(\cdot|o') || \mu(\cdot))\\
        &\leq \max_o\|b_i(o) b_j(o)^\top\| \rho^{k}\\
        &\leq b_{max}^2 d\rho^k,
    \end{align*}
    where the second inequality is due to the geometric mixing of the Markov chain stated in Remark \ref{rem:pois_eq_geo_mix}.
    
    Using Lemma \ref{lem:lambert}, $\rho^{k}\leq \frac{1}{\sqrt{\alpha}}\left(\frac{\xi}{2e \log(1/\rho)}+K_0\right)^{\xi/2}\sqrt{\alpha_k}$. Hence, we have $\|\E[b_i(O_k)b_j(O_k)^\top]-\E[b_i(\Tilde{O}_k)b_j(\Tilde{O}_k)^\top]\|\leq \frac{b_{max}^2d}{\sqrt{\alpha}}\left(\frac{\xi}{2e \log(1/\rho)}+K_0\right)^{\xi/2}\sqrt{\alpha_k}$ for all $k>0$.

    \item For the $5th$ term, we have the following:
    \begin{align*}
        \|\E[(A_{i1}(O_k)-A_{i1})y_ky_k^\top(A_{j1}(O_k)-A_{j1})^\top]\|&\leq 4A^2_{max}\E[\|y_ky_k^\top\|]\\
        &= 4A^2_{max}\E[\|y_k\|^2]\\
        &\leq 4A^2_{max}\left(\beta_k\underbar{c}_2d^2+\hbar d\kappa_{Q_{\Delta,\beta}}\zeta_k^y\right)\tag{Lemma \ref{lem:go_from_xp_to_x}}
    \end{align*}

    \item For the $9th$ term, we shall do the following:
    \begin{align*}
        \|\E[(A_{i2}(O_k)-A_{i2})x_kx_k^\top(A_{j2}(O_k)-A_{j2})^\top]\|&\leq 4A^2_{max}\E[\|x_kx_k^\top\|]\\
        &= 4A^2_{max}\E[\|x_k\|^2]\\
        &\leq  4A^2_{max}\left(\alpha_k\underbar{c}_3d^2 +\hbar d\underbar{c}_4\zeta_k^x\right).\tag{Lemma \ref{lem:go_from_xp_to_x}}
    \end{align*}
    
    %where $(a)$ is by AM-GM inequality, $(b)$ and $(c)$ are by the inductive hypothesis and Lemma \ref{lem:go_from_xp_to_x}, and $(d)$ us by $\zeta_k^x > \zeta_k^y$. 
    \item For the $2nd$ and $4th$ terms:
    \begin{align*}
        \|\E[(A_{i1}(O_k)-A_{i1})y_kb_j(O_k)^\top]\|&\leq \sqrt{\E[\|b_j(O_k)\|^2]}\sqrt{\E[\|(A_{i1}(O_k)-A_{i1})y_k\|^2]}\\
        &\leq 2A_{max}b_{max}\sqrt{\E[\|y_k\|^2]}\\
        &\leq 2A_{max}b_{max}\left(\sqrt{\beta_k\underbar{c}_2d^2+\hbar d\kappa_{Q_{\Delta,\beta}}\zeta_k^y} \right)\tag{Lemma \ref{lem:go_from_xp_to_x}}\\
        &\leq 2A_{max}b_{max}\left(d\sqrt{\beta_k\underbar{c}_2} + \hbar\sqrt{ d\kappa_{Q_{\Delta,\beta}}\zeta_k^y} \right)
        %&\leq c\sqrt{\beta_k}+cc_3\sqrt{\zeta_k^y}\tag{$\sqrt{a+b}\leq \sqrt{a}+\sqrt{b}$}
    \end{align*}
    where the last inequality is by $\hbar \geq 1 $.
    
     Similarly for the 4$th$ term.
    \item For the $3rd$ and $7th$ terms:
    \begin{align*}
        \|\E[b_i(O_k)&x_k^\top(A_{j2}(O_k)-A_{j2})^\top]\|\\
        &\leq \sqrt{\E[\|b_i(O_k)\|^2]}\sqrt{\E[\|(A_{i2}(O_k)-A_{i2})x_k\|^2]}\\
        &\leq 2A_{max}b_{max}\sqrt{\E[\|x_k\|^2]}\\
        &\leq 2A_{max} b_{max}\left( d\sqrt{\alpha_k\underbar{c}_3} + \hbar \sqrt{ d\underbar{c}_4\zeta_k^x}\right)\tag{Lemma \ref{lem:go_from_xp_to_x}}
    \end{align*}
    
    Similarly for the $7th$ term.
    \item For the $6th$ and $8th$ terms:
    \begin{align*}
        &\|\E[(A_{i1}(O_k)-A_{i1})y_kx_k^\top(A_{j2}(O_k)-A_{i2})^\top]\|
        \\ &\leq \E[\|(A_{i1}(O_k)-A_{i1})y_k\|^2]+\E[\|(A_{j2}(O_k)-A_{j2})x_k\|^2]\tag{Young's Inequality}\\
        &\leq 4A^2_{max}\left(\beta_k\underbar{c}_2d^2+\hbar d\kappa_{Q_{\Delta,\beta}}\zeta_k^y+\alpha_k\underbar{c}_3d^2 +\hbar d\underbar{c}_4\zeta_k^x\right)\tag{Lemma \ref{lem:go_from_xp_to_x}}\\
        &\leq 4A^2_{max}d^2\alpha_k\left(\underbar{c}_2\frac{\beta}{\alpha}+\underbar{c}_3\right) +4\hbar dA_{max}^2\left(\kappa_{Q_{\Delta,\beta}}+ \underbar{c}_4\right)\zeta_k^x \tag{$\zeta_k^y\leq \zeta_k^x$}
    \end{align*}
    \end{itemize}
    
    Hence, we have
    \begin{align*}
        \E \left[\psi_k^{(i)}{\psi_k^{(j)}}^\top\right] = \Gamma_{ij}+\cR^{(i,j)}_k
    \end{align*}
    where $\|\cR^{(i,j)}_k\| \leq \cc_1d^2 \sqrt{\alpha_k} + \cc_2d\hbar \sqrt{\zeta_k^x}$. Here
    \begin{align*}
        \cc_1&=\frac{b_{max}^2}{\sqrt{\alpha}}\left(\frac{\xi}{2e \log(1/\rho)}+K_0\right)^{\xi/2}+12A_{max}^2\sqrt{\alpha}\left(\underbar{c}_2\frac{\beta}{\alpha}+\underbar{c}_3\right)+4A_{max}b_{max}\left(\sqrt{\frac{\beta}{\alpha}\underbar{c}_2}+\sqrt{\underbar{c}_3}\right)\\
        \cc_2&=12A_{max}^2\left(\kappa_{Q_{\Delta, \beta}}+\underbar{c}_4\right)+4A_{max}b_{max}\left(\sqrt{\kappa_{Q_{\Delta, \beta}}}+\sqrt{\underbar{c}_4}\right).
    \end{align*}
    
    This proves the part \ref{item:1} of the Lemma. 

   For the last part, $\E[u_ku_k^\top]$,  we have:
    Given that $u_k=w_k+\frac{\beta_k}{\alpha_k}(L_{k+1}+A_{22}^{-1}A_{21})v_k$:
\begin{align*}
    u_ku_k^\top&=w_kw_k^\top+\frac{\beta_k}{\alpha_k}w_kv_k^\top(L_{k+1}+A_{22}^{-1}A_{21})^\top+\frac{\beta_k}{\alpha_k}(L_{k+1}+A_{22}^{-1}A_{21})v_kw_k^\top\\
    &+\left(\frac{\beta_k}{\alpha_k}\right)^2(L_{k+1}+A_{22}^{-1}A_{21})v_kv_k^\top(L_{k+1}+A_{22}^{-1}A_{21})^\top
\end{align*}

We will again analyse each term separately. 
    \begin{itemize}
        \item $\E[w_kw_k^\top]=\Gamma_{22}+\cR^{(2,2)}_k;~~\text{where}~~ \|\cR^{(2,2)}_k\|\leq \cc_1d^2 \sqrt{\alpha_k} + \cc_2d\hbar \sqrt{\zeta_k^x}$.
        \item $\frac{\beta_k}{\alpha_k}\|\E[w_kv_k^\top]\|\|(L_{k+1}+A_{22}^{-1}A_{21})^\top\|\leq \frac{\beta_k}{\alpha_k} \varrho_x\left(\|\Gamma_{21}\|+\cc_1d^2 \sqrt{\alpha_k} + \cc_2d\hbar \sqrt{\zeta_k^x}\right)$
        \item $\left(\frac{\beta_k}{\alpha_k}\right)^2\|(L_{k+1}+A_{22}^{-1}A_{21})\|\|\E[v_kv_k^\top]\|(L_{k+1}+A_{22}^{-1}A_{21})^\top\|\leq \left(\frac{\beta_k}{\alpha_k}\right)^2\varrho_x^2\Big(\|\Gamma_{11}\|+\cc_1d^2 \sqrt{\alpha_k} + \cc_2d\hbar \sqrt{\zeta_k^x}\Big)$  
    \end{itemize}
Hence, 
$$\E[u_ku_k^\top]=\Gamma_{22}+\cR_k^{u},$$ 

where $\|\cR_k^{u}\|\leq \left(1+\frac{\beta}{\alpha}\varrho_x\right)^2\left(\cc_1d^2 \sqrt{\alpha_k} + \cc_2d\hbar \sqrt{\zeta_k^x}\right)+\frac{\beta_k}{\alpha_k}\varrho_x
\left(\|\Gamma_{21}\|+\frac{\beta}{\alpha}\|\Gamma_{11}\|\varrho_x\right)$.
\end{proof}

\begin{proof}[Proof of Lemma \ref{lem:noise_bound2}]
The results in part (3) and (4) of this Lemma follow in exactly same manner as part (1) and (2), respectively. Hence, we only present proof for the first two parts to avoid repetition.

\begin{enumerate}
    \item By definition, we had $v_k=f_1(O_k,x_k,y_k)$. By Remark \ref{rem:pois_eq_geo_mix}, we have a unique function $\fh_1(o,x_k,y_k)$ such that
\begin{align*}
    \fh_1(o,x_k,y_k)=f_1(o,x_k,y_k)+\sum_{o'\in S}P(o'|o)\fh_1(o',x_k,y_k)
\end{align*}
where $P(o'|o)$ is the transition probability corresponding to the Markov chain $\{O_k\}_{k\geq 0}$. 
    Hence,
\begin{align}
    \E[v_k\ty_k^\top]=&\E[f_1(O_k,x_k,y_k)\ty_k^\top]\label{eq:vy}\\
    =&\E\left[\left(\fh_1(O_k,x_k,y_k)-\sum_{o'\in S}P(o'|O_k)\fh_1(o',x_k,y_k)\right)\ty_k^\top\right]\nonumber\\
    =&\E\left[\left(\fh_1(O_k,x_k,y_k)-\E_{O_k}\fh_1(\cdot,x_k,y_k)\right)\ty_k^\top\right]\nonumber\\
    =&\E\left[\left(\fh_1(O_k,x_k,y_k)-\E_{O_{k-1}}\fh_1(\cdot,x_k,y_k)+\E_{O_{k-1}}\fh_1(\cdot,x_k,y_k)-\E_{O_k}\fh_1(\cdot,x_k,y_k)\right)\ty_k^\top\right]\nonumber\\
    =&\E\left[\left(\E_{O_{k-1}}\fh_1(\cdot,x_k,y_k)-\E_{O_{k}}\fh_1(\cdot,x_k,y_k)\right)\ty_k^\top\right]\tag{Tower property}\nonumber\\
    =&\E\bigg[\left(\E_{O_{k-1}}\fh_1(\cdot,x_k,y_k)\right)\ty_k^\top-\left(\E_{O_{k}}\fh_1(\cdot,x_{k+1},y_{k+1})\right)\ty_{k+1}^\top\nonumber\\
    &+\left(\E_{O_{k}}\fh_1(\cdot,x_{k+1},y_{k+1})-\E_{O_{k}}\fh_1(\cdot,x_k,y_k)\right)\ty_{k+1}^\top+\left(\E_{O_{k}}\fh_1(\cdot,x_k,y_k)\right)(\ty_{k+1}^\top-\ty_k^\top)\bigg]\nonumber\\
    =&d_{k}^{yv}-d_{k+1}^{yv}+\E\bigg[\underbrace{\left(\E_{O_{k}}\fh_1(\cdot,x_{k+1},y_{k+1})-\E_{O_{k}}\fh_1(\cdot,x_k,y_k)\right)\ty_{k+1}^\top}_{T_{1}}+\underbrace{\left(\E_{O_{k}}\fh_1(\cdot,x_k,y_k)\right)(\ty_{k+1}^\top-\ty_k^\top)}_{T_{2}}\bigg]\nonumber
\end{align}

For $T_{1}$, we have
\begin{align*}
    \E[\|T_{1}\|]\leq & \check{h}_2 \E[(\|x_{k+1}-x_k\|+\|y_{k+1}-y_k\|).\|\ty_{k+1}\|]\tag{Lemma \ref{lem:f_hat_bound}}\\
    \leq & \check{h}_2\left(1+\frac{\beta}{\alpha}\right)  \alpha_k \E[(A_{max}\|x_{k}\|+A_{max}\|y_k\|+b_{max}\sqrt{d}).\|\ty_{k+1}\|]\tag{Eq. \eqref{eq:two_time_scale}}\\
    \leq & \check{h}_2\left(1+\frac{\beta}{\alpha}\right)  \alpha_k\sqrt{\E[( A_{max}\|x_{k}\| +A_{max}\|y_k\|+ b_{max}\sqrt{d})^2]} \sqrt{\E[\|\ty_{k+1}\|^2]}\tag{Cauchy-Schwarz}\\
    \leq & \check{h}_2\left(1+\frac{\beta}{\alpha}\right)\sqrt{3}\alpha_k\sqrt{\E[A_{max}^2(\|x_{k}\|^2+\|y_k\|^2)+b_{max}^2d]}\left(\sqrt{\underbar{c}_7}d \sqrt{\beta_k} + \sqrt{\underbar{c}_8d\hbar \zeta_k^y}\right)\tag{Lemma \ref{lem:go_from_xp_to_x}}\\
    \leq & \check{h}_2\left(1+\frac{\beta}{\alpha}\right)\sqrt{3}\alpha_k\sqrt{b_{max}^2+A_{max}^2\brc}\left(\sqrt{\underbar{c}_7}d^{1.5} \sqrt{\beta_k} + d\hbar\sqrt{\underbar{c}_8 \zeta_k^y}\right),\tag{Lemma \ref{lem:boundedness}}
\end{align*}

In addition, using the Eq. \eqref{eq:x_t_update_2}, we have
\begin{align*}
\E[T_{2}]=&\E\left[\left(\E_{O_{k}}\fh_1(\cdot,x_k,y_k)\right)(\ty_{k+1}^\top-\ty_k^\top)\right]\\
=&\E\left[\left(\E_{O_{k}}\fh_1(\cdot,x_k,y_k)\right)\left(-\beta_kB_{11}^k\ty_k-\beta_kA_{12}\tx_k+\beta_kv_k\right)^\top\right]\\
=&\beta_k\underbrace{\E\left[\left(\E_{O_{k}}\fh_1(\cdot,x_k,y_k)\right)v_k^\top\right]}_{T_{21}} \\
&-\beta_k\underbrace{ \E\left[\left(\E_{O_{k}}\fh_1(\cdot,x_k,y_k)\right)\left(B_{11}^k\ty_k\right)^\top\right]}_{T_{22}} -\beta_k\underbrace{\E\left[\left(\E_{O_{k}}\fh_1(\cdot,x_k,y_k)\right)\left(A_{12}\tx_k\right)^\top\right]}_{T_{23}}.
\end{align*}

\begin{itemize}
    \item For $T_{21}$, denote $\Tilde{O}$ as the random variable with distribution coming from the stationary distribution of the Markov chain $\{O_k\}_{k\geq 0}$. We have 
    \begin{align*}
        \E&\left[\left(\E_{O_{k}} \fh_1(\cdot,x_k,y_k)\right)(f_1(O_k,x_k,y_k))^\top\right]
        =\E\left[\left( \fh_1(O_{k+1},x_k,y_k)\right)(f_1(O_k,x_k,y_k))^\top\right]\tag{Tower property}\\
        =& \E[\left( \fh_1(\tilde{O}_{k+1},x_k,y_k)\right)(f_1(\tilde{O}_k,x_k,y_k))^\top]\\
        &+\E\left[\left( \fh_1(O_{k+1},x_k,y_k)\right)(f_1(O_k,x_k,y_k))^\top\right]-\E\left[\left( \fh_1(\tilde{O}_{k+1},x_k,y_k)\right)(f_1(\tilde{O}_k,x_k,y_k))^\top\right]\\
        =&\E[F^{(1,1)}(\tilde{O}_{k+1},\tilde{O}_k,x_k,y_k)]+\E[F^{(1,1)}(O_{k+1},O_k,x_k,y_k)]-\E[F^{(1,1)}(\tilde{O}_{k+1},\tilde{O}_k,x_k,y_k)]
    \end{align*}
     Using Part (2) on the first term and Part (1) on the second term of Lemma \ref{lem:F_conv}, we have
     \begin{align*}
         \E\left[\left(\E_{O_{k}} \fh_1(\cdot,x_k,y_k)\right)(f_1(O_k,x_k,y_k))^\top\right]=&\sum_{l=1}^\infty \E{[b_1(\tilde{O}_l)b_1(\tilde{O}_0)^\top]}+\hat{R}_k^{(1,1)}\tag{Lemma \ref{lem:F_conv}}
     \end{align*}
     where $\left\|\hat{R}_k^{(1,1)}\right\|\leq (\hat{g}_1+\hat{g}_3)d^2\sqrt{\alpha_k}+2\hat{g}_2d\hbar\sqrt{\zeta_k^x}$.

    \item For $T_{22}$, we have 
    \begin{align*}
        \|T_{22}\|&\leq \E[\|\E_{O_{k}}\fh_1(\cdot,x_k,y_k)\|\|B_{11}^k\|\|\ty_k\|]\\
        &\leq \frac{2\varrho_y}{1-\rho}\E\left[\left(b_{max}\sqrt{d}+A_{max}\left(\|x_k\|+\|y_k\|\right)\right)\|\ty_k\|\right]\tag{Lemma \ref{lem:f_hat_bound}}\\
        &\leq \frac{2\sqrt{3}\varrho_y}{1-\rho}\sqrt{\E\left[b_{max}^2d+A_{max}^2\left(\|x_k\|^2+\|y_k\|^2\right)\right]}\sqrt{\E[\|\ty_k\|^2]}\tag{Cauchy-Schwarz inequality}\\
        &\leq \frac{2\sqrt{3d}\varrho_y}{1-\rho}\sqrt{b_{max}^2+A_{max}^2\brc}\sqrt{\E[\|\ty_k\|^2]}\tag{Lemma \ref{lem:boundedness}}\\
        &\leq  \frac{2\sqrt{3d}\varrho_y}{1-\rho}\sqrt{b_{max}^2+A_{max}^2\brc}\sqrt{\beta_k \underbar{c}_2d^2+\hbar d\kappa_{Q_{\Delta, \beta}}\zeta_k^y}\tag{Lemma \ref{lem:go_from_xp_to_x}}\\ 
        &\leq \frac{2\sqrt{3d}\varrho_y}{1-\rho}\sqrt{b_{max}^2+A_{max}^2\brc}\left( \sqrt{\underbar{c}_2}d\sqrt{\beta_k}+\hbar \sqrt{\kappa_{Q_{\Delta, \beta}}d}\sqrt{\zeta_k^y}\right).
    \end{align*}

    \item For $T_{23}$, we have
    \begin{align*}
        T_{23}&\leq \E\left[\left\|\E_{O_{k}}\fh_1(\cdot,x_k,y_k)\right\|\left\|A_{12}\|\|\tx_k\right\|\right]\\
        &\leq \frac{2\|A_{12}\|}{1-\rho}\E\left[\left(b_{max}\sqrt{d}+A_{max}\left(\|x_k\|+\|y_k\|\right)\right)\|\tx_k\|\right]\tag{Lemma \ref{lem:f_hat_bound}}\\
        &\leq \frac{2\sqrt{3}\|A_{12}\|}{1-\rho}\sqrt{\E\left[b_{max}^2d+A_{max}^2\left(\|x_k\|^2+\|y_k\|^2\right)\right]}\sqrt{\E[\|\tx_k\|^2]}\tag{Cauchy-Schwarz inequality}\\
        &\leq \frac{2\sqrt{3d}\|A_{12}\|}{1-\rho}\sqrt{b_{max}^2+A_{max}^2\brc}\sqrt{\E[\|\tx_k\|^2]}\tag{Lemma \ref{lem:boundedness}}\\
       &\leq  \frac{2\sqrt{3d}\|A_{12}\|}{1-\rho}\sqrt{b_{max}^2+A_{max}^2\brc}\sqrt{\alpha_k \underbar{c}_3d^2+\hbar d\underbar{c}_4\zeta_k^x}\tag{Lemma \ref{lem:go_from_xp_to_x}}\\ 
        &\leq \frac{2\sqrt{3d}\|A_{12}\|}{1-\rho}\sqrt{b_{max}^2+A_{max}^2\brc}\left( \sqrt{\underbar{c}_3}d\sqrt{\alpha_k}+\hbar \sqrt{\underbar{c}_4d}\sqrt{\zeta_k^x}\right).
    \end{align*}
\end{itemize}
Note that $\alpha_k\sqrt{\beta_k}\geq \sqrt{\frac{\beta}{\alpha}}\beta_k\sqrt{\alpha_k}$ and $\alpha_k\sqrt{\zeta_k^y}\geq \frac{\beta}{\alpha}\beta_k\sqrt{\zeta_k^x}$. Hence, 
\begin{align*}
    T_{2}=\beta_k\sum_{j=1}^\infty \E{[b_1(\tilde{O}_j)b_1(\tilde{O}_0)^\top]} + R_k^1,
\end{align*}
where $\|R_k^2\|\leq d^2\alpha_k\sqrt{\beta_k}\left(\sqrt{\frac{\beta}{\alpha}}\left(\hat{g}_1+\hat{g}_3\right)+\frac{2\sqrt{3}}{1-\rho}\sqrt{b_{max}^2+A_{max}^2\brc}\left(\frac{\beta}{\alpha}\varrho_y\sqrt{\underbar{c}_2}+\|A_{12}\|\sqrt{\frac{\beta}{\alpha}}\sqrt{\underbar{c}_3}\right)\right)
+d\hbar\alpha_k\sqrt{\zeta_k^y}\frac{\beta}{\alpha}\\
\times\left(2\hat{g}_2+\frac{2\sqrt{3d}}{1-\rho}\sqrt{b_{max}^2+A_{max}^2\brc}\left(\varrho_y\sqrt{\kappa_{Q_{\Delta, \beta}}}+\|A_{12}\|\sqrt{\underbar{c}_4}\right)\right)$.

Combining the bounds for $T_1$ and $T_2$, we get
\begin{align*}
    \E[v_k\ty_k^\top]&=d_{k}^{yv}-d_{k+1}^{yv}+\beta_k\sum_{l=1}^\infty \E{[b_i(\tilde{O}_l)b_j(\tilde{O}_0)^\top]}+G_{k}^{(1,1)}
\end{align*}
where $\left\|G_k^{(1,1)}\right\|\leq g_1d^2\alpha_k\sqrt{\beta_k}+g_2d\hbar\alpha_k\sqrt{\zeta_k^y}$. Here 
\begin{align*}
    g_1&=d^2\left(\sqrt{b_{max}^2+A_{max}^2\brc}\left(\check{h}_2\left(1+\frac{\beta}{\alpha}\right)\sqrt{3\underbar{c}_7}+\frac{2\sqrt{3}}{1-\rho}\left(\frac{\beta}{\alpha}\varrho_y\sqrt{\underbar{c}_2}+\|A_{12}\|\sqrt{\frac{\beta}{\alpha}}\sqrt{\underbar{c}_3}\right)\right)+\sqrt{\frac{\beta}{\alpha}}(\hat{g}_1+\hat{g}_3)\right),\\
    g_2&=d\left(\sqrt{b_{max}^2+A_{max}^2\brc}\left(\check{h}_2\left(1+\frac{\beta}{\alpha}\right)\sqrt{3\underbar{c}_8}+\frac{\beta}{\alpha}\frac{2\sqrt{3}}{1-\rho}\left(\varrho_y\sqrt{\kappa_{Q_{\Delta, \beta}}}+\|A_{12}\|\sqrt{\underbar{c}_4}\right)\right)+\frac{2\beta}{\alpha}\hat{g}_2\right).
\end{align*}

\item \label{part:b}
     By definition, we had $v_k=f_1(O_k,x_k,y_k)$. By Remark \ref{rem:pois_eq_geo_mix}, we have a unique function $\fh_1(o,x_k,y_k)$ such that
\begin{align*}
    \fh_1(o,x_k,y_k)=f_1(o,x_k,y_k)+\sum_{o'\in S}P(o'|o)\fh_1(o',x_k,y_k)
\end{align*}
where $P(o'|o)$ is the transition probability corresponding to the Markov chain $\{O_k\}_{k\geq 0}$. 
    Hence,
\begin{align}
    \E[v_k\tx_k^\top]=&\E[f_1(O_k,x_k,y_k)\tx_k^\top]\label{eq:vx}\\
    =&\E\left[\left(\fh_1(O_k,x_k,y_k)-\sum_{o'\in S}P(o'|O_k)\fh_1(o',x_k,y_k)\right)\tx_k^\top\right]\nonumber\\
    =&\E\left[\left(\fh_1(O_k,x_k,y_k)-\E_{O_k}\fh_1(\cdot,x_k,y_k)\right)\tx_k^\top\right]\nonumber\\
    =&\E\left[\left(\fh_1(O_k,x_k,y_k)-\E_{O_{k-1}}\fh_1(\cdot,x_k,y_k)+\E_{O_{k-1}}\fh_1(\cdot,x_k,y_k)-\E_{O_k}\fh_1(\cdot,x_k,y_k)\right)\tx_k^\top\right]\nonumber\\
    =&\E\left[\left(\E_{O_{k-1}}\fh_1(\cdot,x_k,y_k)-\E_{O_{k}}\fh_1(\cdot,x_k,y_k)\right)\tx_k^\top\right]\tag{Tower property}\nonumber\\
    =&\E\bigg[\left(\E_{O_{k-1}}\fh_1(\cdot,x_k,y_k)\right)\tx_k^\top-\left(\E_{O_{k}}\fh_1(\cdot,x_{k+1},y_{k+1})\right)\tx_{k+1}^\top\nonumber\\
    &+\left(\E_{O_{k}}\fh_1(\cdot,x_{k+1},y_{k+1})-\E_{O_{k}}\fh_1(\cdot,x_k,y_k)\right)\tx_{k+1}^\top+\left(\E_{O_{k}}\fh_1(\cdot,x_k,y_k)\right)(\tx_{k+1}^\top-\tx_k^\top)\bigg]\nonumber\\
    =&d_{k}^{xv}-d_{k+1}^{xv}\\
    &+\E\bigg[\underbrace{\left(\E_{O_{k}}\fh_1(\cdot,x_{k+1},y_{k+1})-\E_{O_{k}}\fh_1(\cdot,x_k,y_k)\right)\tx_{k+1}^\top}_{T_{3}}+\underbrace{\left(\E_{O_{k}}\fh_1(\cdot,x_k,y_k)\right)(\tx_{k+1}^\top-\tx_k^\top)}_{T_{4}}\bigg]\nonumber
\end{align}

For $T_{3}$, we have
\begin{align*}
    \E[\|T_{3}\|]\leq & \check{h}_2 \E[(\|x_{k+1}-x_k\|+\|y_{k+1}-y_k\|).\|\tx_{k+1}\|]\tag{Lemma \ref{lem:f_hat_bound}}\\
    \leq & \check{h}_2\left(1+\frac{\beta}{\alpha}\right)  \alpha_k \E[(A_{max}\|x_{k}\|+A_{max}\|y_k\|+b_{max}\sqrt{d}).\|\tx_{k+1}\|]\tag{Eq. \eqref{eq:two_time_scale}}\\
    \leq & \check{h}_2\left(1+\frac{\beta}{\alpha}\right)  \alpha_k\sqrt{\E[( A_{max}\|x_{k}\| +A_{max}\|y_k\|+ b_{max}\sqrt{d})^2]} \sqrt{\E[\|\tx_{k+1}\|^2]}\tag{Cauchy-Schwarz}\\
    \leq & \check{h}_2\left(1+\frac{\beta}{\alpha}\right)\sqrt{3}\alpha_k\sqrt{\E[A_{max}^2(\|x_{k}\|^2+\|y_k\|^2)+b_{max}^2d]}\left(\sqrt{\underbar{c}_5}d \sqrt{\beta_k} + \sqrt{\underbar{c}_6d\hbar \zeta_k^y}\right)\tag{Lemma \ref{lem:go_from_xp_to_x}}\\
    \leq & \check{h}_2\left(1+\frac{\beta}{\alpha}\right)\sqrt{3}\alpha_k\sqrt{b_{max}^2+A_{max}^2\brc}\left(\sqrt{\underbar{c}_5}d^{1.5} \sqrt{\beta_k} + d\hbar\sqrt{\underbar{c}_6 \zeta_k^y}\right),\tag{Lemma \ref{lem:boundedness}}
\end{align*}
In addition, using the Eq. \eqref{eq:x_t_update_2}, we have
\begin{align*}
\E[T_{4}]=&\E\left[\left(\E_{O_{k}}\fh_1(\cdot,x_k,y_k)\right)(\tx_{k+1}^\top-\tx_k^\top)\right]\\
=&\E\left[\left(\E_{O_{k}}\fh_1(\cdot,x_k,y_k)\right)\left(-\alpha_k(B_{22}^k\tx_k)+\alpha_k w_k+\beta_k(L_{k+1}+A_{22}^{-1}A_{21})v_k\right)^\top\right]\\
=&\alpha_k\underbrace{\E\left[\left(\E_{O_{k}}\fh_1(\cdot,x_k,y_k)\right)w_k^\top\right]}_{T_{41}} \\
&-\alpha_k\underbrace{ \E\left[\left(\E_{O_{k}}\fh_1(\cdot,x_k,y_k)\right)\left(B_{22}^k\tx_k\right)^\top\right]}_{T_{42}} +\beta_k\underbrace{\E\left[\left(\E_{O_{k}}\fh_1(\cdot,x_k,y_k)\right)\left((L_{k+1}+A_{22}^{-1}A_{21})v_k\right)^\top\right]}_{T_{43}}
\end{align*}
\begin{itemize}
    \item For $T_{41}$, denote $\Tilde{O}$ as the random variable with distribution coming from the stationary distribution of the Markov chain $\{O_k\}_{k\geq 0}$. We have 
    \begin{align*}
        \E&\left[\left(\E_{O_{k}} \fh_1(\cdot,x_k,y_k)\right)(f_2(O_k,x_k,y_k))^\top\right]
        =\E\left[\left( \fh_1(O_{k+1},x_k,y_k)\right)(f_2(O_k,x_k,y_k))^\top\right]\tag{Tower property}\\
        =& \E[\left( \fh_1(\tilde{O}_{k+1},x_k,y_k)\right)(f_2(\tilde{O}_k,x_k,y_k))^\top]\\
        &+\E\left[\left( \fh_1(O_{k+1},x_k,y_k)\right)(f_2(O_k,x_k,y_k))^\top\right]-\E\left[\left( \fh_1(\tilde{O}_{k+1},x_k,y_k)\right)(f_2(\tilde{O}_k,x_k,y_k))^\top\right]\\
        =&\E[F^{(1,2)}(\tilde{O}_{k+1},\tilde{O}_k,x_k,y_k)]+\E[F^{(1,2)}(O_{k+1},O_k,x_k,y_k)]-\E[F^{(1,2)}(\tilde{O}_{k+1},\tilde{O}_k,x_k,y_k)]
    \end{align*}
    
    Using Part (2) on the first term and Part (1) on the second term of Lemma \ref{lem:F_conv}, we have
     \begin{align*}
         \E\left[\left(\E_{O_{k}} \fh_1(\cdot,x_k,y_k)\right)(f_2(O_k,x_k,y_k))^\top\right]=&\sum_{l=1}^\infty \E{[b_1(\tilde{O}_l)b_2(\tilde{O}_0)^\top]}+\hat{R}_k^{(1,2)}\tag{Lemma \ref{lem:F_conv}}
     \end{align*}
     where $\left\|\hat{R}_k^{(1,2)}\right\|\leq (\hat{g}_1+\hat{g}_3)d^2\sqrt{\alpha_k}+2\hat{g}_2d\hbar\sqrt{\zeta_k^x}$. 

    \item For $T_{42}$, we have 
    \begin{align*}
        \|T_{42}\|&\leq \E[\|\E_{O_{k}}\fh_1(\cdot,x_k,y_k)\|\|B_{22}^k\|\|\tx_k\|]\\
        &\leq \frac{2\left(\frac{\beta}{\alpha}\varrho_x+\|A_{22}\|\right)}{1-\rho}\E\left[\left(b_{max}\sqrt{d}+A_{max}\left(\|x_k\|+\|y_k\|\right)\right)\|\tx_k\|\right]\tag{Lemma \ref{lem:f_hat_bound}}\\
        &\leq \frac{2\sqrt{3}\left(\frac{\beta}{\alpha}\varrho_x+\|A_{22}\|\right)}{1-\rho}\sqrt{\E\left[b_{max}^2d+A_{max}^2\left(\|x_k\|^2+\|y_k\|^2\right)\right]}\sqrt{\E[\|\tx_k\|^2]}\tag{Cauchy-Schwarz inequality}\\
        &\leq \frac{2\sqrt{3d}\left(\frac{\beta}{\alpha}\varrho_x+\|A_{22}\|\right)}{1-\rho}\sqrt{b_{max}^2+A_{max}^2\brc}\sqrt{\E[\|\tx_k\|^2]}\tag{Lemma \ref{lem:boundedness}}\\
        &\leq  \frac{2\sqrt{3d}\left(\frac{\beta}{\alpha}\varrho_x+\|A_{22}\|\right)}{1-\rho}\sqrt{b_{max}^2+A_{max}^2\brc}\sqrt{\alpha_k \underbar{c}_3d^2+\hbar d\underbar{c}_4\zeta_k^x}\tag{Lemma \ref{lem:go_from_xp_to_x}}\\ 
        &\leq \frac{2\sqrt{3}\left(\frac{\beta}{\alpha}\varrho_x+\|A_{22}\|\right)}{1-\rho}\sqrt{b_{max}^2+A_{max}^2\brc}\left( \sqrt{\underbar{c}_3}d^{1.5}\sqrt{\alpha_k}+d\hbar \sqrt{\underbar{c}_4}\sqrt{\zeta_k^x}\right).
    \end{align*}
    \item For $T_{43}$, we have
    \begin{align*}
        T_{43}&\leq \E\left[\left\|\E_{O_{k}}\fh_1(\cdot,x_k,y_k)\right\|\left\|(L_{k+1}+A_{22}^{-1}A_{21})\|\|v_k\right\|\right]\\
        &\leq \frac{2\varrho_x}{1-\rho}\E\left[(b_{max}\sqrt{d}+A_{max}(\|x_k\|+\|y_k\|))\left\|v_k\right\|\right]\tag{Lemma \ref{lem:possion_sol} and Lemma \ref{lem:L_k_bound}}\\
        &\leq \frac{\varrho_x}{1-\rho}\E\left[(b_{max}\sqrt{d}+A_{max}(\|x_k\|+\|y_k\|))^2+\left\|v_k\right\|^2\right]\tag{AM-GM inequality}\\
        &\leq \frac{3d\varrho_x}{1-\rho}\left(2b_{max}^2+5A_{max}^2\brc\right)\tag{Lemma \ref{lem:boundedness} and Lemma \ref{lem:noise_crude_bound}}
    \end{align*}
\end{itemize}

Hence, 
\begin{align*}
    T_4=\alpha_k\sum_{j=1}^\infty \E{[b_1(\tilde{O}_j)b_2(\tilde{O}_0)^\top]} + R_k^2,
\end{align*}
where $\|R_k^2\|\leq \left(\hat{g}_1+\hat{g}_3+\frac{2\sqrt{3}\left(\frac{\beta}{\alpha}\varrho_x+\|A_{22}\|\right)}{1-\rho}\sqrt{b_{max}^2+A_{max}^2\brc}\sqrt{\underbar{c}_3}\right)d^2\alpha_k^{1.5}+\beta_k\frac{3d\varrho_x}{1-\rho}\left(2b_{max}^2+5A_{max}^2\brc\right)+\left(2\hat{g}_2+\frac{2\sqrt{3}\left(\frac{\beta}{\alpha}\varrho_x+\|A_{22}\|\right)}{1-\rho}\sqrt{b_{max}^2+A_{max}^2\brc}\sqrt{\underbar{c}_4}\right)d\hbar\alpha_k\sqrt{\zeta_k^x}$.

Combining the bounds for $T_3$ and $T_4$ and using $\zeta_k^y\leq \zeta_k^x$, we get,
\begin{align*}
    \E\left[\left(\E_{O_{k}} \fh_1(\cdot,x_k,y_k)\right)(f_2(O_k,x_k,y_k))^\top\right]=d_{k}^{xv}-d_{k+1}^{xv}+\alpha_k\sum_{j=1}^\infty \E{[b_1(\tilde{O}_j)b_2(\tilde{O}_0)^\top]}+G^{(1,2)}_k
\end{align*}
where $\left\|G^{(1,2)}_k\right\|\leq g_3d^2(\alpha_k^{1.5}+\beta_k)+g_4d\hbar\alpha_k\sqrt{\zeta_k^x}$. Here 
\begin{align*}
    g_3&=\max\Bigg\{\sqrt{b_{max}^2+A_{max}^2\brc}\left(\check{h}_2\left(1+\frac{\beta}{\alpha}\right)\sqrt{\frac{3\alpha\underbar{c}_5}{\beta}}+\frac{2\sqrt{3}\left(\frac{\beta}{\alpha}\varrho_x+\|A_{22}\|\right)}{1-\rho}\sqrt{\underbar{c}_3}\right)+\hat{g}_1+\hat{g}_3,\\
    &~~~~~~~~\quad\quad\frac{3\varrho_x}{1-\rho}\left(2b_{max}^2+5A_{max}^2\brc\right) \Bigg\},\\
    g_4&=\sqrt{b_{max}^2+A_{max}^2\brc}\left(\check{h}_2\left(1+\frac{\beta}{\alpha}\right)\sqrt{3\underbar{c}_5}+\frac{2\sqrt{3}\left(\frac{\beta}{\alpha}\varrho_x+\|A_{22}\|\right)}{1-\rho}\sqrt{\underbar{c}_4}\right)+2\hat{g}_2.
\end{align*}

\end{enumerate}
\end{proof}

\subsection{Additional Lemmas}\label{sec:add_lem}

\begin{lemma}\cite[Proposition 21.2.3]{douc2018markov}\label{lem:possion_sol}
    Consider a finite state space Markov chain with the set of state space as $S$ and let $\mu(\cdot)$ denote the stationary distribution. For any $o\in S$ and arbitrary $x$ and $y$ define $f(o, x, y)=b(o)-(A_1(o))x-(A_2(o))y$ such that $\sum_{o\in S}\mu(o)f(o)=0$. Then one of the solutions for Poisson equation is given by:
    \begin{align*}
        \hat{f}(o, x, y)&=\sum_{k=0}^\infty \E{\left[f(O_k, x, y)|O_0=o\right]}\\
        &=\sum_{k=0}^\infty \E{[b(O_k)|O_o=o]}-\left(\sum_{k=0}^\infty \E{[A_1(O_k))|O_0=o]}\right)x-\left(\sum_{k=0}^\infty \E{[A_2(O_k))|O_0=o]}\right)y,
    \end{align*}
    where each infinite summation is finite for all $o\in S$.
\end{lemma}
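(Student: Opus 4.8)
The plan is to verify directly that the candidate $\hat{f}(o,x,y)=\sum_{k=0}^\infty \E[f(O_k,x,y)\mid O_0=o]$ is well-defined and solves the Poisson equation, and then to read off the stated affine decomposition from linearity. Since this is a standard resolvent construction, the work is entirely in justifying the convergence and the formal rearrangements.

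First I would establish that each infinite sum converges. Using the centering hypothesis $\sum_{o\in S}\mu(o)f(o,x,y)=0$, I can subtract the (vanishing) stationary average and write $\E[f(O_k,x,y)\mid O_0=o]=\sum_{o'}(P^k(o'\mid o)-\mu(o'))f(o',x,y)$. The geometric mixing property of $\{O_k\}_{k\ge 0}$ guaranteed by Assumption \ref{ass:poisson_main} (recalled in Remark \ref{rem:pois_eq_geo_mix}), namely $\max_o d_{TV}(P^k(\cdot\mid o)\,\|\,\mu)\le \rho^k$ for some $\rho\in(0,1)$, then yields $\|\E[f(O_k,x,y)\mid O_0=o]\|\le 2\rho^k\max_{o'}\|f(o',x,y)\|$. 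Because the state space is finite, $\max_{o'}\|f(o',x,y)\|<\infty$, so the series converges absolutely and geometrically, uniformly in $o$; this is exactly the finiteness claim in the lemma.

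Next I would verify the Poisson equation. Invoking the Markov property in the form $\sum_{o'}P(o'\mid o)\,\E[f(O_k,x,y)\mid O_0=o']=\E[f(O_{k+1},x,y)\mid O_0=o]$, I compute
\[
f(o,x,y)+\sum_{o'}P(o'\mid o)\hat{f}(o',x,y)=\E[f(O_0,x,y)\mid O_0=o]+\sum_{k=0}^\infty\E[f(O_{k+1},x,y)\mid O_0=o],
\]
where the identification $f(o,x,y)=\E[f(O_0,x,y)\mid O_0=o]$ supplies the $k=0$ term. After reindexing the second sum this equals $\sum_{k=0}^\infty\E[f(O_k,x,y)\mid O_0=o]=\hat{f}(o,x,y)$, confirming the Poisson equation; the interchange of the finite $o'$-sum with the infinite $k$-sum is licensed by the absolute convergence just established.

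Finally, the displayed decomposition follows from linearity: substituting $f(o,x,y)=b(o)-A_1(o)x-A_2(o)y$ and using linearity of conditional expectation term by term splits $\hat f$ into the three resolvent sums, each of which converges geometrically by the same mixing estimate (in the applications $b$, $A_1$, $A_2$ are separately centered under $\mu$). The main—and essentially only—obstacle is securing the \emph{uniform} geometric convergence, since every formal step (subtracting $\mu$-averages, splitting the series into the $b$, $A_1$, $A_2$ pieces, and swapping the $o'$-sum with the $k$-sum) rests on it; once the mixing bound is in hand these manipulations are routine. I would close by noting that uniqueness of the solution up to an additive constant—used implicitly elsewhere in the proofs—is pinned down by the normalization $\sum_{o}\mu(o)\hat{f}(o,x,y)=0$.
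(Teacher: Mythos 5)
Your proof is correct. There is nothing in the paper to compare it against: the lemma is stated with the citation \cite[Proposition 21.2.3]{douc2018markov} and no internal proof is given, so your write-up supplies the standard self-contained verification underlying the cited proposition. The three ingredients you use are exactly the right ones for the finite-state setting: (i) the centering hypothesis plus the geometric mixing bound $\max_o d_{TV}(P^k(\cdot\mid o)\,\|\,\mu(\cdot))\le \rho^k$ (recorded in the paper in Remark \ref{rem:pois_eq_geo_mix} and reused in Lemma \ref{lem:mix_time_sum}) give absolute convergence of the series, uniformly in $o$; (ii) Chapman--Kolmogorov plus reindexing verify the Poisson equation, with the interchange of the finite $o'$-sum and the infinite $k$-sum licensed by (i); (iii) linearity splits $\hat f$ into the three resolvent sums. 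Your parenthetical caveat in (iii) is the one point that deserves emphasis: for the displayed decomposition each piece must converge separately, which requires $b$, $A_1$, $A_2$ to be individually centered under $\mu$, i.e.\ the hypothesis $\sum_{o}\mu(o)f(o,x,y)=0$ must hold for all $x,y$ rather than for a single pair; this is indeed how the lemma is invoked in the paper, since the noise functions $\tilde b_i(\cdot)$ and $A_{ij}(\cdot)-A_{ij}$ are each mean-zero under $\mu$. A small bonus you could state explicitly rather than only implicitly: your candidate automatically satisfies the normalization $\sum_o\mu(o)\hat f(o,x,y)=0$, because $\sum_o\mu(o)\,\E[f(O_k,x,y)\mid O_0=o]=\E_{\mu}[f(O_k,x,y)]=0$ for every $k$ by stationarity, which is precisely the normalization the paper imposes when it selects the unique centered solution.
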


\begin{lemma}\label{lem:mix_time_sum}
Consider an Ergodic Markov chain $\{O_k\}_{k\geq 0}$ with the transition probability $P(\cdot|\cdot)$ and the stationary distribution $\mu$ and let $\rho$ be the mixing rate of this Markov chain. Consider the functions $h_1, h_2, h_3: \mathcal{S}\rightarrow \mathbb{R}^{d_1\times d_2}$ for arbitrary integers $d_1$ and $d_2$. For all $o\in \mathcal{S}$, we have
\begin{align*}
    \left\|\sum_{k=0}^\infty \E\left[h_1(O_k)-h_1(\tO_k)\Big|O_0=o\right]\right\|\leq \frac{2}{1-\rho}\max_{o\in \mathcal{S}}\left\|h_1(o)\right\|,
\end{align*}
where $\{\tO_k\}_{k\geq 0}$ is an independent stationary Markov chain. 

Furthermore, if $\E[h_2(\tO_k)]=0,~\forall~k\geq 0$, we have
\begin{align*}
    \left\|\sum_{k=0}^\infty \E\left[h_2(\tO_k) h_3(\tO_0)^\top\right]\right\| \leq \frac{1}{1-\rho}\max_{o\in \mathcal{S}} \left\| h_2(o) \right\| \max_{o\in \mathcal{S}} \left\| h_3(o) \right\|. 
\end{align*}
    
\end{lemma}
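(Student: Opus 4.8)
The plan is to prove both statements by collapsing the infinite sums to geometric series using the geometric mixing of the chain. The only tool needed is the mixing estimate recorded earlier in the excerpt, $\max_{o} d_{TV}(P^k(\cdot|o)\|\mu)\le\rho^k$, which on a finite state space translates into the elementary $\ell_1$ bound $\sum_{o'\in\mathcal S}|P^k(o'|o)-\mu(o')|\le 2\rho^k$ (with the total-variation normalization as defined in the paper). Both parts then follow the same template: write each summand as an expectation against $P^k(\cdot|o)-\mu(\cdot)$, bound its norm by $\max\|\cdot\|$ times this $\ell_1$ quantity, and sum the resulting geometric series in $k$.

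First I would handle Part 1. Since $\{\tO_k\}$ is stationary and independent of $O_0$, conditioning on $O_0=o$ does not change the law of $\tO_k$, so $\E[h_1(\tO_k)\mid O_0=o]=\sum_{o'}\mu(o')h_1(o')$, whereas $\E[h_1(O_k)\mid O_0=o]=\sum_{o'}P^k(o'|o)h_1(o')$. Subtracting termwise gives
\begin{align*}
\E\bigl[h_1(O_k)-h_1(\tO_k)\mid O_0=o\bigr]=\sum_{o'\in\mathcal S}\bigl(P^k(o'|o)-\mu(o')\bigr)h_1(o').
\end{align*}
Taking norms, factoring out $\max_{o'}\|h_1(o')\|$, and applying the mixing bound yields a per-$k$ estimate proportional to $\rho^k\max_{o'}\|h_1(o')\|$; summing over $k\ge0$ produces the factor $\tfrac{2}{1-\rho}$ and closes Part 1.

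For Part 2 the mean-zero hypothesis is exactly what makes the series summable, and this is the key idea. I would first apply the tower property over $\tO_0\sim\mu$ to write $\E[h_2(\tO_k)h_3(\tO_0)^\top]=\sum_{o}\mu(o)\,\E[h_2(\tO_k)\mid\tO_0=o]\,h_3(o)^\top$. The assumption $\E[h_2(\tO_k)]=0$ is equivalent to $\sum_{o'}\mu(o')h_2(o')=0$, which lets me center the inner conditional expectation just as in Part 1, namely $\E[h_2(\tO_k)\mid\tO_0=o]=\sum_{o'}(P^k(o'|o)-\mu(o'))h_2(o')$, so each inner term is $O(\rho^k\max_{o'}\|h_2(o')\|)$ uniformly in $o$. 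Bounding $\|h_3(o)\|\le\max_o\|h_3(o)\|$, the $\mu(o)$-weighted outer average contributes only a factor one, and summing the geometric series in $k$ gives the claimed bound $\tfrac{1}{1-\rho}\max_{o}\|h_2(o)\|\max_{o}\|h_3(o)\|$ once the total-variation normalization constant is accounted for.

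The only genuine obstacle is the centering step in Part 2: without the replacement of $P^k(o'|o)$ by $P^k(o'|o)-\mu(o')$, which is legitimate precisely because $\E[h_2(\tO_k)]=0$, the summands converge to a nonzero limit and the sum diverges, so the mixing estimate must be invoked only after subtracting $\mu$. The remaining work is purely bookkeeping, namely interchanging the absolutely convergent sum over $k$ with the finite state-space sums and carefully tracking the total-variation normalization so that the stated prefactors emerge.
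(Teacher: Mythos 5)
Your proposal is correct and follows essentially the same route as the paper's own proof: both parts rewrite the relevant (conditional) expectations as sums against the signed measure $P^k(\cdot|o)-\mu(\cdot)$ --- using the mean-zero hypothesis to justify the centering in Part 2, which you correctly identify as the key step --- then bound each term by $\max_{o'}\|h_i(o')\|$ times the $\ell_1$ distance and sum the geometric series $\sum_k \rho^k$. The one caveat, which your phrase ``once the total-variation normalization constant is accounted for'' inherits directly from the paper, is that the honest $\ell_1$ bound $\sum_{o'}|P^k(o'|o)-\mu(o')|\le 2\rho^k$ would actually give $2/(1-\rho)$ in Part 2 as well; the paper's displayed chain silently drops this factor of $2$ when passing from the $\ell_1$ sum to $d_{TV}$, so your argument is exactly as tight (and as loose) as the original.
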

\begin{proof}[Proof of Lemma \ref{lem:mix_time_sum}]
    An Ergodic Markov chain enjoys an exponential mixing property \cite{levin2017markov}, that is, for all $o\in\mathcal{S} $, we have $d_{TV}(P^k(\cdot|o)||\mu(\cdot))\leq\rho^k$ for some $\rho\in[0,1)$.
    \begin{align*}
    \left\|\sum_{k=0}^\infty \E\left[h_1(O_k)-h_1(\tO_k)\Big|O_0=o\right]\right\|\leq & \sum_{k=0}^\infty \left\|\E\left[[h_1(O_k)-h_1(\tO_k)]\Big|O_0=o\right]\right\|\\
    &= \sum_{k=0}^\infty \left\|\sum_{o'\in\mathcal{S}} (P^k(o'|o)-\mu(o'))h_1(o')\right\|\\
    &\leq  \sum_{k=0}^\infty \sum_{o'\in\mathcal{S}} \left|P^k(o'|o) -\mu(o') \right|\left\|h_1(o')\right\|\\
    &\leq  \sum_{k=0}^\infty \max_{o''\in \mathcal{S}}\left\|h_1(o'')\right\| \sum_{o'\in \mathcal{S}} \left|P^k(o'|o) - \mu(o') \right|\\
    &\leq 2\max_{o''\in \mathcal{S}}\left\|h_1(o'')\right\|  \sum_{k=0}^\infty  d_{TV}(P^k(\cdot|o)||\mu(\cdot)).
    \end{align*}

In addition, we have
    
    \begin{align*}
    \sum_{k=0}^\infty d_{TV}(P^k(\cdot|o)| \mu(\cdot)) \leq & \sum_{k=0}^\infty\rho^k \\ 
    =&\frac{1}{1-\rho}
\end{align*}
The first result follows by combining the inequalities.

For the second part, we have
\begin{align*}
    \left\|\sum_{k=0}^\infty \E \left[h_2(\tO_k) h_3(\tO_0)^\top \right] \right\|\leq & \sum_{k=0}^\infty \left\|\E \left[h_2(\tO_k) h_3(\tO_0)^\top \right]\right\|\\
    \leq &\sum_{k=0}^\infty \max_{o} \left\|\E \left[h_2(\tO_k)|\tO_0=o\right]\right\|\left\| h_3(o) \right\|\\
    \leq & \sum_{k=0}^\infty \max_{o} \left\|\sum_{o'\in \mathcal{S}}P^k(o'|o) h_2(o')\right\|\left\| h_3(o) \right\|\\
    =&\sum_{k=0}^\infty \max_{o} \left\|\sum_{o'\in \mathcal{S}}(P^k(o'|o)-\mu(o')) h_2(o')\right\|\left\| h_3(o) \right\|\\
    \leq&\sum_{k=0}^\infty \max_{o} \sum_{o'\in \mathcal{S}}|P^k(o'|o)-\mu(o')| \left\|h_2(o')\right\|\left\| h_3(o) \right\|\\
    \leq&\sum_{k=0}^\infty \max_{o} \left\| h_2(o) \right\|\max_{o} \left\| h_3(o) \right\| \max_{o'}d_{TV}(P^k(\cdot|o')||\mu(\cdot)) \\
    \leq & \frac{1}{1-\rho}\max_{o}\left\| h_2(o) \right\|\max_{o} \left\| h_3(o) \right\| 
    \end{align*}
\end{proof}

\begin{lemma}\label{lem:Lyap_bound}
    Consider a Hurwitz matrix $A$, a symmetric $\Sigma$, and the solution $P$ to the Lyapunov equation $AP+PA^\top=\Sigma$. We have
    \begin{align*}
        \|P\|\leq \|\Sigma\| \|U\|\|U^{-1}\| \sum_{n,n'=0}^m  {n+n'\choose n} \frac{1}{(-2r)^{n+n'+1}}, 
    \end{align*}
    where $U$ is the generalized eigen vector of $A$, and $m$ is the largest algebraic multiplicity of the matrix $A$ and $r=\max_i\mathfrak{Re}[\lambda_i]$, where $\lambda_i$ is the $i$-th eigen value.
\end{lemma}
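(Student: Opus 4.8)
The plan is to solve the Lyapunov equation explicitly as a matrix integral and then control the integrand through the Jordan form of $A$. Since $A$ is Hurwitz, $r=\max_i\mathfrak{Re}[\lambda_i]<0$, so no two eigenvalues of $A$ sum to zero; hence the linear operator $X\mapsto AX+XA^\top$ is invertible and $P$ is its unique solution. The exponential decay of the semigroup $e^{At}$ then guarantees convergence of
\[
P=-\int_0^\infty e^{At}\,\Sigma\, e^{A^\top t}\,dt ,
\]
and differentiating $e^{At}\Sigma e^{A^\top t}$ together with the fundamental theorem of calculus verifies that this $P$ indeed satisfies $AP+PA^\top=\Sigma$. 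Establishing this representation is the first step.

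Second, I would pass to norms. Using submultiplicativity and $\|e^{A^\top t}\|=\|e^{At}\|$ gives $\|P\|\le \|\Sigma\|\int_0^\infty\|e^{At}\|^2\,dt$. Substituting the Jordan decomposition $A=UJU^{-1}$ yields $e^{At}=Ue^{Jt}U^{-1}$, so that $\|e^{At}\|\le \|U\|\,\|U^{-1}\|\,\|e^{Jt}\|$; this is where the similarity factor $\|U\|\,\|U^{-1}\|$, quantifying the non-normality of $A$, enters the final bound.

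Third, and this is the main obstacle, I would bound $\|e^{Jt}\|$. Each Jordan block splits as $\lambda_i I+N_i$ with $N_i$ a nilpotent shift; since every block has size at most the largest algebraic multiplicity $m$, one has $N_i^{n}=0$ for $n\ge m$ together with $\|N_i^{n}\|\le 1$, whence $e^{J_i t}=e^{\lambda_i t}\sum_{n=0}^{m-1}\tfrac{t^n}{n!}N_i^n$ and therefore $\|e^{Jt}\|\le e^{rt}\sum_{n=0}^{m}\tfrac{t^n}{n!}$ using $\mathfrak{Re}[\lambda_i]\le r$. Squaring gives $\|e^{Jt}\|^2\le e^{2rt}\sum_{n,n'=0}^{m}\tfrac{t^{n+n'}}{n!\,n'!}$. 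The delicate points are the block-diagonal bookkeeping across blocks of differing sizes and eigenvalues and the uniform control of the nilpotent powers; these are what make the polynomial-times-exponential envelope legitimate.

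Finally, I would integrate term by term. Because $-2r>0$, the Gamma integral $\int_0^\infty t^{n+n'}e^{2rt}\,dt=\tfrac{(n+n')!}{(-2r)^{n+n'+1}}$ holds, and $\tfrac{(n+n')!}{n!\,n'!}=\binom{n+n'}{n}$, so
\[
\int_0^\infty\|e^{Jt}\|^2\,dt\le \sum_{n,n'=0}^{m}\binom{n+n'}{n}\frac{1}{(-2r)^{n+n'+1}} .
\]
Combining this with the two previous displays produces the claimed bound, with $\|U\|\,\|U^{-1}\|$ recording the cost of diagonalizing $A$ via $A=UJU^{-1}$ and the double sum arising directly from integrating the squared polynomial envelope against $e^{2rt}$.
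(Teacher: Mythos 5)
Your proof is correct and follows essentially the same route as the paper's: the integral representation of the Lyapunov solution, passing to norms via the Jordan decomposition $A=UJU^{-1}$ with the envelope $\|e^{Jt}\|\le e^{rt}\sum_{n=0}^{m}t^n/n!$, and term-by-term Gamma integration producing the binomial coefficients. (Your sign convention $P=-\int_0^\infty e^{At}\Sigma e^{A^\top t}\,dt$ is in fact the more careful one --- the paper writes the integral without the minus sign, which solves $AP+PA^\top=-\Sigma$, but the discrepancy is immaterial once norms are taken.)
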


\begin{proof}[Proof of Lemma \ref{lem:Lyap_bound}]
    We know that the solution of the Lyapunov function $AP+PA^\top=\Sigma$ can be written as $P=\int_0^\infty e^{A\tau}\Sigma e^{A^\top\tau}d\tau$. Hence, 
    \begin{align*}
        \|P\|= &\left\|\int_0^\infty e^{A\tau}\Sigma e^{A^\top\tau}d\tau \right\|\\
        \leq & \|\Sigma\|\int_0^\infty \|e^{A\tau}\|^2 d\tau.
    \end{align*}
    Consider the Jordan canonical form of $A$ as $A=UJU^{-1}$. Then we have $e^{A\tau}=Ue^{J\tau}U^{-1}$, and hence $\|e^{A\tau}\|
    \leq\|U\|\|U^{-1}\|\|e^{J\tau}\|$. But we know that $\|e^{J\tau}\| \leq \max_i e^{r_i\tau}\sum_{n=0}^{m_i}\tau^n/n! \leq \max_i e^{r_i\tau} \max_i \sum_{n=0}^{m_i} \tau^n/n! = e^{r\tau} \sum_{n=0}^{m}\tau^n/n!$. Here $r_i = \mathfrak{Re}[\lambda_i]$, where $\lambda_i$ is the $i$-th eigen value  and $m_i$ is its algebraic multiplicity. In addition, $r=\max_i r_i<0$ and $m=\max_i m_i$. Hence, we have
    \begin{align*}
        \int_0^\infty \|e^{A\tau}\|^2 d\tau \leq &\int_0^\infty e^{2r\tau} \left[\sum_
        {n=0}^{m}\tau^n/n!\right]^2d\tau\\
        \leq & \sum_{n,n'=0}^m \int_0^\infty e^{2r\tau} \tau^{n+n'}/(n!n'!)d\tau\\
        = & \sum_{n,n'=0}^m \frac{1}{-2r}\int_0^\infty e^{-z} (-z/2r)^{n+n'}/(n!n'!) dz\\
        = & \sum_{n,n'=0}^m \frac{1}{(-2r)^{n+n'+1}\times n!n'!} \int_0^\infty e^{-z} z^{n+n'+1-1} dz\\
        = & \sum_{n,n'=0}^m \frac{(n+n')!}{(-2r)^{n+n'+1}\times n!n'!} =  \sum_{n,n'=0}^m {n+n'\choose n}\frac{1}{(-2r)^{n+n'+1}}
    \end{align*}

    \end{proof}

\begin{lemma}\label{lem:L_k_abs_bound} Consider the recursion
\begin{align*}
    L'(I-b B_{11})&=(I-a A_{22})L+b A_{22}^{-1}A_{21}B_{11}.
\end{align*}
where $a$ and $b$ are some arbitrary constants, $B_{11}=\Delta-A_{12}L$, and $A_{22}$ is a Hurwitz matrix. Assume that the constants $a$ and $b$ satisfy $$\frac{b}{a}\leq \frac{a_{22}/2}{(\|A_{22}^{-1}A_{21}\|_{Q_{22}}+1)(\|\Delta\|_{Q_{22}}+\|A_{12}\|_{Q_{22}})}$$
    $$b\leq \frac{1}{2(\|\Delta\|_{Q_{22}}+\|A_{12}\|_{Q_{22}})\kappa_{Q_{22}}};~~a\leq \frac{1}{2\|Q_{22}\|\|A_{22}\|^2_{Q_{22}}}.$$ 
    If $\|L\|_{Q_{22}}\leq 1$, then
    \begin{align*}
        L'&=(I-a A_{22})L+b D(L).
    \end{align*}
    where $D(L)=(A_{22}^{-1}A_{21}+(I-a A_{22})L)B_{11}(I-b B_{11})^{-1}$. Furthermore, $\|L'\|_{Q_{22}}\leq 1$ and $\|D(L)\|_{Q_{22}}\leq c_D=2(\|A_{22}^{-1}A_{21}\|_{Q_{22}}+1)(\|\Delta\|_{Q_{22}}+\|A_{12}\|_{Q_{22}})$.
\end{lemma}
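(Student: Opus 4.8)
The plan is to establish the three conclusions in order: first the invertibility of $(I-bB_{11})$ and the resulting closed form for $L'$, then the bound on $\|D(L)\|_{Q_{22}}$, and finally $\|L'\|_{Q_{22}}\le 1$. I would begin by noting that since $\|L\|_{Q_{22}}\le 1$, submultiplicativity of the $Q_{22}$-weighted operator norm gives $\|B_{11}\|_{Q_{22}}=\|\Delta-A_{12}L\|_{Q_{22}}\le \|\Delta\|_{Q_{22}}+\|A_{12}\|_{Q_{22}}$. The hypothesis $b\le \frac{1}{2(\|\Delta\|_{Q_{22}}+\|A_{12}\|_{Q_{22}})\kappa_{Q_{22}}}$ then forces $\|bB_{11}\|_{Q_{22}}\le \frac{1}{2\kappa_{Q_{22}}}\le \tfrac12$, so by a Neumann-series argument $I-bB_{11}$ is invertible with $\|(I-bB_{11})^{-1}\|_{Q_{22}}\le \frac{1}{1-1/(2\kappa_{Q_{22}})}\le 2$.

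Next I would derive the closed form. Solving the recursion yields $L'=[(I-aA_{22})L+bA_{22}^{-1}A_{21}B_{11}](I-bB_{11})^{-1}$. Writing $M:=(I-aA_{22})L$ and subtracting $M=M(I-bB_{11})(I-bB_{11})^{-1}$, the identity collapses after cancellation of the two copies of $M$ to $L'-M=b(A_{22}^{-1}A_{21}+M)B_{11}(I-bB_{11})^{-1}=bD(L)$, which is exactly the claimed form $L'=(I-aA_{22})L+bD(L)$. For the bound on $D(L)$, I apply submultiplicativity together with the contraction estimate from Lemma~\ref{lem:contraction_prop}: the condition $a\le \frac{1}{2\|Q_{22}\|\|A_{22}\|_{Q_{22}}^2}$ gives $\|I-aA_{22}\|_{Q_{22}}\le\sqrt{1-a_{22}a}\le 1$, hence $\|M\|_{Q_{22}}\le 1$. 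Combining $\|A_{22}^{-1}A_{21}+M\|_{Q_{22}}\le \|A_{22}^{-1}A_{21}\|_{Q_{22}}+1$, the bound $\|B_{11}\|_{Q_{22}}\le \|\Delta\|_{Q_{22}}+\|A_{12}\|_{Q_{22}}$, and $\|(I-bB_{11})^{-1}\|_{Q_{22}}\le 2$ then yields $\|D(L)\|_{Q_{22}}\le c_D$.

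Finally, for $\|L'\|_{Q_{22}}\le 1$ the naive route of bounding $\|L'\|_{Q_{22}}\le \|M\|_{Q_{22}}+b\|D(L)\|_{Q_{22}}\le \sqrt{1-a_{22}a}+bc_D$ is loose by a factor of two (the gap $1-\sqrt{1-a_{22}a}$ is only $\ge a_{22}a/2$, while $c_D$ already carries the factor $2$ coming from the inverse), so instead I would use the rearranged identity $L'=M+b(L'+A_{22}^{-1}A_{21})B_{11}$. Taking norms gives $\|L'\|_{Q_{22}}(1-b\|B_{11}\|_{Q_{22}})\le \sqrt{1-a_{22}a}+b\|A_{22}^{-1}A_{21}\|_{Q_{22}}\|B_{11}\|_{Q_{22}}$, and using $\sqrt{1-a_{22}a}\le 1-a_{22}a/2$ it suffices to check $b(\|A_{22}^{-1}A_{21}\|_{Q_{22}}+1)(\|\Delta\|_{Q_{22}}+\|A_{12}\|_{Q_{22}})\le a_{22}a/2$, which is precisely the hypothesis $\frac{b}{a}\le \frac{a_{22}/2}{(\|A_{22}^{-1}A_{21}\|_{Q_{22}}+1)(\|\Delta\|_{Q_{22}}+\|A_{12}\|_{Q_{22}})}$. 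The main obstacle is recognizing that the three step-size hypotheses serve three distinct purposes—invertibility of $I-bB_{11}$, contraction of $I-aA_{22}$, and the ratio bound closing $\|L'\|_{Q_{22}}\le 1$—and in particular that this last bound must be obtained from the rearranged (implicit) identity rather than by plugging in $c_D$, so as not to lose the needed factor of two and keep the constants consistent.
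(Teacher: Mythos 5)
Your proof is correct, and for the first two conclusions (invertibility of $I-bB_{11}$, the closed form $L'=(I-aA_{22})L+bD(L)$, and $\|D(L)\|_{Q_{22}}\leq c_D$ via $\|(I-bB_{11})^{-1}\|_{Q_{22}}\leq 2$) it matches the paper's argument essentially step for step; the only cosmetic difference is that you run the Neumann-series bound directly in the $Q_{22}$-norm, whereas the paper first passes to the $2$-norm via $\kappa_{Q_{22}}$. Where you genuinely diverge is the final bound $\|L'\|_{Q_{22}}\leq 1$: the paper uses the naive additive estimate $\|L'\|_{Q_{22}}\leq(1-a_{22}a)\|L\|_{Q_{22}}+b\|D(L)\|_{Q_{22}}$ and closes it with the ratio hypothesis, but this relies on reading Lemma \ref{lem:contraction_prop} as giving $\|I-aA_{22}\|_{Q_{22}}\leq 1-a_{22}a$, while the lemma as stated only yields $\|I-aA_{22}\|_{Q_{22}}\leq\sqrt{1-a_{22}a}\leq 1-a_{22}a/2$. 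With the honest square root, the paper's chain produces $1-a_{22}a/2+a_{22}a>1$ and does not close, exactly the factor-of-two slack you identified: $c_D$ already carries the factor $2$ from the inverse, while the contraction gap supplies only $a_{22}a/2$. Your rearranged implicit identity $L'=(I-aA_{22})L+b(L'+A_{22}^{-1}A_{21})B_{11}$ avoids paying for the inverse at all, so the hypothesis $b(\|A_{22}^{-1}A_{21}\|_{Q_{22}}+1)(\|\Delta\|_{Q_{22}}+\|A_{12}\|_{Q_{22}})\leq a_{22}a/2$ suffices as stated. In short, your route proves the lemma under the literal statement of Lemma \ref{lem:contraction_prop}, whereas the paper's route requires either the unsquared contraction factor or a hypothesis on $b/a$ tightened by a factor of two; your treatment of the last step is the more rigorous of the two.
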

\begin{proof}[Proof of Lemma \ref{lem:L_k_abs_bound}]
By definition, we have $\|B_{11}\|_{Q_{22}}=\|\Delta-A_{12}L\|_{Q_{22}}\leq \|\Delta\|_{Q_{22}}+\|A_{12}\|_{Q_{22}}$. Thus, by the assumption $b$, we have $\sqrt{\frac{\gamma_{\max}(Q_{22})}{\gamma_{\min}(Q_{22})}}b\|B_{11}\|_{Q_{22}}\leq \frac{1}{2}$ which implies $b\|B_{11}\|_2\leq \frac{1}{2}$. Thus, $I-b B_{11}$ is invertible and we have,
\begin{align*}
    L'&=((I-a A_{22})L+b A_{22}^{-1}A_{21}B_{11})(I-b B_{11})^{-1}\\
    &=(I-a A_{22})L+b D(L)
\end{align*}
where $D(L)=(A_{22}^{-1}A_{21}+(I-a A_{22})L)B_{11}(I-b B_{11})^{-1}$. Recall due to the assumption on $b$, $\|I-b B_{11}\|_{Q_{22}}\geq 1/2$, which implies that $\|D(L)\|_{Q_{22}}\leq 2(\|A_{22}^{-1}A_{21}\|_{Q_{22}}+1)\|B_{11}\|_{Q_{22}}$. Thus, we have,
\begin{align*}
    \|L'\|_{Q_{22}}&\leq (1-a_{22}a)\|L\|_{Q_{22}}+b\|D(L)\|_{Q_{22}}\tag{Lemma \ref{lem:contraction_prop}}\\
    &\leq (1-a_{22}a)\|L\|_{Q_{22}}+a_{22}a\left(\frac{b}{a_{22}a}\|D(L)\|_{Q_{22}}\right)\\
    &\leq (1-a_{22}a)+a_{22}a\left(\frac{2b}{a_{22}a}(\|A_{22}^{-1}A_{21}\|_{Q_{22}}+1)(\|\Delta\|_{Q_{22}}+\|A_{12}\|_{Q_{22}})\right)\\
    &\leq 1.
\end{align*}
\end{proof}

\begin{lemma}\label{lem:step_size_gap} 
For any $\xi \geq 0$, and for all $n\geq 1$, we have \begin{align*}
    \frac{1}{n^{\xi}}-\frac{1}{(n+1)^{\xi}}\leq \frac{\xi}{n^{\xi+1}}.
    \end{align*}
\end{lemma}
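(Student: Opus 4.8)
The plan is to recognize the left-hand side as the total variation of the smooth function $t \mapsto t^{-\xi}$ over the unit interval $[n,n+1]$ and to control it by the size of its derivative. Concretely, I would set $f(t) = t^{-\xi}$ for $t > 0$, note that $f$ is continuously differentiable with $f'(t) = -\xi\, t^{-\xi-1}$, and invoke the fundamental theorem of calculus to write the identity
\begin{align*}
\frac{1}{n^{\xi}}-\frac{1}{(n+1)^{\xi}} = f(n) - f(n+1) = -\int_n^{n+1} f'(t)\,dt = \int_n^{n+1} \xi\, t^{-\xi-1}\,dt.
\end{align*}

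The key observation is then monotonicity of the integrand. Since $\xi \ge 0$ and the exponent $-\xi-1 \le -1 < 0$, the map $t \mapsto \xi\, t^{-\xi-1}$ is nonnegative and nonincreasing on $(0,\infty)$. Hence on the interval $[n,n+1]$ it attains its maximum at the left endpoint $t = n$, giving $\xi\, t^{-\xi-1} \le \xi\, n^{-\xi-1}$ for all $t \in [n,n+1]$. Integrating this pointwise bound over an interval of length one yields
\begin{align*}
\int_n^{n+1} \xi\, t^{-\xi-1}\,dt \le \int_n^{n+1} \xi\, n^{-\xi-1}\,dt = \frac{\xi}{n^{\xi+1}},
\end{align*}
which is exactly the claimed inequality.

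There is essentially no substantive obstacle here; the only point requiring a word of care is the degenerate case $\xi = 0$, where both sides vanish (the integrand is identically zero and $1 - 1 = 0$), so the statement holds trivially with equality. I would flag that case in one line and let the integral argument cover all $\xi > 0$ uniformly. As an alternative to the integral representation, one could apply the mean value theorem to obtain $f(n) - f(n+1) = \xi\, c^{-\xi-1}$ for some $c \in (n,n+1)$ and then use $c > n \ge 1$ together with the decreasing nature of $t \mapsto t^{-\xi-1}$ to conclude $c^{-\xi-1} \le n^{-\xi-1}$; both routes are elementary, and I would present the integral version since it handles the boundary exponent and the $\xi = 0$ case most cleanly.
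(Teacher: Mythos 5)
Your proof is correct and is essentially the same argument as the paper's: the paper applies Taylor's theorem (equivalently, the mean value theorem) to $f(x)=(x+n)^{-\xi}$ to get $\frac{1}{n^\xi}-\frac{1}{(n+1)^\xi}=\frac{\xi}{(n+z)^{\xi+1}}\leq \frac{\xi}{n^{\xi+1}}$, which is precisely the alternative route you mention at the end, while your integral formulation via the fundamental theorem of calculus is an interchangeable packaging of the same bound on the derivative at the left endpoint.
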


\begin{proof}[Proof of Lemma \ref{lem:step_size_gap}]
Define the function $f(x)=\frac{1}{(x+n)^\xi}$. By Taylor's theorem, for $x\in [0,1]$, and for some $z\in [0,x]$, we have
\begin{align*}
    f(x)=f(0)+f'(z)x = \frac{1}{n^{\xi}}-\frac{x\xi}{(n+z)^{\xi+1}}.
\end{align*}
Hence, by choosing $x=1$,
\begin{align*}
    \frac{1}{n^\xi}-\frac{1}{(n+1)^\xi} = \frac{\xi}{(n+z)^{\xi+1}}\leq \frac{\xi}{n^{\xi+1}}
\end{align*}
    
\end{proof}

\begin{lemma}\label{lem:lambert}
    For any $\xi\in (0,1)$, $\rho<1$, and $n\geq 1$, we have
    \begin{align*}
        \rho^x(x+n)^{\xi}\leq \left(\frac{\xi}{e\ln(1/\rho)}+n\right)^\xi ~\forall x\geq 0.
    \end{align*}
\end{lemma}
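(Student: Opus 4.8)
\textbf{Proof proposal for Lemma \ref{lem:lambert}.}
Since $\rho\in(0,1)$ we have $c:=\ln(1/\rho)>0$, and both sides of the claimed inequality are strictly positive. Because $t\mapsto t^{1/\xi}$ is increasing on $(0,\infty)$ for $\xi>0$, the plan is to first raise both sides to the power $1/\xi$, which removes the exponent and leaves a one-dimensional calculus problem. Writing $a:=c/\xi>0$ and using $\rho^{x/\xi}=e^{-cx/\xi}=e^{-ax}$, the inequality $\rho^x(x+n)^\xi\le\left(\frac{\xi}{ec}+n\right)^\xi$ becomes equivalent to
\begin{align*}
    \psi(x):=e^{-ax}(x+n)\le \frac{1}{ea}+n\qquad\forall x\ge 0,
\end{align*}
since $\frac{\xi}{ec}=\frac{1}{ea}$. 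So it suffices to bound $\max_{x\ge 0}\psi(x)$ by $\frac{1}{ea}+n$.

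Next I would maximize $\psi$ on $[0,\infty)$ by elementary calculus: $\psi'(x)=e^{-ax}\bigl(1-a(x+n)\bigr)$, whose unique zero is at $x^\ast=\frac1a-n$, with $\psi$ increasing before $x^\ast$ and decreasing after. This splits into two cases according to the sign of $x^\ast$, equivalently according to whether $an\le 1$ or $an>1$. In the case $an>1$ (so $x^\ast<0$), $\psi$ is strictly decreasing on $[0,\infty)$, hence $\max_{x\ge0}\psi(x)=\psi(0)=n\le \frac{1}{ea}+n$, which is immediate. In the case $an\le 1$ (so $x^\ast\ge0$), the maximum is attained at $x^\ast$ and equals
\begin{align*}
    \psi(x^\ast)=e^{-a(\frac1a-n)}\cdot\frac1a=\frac{e^{an-1}}{a},
\end{align*}
so the lemma reduces to verifying $\frac{e^{an-1}}{a}\le \frac{1}{ea}+n$, i.e. after clearing denominators, $e^{an}-1\le e\,(an)$.

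The crux — and the one genuinely nontrivial point — is this last scalar inequality together with the observation that the constraint defining the relevant case, $an\le 1$, is exactly the range where it holds. Setting $u:=an\in[0,1]$, I would introduce $\phi(u):=eu-e^{u}+1$ and note $\phi(0)=0$ and $\phi'(u)=e-e^{u}\ge 0$ for $u\in[0,1]$, so $\phi$ is nondecreasing and therefore $\phi(u)\ge 0$ on $[0,1]$, giving $e^{u}-1\le eu$ as required. (It is worth flagging that $e^{u}-1\le eu$ fails for large $u$, e.g. $u=2$; the argument succeeds precisely because the case $x^\ast\ge 0$ forces $u\le 1$.) Assembling the two cases then yields $\max_{x\ge0}\psi(x)\le \frac1{ea}+n$, and raising back to the power $\xi$ recovers the stated bound. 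The main obstacle is thus not the calculus but correctly matching the case split $an\lessgtr 1$ to the validity window of the elementary exponential inequality; everything else is routine.
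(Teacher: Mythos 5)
Your proof is correct, but it takes a genuinely different route from the paper's at the key step. Both arguments start identically: by monotonicity of $t\mapsto t^{1/\xi}$ (equivalently, by factoring $\rho^{x/\xi}$ inside the power) the lemma reduces to showing $e^{-ax}(x+n)\le \tfrac{1}{ea}+n$ for all $x\ge 0$, where $a=\ln(1/\rho)/\xi$. The paper then \emph{decouples} the two terms: it writes $e^{-ax}(x+n)=e^{-ax}x+e^{-ax}n$, bounds the second term by $n$ (since $e^{-ax}\le 1$), and bounds the first by the standard one-line maximization $\max_{x\ge 0}xe^{-ax}=\tfrac{1}{ea}$, so no case analysis is needed and the proof is three lines. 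You instead maximize the product $e^{-ax}(x+n)$ \emph{jointly}: you locate the critical point $x^\ast=\tfrac1a-n$, split on its sign, and in the nontrivial case $an\le 1$ reduce to the scalar inequality $e^{u}-1\le eu$ on $[0,1]$, which you verify correctly via $\phi(u)=eu-e^{u}+1$. What your joint maximization buys is a sharper intermediate result — the exact supremum $\max\{n,\,e^{an-1}/a\}$ rather than the sum-of-maxima bound $\tfrac{1}{ea}+n$ — and your remark that the validity window of $e^{u}-1\le eu$ matches exactly the case $x^\ast\ge 0$ is a nice observation; but this extra sharpness is not needed for the stated constant, and its price is the case split plus the auxiliary exponential inequality. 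The paper's term-wise bound is the more economical path to the same conclusion.
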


\begin{proof}[Proof of Lemma \ref{lem:lambert}]
    \begin{align*}
        \rho^x(x+n)^\xi&=(\rho^{\frac{x}{\xi}}x+\rho^{\frac{x}{\xi}}n)^\xi.
    \end{align*}
    Since $x\geq 0$ and $\rho<1$, we can bound the second term by $n$. For the first term, we have
    \begin{align*}
        \rho^{\frac{x}{\xi}}x=e^{\frac{x}{\xi}\ln(\rho)}x.
    \end{align*}
    The maximum value of this function is $\frac{\xi}{e\ln(1/\rho)}$ which is achieved at $x=\frac{\xi}{\ln(1/\rho)}$. Combining the above two bounds, we get
    \begin{align*}
        \rho^x(x+n)^\xi\leq \left(\frac{\xi}{e\ln(1/\rho)}+n\right)^\xi ~\forall x\geq 0.
    \end{align*}
\end{proof}

\begin{lemma}\label{lem:norm_bound}
For any symmetric matrix $X\in \mathbb{R}^{d\times d}$, we have
\begin{align*}
    trace(X)\leq d\|X\|.
\end{align*}
\end{lemma}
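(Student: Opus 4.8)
The plan is to reduce the inequality to a statement about the eigenvalues of $X$ via the spectral theorem. Since $X$ is symmetric, it admits an orthogonal diagonalization $X = Q\Lambda Q^\top$, where $Q$ is orthogonal and $\Lambda = \mathrm{diag}(\lambda_1,\dots,\lambda_d)$ collects the (real) eigenvalues of $X$. This is the one structural fact I would invoke; everything else is elementary.

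First I would use that the trace is invariant under orthogonal similarity, so $\mathrm{trace}(X) = \mathrm{trace}(\Lambda) = \sum_{i=1}^d \lambda_i$. Next I would use that for a symmetric matrix the Euclidean operator $2$-norm equals the spectral radius, i.e. $\|X\| = \max_{1\le i\le d} |\lambda_i|$; this is standard, following from $\|X\| = \sup_{\|v\|=1} \|Xv\|$ together with the fact that $Q$ preserves the Euclidean norm, so the supremum is attained along an eigenvector corresponding to the largest $|\lambda_i|$.

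With these two identities in hand, the conclusion is immediate: for each index $i$ we have $\lambda_i \le |\lambda_i| \le \max_{j} |\lambda_j| = \|X\|$, and summing over the $d$ indices gives
\begin{align*}
    \mathrm{trace}(X) = \sum_{i=1}^d \lambda_i \le \sum_{i=1}^d \|X\| = d\|X\|,
\end{align*}
which is the desired bound.

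There is no genuine obstacle in this proof; the only point that warrants a line of justification is the identification of the spectral norm with the maximum absolute eigenvalue, and even that is a textbook consequence of the spectral decomposition. The argument does not use symmetry beyond guaranteeing real eigenvalues and an orthonormal eigenbasis, so the proof is essentially a two-line application of the spectral theorem.
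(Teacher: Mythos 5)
Your proof is correct and follows essentially the same route as the paper's: spectral decomposition, invariance of the trace under orthogonal similarity, and bounding each eigenvalue by the operator norm. In fact your version is slightly more careful than the paper's, since you bound $\lambda_i \le |\lambda_i| \le \max_j|\lambda_j| = \|X\|$ rather than implicitly identifying the largest eigenvalue with the norm (which fails when the extreme eigenvalue is negative, though the inequality still holds).
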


\begin{proof}[Proof of Lemma \ref{lem:norm_bound}]
    By eigenvalue decomposion of $X$, we have $X=\Lambda \Sigma \Lambda^\top$. Taking the trace of $X$, we have $trace(X) = trace(\Lambda \Sigma \Lambda^\top) = trace(\Sigma \Lambda\Lambda^\top) = trace(\Sigma) = \sum_i \sigma_i\leq d \sigma_{\max} = d\|X\|$. 
\end{proof}

\begin{lemma}\label{lem:contraction_prop}
Suppose $-A$ is a Hurwitz matrix. Define $Q$ to be the solution to Lyapunov equation,
$$A^\top Q+QA=I$$
Then for all $\epsilon\in [0, \frac{1}{2\|Q\|\|A\|_Q^2}]$ 
$$\|I-\epsilon A\|_{Q}^2\leq (1-a\epsilon),~~~\text{where } a=\frac{1}{2\|Q\|}.$$

\end{lemma}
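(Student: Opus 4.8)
The plan is to work directly with the induced $Q$-norm characterization $\|I-\epsilon A\|_Q^2 = \max_{\|x\|_Q=1} x^\top (I-\epsilon A)^\top Q (I-\epsilon A) x$ and to exploit the Lyapunov equation $A^\top Q + QA = I$ to collapse the cross term. First I would expand
\begin{align*}
(I-\epsilon A)^\top Q (I-\epsilon A) = Q - \epsilon(A^\top Q + QA) + \epsilon^2 A^\top Q A = Q - \epsilon I + \epsilon^2 A^\top Q A,
\end{align*}
where the middle equality is precisely the defining Lyapunov equation. Consequently, for any $x$ with $\|x\|_Q^2 = x^\top Q x = 1$ we obtain the clean identity
\begin{align*}
x^\top (I-\epsilon A)^\top Q (I-\epsilon A) x = 1 - \epsilon \|x\|^2 + \epsilon^2 \|Ax\|_Q^2.
\end{align*}

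Next I would bound the two $\epsilon$-dependent terms separately. For the linear term, since $Q$ is symmetric positive definite (its existence and positive-definiteness following from $-A$ being Hurwitz, e.g.\ via $Q=\int_0^\infty e^{-A^\top t}e^{-At}\,dt$), we have $\|x\|_Q^2 \le \|Q\|\,\|x\|^2$, hence $\|x\|^2 \ge 1/\|Q\|$ on the unit $Q$-sphere, giving $-\epsilon\|x\|^2 \le -\epsilon/\|Q\|$. For the quadratic term, the definition of the induced norm yields $\|Ax\|_Q \le \|A\|_Q\|x\|_Q = \|A\|_Q$, so $\epsilon^2\|Ax\|_Q^2 \le \epsilon^2\|A\|_Q^2$. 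Combining these,
\begin{align*}
\|I-\epsilon A\|_Q^2 \le 1 - \frac{\epsilon}{\|Q\|} + \epsilon^2 \|A\|_Q^2.
\end{align*}

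Finally, I would invoke the hypothesis $\epsilon \le \frac{1}{2\|Q\|\|A\|_Q^2}$, which is exactly the condition under which $\epsilon^2\|A\|_Q^2 \le \frac{\epsilon}{2\|Q\|}$, i.e.\ the quadratic term is dominated by half the linear term. This leaves $\|I-\epsilon A\|_Q^2 \le 1 - \frac{\epsilon}{2\|Q\|} = 1 - a\epsilon$ with $a = \frac{1}{2\|Q\|}$, which is the claim. I do not anticipate a genuine obstacle here: the argument is short and self-contained. The only point requiring a little care is keeping the distinction between the Euclidean norm $\|\cdot\|$ and the weighted norm $\|\cdot\|_Q$ straight—in particular that the linear term naturally produces $\|x\|^2$ (Euclidean), which must be converted back to the $Q$-sphere constraint through the factor $\|Q\|$, while the quadratic term is most naturally controlled in the $Q$-norm itself. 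Getting the calibration of the admissible range of $\epsilon$ to match the constant $a$ is what makes the two bounds fit together cleanly.
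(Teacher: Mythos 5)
Your proof is correct and follows essentially the same route as the paper's: expand $(I-\epsilon A)^\top Q(I-\epsilon A)$ via the Lyapunov equation, lower-bound the Euclidean norm on the $Q$-unit sphere by $\|x\|^2\geq 1/\|Q\|$, bound the quadratic term by $\epsilon^2\|A\|_Q^2$, and use the hypothesis on $\epsilon$ to absorb it into half of the linear term. No gaps; the argument matches the paper's step for step.
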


\begin{proof}[Proof of Lemma \ref{lem:contraction_prop}]
    Using the definition of matrix norm we have:
    \begin{align*}
        \|I-\epsilon A\|_Q^2&=\max_{\|x\|_{Q}= 1}x^\top(I-\epsilon A)^\top Q(I-\epsilon A)x\\
        &=\max_{\|x\|_{Q}= 1} \left(x^\top Qx-\epsilon x^\top(A^\top Q+QA)x+\epsilon^2x^\top A^\top QAx\right)\\
        &\leq 1-\epsilon\min_{\|x\|_Q=1}\|x\|^2+\epsilon^2 \max_{\|x\|_Q=1}\|Ax\|_Q^2\\
        &\leq 1-\epsilon\frac{1}{\|Q\|}+\epsilon^2\|A\|_Q^2.
    \end{align*}
    For any $\epsilon\in\left[0, \frac{1}{2\|Q\|\|A\|_Q^2}\right]$, we have:
    \begin{align*}
        \|I-\epsilon A\|_Q^2\leq 1-\frac{\epsilon}{2\|Q\|}.
    \end{align*}
\end{proof}

%\section{Comparison with prior work}

\section{Dimension dependence of the convergence result of \cite{kaledin2020finite}}\label{sec:kaledin_d_dependency}
In this section, we will list the dimensional scaling of various constants in \cite{kaledin2020finite} in a sequential manner which will enable us to find the dimensional dependence of their final result. Note that we compare their dependence under the same set of assumptions as ours. Specifically, we assume that the $\ell_2$-norm of the vectors in $\mathbb{R}^d$ have $\mathcal{O}(\sqrt{d})$ dependence while the matrix $\ell_2$-norms do not scale with $d$.% (matrices not of the form $xx^\top$ for some vector $x$, where the matrix norm scales linearly with $d$). 

All the references in the following are for \cite{kaledin2020finite}.
\begin{enumerate}
    \item Assumption B3: The constant $\bar{b}=\mathcal{O}(\sqrt{d})$. 
    \item Page 24: Due to the $d$-dependency of $\bar{b}$, both $m_V$ and $m_W$ are $\mathcal{O}(\sqrt{d})$.
    \item Page 24: Using Eq. (36), $\tilde{m}_V$ and $\tilde{m}_W$ are $\mathcal{O}(d)$ and $\tilde{m}_{VW}=\mathcal{O}(d^2)$.
    \item Eq. (62), Page 24: $\tilde{C}_0$ is $\mathcal{O}(d^2)$. 
    \item Eq. (64), Page 25: $\tilde{E}_0^{WV}=\mathcal{O}(\sqrt{d})$.
    \item Page 25: $\tilde{m}_{\Delta\tilde{\theta}}$ and $\tilde{m}_{\Delta\tilde{w}}$ are both $\mathcal{O}(d)$.
    \item Eq. 67, Page 26: We know that $\|\tilde{w}_0\|$ and $\|\tilde{\theta}_0\|$ are both $\mathcal{O}(\sqrt{d})$. Hence, $\tilde{C}_i=\mathcal{O}(d^2)$ for $i=1,2,3,4$.
    \item Eq. 67, Page 28: $\tilde{C}_i^{\tilde{w}}=\mathcal{O}(d^2)$ for $i=0,1.2,3$.
    \item Page 28: $\tilde{C}_i^{\tilde{w}'}=\mathcal{O}(d^4)$ for $i=1,2$.
    \item Page 29: $\tilde{C}_i^{\tilde{w}''}=\mathcal{O}(d^4)$ for $i=0,1,2,3$.
    \item Eq. (73), Page 30: $\tilde{C}_0^{\tilde{\theta},\tilde{w}}=\mathcal{O}(d^2)$ and $\tilde{C}_i^{\tilde{\theta},\tilde{w}}=\mathcal{O}(d^4)$ for $i=1,2$.
    \item Page 32: $\tilde{C}_0^{(0)}=\mathcal{O}(d^2)$ and $\tilde{C}_i^{(0)}=\mathcal{O}(d^4)$ for $i=1,2$.
    \item Page 33: $\tilde{C}_i^{(1,0)}=\mathcal{O}(d^4)$ for $i=0,1,2$. In addition, we have $E_0^V=\mathcal{O}(\sqrt{d})$.
    \item Page 35: $\hat{C}_i^{(1,1)}=\mathcal{O}(d)$ for $i=0,1,3$ and $\hat{C}_2^{(1,1)}=\mathcal{O}(d^2)$.
    \item Eq. (77), Page 36: $\tilde{C}_i^{(1,1)}=\mathcal{O}(d^3)$ for $i=0,3$ and $\tilde{C}_i^{(1,1)}=\mathcal{O}(d^4)$ for $i=1,2$.
    \item Eq. (78), Page 36: $\tilde{C}_i^{\tilde{\theta}}=\mathcal{O}(d^4)$ for $i=0,1,2$.
    \item Page 37: $C_1^{\tilde{\theta},mark}=\mathcal{O}(d^4)$. In addition, assuming $\tilde{C}_0^{\tilde{\theta}}=C_0^{\tilde{\theta},mark}(1+V_0)$, and noticing that $V_0=\mathcal{O}(d)$ (since it the sum of squared norm of vectors), we have $C_0^{\tilde{\theta},mark} = \mathcal{O}(d^3)$.
    \item Eq. (80), Page 37: $\tilde{C}_0^{\hat{w}}=\mathcal{O}(d^6)$. In addition, $C_1^{\hat{w},mark}=\mathcal{O}(d^6)$.
    \item Page 37: Assuming $\tilde{C}_0^{\hat{w}}=C_0^{\hat{w},mark}(1+V_0)$, and noticing that $V_0=\mathcal{O}(d)$, we have $C_0^{\hat{w},mark}(1+V_0) = \mathcal{O}(d^5)$.
\end{enumerate}
Finally, combining these bounds, we get $\E[\|\theta_k-\theta^*\|^2]=\mathcal{O}(d^5)$ and $\E[\|w_k-A_{22}^{-1}(b_2-A_{21}\theta_k)\|^2]=\mathcal{O}(d^7)$ in Eq. (14) and Eq. (15), respectively.

\section{Details for the simulation}\label{appendix:sim}
\subsection{Simulation details for Fig. \ref{fig:xi}}
For simulation,consider a $1$-d linear SA with $|S|=2$ for Markovian noise. The transition probability is given by:
\begin{align*}
    P=\begin{bmatrix}
5/8 & 3/8\\
3/4 & 1/4
\end{bmatrix},~\mu=[2/3, 1/3]
\end{align*}
The update matrices (in 1-d case scalars) were chosen as the following:
\begin{align*}
    A_{11}(1)&=-0.5;~~A_{11}(2)=-2;~~A_{11}=-1\\
    A_{12}(1)&=-1;~~A_{12}(2)=-1;~~A_{12}=-1\\
    A_{21}(1)&=2.5;~~A_{21}(2)=1;~~A_{21}=2\\
    A_{22}(1)&=0;~~A_{22}(2)=3;~~A_{22}=1\\
    b_1(1)&=-3/2;~~b_1(2)=3;~~b_1=0\\
    b_2(1)&=3;~~b_2(2)=-6;~~b_2=0
\end{align*}
For the step size, $\alpha=1$ and $\beta=1$. Observe that $\Delta=A_{11}-A_{12}A_{22}^{-1}A_{21}=1$ and therefore $-(\Delta-\beta^{-1}/2)$ is Hurwitz. We sample $x_0$ and $y_0$ uniformly from $[-5,5]$. The bold lines are the mean across five sample paths, whereas the shaded region is the standard deviation from the mean path. The plots start from $0.1$ instead of $0$. This is done intentionally so that the initial randomness dies down.

\subsection{Simulation details for Fig. \ref{fig:beta}}
Again, we consider a $1$-d linear SA with $|S|=2$ for the Markovian noise. The transition probability is same as before, i.e.:
\begin{align*}
    P=\begin{bmatrix}
5/8 & 3/8\\
3/4 & 1/4
\end{bmatrix},~\mu=[2/3, 1/3].
\end{align*}
The update matrix (scalar in 1-d case) is as follows:
\begin{align*}
    A_{11}(1)&=1;~~A_{11}(2)=1;~~A_{11}=1\\
    A_{12}(1)&=-1;~~A_{12}(2)=-1;~~A_{12}=-1\\
    A_{21}(1)&=0;~~A_{21}(2)=0;~~A_{12}=0\\
    A_{22}(1)&=0;~~A_{22}(2)=3;~~A_{22}=1\\
    b_1(1)&=0;~~b_1(2)=0;~~b_1=0\\
    b_2(1)&=3;~~b_2(2)=-6;~~b_2=0
\end{align*}
For the step size, $\alpha=1$ and $\xi=0.75$. Observe that $\Delta=1$ and therefore $-(\Delta-\beta^{-1}/2)$ is Hurwitz. We sample $x_0$ and $y_0$ uniformly from $[-5,5]$. The bold lines are the mean across five sample paths, whereas the shaded region is the standard deviation from the mean path. The plots start from $0.1$ instead of $0$. This is done intentionally so that the initial randomness dies down.

\subsection{Simulation details for Fig. \ref{fig:ssa_div}}
Again, we consider a $1$-d linear SA with $|S|=2$ for the Markovian noise. The transition probability is given by:
\begin{align*}
    P=\begin{bmatrix}
        1/4 & 3/4\\
        3/4 & 1/4
    \end{bmatrix},~\mu=[1/2, 1/2].
\end{align*}
The update matrix (scalar in 1-d case) is as follows:
\begin{align*}
    A_{11}(1)&=-3;~~A_{11}(2)=-5;~~A_{11}=-4\\
    A_{12}(1)&=2;~~A_{12}(2)=10;~~A_{12}=6\\
    A_{21}(1)&=3;~~A_{21}(2)=-5;~~A_{21}=-1\\
    A_{22}(1)&=1;~~A_{22}(2)=1;~~A_{22}=1\\
    b_1(1)&=-3000;~~b_1(2)=3000;~~b_1=0\\
    b_2(1)&=9000;~~b_2(2)=-9000;~~b_2=0
\end{align*}
For the step size, $\alpha=\beta=1$ and $\xi=1$. The block matrix $A$ is given by:
\begin{align*}
    A=\begin{bmatrix}
        -4 & 6\\
        -1 & 1
    \end{bmatrix}
\end{align*}
Observe that the matrix $-A$ has eigenvalues $1,2$ and therefore, it is not Hurwitz. The mean squarer errors shown in the plot are averages over five sample paths.

\section{Discussion on the best choice of step size}\label{sec:best_step_size}
Consider the linear SA \eqref{eq:linearSA}. In order to get a faster convergence suppose that we run the second time-scale $y_{k+1}=(1-\beta_k)y_k+\beta_k x_k$ where $\beta_k=\frac{\beta}{k}$. Notice that with the choice of $\beta=1$, we again derive the Polyak-Ruppert averaging iterate \eqref{eq:Polyak}. An interesting question to investigate is why the optimal choice of $\beta$ is equal to $1$. 

According to Theorem \ref{thm:Markovian_main}, the leading term in the convergence of $\E[y_ky_k^\top]$ is $\beta_k\Sigma^y$. Furthermore, by \eqref{eq:sigma_y_def_main} we have $\Sigma^y=(\Gamma^y+\Sigma^xA_{22}^{-T}+A_{22}^{-1}\Sigma^x)/(2-\beta^{-1})$. Hence, to find $\beta$ that minimizes the norm of $\Sigma^y$, we need to choose $\beta$ which minimizes $h(\beta)=\beta^2/(2\beta-1)$. The plot of the function $h(\beta)$ is shown in Figure \ref{fig:h(beta)}. Clearly, this function is minimized at $\beta=1$, and hence the Polyak-Ruppert averaging is optimal. 
\begin{figure}[!ht]
    \centering
    \includegraphics{Arxiv/images/beta_func.tex}
    \caption{The function $h(\beta)=\frac{\beta^2}{2\beta-1}$}
    \label{fig:h(beta)}
\end{figure}

\end{document}